\newcommand{\diagentry}[1]{\mathmakebox[1.8em]{#1}}
\newcommand{\xddots}{%
  \raise 4pt \hbox {.}
  \mkern 6mu
  \raise 1pt \hbox {.}
  \mkern 6mu
  \raise -2pt \hbox {.}
}
\newtheorem{definition}{Definition}
\newtheorem{theorem}{Theorem}
\newtheorem{prop}{Proposition}
\newtheorem{remark}{Remark}
\newtheorem{lemma}{Lemma}
\newtheorem{define}{Definition}
\newtheorem{assumption}{Assumption}
\begin{document}

\title{FLUE: Federated Learning with Un-Encrypted model weights

}

\author{Elie Atallah
        \\
        {Resch School of Engineering}\\
{University of Wisconsin, Green Bay}\\
{atallahe@uwgb.edu}
\thanks{}
}

\maketitle

\begin{abstract}

Federated Learning enables diverse devices to collaboratively train a shared model while keeping training data locally stored, avoiding the need for centralized cloud storage. Despite existing privacy measures, concerns arise from potential reverse engineering of gradients, even with added noise, revealing private data. To address this, recent research emphasizes using encrypted model parameters during training. This paper introduces a novel federated learning algorithm, leveraging coded local gradients without encryption, exchanging coded proxies for model parameters, and injecting surplus noise for enhanced privacy. Two algorithm variants are presented, showcasing convergence and learning rates adaptable to coding schemes and raw data characteristics. Two encryption-free implementations with fixed and random coding matrices are provided, demonstrating promising simulation results from both federated optimization and machine learning perspectives.

\end{abstract}

\vspace{1cm}

\ ML: Distributed Machine Learning \& Federated Learning, ML: Optimization, Gradient Coding

\vspace{1cm}

\section{Introduction}

The widespread adoption of IoT technologies has led to a surge in user data, prompting the need for privacy-preserving measures. Federated Learning (FL) has emerged as a decentralized solution, allowing clients to update their local models with encrypted global parameters instead of raw data during training. This paper introduces Federated Learning with Unencrypted Model Weights (FLUE), presenting an algorithm that ensures data privacy through coded local gradients without relying on encryption. The exchange of coded combinations' proxies for learned model parameters, coupled with the injection of surplus noise, strengthens privacy safeguards. The paper offers two algorithm variants and demonstrates their convergence, establishing learning rates adaptable to coding schemes and raw data characteristics. The study showcases promising simulation results from federated optimization and machine learning perspectives, implementing encryption-free solutions with fixed and random coding matrices.

\subsection{Key Contributions}

The key contributions of FLUE are:

\begin{itemize}
\item Implementing privacy-preserving Federated Learning without relying on encrypted model parameters or exchanging local gradients. 
\item Employing proxy model parameters between server and nodes to facilitate data training and model parameter discovery on nodes. 
\item Enhancing privacy-preserving measures by leveraging surplus noise. 
\item Incorporating coded gradients in local training processes.
\item The encoding of gradients is easily established, any singular matrix can be used for encoding.
\item Utilization of a gradient descent and a gradient ascent step in the learning process.
\item We infer that proper encoding of data (gradients) can considerably enhance the learning process (convergence rate).
\end{itemize}

\indent The remainder of the paper is organized as follows. In Section~2, we present the problem setup. In Section~3 we formulate our proposed algorithm FLUE with its different forms, implementations and privacy mitigating features and present the method for forming the encoding and decoding matrices. Consequently in Section~4, we discuss the convergence analysis of the algorithm. In Section~5, we present the convergence rate. We complement our work in Section~6 with simulation results. Finally, Section~7 concludes our paper. In Appendix~A, we present the paper with more details. In Appendix~B, we state our main fundamental theorem for the validity of the algorithm convergence for two of FLUE variants in static networks. In Appendix~C we analyze  the convergence of our algorithm in static networks in more details. In Appendix~D, we prove the convergence of FLUE in time-varying networks. Afterwards, we find the convergence rate in Appendix~E. In Appendix~F, we list preliminary lemmas and postulates.

Conventional mathematical nomenclature is used in this paper.


\section{ Problem Setup }

\begin{assumption}\label{Assump1}
We consider the optimization problem
\begin{equation}\label{minimization_eqn}
\min_{x \in \mathbf{R}^{N}} f({\bf x}) = \frac{1}{\bar{n}}\sum_{i=1}^{\bar{n}}f_{i}({\bf x})=\mathbb{E}_{d_{i} \in \mathcal{D}}[f_{i}({\bf x})].
\end{equation}
where $f({\bf x})$ is the global overall function to be minimized, $f_{i}({\bf x})$ are local functions related to the used partition, and $ \bar{n} $ as the total number of data points in the distribution $ \mathcal{D} $ and $d_{i} $ correspond to data point $ i $.
We list the following assumptions essential for the applicability of our algorithm. \\



\end{assumption}

\vspace{-0.8cm}

\section{ Main Algorithm }

\subsection{FLUE: Initialization}

In the federated setting, the server initially organizes clients into clusters, considering factors like data type, heterogeneity, complexity, and estimated stragglers. Each cluster, denoted as $ c \in {1,2,\ldots,m} $, comprises $ N_{c} $ clients. The server creates a fixed gradient encoding matrix $ {\bf B}_{c} \in \mathbb{R}^{n_{c} \times p_{c} } $ for each cluster, where $ p_{c} $ represents the partitions on clients within the cluster. This matrix is singular and sent to all clients in the cluster. FLUE is updated on nodes, aligning iterations with rows of $ {\bf B}_{c} $ (i.e., for each $ 2n $ iterations $ {\bf B}_{c} $ repeats itself two times, one for the gradient descent updating equation and one for the gradient ascent updating equation). Gradient coding is applied to data partitions, using $ {\bf B}_{c} $ for encoding. The proposed (Federated Learning with Un-Encrypted model parameters) FLUE algorithm is then employed to accomplish the learning task without encrypting model parameters.

\subsection{FLUE: General Form}

For the initial $n$ iterations $j$ modulo $2n$, each of the connected $n$ clients updates its model weights ${\bf x}^+_i(2nk+j)$ using the coded data-formed gradient with matrix $ {\bf B} = {\bf B}_{c} $ and the coded gradient descent. This involves utilizing the surplus variable ${\bf y}^+_i(2nk+j)$ to form its proxy weight ${\bf \bar{x}}^+_i(2n(k+1)+j)$. Subsequently, each client updates its surplus variable, incorporating the previously sent proxy weights from the server and its prior model weight and surplus variables from previous iterations. The client then transmits its updated proxy weight ${\bf \bar{x}}^+_i(2n(k+1)+j)$ to the server, which aggregates all clients' proxy weights, forming the server proxy update ${\bf \bar{x}}(2n(k+1)+j)$. The server then dispatches its updated proxy weights to the chosen $n$ clients, and each of these clients finds its model weights ${\bf x}^+_i(2n(k+1)+j)$ using the server-sent proxy weights ${\bf \bar{x}}(2n(k+1)+j)$. Similarly, for the last $n$ iterations modulo $2n$, the process remains consistent as clients update their model weights ${\bf x}^-_i(2nk+j)$ using the coded data-formed gradient, coded gradient ascent, and surplus variable ${\bf y}^-_i(2nk+j)$. The clients then form their proxy weight ${\bf \bar{x}}^-_i(2n(k+1)+j)$ and continue the iterative process until convergence.

FLUE performs the following updating iterations at each node $i$ for $i\in\{1,2,\ldots,n\}$. Please note that $\Gamma_{i}= \Gamma_{i}(k) $ is the fixed support of the row of $ {\bf A}$ identified with iteration $ i $:
{\scriptsize
\begin{align}\label{updating_eqn1}
\begin{split}
{\bf \bar{x}}_{i}^{+} & (2n(k+1)+j)= \bar{\bf A}^{i}_{jj}(k){\bf x}_{i}^{+}(2nk+j) \\ & - \epsilon \sum_{s}[{\bf D}^{i}_{+}(k)]_{js} {\bf y}_{i}^{+}(2nk+s) - \alpha_{k} {\bf B}_{ji} \nabla{f}_{i}({\bf x}_{i}^{+}(2nk+j)) \\
{\bf y}_{i}^{+} & (2n(k+1)+j)=  {\bf x}_{i}^{+}(2nk+j) - {\bf \bar{x}}(2nk+j) -\epsilon {\bf y}_{i}^{+}(2nk+j) \\ & + \sum_{s}[{\bf D}^{i}_{+}(k)]_{js} {\bf y}_{i}^{+}(2nk+s).
\end{split}
\end{align}}
\vspace{-0.4cm}
{\scriptsize
\begin{align}\label{updating_eqn1a}
\begin{split}
{\bf \bar{x}}(2n(k+1)+j)&= \frac{1}{N}\sum_{i =1}^{N}{\bf \bar{x}}_{i}^{+}(2n(k+1)+j)
\end{split}
\end{align}}
\vspace{-0.5cm}
{\scriptsize
\begin{align}\label{updating_eqn1b}
\begin{split}
{\bf x}^{+}_{i}(2n(k+1)+j)&= {\bf \bar{x}}(2n(k+1)+j) \\ & +\sum_{s \in \Gamma_{j} \backslash \{j\} }\bar{\bf A}^{i}_{js}(k){\bf x}_{i}(2nk+s) + \epsilon {\bf y}_{i}^{+}(2nk+j) 
\end{split}
\end{align}}
\vspace{-0.8cm}
{\scriptsize
\begin{align}\label{updating_eqn2}
\begin{split}
{\bf \bar{x}}_{i}^{-} & (2n(k+1)+j)=  \bar{\bf A}^{i}_{(j-n),(j-n)}(k){\bf x}_{i}^{+}(2nk+j) \\ & - \epsilon \sum_{s}[{\bf D}^{i}_{-}(k)]_{js} {\bf y}_{i}^{+}(2nk+s)  + \alpha_{k} {\bf B}_{(j-n),i} \nabla{f}_{i}({\bf x}_{i}^{-}(2nk+j)) \\
{\bf y}_{i}^{-} & (2n(k+1)+j)=  {\bf x}_{i}^{-}(2nk+j) - {\bf \bar{x}}(2nk+j) - \epsilon {\bf y}_{i}^{-}(2nk+j) \\ & + \sum_{s}[{\bf D}^{i}_{-}(k)]_{js} {\bf y}_{i}^{-}(2nk+s).
\end{split}
\end{align}}
\vspace{-0.5cm}
{\scriptsize
\begin{align}\label{updating_eqn2a}
\begin{split}
{\bf \bar{x}}(2n(k+1)+j)&= \frac{1}{N}\sum_{i =1}^{N}{\bf \bar{x}}_{i}^{-}(2n(k+1)+j)
\end{split}
\end{align}}
\vspace{-0.5cm}
{\scriptsize
\begin{align}\label{updating_eqn2b}
\begin{split}
{\bf x}^{-}_{i}(2n(k+1)+j)&= {\bf \bar{x}}(2n(k+1)+j) \\ & +\sum_{s \in \Gamma_{j} \backslash \{j\} }\bar{\bf A}^{i}_{(j-n),s}(k){\bf x}_{i}(2nk+s) + \epsilon {\bf y}_{i}^{-}(2nk+j) 
\end{split}
\end{align}}

Each node $ i \in V $ maintains four vectors: two estimates ${\bf x}_{i}^{+}(2nk+j)$, $ {\bf x}_{i}^{-}(2nk+j)$  for $ 1 \le j \leq n $ and for $ n+1 \le j \le 2n $, respectively. And two surpluses $ {\bf y}_{i}^{+}(2nk+j)$ and ${\bf y}_{i}^{-}(2nk+j)$  for $ 1 \le j \leq n $ and for $ n+1 \le j \le 2n $, respectively.  All in $ \mathbb{R}^{N}$, where $ 2nk+j $ is the discrete time iteration. We use $ \hat{\bf x}_{i}(2nk+j) $ to mean either ${\bf x}_{i}^{+}(2nk+j)$ and $ {\bf x}_{i}^{-}(2nk+j-n)$ for $ 1 \leq j \leq n $ and $ n+1 \leq j \leq 2n $, respectively.

\begin{remark}
    The above general form is for FLUE with $ \bar{\bf A}^{l} $ replicated two times every $ 2 n $ iterations. 
\end{remark}
We refer to Appendix~A for different forms and variants of FLUE algorithm.

\subsection{Forming the Matrix $  \bar{\bf A}^{l} $ and $ {\bf D}^{l} $ at each node $ l $ }

The server administers the algorithm for each cluster, employing three key parameters:

\begin{itemize}
\item $ n_{c} $ (the number of iterations per cycle is $ 2 n_{c} $, WLOG $ n = n_{c} $),
\item $ N_{c} $ (the set of nodes in the cluster engaged in local learning, with whom the server communicates),
\item At each client $ l $, a partitions set $ \mathcal{P}_{l} $ is designated, outlining the assignment of datapoints to partitions (that correspond to the same scaling factor for the encoding matrix $ {\bf B}_{c} $). The set $ \mathcal{P}_{l} $ consists of $ p_{l} = \| \mathcal{P}_{l} \| $ partitions.
\end{itemize}
Once these parameters are specified, the server initializes by transmitting the encoding matrix $ {\bf B}_{c} $ to cluster $ c $. This matrix, $ {\bf B}_{c} $, is an $ n_{c} \times p_{c} $ matrix, where $ p_{c} $ denotes the total number of partitions across all communicating nodes $ N_{c} $, that is, $ p_{c} = \sum_{l=1}^{N_{c}} p_{l} $.

Once the server specifies the parameters, it initializes by transmitting the encoding matrix ${\bf B}_{c}$ to cluster $c$. This matrix, ${\bf B}={\bf B}_{c}$, is an $n_{c} \times p_{c}$ matrix, where $p_{c}$ is the total number of partitions across all communicating nodes $N_{c}$.

At each client $l$, the data points associated with partition $j$ are encoded using the scaling factor ${\bf B}_{i, \sum_{s=1}^{l-1}p_{s}+j}$ for each iteration $t$ where $t \mod n = i$. The server requires the matrix ${\bf B}$ to be singular, ensuring the decoding matrix ${\bf A}$ satisfies ${\bf A} {\bf B} = {\bf C}$, where ${\bf C} = \mathbf{1}_{n_{c} \times p_{c}}$. The pseudoinverse of ${\bf B}$ allows nodes flexibility in choosing matrices $\bar{\bf A}^{l}$, preventing a fixed deterministic $\bar{\bf A}^{l}$ that might compromise security.

To achieve security, it is crucial that ${\bf B}$ is a singular matrix without zeros and that the coefficients of ${\bf B}$ are not equal to 1. This ensures all nodes' datapoints undergo gradient descent steps during training, eliminating the reliance on proxy models being mere scaled versions. Further precautions involve scaling and modifying ${\bf B}$ appropriately before transmission from the server. Nodes can then determine their ${\bf A}$ and $\bar{\bf A}^{l}$ based on the provided value of ${\bf B}$.
\\
Each $ 2 n \times 1 $ row of $ \bar{\bf A}^{l} $ are chosen from rows of $ {\bf A} $ normalized by their respective $ l_{1} $-norm.
We form a row $ \bar{r} $ of $ {\bf A}^{l} $ by first choosing any row $ r $ of $ {\bf A} $ where we keep every positive coefficient $ [{\bf A}]_{r,j} $ in its same $ j $ position and normalize by the $ l_{1} $-norm of row $ r $  (i.e., $ [{\bf A}^{l}]_{\bar{r}, j} = \frac{[{\bf A}]_{r, j}}{ \|{\bf A}_{r, :} \|_{1}} $ (corresponding to a gradient descent step of the coded gradient)). And we move every negative coefficient $ [{\bf A}]_{r,j} $ in the $ j + n $ position, take its absolute value and normalize by the $ l_{1} $-norm of row $ r $ (i.e., $ [{\bf A}^{l}]_{\bar{r}, j+n} = -\frac{[{\bf A}]_{r, j}}{ \|{\bf A}_{r, :} \|_{1}} $ (corresponding to a gradient ascent step of the coded gradient)). All other coefficients of row $ \bar{r} $ of $ {\bf A}^{l} $ are set to zero.
In this process we need to ensure that the formed matrix 
$ \bar{\bar{\bf A}}^{l} = \begin{pmatrix} \bar{\bf A}^{l}_{+} \\ \bar{\bf A}^{l}_{-} \end{pmatrix} $ \footnote{We denote by $ \bar{\bf A}^{l}_{+} $ and $ \bar{\bf A}^{l}_{-} $ to be the matrix $ \bar{\bf A}^{l} $ for the first $ n $ iterations and the last $ n $ iterations of the period $ 2 n $, respectively (i.e., FLUE has two forms $ \bar{\bf A}^{l} 
$ replicated twice in $ 2 n $ iterations or not).} is Stochastic Indecomposable Aperiodic matrix (SIA).


Coordination of certain coefficients of matrix $ {\bf A}^{l} $ across the nodes during the aggregation step requires specific conditions for FLUE forms, ensuring model privacy through adjustments to $\bar{\bf A}^{l}$ and the use of SIA matrices for the matrix $\hat{\bf A}$. In the varying form of FLUE we require that $ \hat{\bf A} $ vary infinitely but from a finite set of different $ \hat{\bf A} $ for the convergence of the algorithm (i.e., through utilizing SIA Theorem~1 (2) in \citep{Xia2015}, product of finite sets of SIA matrices is SIA). Ensuring $\hat{\bf A}(k)$ is from a finite set is achieved by requiring each node to choose the random $\bar{\bf A}^{l}$ from a finite set, facilitating repeated use to expedite convergence. The only coordination needed during the aggregation step demands constancy in $ [\bar{\bf A}^{l}_{+}(k)]_{i,i} $ and $ \bar{\bf A}^{l}_{-}(k)_{i,i+n} $ for $ 1 \leq i \leq n $, $ 1 \leq l \leq N_{c} $, and each $ k $ in FLUE form where $ \bar{\bf A}^{l}(k)_{+} \neq \bar{\bf A}^{l}(k)_{-} $. Alternatively, $ [\bar{\bf A}^{l}(k)]_{i,i} $ should be constant for $ 1 \leq i \leq n $, $ 1 \leq l \leq N_{c} $, and each $ k $ in FLUE forms where $ \bar{\bf A}^{l}(k) = \bar{\bf A}^{l}(k)_{+} = \bar{\bf A}^{l}(k)_{-} $. Furthermore, $\hat{\bf A}$ must be an SIA matrix, ensured by having at least one positive column in $\tilde{\bf A}^{(+)} = \big(({\bf A_{i,j}})_+\big)_{1\le i,j\le n}$ \footnote{Where $ ({\bf A}_{ij})_{+} = {\bf A}_{ij} $ if $ {\bf A}_{ij} \geq 0 $ and $ 0 $ otherwise.} for FLUE forms where $ \bar{\bf A}^{l}(k) = \bar{\bf A}^{l}(k)_{+} = \bar{\bf A}^{l}(k)_{-} $. Adjustments in $\bar{\bf A}^{l}$ are required to ensure unanimity in $ [\bar{\bf A}^{l}_{+}(k)]_{i,i} = \gamma_{i} $ or $ [\bar{\bf A}^{l}_{-}(k)]_{i,i+n} = \gamma_{i+n} $ for the first FLUE form and $ [\bar{\bf A}^{l}(k)]_{i,i} = \gamma{i} $ for the latter, where $ 0 < \gamma_{i} < 1 $ or $ 0 < \gamma_{i+n} < 1 $ is crucial for obtaining a stochastic matrix $ \bar{\bar{\bf A}}^{l} $ encoding weights to ensure model privacy. The scaling of rows with weights $ w_{i} $ after normalization avoids further scaling so that $ \bar{\bf A}^{l}_{ii} = \gamma < 1 $ according to specified conditions, already considered in the final matrix $ \bar{\bar{\bf A}}^{l} $.

Two cases of computing $ {\bf D}^{l} $ for the FLUE algorithm exist, both requiring no coordination among nodes and no encryption. The first case involves a fixed $ {\bf D}^{l} $ every $ 2n $ iterations at each node $ l $. The second case involves a variable (random) $ {\bf D}^{l} $ every $ 2n $ iterations at each node $ l $, chosen from a finite or infinite set. For both cases $ {\bf D}^{l} $ only need to be a column stochastic matrix and can be repeated twice every $ 2 n $ iterations or not (i.e., $ {\bf D}^{l} = {\bf D}^{l}_{+} = {\bf D}^{l}_{-} $ or $ {\bf D}^{l}_{+} \neq {\bf D}^{l}_{-} $). We denote by $ {\bf D}^{l}_{+} $ and $ {\bf D}^{l}_{-} $ to be the matrix $ {\bf D}^{l} $ for the first $ n $ iterations and the last $ n $ iterations of the period $ 2 n $, respectively. Privacy is maintained through the use of $ \bar{\bf A}^{l} $ and surpluses.

In the process of encoding data points using the matrix $ {\bf B }$ to derive coded gradients, there are two approaches. The first involves calculating the uncoded gradient on the original data points and subsequently encoding that gradient using the coefficients of $ {\bf B} $. Alternatively, in scenarios such as linear regression problems or supervised learning in neural networks, we can scale the labels and feature parameters of the data points by the respective coefficients of $ {\bf B} $ corresponding to the node partition and the iteration modulo $ n $. Then the gradients of theses adjusted (weighted) datapoints are the corresponding coded gradients $ \nabla{g_{i}} $.

\vspace{-0.35cm}

\subsection{Forming Matrices $ {\bf A} $, $ \bar{\bar{\bf A}}^{l} $ and $ {\bf B} $}

Any singular matrix $ {\bf B} $ with $ rank({\bf B}) \leq \min(n_{c},p_{c}) - 1 $ will work as an encoding matrix. Then for the matrix $ \bar{\bar{\bf A}}^{l} $ whether for the fixed or time-varying case we identify two scenarios in which $ \hat{\bf A} $ must be an SIA matrix. And to ensure that we must have first $ \bar{\bf A}_{ii}^{l} = \gamma_{i} $ where $ 1 \leq l \leq N_{c} $ and $ 1 \leq i \leq n $ for FLUE where $ \bar{\bf A}^{l} = \bar{\bf A}^{l}_{+} = \bar{\bf A}^{l}_{-} $. $ \bar{\bar{\bf A}}_{ii}^{l} = \gamma_{i} $ where $ 1 \leq l \leq N_{c} $ and $ 1 \leq i \leq 2 n $ for FLUE where $  \bar{\bf A}^{l}_{+} \neq \bar{\bf A}^{l}_{-} $. And an easy way to ensure that $ \hat{\bf A} $ is SIA for each case is to have at least one positive column for at least one matrix $ \bar{\bf A}^{l} $ for the first FLUE form. That is, for the first form we ensure that $ \tilde {\bf A}^{(+)} $ contains one positive column of entries. And $ \bar{\bar{\bf A}}^{l} $ has one positive column of entries for the latter FLUE form.
To ensure the first conditions we need first to find the null space of matrix $ {\bf B}^{T} $. Then have $ \omega_{i} {\bf A}_{r_{i}}  = \gamma_{i} $ for each specified $ i $ according to the form of FLUE. And $ {\bf A}_{r_{i}} $ is any vector where $ {\bf A}_{r_{i}}{\bf B} = \mathbf{1}_{n_{p}} $ and $ \omega_{i} $ as defined earlier to be the inverse of the $ l_{1} $-norm of $ {\bf A}_{r_{i}} $.
Due to the limited space we refer you to Appendix~A for a detailed exposition of the process.

\vspace{-0.5cm}

\vspace{-0.1cm}

\section{Convergence Analysis}

Convergence analysis is deferred to Appendix~C and D.
We provide a convergence analysis in the distributed optimization setting that can be easily adjusted to the federated optimization setting based on sampling criteria for calculating batch gradients, the adequacy of approximating the learning function of a neural network, for example, by a convex function and viewing the learning process as a stochastic convergence process.

\vspace{-0.7cm}

\section{Convergence Rate}

Convergence rate is deferred to Appendix~E. We emphasize on the dependency of the learning rate (convergence rate) in neural networks on the coding of data (gradients). Thus, certain coding schemes show better learning rates than ordinary learning (with uncoded gradients) depending on the matching between the data structures and the used scheme. 

\section{ Simulation Results}\label{results}

In the simulation section, we delve into the convergence rate analysis of FLUE for decentralized optimization on a network of $N_c$ nodes and its federated learning form. The optimization problem under consideration is an unconstrained convex one, represented as:
{\footnotesize \begin{equation}\label{objective.simulation}
 \arg\min_{{\bf x}\in \mathbb{R}^{N}} \|  {\bf F}{\bf x}- {\bf y}  \|_{2}^{2},\end{equation}}
\vspace{-0.1cm}
where ${\bf F}$ is a random matrix of size $M\times N$, and ${\bf y}={\bf F} {\bf x}_o$, with ${\bf x}_o$ sampled from a uniform bounded random distribution. The solution $ {\bf x} ^{*}$ represents the least squares solution of the overdetermined system ${\bf y} = {\bf F}{\bf x}_o, {\bf x}_o\in \mathbb{R}^N$.

In our simulations, we consider a network with one cluster ($m=1$) composed of $N_c$ nodes, each partitioned into $\bar{n}_{l}$ data points. The matrices are of the same size, and we employ absolute error (AE)
{\small $ {\rm AE}:= {\bf x}_{1\le i\le N_c} \frac{\|{\bf x}_i(k)-{\bf x}_o\|_2}{\|{\bf x}_0\|_2} $} and consensus error (CE) {\small $ {\rm CE}:= {\bf x}_{1\le i\le N_c} \frac{\|{\bf x}_i(k)-\bar{\bf x}(k)\|_2}{\|{\bf x}_o\|_2} $} to measure the performance of FLUE. The convergence rate is analyzed for different FLUE variants, coding matrices, and step sizes, comparing FLUE with conventional distributed gradient descent (DGD) for solving the convex optimization problem.

It is worth mentioning that in the following simulations we employed a fixed coding matrix $ {\bf B} $ except the fourth simulation of Figure 6 (see Appendix~A) where we used a random coding matrix $ {\bf B} $.
In Figure 1, we have simulated the convergence of FLUE version with $ {\bf O} = {\bf Q}_{\epsilon, {\bf T}} $ for the above defined overdetermined system linear regression problem with step size $ \alpha_{k} = \frac{1}{(k+100)^{0.75}} $. While in Figure 5 (see Appendix~A), we show the convergence rate for FLUE version with $ {\bf O} = {\bf Q}_{\epsilon, {\bf T}}$. For both we use fixed coding matrices $ \bar{\bf A}^{l} $ (i.e., repeating on each node $ l $ every $ 2 n $ iterations) with the number of nodes $ N_c = 5 $.

Meanwhile, in Figure 4 (see Appendix~A) we show the behavior of FLUE with version $ {\bf O} = {\bf Q}_{\epsilon, {\bf T}} $ for the random coding matrices $ \bar{\bf A}^{l} $ with the number of nodes $ N_c = 5 $. While we address the behavior of FLUE with version $ {\bf O} = {\bf Q}_{\epsilon, {\bf T}} $ for $ N_c = 7 $-node network using fixed coding matrices $ \bar{\bf A}^{l} $ in Figure 5 and using random coding matrices $ \bar{\bf A}^{l} $ in Figure 6 (see Appendix~A). 
Both of the above mentioned simulations are with step size $ \alpha_{k} = \frac{1}{(k+100)^{0.75}} $.

For an explicit discussion of the simulation we refer to Appendix~A.

\begin{figure}[h]\label{fig_1}
\centering
\includegraphics[trim=0 0 0 0 cm , clip, width=10cm]{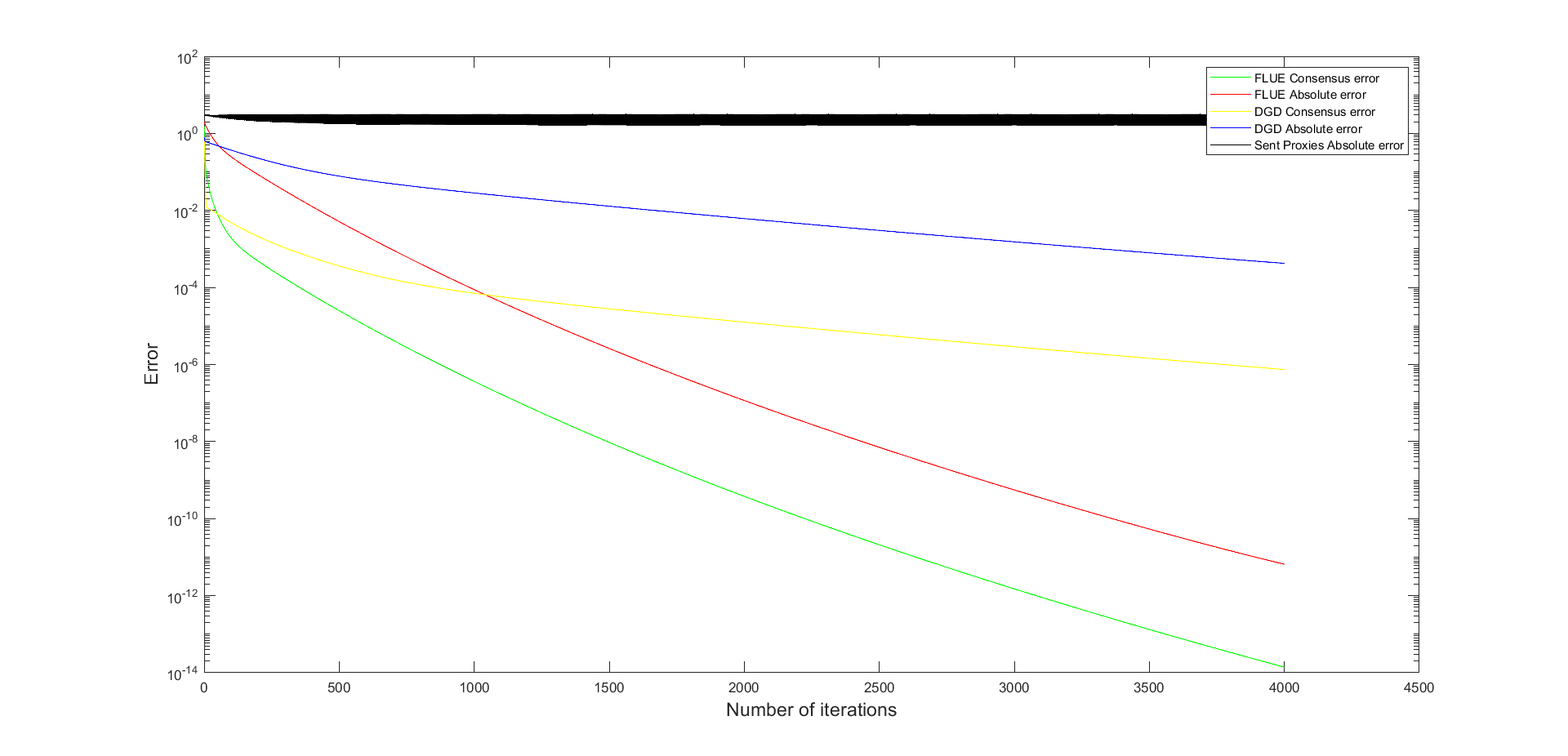}
\caption{\footnotesize Absolute and consensus errors vs iteration of FLUE and DGD with fixed coding matrices $ \bar{\bf A}^{l} $ using FLUE \eqref{fixedQepsT} on a $ 5 $-node network}
\end{figure}


\begin{figure}[h]\label{fig_2}
\centering
\includegraphics[trim=0 0 0 0 cm , clip, width=10cm]{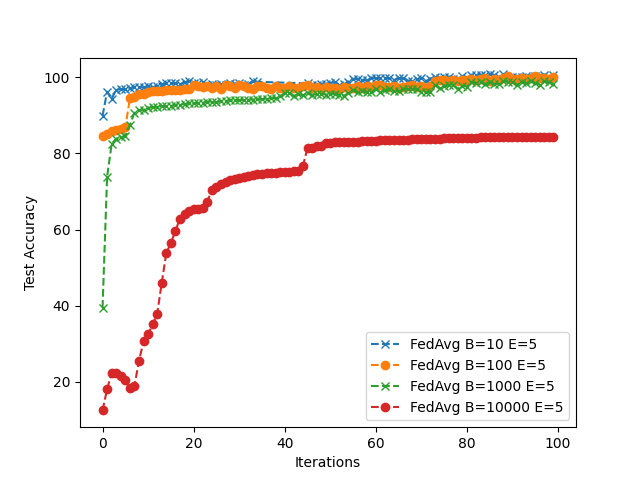}
\caption{\footnotesize Test Accuracy per iterations for FedAvg algorithm with $ E = 5 $ epochs and batchsize $ B = \{10, 100, 1000, 1000 \} $ on MNIST dataset.}
\end{figure}


\begin{figure}[h]\label{fig_3}
\centering
\includegraphics[trim=0 0 0 0 cm , clip, width=10cm]{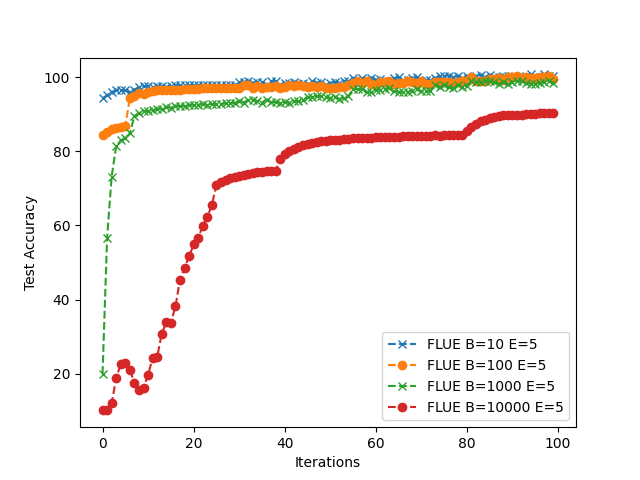}
\caption{\footnotesize Test Accuracy per iterations for FLUE algorithm with $ E = 5 $ epochs and batchsize $ B = \{10, 100, 1000, 1000 \} $ on MNIST dataset.}
\end{figure}

\vspace{-0.1cm}

We show a comparison between FedAvg algorithm and FLUE algorithm variant (with no surpluses) compatible for neural networks federated learning (see Algorithm~1 in Appendix~A). We partition the MNIST dataset into $ 100 $ clients each containing $ 600 $ examples. Each client utilizes a neural network with $ 2 $ hidden layers containing $ 200 $ nodes each and using ReLu activation function.  As was inferred from the convergence rate analysis FLUE can match or outperform ordinary uncoded learning depending on the coding scheme compatibility with the learning dataset.

\nocite{*}

\vspace{-0.3cm}

\section{Conclusion}\label{conlusion}

In this paper, we present FLUE, a federated learning algorithm that utilizes coded local gradients and exchanges coded combinations as model parameter proxies, ensuring data privacy without encryption. We have developed both general and special variants, showcasing convergence and improved learning rates. Our encryption-free implementations yield promising results for federated optimization and machine learning, with variations involving replicated and non-replicated matrices, as well as general and special forms. Furthermore, our approach demonstrates the efficacy of encoding data in accelerating the learning convergence rate. Coordination across all nodes for complete decoding matrices is unnecessary; instead, we limit coordination to specific entries for precise aggregation, maintaining privacy. While this approach doesn't compromise privacy, our ongoing security efforts aim to explore methods ensuring zero coordination among nodes in future research. Moreover, employing a shared encoding matrix does not compromise security. Future research directions could involve exploring algorithms incorporating random encoding matrices, drawing upon methods derived from the convergence of infinite products of random matrices.





\bibliography{ref}

\appendix

\section{Appendix A: Detailed View}

\section{A-1: Introduction}

\subsection{Overview}

Federated Learning empowers a diverse array of devices, such as mobile phones and computing devices, to collectively train a shared predictive model while preserving all training data on the local device. This approach decouples the ability to perform machine learning from the necessity of storing data in the cloud. Within the framework of federated learning (FL), clients cooperate in training a global model without exposing their raw data; instead, they share computed gradients or model parameters. While the local information can still be derived from the outputs transmitted to the parameter server, privacy concerns have led clients to introduce artificial noise or encryption to their local updates, which may include gradients or learned models, to protect the global model training process.

However, the effectiveness of using gradients, even with added noise, has proven limited, as these gradients can still be reverse-engineered, ultimately revealing the model and, consequently, the private data. Recent research has shifted towards using model parameters in training without exchanging local data or computed local gradients. To secure the trained models and mitigate the risk of inferring local data, encryption is employed. Additionally, the introduction of noise to the trained models further enhances privacy.

In this paper, we introduce a federated learning algorithm that harnesses coded local gradients during the local learning phase and exchanges coded combinations serving as proxies for the learned models' parameters. This approach ensures data privacy without relying on encryption. Furthermore, we inject surplus noise into each of these proxy variables to establish a more robust privacy framework. We develop two algorithm variants: a general form and a special form that does not depend on surplus noise. We demonstrate the convergence of these algorithms and establish learning convergence rates that can be improved based on the chosen coding scheme and its alignment with the characteristics and structure of the raw data.

We present two encryption-free implementations using fixed and random coding matrices and provide promising simulation results for the algorithms, viewed from both a federated optimization and a federated machine learning perspective.

\subsection{Introduction}

The widespread adoption of Internet of Things (IoT) technologies has led to the generation of massive amounts of user data. In an effort to protect data privacy, decentralized machine learning methods such as Federated Learning (FL) [have been proposed as an alternative to sharing individuals' raw data. During the training phase, each client periodically retrieves the global model from a parameter server and updates their local model with their own data. Instead of transmitting raw data, only encrypted model parameters are transferred, preserving data privacy. Additionally, the introduction of noise to the trained models can further enhance privacy.

In this paper, we introduce Federated Learning with Unencrypted Model Weights (FLUE), a Federated Learning algorithm that incorporates coded local gradients into its local learning process. We exchange coded combinations' proxies of the learned model parameters to ensure data privacy, all without the need for encryption. Furthermore, we inject surplus noise into these proxy variables to establish stronger privacy safeguards. We present two algorithm variants: a general form and a special form that does not utilize noise surpluses. We demonstrate the convergence of these algorithms and establish learning convergence rates that can be further optimized based on the chosen coding scheme and its compatibility with the characteristics and structure of the raw data.

We provide two encryption-free implementations using fixed and random coding matrices and showcase promising simulation results from both a federated optimization perspective and a federated machine learning perspective.

However, it is important to note that sharing model updates or gradients can potentially compromise clients' private information. Analyzing the differences in training parameters uploaded by clients can reveal sensitive data, as illustrated by the model inverse attack \citep{geiping2020inverting, zhao2020idlg} which allows for training data reconstruction by matching model gradients and optimizing randomly initialized inputs. These attacks represent significant threats to FL security and privacy.

To address the risk of privacy breaches, each client can add artificial noise \citep{abadi2016deep,wei2020federated,yang2022accuracy} to the transmitted parameters. Nevertheless, this approach can result in a noticeable loss of training accuracy while making it more challenging for attackers to reverse-engineer the original local data. Our algorithm mitigates this challenge by employing coded gradients, making it more difficult for eavesdroppers to infer the original local data through gradients. Additionally, the systematic introduction of noise surpluses in the learning update procedure does not compromise learning accuracy. Furthermore, the variable nature of the noise between iterations, while maintaining learning privacy, adds an extra layer of complexity for eavesdroppers attempting to extract data from the exchanged variables.

\subsection{Key Contributions}

The key contributions of FLUE are:

\begin{itemize}
\item Implementing privacy-preserving Federated Learning without relying on encrypted model parameters or exchanging local gradients. 
\item Employing proxy model parameters between server and nodes to facilitate data training and model parameter discovery on nodes. 
\item Enhancing privacy-preserving measures by leveraging surplus noise. 
\item Incorporating coded gradients in local training processes.
\item The encoding of gradients is easily established, any singular matrix can be used for encoding.
\item utilization of a gradient descent and a gradient ascent step in the learning process.
\item we infer that proper encoding of data (gradients) can considerably enhance the learning process (convergence rate).
\end{itemize}

\indent The remainder of the paper is organized as follows. In Appendix A-2, we present the problem setup, background material and needed assumptions. In Appendix A-3, we present the method for forming the encoding and decoding matrices. In Appendix A-4 we formulate our proposed algorithm FLUE with its different forms, implementations and privacy mitigating features. We complement our work in Appendix A-5 with simulation results. Finally, Appendix A-6 concludes our paper. In Appendix~B, we state our main fundamental theorem for the validity of the algorithm convergence for two of FLUE variants in static networks. In Appendix~C we analyze  the convergence of our algorithm in static networks in more details. In Appendix~D, we prove the convergence of FLUE in time-varying networks. Afterwards, we find the convergence rate in Appendix~E. In Appendix~F, we list preliminary lemmas and postulates.

Conventional mathematical nomenclature is used in this paper.


\subsection{ A-2: Problem Setup, Background Material and Assumptions }

\begin{assumption}\label{Assump1}
We consider the optimization problem
\begin{equation}\label{minimization_eqn}
\min_{x \in \mathbf{R}^{N}} f({\bf x}) = \frac{1}{\bar{n}}\sum_{i=1}^{\bar{n}}f_{i}({\bf x})=\mathbb{E}_{d_{i} \in \mathcal{D}}[f_{i}({\bf x})].
\end{equation}
where $f({\bf x})$ is the global overall function to be minimized, $f_{i}({\bf x})$ are local functions related to the used partition, and $\bar{n}$ as the total number of data points in the distribution $ \mathcal{D} $ and $d_{i} $ correspond to data point $ i $.
We list the following assumptions essential for the applicability of our algorithm. \\
Since minimum of $ f({\bf x}) $ is the same as minimum of a scaled $ f({\bf x}) $, we consider in the proof the equivalent optimization problem
\begin{equation}\label{minimization_eqn}
\min_{x \in \mathbf{R}^{N}} f({\bf x}) = \sum_{i=1}^{\bar{n}}f_{i}({\bf x})= \bar{n} \mathbb{E}_{d_{i} \in \mathcal{D}}[f_{i}({\bf x})].
\end{equation}

\begin{itemize}
\item \textbf{ (a) } The global function  $\mathit{f} : \mathbb{R} ^{N} \rightarrow \mathbb{R}  $ is convex.
\item \textbf{ (b) } The solution set of \eqref{minimization_eqn} and the optimal value exist.
$ {\bf x}^{*} \in {\bf x}^{*}= \{ x | f({\bf x})= \min_{x^{'}}f({\bf x}^{'}) \},\\ f^{*}=f({\bf x}^{*})=\min{f({\bf x})} $..
\item \textbf{ (c) } The gradients $\nabla{f_{i}}({\bf x})$, where $i \in V$ are bounded over the $\mathbb{R}^{N}$, i.e. there exists a constant $ F $ such that $ \| nabla{f_{i}}({\bf x}) \| \leq F $ for all $ x \in \mathbb{R}^{N}$ and all $i \in V $. That is $ \| \nabla{f} \| \leq n F $ and  $ \| g_{i}({\bf x}) \| \leq G = \sqrt{n} \| {\bf B} \|_{2,\infty} F $ for all $ x \in \mathbb{R}^{N}$ and all $i \in V $. 
\end{itemize}

\end{assumption}

\section{ A-3: Forming the Decoding and Encoding Matrices}

\subsubsection{Forming the Matrix $  \bar{\bf A}^{l} $ and $ {\bf D}^{l} $ at each node $ l $ }

The server administers the algorithm for each cluster, employing three key parameters:

\begin{itemize}
\item $ n_{c} $ (the number of iterations per cycle is $ 2 n_{c} $, WLOG $ n = n_{c} $),
\item $ N_{c} $ (the set of nodes in the cluster engaged in local learning, with whom the server communicates),
\item At each client $ l $, a partitions set $ \mathcal{P}_{l} $ is designated, outlining the assignment of datapoints to partitions (that correspond to the same scaling factor for the encoding matrix $ {\bf B}_{c} $). The set $ \mathcal{P}{l} $ consists of $ p_{l} = \| \mathcal{P}_{l} \| $ partitions.
\end{itemize}
Once these parameters are specified, the server initializes by transmitting the encoding matrix $ {\bf B}_{c} $ to cluster $ c $. This matrix, $ {\bf B}={\bf B}_{c} $, is an $ n_{c} \times p_{c} $ matrix, where $ p_{c} $ denotes the total number of partitions across all communicating nodes $ N_{c} $, that is, $ p_{c} = \sum_{l=1}^{N_{c}} p_{l} $.

At each client $l$, the data points associated with partition $j$ are encoded using the scaling factor ${\bf B}_{i, \sum_{s=1}^{l-1}p_{s}+j}$ for each iteration $t$ where $t \mod n = i$.
The server's requirement for designing the matrix $ {\bf B} = {\bf B}_{c} $ is simply that it be a singular matrix. This is because the decoding matrix ${\bf A}$ satisfies ${\bf A} {\bf B} = {\bf C}$, where ${\bf C} = \mathbf{1}_{n_{c} \times p_{c}}$ matrix. Therefore, ${\bf A} = {\bf C} {\bf B}^{-1}$. The use of the pseudoinverse of ${\bf B}$ to find ${\bf A}$ results in an infinite set of matrices ${\bf A}$, allowing nodes the flexibility to choose from various alternatives of rows to form their matrices $\bar{\bf A}^{l}$.
As a consequence, $\bar{\bf A}^{l}$ across different nodes need not be identical and can be selected to be fixed or even random every $2n$ iterations, whether replicated every $n$ iterations or not. Consequently, there is no fixed deterministic $\bar{\bf A}^{l}$ that can be easily deduced by an eavesdropper or the server, ensuring security is not compromised.
Turning our attention back to matrix ${\bf B}$, it is imperative to ensure that ${\bf B}$ is a singular matrix, preferably with no zeros \footnote{Since then the application of $ \bar{\bf A}^{l} $ on coded gradients is equivalent to having $ \bar{\bf A}^{l} $ then $ {\bf B}^{c} $ applied on uncoded gradients. And we know that consensus in distributed algorithms with local original gradients is easily established for stochastic matrices}. This guarantees that all nodes' datapoints undergo gradient descent steps during training, eliminating the reliance on proxy models $\bar{\bf x}_{i}$ being mere scaled versions of the models, which could pose security risks. While this concern is already addressed by the surpluses ${\bf y}_{i}^{+}$ or ${\bf y}_{i}^{-}$ and the factors $\bar{\bf A}^{l}_{ii}$, additional precautions are necessary.
Moreover, the coefficients of ${\bf B}$ should not be equaling $1$. This prevents exact uncoded gradient descent steps, thus thwarting attempts to deduce models through inverse engineering the gradients. This safeguard is further reinforced by the use of scaled proxy models.
It is essential that the coefficients of ${\bf B}$ differ across nodes $l$ and partition sets $\mathcal{P}_{l}$. Uniform coefficients across nodes could compromise security by facilitating the decoding of the same scaled factor applied to all gradients at node datapoints through reverse engineering of coded gradient steps.
In order to achieve these functionalities, it is essential to appropriately scale and modify the matrix ${\bf B}$ before transmitting it from the server. Subsequently, based on the provided value of ${\bf B}$, each node $l$ can determine an ${\bf A}$ and, consequently, $\bar{\bf A}^{l}$.
\\
We denote  $ \bar{\bf A}^{l}_{+} $ and $ \bar{\bf A}^{l}_{-} $ to be the matrix $ \bar{\bf A}^{l} $ for the first $ n $ iterations of the period $ 2 n $ and the last $ n $ iterations of the period $ 2 n $, respectively (i.e., FLUE has two forms $ \bar{\bf A}^{l} 
$ replicated twice in $ 2 n $ iterations or not. Each $ 2 n \times 1 $ row of $ \bar{\bf A}^{l} $ are chosen from rows of $ {\bf A} $ normalized by their respective $ l_{1} $-norm.
We form a row $ \bar{r} $ of $ {\bf A}^{l} $ by first choosing any row $ r $ of $ {\bf A} $ where we keep every positive coefficient $ [{\bf A}]_{r,j} $ in its same $ j $ position and normalize by the $ l_{1} $-norm of row $ r $  (i.e., $ [{\bf A}^{l}]_{\bar{r}, j} = \frac{[{\bf A}]_{r, j}}{ \|{\bf A}_{r, :} \|_{1}} $ (corresponding to a gradient descent step of the coded gradient)). And we move every negative coefficient $ [{\bf A}]_{r,j} $ in the $ j + n $ position, take its absolute value and normalize by the $ l_{1} $-norm of row $ r $ (i.e., $ [{\bf A}^{l}]_{\bar{r}, j+n} = -\frac{[{\bf A}]_{r, j}}{ \|{\bf A}_{r, :} \|_{1}} $ (corresponding to a gradient ascent step of the coded gradient)). All other coefficients of row $ \bar{r} $ of $ {\bf A}^{l} $ are set to zero.
In this process we need to ensure that the formed matrix 
$ \bar{\bar{\bf A}}^{l} = \begin{pmatrix} \bar{\bf A}^{l}_{+} \\ \bar{\bf A}^{l}_{-} \end{pmatrix} $ is Stochastic Indecomposable Aperiodic matrix (SIA).

It's worth noting that due to the fact that ${\bf A}$ is drawn from an infinite set, $\bar{\bf A}^{l}$ also belongs to an infinite set, as mentioned earlier. This implies the existence of different ${\bf A}$'s for different nodes, ensuring that the advantage of $\bar{\bf A}^{l}$ remains exclusive to node $ l $ with no knowledge accessible to other nodes or server.
The matrices $ \bar{\bf A}^{l} $ can either undergo duplication twice in each cycle (every $ 2 n $ iterations) or not. Specifically, $ \bar{\bf A}^{l}_{+} $, representing the first $ n $ iterations of the cycle, and $ \bar{\bf A}^{l}_{-} $, corresponding to the last $ n $ iterations, may or may not be identical.
There is also the option to insist that $ \bar{\bf A}^{l} $ for each node $ l $ remains constant throughout the algorithm, repeating every $ 2 n $ iterations, denoted as $ \bar{\bf A}^{l}(k) = \bar{\bf A}^{l} $ for $ 1 \leq l \leq N_{c} $.
Alternatively, it may be stipulated that $ \bar{\bf A}^{l} $ for each node $ l $ varies over time (randomly) throughout the algorithm, eliminating the need for $ \bar{\bf A}^{l} $ to repeat every $ 2n $ iterations. In other words, $ \bar{\bf A}^{l}(k) $ is not necessarily equal to $ \bar{\bf A}^{l}(k^{‘}) $ for $ k \neq k ^{‘} $.
For the fixed $ \bar{\bf A}^{l} $ per node $ l $, we have the corresponding matrix $ \hat{\bf A}(k)= \hat{\bf A} $ to be utilized in the convergence proof. To achieve this, $ {\bf D}^{l} $ should be a column stochastic matrix which also needs to remain fixed throughout the algorithm, with $ {\bf D}^{l}_{+} $ repeating every $ 2n $ iterations and $ {\bf D}^{l}_{-} $ repeating every $ 2n $ iterations. There is flexibility in whether $ {\bf D}^{l}_{+} = {\bf D}^{l}_{-} $ or not.
For varying $ \bar{\bf A}^{l} $ per node $ l $, we need it to be chosen from a compact set rather than an infinite set. That is, $ \bar{\bf A}^{l}(k) \in \mathcal{A}_{l} $ where $ \mathcal{A}_{l} = \{ \bar{\bf A}^{l}_{1}, \bar{\bf A}^{l}_{2}, \ldots, \bar{\bf A}^{l}_{n} \} $ where $ 1 \leq l \leq N{c} $ is a finite set. Here, we don’t require that $ {\bf D} ^{l} $ be fixed for each corresponding $ \bar{\bf A}^{l} $ since the convergence proof utilizes that assumption of Stochastic Indecomposable Aperiodic (SIA) matrices from a finite set $ \hat{\bf A}(k) $ only. This requirement that $ \hat{\bf A}(k) $ be from a finite set is accomplished by requiring each node to choose the random $ \bar{\bf A}^{l} $ from a finite set. Thus, it will ease the repetition of $ \hat{\bf A} $ infinitely many times to reach  convergence. \footnote{ In the time-varying form of FLUE we require that $ \hat{\bf A} $ vary infinitely but from a finite set of different $ \hat{\bf A} $ for the convergence of the algorithm (i.e., through utilizing SIA Theorem~1 (2) in \citep{Xia2015}. }
Regarding the coordination of nodes in the aggregation step, it is necessary for $ \bar{\bf A}^{l}_{+}(k)_{i,i} $ where $ 1 \leq i \leq n $ and  $ \bar{\bf A}^{l}_{-}(k)_{i,i+n} $ where $ 1 \leq i \leq n $, for $ 1 \leq l \leq N_{c} $ and each $ k $ to remain constant for FLUE of the form where $ \bar{\bf A}^{l}(k)_{+} \neq \bar{\bf A}^{l}(k)_{-} $. Alternatively, $ \bar{\bf A}^{l}(k)_{i,i} $ should be constant for $ 1 \leq i \leq n $, for $ 1 \leq l \leq N_{c} $ and each $ k $ for FLUE of the form where $ \bar{\bf A}^{l}(k) =\bar{\bf A}^{l}(k)_{+} = \bar{\bf A}^{l}(k)_{-} $. Additionally, the matrix $ \hat{\bf A} $ needs to be an SIA matrix.

An easy way to ensure this is to have at least one positive column in $ \tilde{\bf A}^{(+)} $, ensuring that $ \bar{\bf A}^{l} $ becomes an SIA matrix for FLUE of the form where $ \bar{\bf A}^{l}(k) = \bar{\bf A}^{l}(k)_{+} = \bar{\bf A}^{l}(k)_{-} $. And $ \bar{\bar{\bf A}}^{l} = \begin{pmatrix} \bar{\bf A}^{l}(k)_{+} \\ \bar{\bf A}^{l}(k)_{-} \end{pmatrix} $ to have at least one positive column for FLUE of the form where $ \bar{\bf A}^{l}(k)_{+} \neq \bar{\bf A}^{l}(k)_{-} $. Similarly,  ensuring that $ \bar{\bf A}^{l} $ becomes an SIA matrix.

Adjustments are also required for $ \bar{\bf A}^{l} $ so that $ \bar{\bf A}^{l}_{+}(k)_{i,i} = \gamma_{i} $ or $ \bar{\bf A}^{l}_{-}(k)_{i,i+n}= \gamma_{i+n} $ be unanimous for all $ l $ and all $ k $, for $ 1 \leq i \leq n $ for the first form of FLUE. And $ \bar{\bf A}^{l}(k)_{i,i} =\gamma_{i} $ for $ 1 \leq i \leq n $, should be unanimous for $ 1 \leq l \leq N_{c} $ and all $ k $ for the latter form. It is crucial that $ 0 < \gamma_{i} < 1 $ or $ 0 < \gamma_{i+n} < 1 $ to obtain a stochastic matrix $ \bar{\bar{\bf A}}^{l} = \begin{pmatrix} \bar{\bf A}^{l}(k)_{+} \\ \bar{\bf A}^{l}(k)_{-} \end{pmatrix} $ encoding weights that ensure model privacy. Consequently, the rows, which are already scaled with weights $ w_{i} $ after normalization, need not be further scaled so that $ \bar{\bf A}^{l}_{ii} = \gamma < 1 $ according to the specified conditions since this weight scaling is already considered in the final matrix 
$ \bar{\bar{\bf A}}^{l} $.


\footnote{It is important to note that a $ \frac{1}{n} $ scaling for $ \nabla{g}_{i} $ is unnecessary, as demonstrated in the proof in Appendix, where normalizing weights achieves the required convergence for $ f = \sum_{i=1}^{n}f_{i} $. Therefore, only $ n_{i} $ is needed for $ f = \sum_{i=1}^{n}\frac{n_{i}}{n}F_{i} $ without the $ \frac{1}{n} $ term.}
In the process of encoding data points using the matrix $ {\bf B }$ to derive coded gradients, there are two approaches. The first involves calculating the uncoded gradient on the original data points and subsequently encoding that gradient using the coefficients of $ {\bf B} $. Alternatively, in scenarios such as linear regression problems or supervised learning in neural networks, we can scale the labels and feature parameters of the data points by the respective coefficients of $ {\bf B} $ corresponding to the node partition and the iteration modulo $ n $. Then the gradients of theses adjusted (weighted) datapoints are the corresponding coded gradients $ \nabla{g_{i}} $. \footnote{It is important to note that a $ \frac{1}{n} $ scaling for $ \nabla{g}_{i} $ is unnecessary, as demonstrated in the proof in Appendix, where normalizing weights achieves the required convergence for $ f = \sum_{i=1}^{n}f_{i} $. Therefore, only $ n_{i} $ is needed for $ f = \sum_{i=1}^{n}\frac{n_{i}}{n}F_{i} $ without the $ \frac{1}{n} $ term.}


Let the coded objective functions $g_j, 1\le j\le n$, and  the decoding matrix ${\bf A}=(a(i,j))_{1\le i,j\le n}$ be as described above.
Set decoding weights
\begin{equation}\label{weight.def} w_i= \Big(\sum_{j \in \Gamma_i} |a(i,j)|\Big)^{-1},\ 1\le i\le n,
\end{equation}
where
\begin{equation}\label{gammai.def}
\Gamma_{i}=\{j, \  a(i,j)\ne 0\}.\end{equation}

For the decoding matrix $ {\bf A} $,
we define  its normalized decoding matrix ${\bf A}_{\rm nde}= \big( \tilde a(i,j)\big)_{1\le i, j\le n}$ of size $n\times n$
by
\begin{equation}\label{normalizedfgi.eq}\tilde a(i,j)= w_i a(i,j) ,\  1\le i, j\le n,
\end{equation}
 and  its row stochastic decoding matrix $ \bar{\bar{\bf A}}^{l} $ of size $ 2 n \times 2 n $ for FLUE of the form where $ \bar{\bf A}^{l}(k) =\bar{\bf A}^{l}(k)_{+} = \bar{\bf A}^{l}(k)_{-} $ by
\begin{equation} \label{Afit.def}
 \bar{\bar{\bf A}}^{l} =\left(\begin{array}{cc}
\tilde {\bf A}^{(+)}  &  \tilde {\bf A}^{(-)} \\
\tilde {\bf A}^{(+)} &\tilde {\bf  A}^{(-)}
\end{array}\right),
\end{equation}
 where   $w_i, 1\le i\le n$, are decoding weights given in \eqref{weight.def}, and
 $$\tilde {\bf A}^{(+)} =\big((\tilde a(i,j))_+\big)_{1\le i,j\le n}\ \ {\rm  and}\  \ \tilde {\bf A}^{(-)} =\big((-\tilde a(i,j))_+\big)_{1\le i,j\le n}$$ are
  positive/negative parts of  the normalized decoding matrix $\tilde {\bf A}=(\tilde a(i,j))_{1\le i, j \leq n}$  respectively.
Where $ a(i,j)_{+} = a(i,j) $ if $ a(i,j) \geq 0 $ and $ 0 $ otherwise.

\begin{definition}\label{A_hat}
$\hat{\bf A} =
\begin{pmatrix}
    \diagentry{\bar{\bf A}^{1}_{+}}\\
    \diagentry{\bar{\bf A}^{1}_{-}}\\
    &\diagentry{\bar{\bf A}^{2}_{+}}\\
    &\diagentry{\bar{\bf A}^{2}_{-}}\\
    &&\diagentry{\xddots}\\
    &&\diagentry{\xddots}\\
    &&&\diagentry{\bar{\bf A}^{n}_{+}}\\
    &&&\diagentry{\bar{\bf A}^{n}_{-}}
\end{pmatrix}$  
\end{definition}

\begin{definition}\label{D_hat}
The  column stochastic matrix
$\hat{\bf D} = 
 \begin{pmatrix}
    \diagentry{\frac{1}{n}{\bf D}^{1}} & & \frac{1}{n}{\bf D}^{2}  & & & \ldots &  \frac{1}{n}{\bf D}^{n}\\
    &\diagentry{\frac{1}{n}{\bf D}^{1}} & & \frac{1}{n}{\bf D}^{2} & & & \ldots &  \frac{1}{n}{\bf D}^{n}\\
    \frac{1}{n}{\bf D}^{1} & & \diagentry{\frac{1}{n}{\bf D}^{2}}\\
    & \frac{1}{n}{\bf D}^{1} & &\diagentry{\frac{1}{n}{\bf D}^{2}}\\
    &&&&\diagentry{\xddots}\\
    &&&&&\diagentry{\xddots}\\
    \frac{1}{n}{\bf D}^{1}&&&&&&\diagentry{\frac{1}{n}{\bf D}^{n}}\\
    &\frac{1}{n}{\bf D}^{1}&&&&&&\diagentry{\frac{1}{n}{\bf D}^{n}}
\end{pmatrix}$   
\end{definition}

\begin{equation}
\begin{split}
& \hat{\bf D}^{+}(i,j)  = 
\hat{\bf D} (i- (i \ div \ 2n )n, j- (j \ div \ 2n )n)  \\ &  for \ 1 \leq i \mod 2n \leq n \ \& \ 1 \leq j \mod 2n \leq n  
\end{split}
\end{equation}

\begin{equation}
\begin{split}
& \hat{\bf D}^{-}(i,j)  = 
\hat{\bf D} (i- (i \ div \ 2n + 1 )n, j- (j \ div \ 2n + 1)n)  \\ &  for \ n+1 \leq i \mod 2n \leq 2n \ \& \ n+1 \leq j \mod 2n \leq 2n 
\end{split}
\end{equation}

Then the $ n^{2} \times n^{2} $ matrix

\begin{definition}\label{D_hat}
$ \bar{\hat{\bf D}} = \hat{\bf D}^{+} = \hat{\bf D}^{-} = 
 \begin{pmatrix}
    \diagentry{\frac{1}{n}{\bf D}^{1}} & \frac{1}{n}{\bf D}^{2}  & & \ldots  \frac{1}{n}{\bf D}^{n}\\
    \diagentry{\frac{1}{n}{\bf D}^{1}} &  \frac{1}{n}{\bf D}^{2} & & \ldots  \frac{1}{n}{\bf D}^{n}\\
    &&&&\diagentry{\xddots}\\
    &&&&&\diagentry{\xddots}\\
    \diagentry{\frac{1}{n}{\bf D}^{1}} & \frac{1}{n}{\bf D}^{2}  & & \ldots  \frac{1}{n}{\bf D}^{n}\\
\end{pmatrix}$   
\end{definition}
is column stochastic.

\begin{definition}\label{D_hat}
$ {\bf T} = 
 \begin{pmatrix}
    \diagentry{{\bf T}^{1}} & & {\bf T}^{2}  & & & \ldots &  {\bf T}^{n}\\
    &\diagentry{{\bf T}^{1}} & & {\bf T}^{2} & & & \ldots &  {\bf T}^{n}\\
    {\bf T}^{1} & & \diagentry{{\bf T}^{2}}\\
    & {\bf T}^{1} & &\diagentry{{\bf T}^{2}}\\
    &&&&\diagentry{\xddots}\\
    &&&&&\diagentry{\xddots}\\
    {\bf T}^{1}&&&&&&\diagentry{{\bf T}^{n}}\\
    &{\bf T}^{1}&&&&&&\diagentry{{\bf T}^{n}}
\end{pmatrix}$   
\end{definition}

\begin{equation}
\begin{split}
& {\bf T}^{+}(i,j)  = 
{\bf T} (i- (i \ div \ 2n )n, j- (j \ div \ 2n )n)  \\ &  for \ 1 \leq i \mod 2n \leq n \ \& \ 1 \leq j \mod 2n \leq n  
\end{split}
\end{equation}

\begin{equation}
\begin{split}
& {\bf T}^{-}(i,j)  = 
{\bf T} (i- (i \ div \ 2n + 1 )n, j- (j \ div \ 2n + 1)n)  \\ &  for \ n+1 \leq i \mod 2n \leq 2n \ \& \ n+1 \leq j \mod 2n \leq 2n  
\end{split}
\end{equation}

\begin{definition}\label{D_hat}
$ \bar{\bf T} = {\bf T}^{+} = {\bf T}^{-} = 
 \begin{pmatrix}
    \diagentry{{\bf T}^{1}} & {\bf T}^{2}  & & \ldots  {\bf T}^{n}\\
    \diagentry{{\bf T}^{1}} &  {\bf T}^{2} & & \ldots  {\bf T}^{n}\\
    &&&&\diagentry{\xddots}\\
    &&&&&\diagentry{\xddots}\\
    \diagentry{{\bf T}^{1}} & {\bf T}^{2}  & & \ldots {\bf T}^{n}\\
\end{pmatrix}$   
\end{definition}

\subsection{Forming Matrices $ {\bf A} $, $ \bar{\bar{\bf A}}^{l} $ and $ {\bf B} $}

Any singular matrix $ {\bf B} $ with $ rank({\bf B}) \leq \min(n_{c},p_{c}) - 1 $ will work as an encoding matrix. Then for the matrix $ \bar{\bar{\bf A}}^{l} $ whether for the fixed or time-varying case we identify two scenarios in which $ \hat{\bf A} $ must be an SIA matrix. And to ensure that we must have first $ \bar{\bf A}_{ii}^{l} = \gamma_{i} $ where $ 1 \leq l \leq N_{c} $ and $ 1 \leq i \leq n $ for FLUE where $ \bar{\bf A}^{l} = \bar{\bf A}^{l}_{+} = \bar{\bf A}^{l}_{-} $. $ \bar{\bar{\bf A}}_{ii}^{l} = \gamma_{i} $ where $ 1 \leq l \leq N_{c} $ and $ 1 \leq i \leq 2 n $ for FLUE where $  \bar{\bf A}^{l}_{+} \neq \bar{\bf A}^{l}_{-} $. And an easy way to ensure that $ \hat{\bf A} $ is SIA for each case is to have at least one positive column for at least one matrix $ \bar{\bf A}^{l} $ for the first FLUE form. That is, for $ \bar{\bar{\bf A}^{l}} $ of the form where

\begin{equation} 
 \bar{\bar{\bf A}^{l}} =\left(\begin{array}{cc}
\tilde {\bf A}^{(+)}  &  \tilde {\bf A}^{(-)} \\
\tilde {\bf A}^{(+)} &\tilde {\bf  A}^{(-)}
\end{array}\right),
\end{equation}
has $ \tilde {\bf A}^{(+)} $ containing one positive column of entries. And $ \bar{\bar{\bf A}^{l}} $ has one positive column of entries for the later FLUE form.
To ensure the first conditions we need first to find the null space of matrix $ {\bf B} $. Then have $ \omega_{i} {\bf A}_{r_{i}}  = \gamma_{i} $ for each specified $ i $ according to the form of FLUE. And $ {\bf A}_{r_{i}} $ is any vector where $ {\bf A}_{r_{i}}{\bf B} = \mathbf{1}_{n_{p}} $ and $ \omega_{i} $ as defined earlier to be the inverse of the l1-norm of $ {\bf A}_{r_{i}} $.
To satisfy that we follow the following approach.
After identifying $ {\bf Y} $, the vector basis of null space of $ {\bf B}^{T} $ of rank $ d $, $ {\bf A}_{r_{i}} = \mathbf{1}_{n_{p}}^{T}{\bf B}^{\dag} + \beta{\bf x}^{T}{\bf Y} $ where $ {\bf x} \in \mathbf{R}^{d} $ and $ \beta \in \mathbf{R} $. And to have that the l1- norm of $ {\bf A}_{r_{i}} $ to be inverse of $ \omega_{i} $ we multiply $ {\bf A}_{r_{i}} $ by $ \bar{1}_{n_{p}} $ where $ \bar{1}_{n_{p}} $ is a vector of $ +1 $ and $ -1 $. So that, $ {\bf A}_{r_{i}} \bar{1}_{n_{p}}^{T} = (\mathbf{1}_{n_{p}}^{T}{\bf B}^{\dag} + \beta{\bf x}^{T}{\bf Y}) \bar{1}_{n_{p}}^{T} = \frac{1}{\omega_{i}} $ But at the end we need to ensure for the chosen combination of $ \bar{1}_{n_{p}} $ that the absolute values of $ {\bf A}_{r_{i}} $ match so that it has an l1-norm to be inverse of $ \omega_{i} $. And we need $ {\bf A}_{r_{i}}  = (\mathbf{1}_{n_{p}}^{T}{\bf B}^{\dag} + \beta{\bf x}^{T}{\bf Y})_{ii} = \frac{\gamma_{i}}{\omega_{i}} $.
Having the dimension of the null space of $ {\bf B} $ to be $ d $, then we choose $ d - 2 $ random entries of $ {\bf x} $, one random value of $ \beta $ and one random value for $ \omega_{i} $. Then we find the other entries values of $ {\bf x} $ thus forming $ {\bf A}_{r_{i}} $ and consequently a possible $ \bar{\bar{\bf A}^{l}} $ row. After forming a number of rows. We form the matrix $ \bar{\bar{\bf A}}^{l} $ ensuring the conditions to make $ \hat{\bf A} $ SIA for either form of FLUE.. Subsequently, we either fix $ \bar{\bar{\bf A}^{l}} $ throughout the iterations or form a new one for each iteration but chosen from a finite set, for both the fixed or time-varying case of the algorithm.

\section{ A-4: Main Algorithm }

\subsubsection{FLUE: Initialization}

The recent surge in successful deep learning applications has predominantly relied on various adaptations of stochastic gradient descent (SGD) for optimization. Neural network training uses gradient descent in its learning process. Therefore, it's a natural progression to develop algorithms for federated learning, building upon the foundation of SGD. The initial application of SGD to federated optimization may appear straightforward, involving a single batch gradient calculation, typically performed on a randomly selected client, during each communication round. SGD in its construction allows the utilization of part of the data in computing the gradient maintaining the convergence to the optimizer in a stochastic convergence process. Therefore, for our baseline approach, we opt for large-batch synchronous SGD. In the federated setting, the server divides at the beginning of the process the clients into $ m $ clusters taking into considerations the complexity, heterogeneity, type of data and the estimated number of stragglers per cluster. Each cluster $ c \in \{1,2,\ldots,m\} $ has $ N_{c} $ number of clients. For each of those $ m $ clusters the server forms a fixed matrix $ {\bf B}^{c} \in \mathbb{R}^{n_{c} \times p_{c} } $ where $ p_{c} $ is the number of partitions on all clients $ i $ of cluster $ c $ which can be related to the number of silos or batch sizes at that node. Then the sever sends the matrix $ {\bf B}^{c} $ to all clients in the cluster $ c $. This matrix $ {\bf B}_{c} $ can be any $ n_{c} \times p_{c} $ singular matrix. We update FLUE on the nodes such that each iteration $ i $ modulo $ n $ corresponds to row $ i $ of matrix $ {\bf B} = {\bf B}_{c} $ (i.e., for each $ 2n $ iterations $ {\bf B} $ repeats itself two times, one for the gradient descent updating equation and one for the gradient ascent updating equation). And $ {\bf B} = {\bf B}_{c} $ is fixed across cluster $ c $. After the matrix $ {\bf B}_{c} $ is sent to all nodes in cluster $ c $. Each client in the cluster utilizes gradient coding to the data partitions available to it. This can be done by either partitioning the data and encoding the datapoints labels and features according to $ {\bf B} ={\bf B}_{c} $ or through partitioning the data and applying the learning algorithm with encoded gradient steps according to $ {\bf B} $ encoding scheme. Then a corresponding decoding matrix $ \bar{\bar{\bf A}^{l}} $ for each node $ l $ of the cluster is computed. Then a proposed algorithm Federated Learning with Un-Encrypted model parameters (FLUE) is employed to accomplish the learning task described by \eqref{minimization_eqn}.

In general, the parameter server selects a set of $ s $ clusters of clients in each round and allows the learning of the model estimates through the computation of the gradient of the loss over all data held by these $ N = \sum_{c=1}^{s}N_{c} $ clusters’ clients. Thus, the fraction $f = \frac{ \sum_{c=1}^{s}\bar{N}_{c}}{\sum_{c=1}^{m}N_{c}} $ determines the global batch size, with $ f = 1 $ equivalent to full-batch (non-stochastic) gradient descent. For $ f = 1 $ we have all clusters with all there clients performing a full-batch gradient descent. That is, we have all $ m $ clusters with all $ \sum_{c=1}^{m}N_{c} $ clients with all datapoints $ \bar{n} = \sum_{c=1}^{m}\sum_{l=1}^{N_{c}}\bar{n}_{l} $ performing gradient descent. This baseline algorithm is referred to as  "FedSGD." In a typical FedSGD implementation with $ f = 1 $ and a fixed learning rate $ \alpha $ each client $ l $ calculates $ \nabla{F}_{l}=\frac{1}{\bar{n}_{l}}\sum_{i=1}^{\bar{n}_{l}}\nabla{f}_{i}(x(k)) $ representing the average gradient based on its local data at the current model $ x(k) $ on all data points $\bar{n}_{l} $ of client $ l $. In essence, each client performs one local gradient descent step using its data, and the server computes a weighted average of the resulting models. By structuring the algorithm this way, additional computation can be introduced at each client by iterating the local update multiple times before the averaging step. This extended approach is known as "FederatedAveraging" or "FedAvg" \citep{mcmahan2016communication}. The level of computation can be controlled through three key parameters: $f $(the fraction of clients performing computation per round), $ E $ (the number of training passes each client makes over its local dataset per round), and $ B $ (the local minibatch size for client updates). $ B $ can vary from the full local dataset which can be treated as a single minibatch to a minibatch consisting of only one datapoint. Thus, one end of this algorithm family corresponds to $ B = n_{l} $ and $ E =1 $ which precisely matches FedSGD. For a client with $ n_{i} $ local datapoints, the number of local updates per round is $ u_{i}=\frac{E n_{i}}{B} $.
In our approach we code the gradients after partitioning the data on each client and apply FLUE with any range of the hyperparameters $ f $, $ E $ and $ B $.

\subsubsection{FLUE: General Form}

For the first $n$ iterations $j$ modulo $2n$, each of the connected $n$ clients performs an updating step on its model weights ${\bf x}^+_i(2nk+j)$ using its coded data formed gradient according to matrix ${\bf B}$, utilizing the coded gradient descent and surplus variable ${\bf y}^+_i(2nk+j)$. Thus, forming its proxy weight ${\bf \bar{x}}^+_i(2n(k+1)+j)$. Then each client updates its surplus variable using previously sent proxy weights from the server and its previous model weight and surplus variables from previous iterations. It then sends its updated proxy weight ${\bf \bar{x}}^+_i(2n(k+1)+j)$ to the server which in turn aggregates all clients proxy weights received from the connected clients forming the server proxy update ${\bf \bar{x}}(2n(k+1)+j)$. Afterwards the server sends its update proxy weights to each of the chosen $n$ clients. Consequently, each of those clients finds its model weights ${\bf x}^+_i(2n(k+1)+j)$ using the server sent proxy weights ${\bf \bar{x}}(2n(k+1)+j)$.  Similarly, for the last $n$ iterations modulo $2n$, the process follows the same structure as the clients perform an updating step on their model weights  ${\bf x}^-_i(2nk+j)$ using its coded data formed gradient according to matrix ${\bf B}$, utilizing the coded gradient ascent and surplus variable ${\bf y}^-_i(2nk+j)$. Thus, each client forms its proxy weight ${\bf \bar{x}}^-_i(2n(k+1)+j)$. Then each client updates its surplus variable using previously sent proxy weights from the server and its previous model weight and surplus variables from previous iterations. It then sends its updated proxy weight ${\bf \bar{x}}^-_i(2n(k+1)+j)$ to the server which in turn aggregates all clients proxy weights received from the connected clients forming the server proxy update ${\bf \bar{x}}(2n(k+1)+j)$. And the process continues henceforth until convergence.

FLUE performs the following updating iterations at each node $i$ for $i\in\{1,2,\ldots,n\}$. Please note that $\Gamma_{i}= \Gamma_{i}(k) $ is the fixed support of the row of $ {\bf A}$ identified with iteration $ i $:
{\footnotesize
\begin{align}\label{updating_eqn1}
\begin{split}
{\bf \bar{x}}_{i}^{+} & (2n(k+1)+j) =   \bar{\bf A}^{i}_{jj}(k){\bf x}_{i}^{+}(2nk+j) \\
&  - \epsilon \sum_{s}[{\bf D}^{i}_{+}(k)]_{js} {\bf y}_{i}^{+}(2nk+s) - \alpha_{k} {\bf B}_{ji} \nabla{f}_{i}({\bf x}_{i}^{+}(2nk+j)) \\
{\bf y}_{i}^{+} & (2n(k+1)+j) =   {\bf x}_{i}^{+}(2nk+j) - {\bf \bar{x}}(2nk+j) \\
& -\epsilon {\bf y}_{i}^{+}(2nk+j)  + \sum_{s}[{\bf D}^{i}_{+}(k)]_{js} {\bf y}_{i}^{+}(2nk+s).
\end{split}
\end{align}}

{\footnotesize
\begin{align}\label{updating_eqn1a}
\begin{split}
{\bf \bar{x}}(2n(k+1)+j)&= \frac{1}{N}\sum_{i =1}^{N}{\bf \bar{x}}_{i}^{+}(2n(k+1)+j)
\end{split}
\end{align}}

{\footnotesize
\begin{align}\label{updating_eqn1b}
\begin{split}
{\bf x}^{+}_{i}(2n(k+1)+j)&= {\bf \bar{x}}(2n(k+1)+j) \\ & +\sum_{s \in \Gamma_{j} \backslash \{j\} }\bar{\bf A}^{i}_{js}(k){\bf x}_{i}(2nk+s) + \epsilon {\bf y}_{i}^{+}(2nk+j) 
\end{split}
\end{align}}

{\footnotesize
\begin{align}\label{updating_eqn2}
\begin{split}
{\bf \bar{x}}_{i}^{-} & (2n(k+1)+j) =   \bar{\bf A}^{i}_{(j-n),(j-n)}(k){\bf x}_{i}^{+}(2nk+j) \\
& - \epsilon \sum_{s}[{\bf D}^{i}_{-}(k)]_{js} {\bf y}_{i}^{+}(2nk+s)  + \alpha_{k} {\bf B}_{j-n,i} \nabla{f}_{i}({\bf x}_{i}^{-}(2nk+j)) \\
{\bf y}_{i}^{-} & (2n(k+1)+j)=   {\bf x}_{i}^{-}(2nk+j) - {\bf \bar{x}}(2nk+j) \\ 
& - \epsilon {\bf y}_{i}^{-}(2nk+j) + \sum_{s}[{\bf D}^{i}_{-}(k)]_{js} {\bf y}_{i}^{-}(2nk+s).
\end{split}
\end{align}}

{\footnotesize
\begin{align}\label{updating_eqn2a}
\begin{split}
{\bf \bar{x}}(2n(k+1)+j)&= \frac{1}{N}\sum_{i =1}^{N}{\bf \bar{x}}_{i}^{-}(2n(k+1)+j)
\end{split}
\end{align}}

{\footnotesize
\begin{align}\label{updating_eqn2b}
\begin{split}
{\bf x}^{-}_{i}(2n(k+1)+j)&= {\bf \bar{x}}(2n(k+1)+j) \\
& +\sum_{s \in \Gamma_{j} \backslash \{j\} }\bar{\bf A}^{i}_{j-n,s}(k){\bf x}_{i}(2nk+s) + \epsilon {\bf y}_{i}^{-}(2nk+j) 
\end{split}
\end{align}}

Each node $ i \in V $ maintains four vectors: two estimates ${\bf x}_{i}^{+}(2nk+j)$, $ {\bf x}_{i}^{-}(2nk+j)$  for $ 1 \le j \leq n $ and for $ n+1 \le j \le 2n $, respectively. And two surpluses $ {\bf y}_{i}^{+}(2nk+j)$ and ${\bf y}_{i}^{-}(2nk+j)$  for $ 1 \le j \leq n $ and for $ n+1 \le j \le 2n $, respectively.  All in $ \mathbb{R}^{N}$, where $ 2nk+j $ is the discrete time iteration. We use $ \hat{\bf x}_{i}(2nk+j) $ to mean either ${\bf x}_{i}^{+}(2nk+j)$ and $ {\bf x}_{i}^{-}(2nk+j-n)$ for $ 1 \leq j \leq n $ and $ n+1 \leq j \leq 2n $, respectively.

\begin{remark}
    The above general form is for FLUE with $ \bar{\bf A}^{l} $ replicated two times every $ 2 n $ iterations. 
\end{remark}

\subsubsection{FLUE: Special Form}

In conjunction to this general form of the algorithm which uses the advantage of coding and surplus noise to secure privacy, we present a special variant where there are no surpluses. To keep our presentation comprehensive we present the updating equations for this special variant below:

{\footnotesize
\begin{align}\label{updating_eqn1'}
\begin{split}
{\bf \bar{x}}_{i}^{+}(2n(k+1)+j)&=\bar{\bf A}^{i}_{jj}(k){\bf x}_{i}^{+}(2nk+j) \\
& - \alpha_{k} {\bf B}_{ji} \nabla{f}_{i}({\bf x}_{i}^{+}(2nk+j))\\
\end{split}
\end{align}}

{\footnotesize
\begin{align}\label{updating_eqn1'a}
\begin{split}
{\bf \bar{x}}(2n(k+1)+j)&= \frac{1}{N}\sum_{i =1}^{N}{\bf \bar{x}}_{i}^{+}(2n(k+1)+j)
\end{split}
\end{align}}

{\footnotesize
\begin{align}\label{updating_eqn1'b}
\begin{split}
{\bf x}^{+}_{i}(2n(k+1)+j)&= {\bf \bar{x}}(2n(k+1)+j) \\
& +\sum_{s \in \Gamma_{j} \backslash \{j\} }\bar{\bf A}^{i}_{js}(k){\bf x}_{i}(2nk+s) 
\end{split}
\end{align}}

{\footnotesize
\begin{align}\label{updating_eqn2'}
\begin{split}
{\bf \bar{x}}_{i}^{-}(2n(k+1)+j)&=\bar{\bf A}^{i}_{(j-n),(j-n)}(k){\bf x}_{i}^{-}(2nk+j) \\
& + \alpha_{k} {\bf B}_{j-n,i} \nabla{f}_{i}({\bf x}_{i}^{-}(2nk+j))\\
\end{split}
\end{align}}

{\footnotesize
\begin{align}\label{updating_eqn2'a}
\begin{split}
{\bf \bar{x}}(2n(k+1)+j)&= \frac{1}{N}\sum_{i =1}^{N}{\bf \bar{x}}_{i}^{-}(2n(k+1)+j)
\end{split}
\end{align}}

{\footnotesize
\begin{align}\label{updating_eqn2'b}
\begin{split}
{\bf x}^{-}_{i}(2n(k+1)+j)&= {\bf \bar{x}}(2n(k+1)+j) \\
& +\sum_{s \in \Gamma_{j} \backslash \{j\} }\bar{\bf A}^{i}_{j-n,s}(k){\bf x}_{i}(2nk+s) 
\end{split}
\end{align}}

For the proof of this special case, we use can utilize the proof of the general case.

\begin{algorithm}
 \caption{FLUE: Special Form (No Noise Surpluses) Used for Neural Networks Federated Learning} 
\begin{algorithmic}[1]
 \STATE \textbf{ Initialization: The server divides the nodes into $ m $ clusters. Then it forms the encoding matrix $ {\bf B}_{c} $ of size $ n_{c} \times p_{c} $ and send it to each cluster. And identifies which nodes in each cluster use which partitions and the corresponding samples encoded according to their divisions. Each node $ l $ forms the decoding matrix $ \bar{\bar{\bf A}}^{l} = \begin{pmatrix} \bar{\bf A}^{l}_{+} \\ \bar{\bf A}^{l}_{-} \end{pmatrix} $ used in the algorithm in either the time-invariant or time-varying forms. }
  \FOR { each iteration $ t = 1 , 2 , \ldots $}
         \FOR { each cluster $ c $}
         \STATE $ k \leftarrow t \ | \ 2 n_{c} $
        \STATE $ N_{c}(k)  \leftarrow  \max(C N_{c}, 1) $ 
        \STATE { $ S_{c}(k)  \leftarrow $ set \ of \ random $ N_{c}(k) $ clients \ on \ cluster $ c $ at \ iteration $ t $ } 
        \FOR { each subiteration $ j = t \mod 2 n_{c} $ in cluster $ c $ }

        \FOR { each client $ l \in S_{c}(k) $ in parallel }

\STATE $ \bar{\bf x}_{l}(t) \leftarrow ClientProxyUpdate(l, {\bf x}_{l}(2n(k-1)+j)) $
\STATE $ \bar{m}_{c}(k) \leftarrow \sum_{l \in S_{c}(k)} \bar{n}_{l} $ 
\STATE { $ \bar{\bf x}(2nk+j) \leftarrow \sum_{l \in S_{c}(k)} \frac{\bar{n}_{l}}{\bar{m_{c}(k)}} \bar{\bf x}_{l}(t) $ }
\STATE $ {\bf x}_{l}(2nk + j) \leftarrow ClientUpdate(l, \bar{\bf x}(2nk + j )) $
    
    \ENDFOR
    \ENDFOR
    \ENDFOR
    \ENDFOR
    \STATE \STATE  Client $ l $ : 
    \STATE $ \mathcal{B}  \leftarrow $ (split partitions set $ P_{l} $ into \ batches \ of \ size $ B $)
    \FOR { each local epoch $ i $ from $ 1 $ to $ E $ } 
    \FOR { batch $ b \in \mathcal{B} $ }
    \STATE Compute coded gradient $ {\bf B}_{j \mod n, i} \nabla{f}_{i}({\bf x}_{i}^{+}(2nk+j)) $ on each batch according to its partition division
    \ENDFOR
    \ENDFOR
   \STATE $ ClientProxyUpdate(l, {\bf x}_{l}) \leftarrow \bar{\bf A}^{l}_{j \mod n, j \mod n}(k){\bf x}_{l}(2n(k-1)+ (j \mod n)) -  \alpha_{k} (-1)^{(j-1) \ div \ n} {\bf B}_{j \mod n, i} \nabla{f}_{i}({\bf x}_{i}(2nk+j)) $ 
   \STATE $ ClientUpdate(l, \bar{\bf x}(2nk + j )) \leftarrow {\bf \bar{x}}(2nk+j)+\sum_{s \in \Gamma_{j} \backslash \{j\} }\bar{\bf A}^{l}_{j \mod n, s}(k){\bf x}_{l}(2n(k-1)+s) $ 
   
\end{algorithmic}
\end{algorithm}

\begin{remark}
The following federated learning FLUE algorithm form is a special form variant with no surpluses and with replication of $ \bar{\bf A}^{l} $ two times every $ 2 n $ iterations.
\end{remark}

In the implementation of our algorithm, matrix $ {\bf B} $ must be fixed, repeating every $ n $ iterations (and $ 2n $). This requirement is essential for the computation of matrix $ {\bf A} $ and $ \bar{\bf A}^{l} $. The matrix $ {\bf B} $ should be known to all nodes without encryption, ensuring privacy through the utilization of $ \bar{\bf A}^{l} $ and surpluses. Two cases of computing $ {\bf D}^{l} $ for the FLUE algorithm exist, both requiring no coordination among nodes and no encryption. The first case involves a fixed $ {\bf D}^{l} $ every $ 2n $ iterations at each node $ l $. The second case involves a variable (random) $ {\bf D}^{l} $ every $ 2n $ iterations at each node $ l $, chosen from a finite or infinite set. For both cases $ {\bf D}^{l} $ need to be a column stochastic matrix and can be repeated twice every $ 2 n $ iterations or not. Privacy is maintained through the use of $ \bar{\bf A}^{l} $ and surpluses.

In the fixed $ \bar{\bf A}^{l} $ scenario, nodes independently calculate and repeat $ \bar{\bf A}^{l} $ every $ 2n $ iterations, either with or without replication. This case provides convergence proof for both scenarios of replication. In the variable (random) $ \bar{\bf A}^{l} $ scenario, each node independently and randomly computes $ \bar{\bf A}^{l} $ every $ 2n $ iterations, enhancing privacy. The convergence proof is extended to this scenario, highlighting its effectiveness in preserving privacy.

The FLUE mechanism starts with the server organizing clients into clusters and forming fixed matrices $ {\bf B}^{c} $ for each cluster. The server transmits $ {\bf B}^{c} $ to all clients within the cluster, with $ {\bf B}^{c} $ usually stochastic. The FLUE algorithm operates without encryption, and privacy is further enhanced by using either on-the-fly random $ \bar{\bf A}^{l} $ rows or different fixed $ \bar{\bf A}^{l} $ rows for nodes per iteration modulo $ 2n $. Privacy is maintained through the convergence of stochastic matrices, and the scaling of $ \bar{\bf A}^{l} $ is uniformly applied to nodes.

To address privacy concerns during convergence, the addition of surpluses $ {\bf y}^{+}_{i} $ and $ {\bf y}^{-}_{i} $ with noise is proposed. This additional noise helps mitigate the risk of compromising privacy as the algorithm converges on model weight parameters. The use of variable $ \epsilon $ is suggested to counter potential privacy issues arising from a constant added coefficient. The proposed FLUE algorithm provides a privacy-preserving federated learning approach without the need for encryption, ensuring convergence and robust security features.

\footnote{Notice in line 11 of Algorithm~1 the weighting factor is chosen so that the batch coded gradient $ \nabla{g} $ computed still satisfy $ \mathcal{E}[\nabla{g}] = \nabla{f} $. Similar to the original uncoded gradient case \citep{mcmahan2016communication} }


\subsubsection{Implementation: No Encryption}

As for the implementation of our algorithm, we require that matrix $ {\bf B} $ be fixed, that is repeating every $ n $ iterations (and evidently $ 2n $) (i.e., one for the gradient descent coding step and one for the gradient ascent coding step). As the algorithm for computing the matrix $ {\bf A} $ and consequently $ \bar{\bf A}^{l} $ all matrix $ {\bf B} $ should be available. This requires that $ {\bf B} $ be known to all nodes.
As for the matrix $ {\bf D}^{l} $, we do not require it to be replicated two times every $ 2n $ iterations but it could be. Requiring only column stochastic matrices for each $ n $ iterations will suffice. Moreover, either requirement doesn't need coordination among nodes as long as each node repeats the $ {\bf D}^{l} $ matrix every $ 2n $ iterations two times in its surpluses updating equations for the latter case. Meanwhile, we don't require encryption of $ {\bf D} $ as privacy is still protected through utilizing $ \bar{\bf A}^{l} $. And is further leveraged using surpluses. We differentiate between  two cases of computing $ {\bf D}^{l} $ for the FLUE algorithm, both of which need no coordination among nodes. And thus no encryption is used in computing this $ {\bf D}^{l} $ matrix. 
The first case is for fixed $ {\bf D}^{l} $ every $ 2n $ iterations at each node $ l $. For this case we don't require coordination among nodes as each node can replicate $ {\bf D}^{l} $ every $ 2n $ iterations independently thus keeping it fixed along the algorithm process. Thus, each node $ l $ calculates a column stochastic $ {\bf D}^{l} $ independently, repeating it every $ 2n $ iterations. So, for the case where the matrix $ \hat{\bf D} $ would be fixed (i.e., having two similar replicas every $ 2n $ iterations) in \eqref{D_hat}.  This allows us to easily prove convergence of our algorithm for the fixed $ \hat{\bf A} $ fixed $ \hat{\bf D} $ approach, whether in the scenario of replicating $ {\bf D}^{l} $ two times every $ 2 n $ iterations or not. Or even having $ \hat{\bf A} $ fixed and $ \hat{\bf D} $ varying from a finite set in either replication scenario. However, we restrict our analysis to the first structure.  

In our time-varying analysis case we can have for variable (random) $ {\bf D}^{l} $ every $ 2n $ iterations at each node $ l $ to be chosen from finite or infinite set of such $ {\bf D}^{l} $ at each node $ l $. For this case we also don't require coordination among nodes as each node does not need to replicate $ {\bf D}^{l} $ every $ 2n $ iterations along the algorithm process. So, for this case the matrix $ \hat{\bf D} $ would be random (i.e., varying every $ 2n $ iterations) in \eqref{D_hat}. This make us use the time-varying random $ \hat{\bf A} $ and time-varying $ \hat{\bf D} $ approach to prove convergence of our algorithm. 

\begin{remark}
It is worth noting that we could have developed a proof for variable (random) $ {\bf B} $ (i.e., time-varying doesn't repeat every $ 2 n $ iterations). That is for $ {\bf B} $ with no coordination between nodes while we can still preserve the replication of the same used $ {\bf B} $ two times every $ 2n $ iterations for this only requires each node to establish that independently. However, the proof requires results in the convergence of the product of stochastic matrices which further restricts the form of the utilized $ {\bf B} $.
\end{remark}

Thus, by requiring a coordinated $ {\bf B} $, we let the server form this $ {\bf B} $ at the beginning of the process and send it to all nodes. Here, we don't require encryption of $ {\bf B} $ as privacy is still protected through utilizing $ \bar{\bf A}^{l} $. And is further leveraged using surpluses.

As we have mentioned earlier, we can prove the convergence of FLUE algorithm in its general form  for two different cases, fixed $ \bar{\bf A}^{l} $ every $ 2n $ iterations at each node $ l $ and variable (random) $ \bar{\bf A}^{l} $ every $ 2n $ iterations at each node $ l $. We could have established convergence with no requirement to replicate $ \bar{\bf A}^{l} $ two times every $ 2n $ iterations (i.e., one for the gradient descent step and one for the corresponding gradient ascent step). For this case we don't require coordination among nodes as each node can replicate $ \bar{\bf A}^{l} $ independently. The proof we provided holds for the two mentioned cases of replicating $ \bar{\bf A}^{l} $ two times every $ 2 n $ iterations or not at each node $ l $.

\subsubsection{Fixed $ \bar{\bf A}^{l} $ every $ 2n $ iterations at each node $ l $}

For this case we don't require coordination among nodes as each node can replicate $ \bar{\bf A}^{l} $ every $ 2n $ iterations independently thus keeping it fixed along the algorithm process. Thus given the fixed $ {\bf B} $ each node $ l $ calculates $ \bar{\bf A}^{l} $ independently, repeating or not repeating every $ 2n $ iterations. The matrix $ \hat{\bf A} $ would be fixed (i.e., repeating every $ 2n $ iterations) in \eqref{A_hat}. This allows us to prove convergence of our algorithm using the fixed static $ \hat{\bf A} $ and fixed $ \hat{\bf D} $ approach. This has a moderate privacy concerns as although the server or another eavesdropper tries to learn the coding weights $ [\bar{\bf A}^{l}]_{ij} $ coefficients, it needs to learn from different nodes having different respective coefficients for each iteration which makes a cumbersome task. Meanwhile, since this learning can be done on a long period of time where the eavesdropper can utilize the repeating of these coefficients every $ 2n $ iterations along the federated learning process of a node $ l $ to its favor to compromise privacy. To further mitigate that, we upgrade our case to a random (variable) $ \bar{\bf A}^{l} $ every $ 2n $ iterations at each node $ l $, keeping the replication of each $ \bar{\bf A}^{l} $ two times every $ 2 n $ iterations. However, also utilizing the noise surplus in this case can provide a further privacy shield.
Since each node $ l $ preserves this structure of $ \bar{\bf A}^{l} $ independently  without exposing it we don't require any encryption $ \bar{\bf A}^{l} $ while still maintaining all the positive leveraged security privileges. We also showed that our convergence analysis holds for the case of having a fixed $ \bar{\bf A}^{l} $ but not replicated two times every $ 2n $ iterations. There is no privacy concerns for this case especially if we move a step forward to the time-varying $ \bar{\bf A}^{l} $ scenario.

\subsubsection{variable (random) $ \bar{\bf A}^{l} $ every $ 2n $ iterations at each node $ l $}

For this case we don't require coordination among nodes as each node does not need to replicate $ \bar{\bf A}^{l} $ every $ 2n $ iterations along the algorithm process. Thus given the fixed $ {\bf B} $ each node $ l $ calculates $ \bar{\bf A}^{l} $ independently and randomly. So, for the case of non-stragglers the matrix $ \hat{\bf A} $ would be random (i.e., varying every $ 2n $ iterations) in \eqref{updating_eqn}.  This make us use the time-varying (i.e., random) $ \hat{\bf A} $ and time-varying (i.e., random) $ \hat{\bf D} $ approach to prove convergence of our algorithm. This has the best leveraged secure implementation  as although the server or another eavesdropper tries to learn the coding weights $ [\bar{\bf A}^{l}]_{ij} $ coefficients, it needs to learn from different nodes having different respective coefficients for each iteration without any repetition. So that for each node and each iteration we have a different row vector of $\bar{\bf A}^{l} $ which makes this an extremely tedious task. Meanwhile, we can further mitigate privacy by also utilizing the noise surplus.
Since each node $ l $ need not preserve but the stochastic structure of $ \bar{\bf A}^{l} $ independently without exposing it we don't require any encryption $ \bar{\bf A}^{l} $ while still maintaining all the positive leveraged security privileges.
Meanwhile, $ \bar{\bf A}^{l} $  in the current algorithm convergence proof although not repeated every $ 2n $ iterations can be replicated twice for each $ 2n $ iterations or not. For the first case this can somehow compromise security, mainly if an eavesdropper can learn twice about the $ [\bar{\bf A} ]_{ij} $ coefficients. However, this is not a long time to use this information as in the fixed  $ \bar{\bf A}^{l} $ case. This is because $ \bar{\bf A}^{l} $ is varying every $ 2n $ iterations.
It is worth emphasizing that  $ \bar{\bf A}^{l} $ can be varying every $ 2n $ iterations with no replication during these iterations (i.e., varying every $ n $ iterations). This case is the most secure case as no coefficients are orderly repeated to allow learning. The proof analysis holds for both cases of replicating $ \bar{\bf A}^{l} $ or not two times every $ 2 n $ iterations.

\vspace{0.2cm}

Matrix $ \bar{\bf A}^{l} $ is row stochastic with simple eigenvalue one by construction.

\begin{lemma}\label{L1}
The matrix $ \hat{\bf A} $ is row stochastic with simple eigenvalue one.
\end{lemma}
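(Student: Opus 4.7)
The plan is to verify the two claims separately, since they rest on different pieces of structure already developed in the paper. Row stochasticity will follow directly from the construction rules laid out in Section A-3; the simplicity claim will require reading $\hat{\bf A}$ together with the coordination conditions imposed on the $\bar{\bf A}^{l}$ and then appealing to the SIA/Perron theory cited in the paper.

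For row stochasticity, I would start by fixing an arbitrary row of $\hat{\bf A}$. By Definition~\ref{A_hat}, $\hat{\bf A}$ is block diagonal whose diagonal blocks alternate $\bar{\bf A}^{l}_{+}$ and $\bar{\bf A}^{l}_{-}$ for $l=1,\dots,n$; the remaining entries are zero. Hence every row of $\hat{\bf A}$ coincides with a row of some $\bar{\bf A}^{l}_{\pm}$ padded with zeros, so it suffices to check that each row of $\bar{\bar{\bf A}}^{l}=\bigl(\!\begin{smallmatrix}\bar{\bf A}^{l}_{+}\\ \bar{\bf A}^{l}_{-}\end{smallmatrix}\!\bigr)$ sums to one. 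By the construction prescription, an entry $[\bar{\bf A}^{l}]_{\bar r,j}$ (resp. $[\bar{\bf A}^{l}]_{\bar r,j+n}$) equals $[{\bf A}]_{r,j}/\|{\bf A}_{r,:}\|_{1}$ (resp. $-[{\bf A}]_{r,j}/\|{\bf A}_{r,:}\|_{1}$) for the positive (resp. negative) coefficients of a chosen row $r$ of ${\bf A}$, and zero otherwise. Summing the two halves over $j$ gives $\sum_{j}|{\bf A}_{r,j}|/\|{\bf A}_{r,:}\|_{1}=1$, with the diagonal adjustment $\bar{\bf A}^{l}_{ii}=\gamma_{i}$ absorbed into the same normalization. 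Nonnegativity of the resulting entries then yields row stochasticity of each block and thus of $\hat{\bf A}$.

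For the simple-eigenvalue claim, the essential ingredient is the SIA condition that the paper has already singled out as the design requirement. I would invoke the result that an SIA row-stochastic matrix has $1$ as a simple eigenvalue and as its unique eigenvalue of modulus one (this is the usual Perron--Frobenius/SIA characterization, applied through the cited \citep{Xia2015} Theorem~1). The coordination conditions forced on the algorithm in Section~A-3 were precisely that (i) the $\gamma_{i}$'s be unanimous across nodes $l$ and iterations $k$, and (ii) at least one column of the appropriate positive part ($\tilde{\bf A}^{(+)}$ for the replicated form, or of $\bar{\bar{\bf A}}^{l}$ for the non-replicated form) be strictly positive. Under these conditions, the global directed graph of $\hat{\bf A}$ has every state reaching a common positive column, so $\hat{\bf A}$ (read as a single $2n^{2}\times 2n^{2}$ row-stochastic matrix whose aggregation across nodes is already encoded in Definitions~\ref{A_hat}--\ref{D_hat} and in the update equations) is SIA. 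Simplicity of eigenvalue one then follows from the SIA theorem.

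The main obstacle will be the simplicity step, because naively $\hat{\bf A}$ reads as block diagonal with $n$ stochastic blocks and would thereby carry $1$ as an eigenvalue of multiplicity $n$. I would navigate this by making precise that the statement is about $\hat{\bf A}$ acting on the constraint set induced by the aggregation equations \eqref{updating_eqn1a} and \eqref{updating_eqn2a}: the coordination $\bar{\bf A}^{l}_{ii}=\gamma_{i}$ ties the diagonal scalars of the different blocks together, and the server-side averaging identifies the ``$\bar{\bf x}$'' coordinate across nodes, so the effective matrix governing the dynamics of $({\bf x}^{+},{\bf x}^{-})$ is the coupling of the nominal $\hat{\bf A}$ with the aggregation operator. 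On this coupled system the only fixed direction is the consensus direction ${\bf 1}$, and the SIA property of the coupled matrix yields simplicity; this is the step that will require the most care and will be where I would spend the bulk of the write-up.
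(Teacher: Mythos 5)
Your row-stochasticity argument is fine as far as it goes, but the heart of the lemma is the simplicity claim, and there you have correctly diagnosed the obstacle without actually removing it. If $\hat{\bf A}$ really were the block-diagonal matrix of Definition~\ref{A_hat}, eigenvalue $1$ would have multiplicity $n$ and the lemma would be false; your proposed escape is to reinterpret the lemma as a statement about $\hat{\bf A}$ restricted to the constraint set carved out by the aggregation steps \eqref{updating_eqn1a} and \eqref{updating_eqn2a}. That changes what is being proved (it is no longer about the spectrum of $\hat{\bf A}$ itself), and the one step you defer --- showing that the ``coupled matrix'' is SIA with a simple unit eigenvalue --- is precisely the entire content of the proof. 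The paper avoids the issue at the source: the $\hat{\bf A}$ actually used in the analysis is not block diagonal. The server averaging is baked into the matrix by splitting each coordinated diagonal weight $\gamma_i$ into $\gamma_i/n$ and distributing it across all $n$ block columns, which produces nonzero off-diagonal coupling blocks $ {\bf E}_{+}, {\bf E}_{-} $ (see the definitions of $\hat{\bar{\bf A}}^{l}_{1}$ and $\hat{\bar{\bf A}}^{l}_{*}$ in Appendix~C and the block form of $\hat{\bf A}$ written out at the start of the proof in Appendix~F). Row stochasticity survives because the $n$ copies of $\gamma_i/n$ still sum to $\gamma_i$, but your accounting, which assumes the off-diagonal blocks vanish, would have to be redone for this structure.

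With that corrected structure, the paper's proof is an explicit power computation rather than an appeal to a constrained dynamical system: it expands $\hat{\bf A}^{2}$ and then $\hat{\bf A}^{k+1}$, observes that each block in block-column $j$ is a nonnegative perturbation of a power of $\bar{\bar{\bf A}}^{j}$ (the ${\bf E}$ blocks being diagonal, they preserve the positivity pattern under multiplication), and concludes that once $(\bar{\bar{\bf A}}^{j})^{k}$ has a positive column --- the SIA condition imposed on each node's matrix by construction --- $\hat{\bf A}^{k+1}$ has a positive column, hence $\hat{\bf A}$ is SIA, $\lim_{m}\hat{\bf A}^{m}=\mathbf{1}\boldsymbol{\pi}^{T}$ is rank one, and eigenvalue $1$ is simple. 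A further small point: citing Theorem~1 of \citep{Xia2015} here is off target, since that result concerns products of SIA matrices drawn from a finite set and is used only in the time-varying analysis of Appendix~D; for a single fixed matrix the needed fact is just that SIA forces the rank-one limit. So while your instinct about where the difficulty lies and what physically resolves it (the aggregation coupling) is right, the proposal as written leaves the decisive step unproved and rests on a reading of $\hat{\bf A}$ under which the lemma cannot hold.
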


\subsubsection{ FLUE: Encryption Free }

In the initial phase, the server organizes clients into $ m $ clusters based on factors like data complexity, data type an heterogeneity. Each cluster $ c $  within the set $ \{ 1, 2, \ldots, m \} $  consists of $ N_{c} $ clients.  For each of those $ m $ clusters the server forms fixed matrices $ {\bf B}^{c} \in \mathbb{R}^{n_{c} \times p_{c} } $ where $ p_{c} $ is the number of partitions on all nodes $ i $ of cluster $ c $ which can be related to the number of silos or batch sizes at that node. And $ n_{c} $ is the period of repetition of the algorithm iterations. Subsequently, the server transmits matrix $ {\bf B}^{c} $ identified with cluster $ c $ to all clients within cluster $ c $. The matrix $ {\bf B}^{c} $ in its reduced form can be any singular matrix. But it is preferable to have $ {\bf B}^{c} $ stochastic (normalized rows) \footnote{Since then the application of $ \bar{\bf A}^{l} $ on coded gradients is equivalent to having $ \bar{\bf A}^{l} $ then $ {\bf B}^{c} $ applied on uncoded gradients. And we know that consensus distributed algorithms of stochastic matrices on local gradients converge}. The FLUE (Federated Learning with Unencrypted Updates) mechanism is then updated on the nodes. Each iteration $ i $ modulo $ n $ corresponds to row $ i $ of matrix $ {\bf B} = {\bf B}^{c} $ (i.e., for each $ 2 n $ iterations $ {\bf B} $ repeats itself two times, one for the gradient descent updating equation and one for the gradient ascent updating equation). And $ {\bf B} = {\bf B}^{c} $ is fixed across all nodes in cluster $ c $. Without loss of generality, let's assume that $ p_c =n_c $ (i.e., each nodes corresponds to one partition $ f_{i} $ of $ \sum_{i=1}^{n}f_{i} $) resulting in $ {\bf B}^{c} \in \mathbb{R}^{ n_{c} \times n_{c} } $. This analysis naturally extends to the general scenario of any partition size $ p_c $ on condition $ p_c $ doesn't exceed the number of raw datapoints. Let us assume that there is one cluster, i.e., $ m = 1 $. This is applicable because federated optimization or distributed optimization commonly employs Stochastic Gradient Descent (SGD), which permits the decomposition of the global function $ f(x) $ into $ f(x)=\sum_{i=1}^{n}f_{i}(x) $ comprising any number of local functions. The critical aspect is maintaining the coefficients of $1 $ in the sum, while the local functions can be formulated relative to the available data. It is possible to decompose $ f(x) $ into local functions $ f_{i}(x) $ to a certain degree; beyond that point, the local functions $ f_{i}(x) $ cannot be formed. This extreme case corresponds to an SGD with a batch size of one ($ B = 1 $). 

After receiving the variable proxy weights $ \bar{\bf x} $ from the server, each node computes a random row $ [\bar{\bf A}^{l}]_{i:} $ where $ t=2kn+i $ (can choose from a set of possible rows). Importantly, this row need not be consistent across all nodes, and no coordination between nodes is required. As a result, the server and other nodes possess no information about the specific row of $ \bar{\bf A}^{l} $ calculated at node $ l $ during a particular iteration. We distinguish two scenarios of matrix $ \bar{\bf A}^{l} $, either fixed every $ 2n $ iterations, or random every $ 2n $ iterations. For both scenarios, we require the replication of matrix $ \bar{\bf A}^{l} $ two times every $ 2n $ iterations or not. 
This approach ensures greater privacy for the model weights without relying on encryption. Our algorithm for the time-varying scenario relies on the convergence of stochastic matrices of specific forms (i.e., SIA matrices), of which the $ \bar{\bf A}^{l} $ matrices are a part, to demonstrate the convergence of the algorithm, in random (time-varying) $ \bar{\bf A}^{l} $ matrices cases without encryption. Alternatively, in the fixed $ \bar{\bf A}^{l} $ per iteration modulo $ 2 n $ scenario, this fixed $ \bar{\bf A}^{l} $ can vary among different nodes. For both scenarios, this approach does not require coordination from a specific location, such as a node or server, and avoids encryption in the algorithm's implementation. In both scenario, the server generates the matrix $ {\bf B}={\bf B}^{c} $ and sends it to all nodes of cluster $ c $. Each node then creates all possible rows of $ \bar{\bf A}^{l} $ and selects $ 2n $ rows to use in every $ 2n $ iterations. This method ensures that the server remains unaware of which rows of $ \bar{\bf A}^{l} $ each node uses per iteration, and nodes do not need to coordinate the selection of $ \bar{\bf A}^{l} $ rows. However, it's worth noting that that the fixed $ \bar{\bf A}^{l} $scenario variant of the algorithm has a limitation compared to the main FLUE algorithm which relies on random matrices $ \bar{\bf A}^{l} $ no-encryption implementation, as it uses fixed rows of $ \bar{\bf A}^{l} $ per $ 2 n $ iterations, which can potentially compromise privacy more easily if repeated a long time along the process.

\subsubsection{ FLUE: Privacy Mitigating Features}

Privacy enhancement is preserved through no coordination in $ \bar{\bf A}^{l} $ across nodes or with the server. Thus the server is only aware of the proxy model $ \bar{\bf x}_{l} $ from each node with no ability to infer the exact model $ {\bf x}_{l} $. The operation of finding $ {\bf x}_{l} $ is reserved to each node $ l $ solely with the use of its corresponding matrix $ \bar{\bf A}^{l} $.
Further enhancements in privacy preservation for model updates can be achieved by introducing surplus variables in each learning iteration at the nodes. Although our algorithm typically employs a fixed $ {\bf B} $ matrix across all nodes within a client's cluster, which is usually managed from a centralized server (in non-encryption implementation), we also investigate the scenario of utilizing random (time-varying) $ {\bf B} $ matrices in our simulations. However, for the consistency of our proofs in the current paper, we limit our analysis to the case of a fixed $ {\bf B} $ matrix, while acknowledging the inclusion of random $ {\bf B} $ in the simulation section. Meanwhile, to maintain privacy preservation, we can either implement a random (time-varying) $ \bar{\bf A}^{l} $ matrix without encryption, where each node independently updates using randomly generated $ \bar{\bf A}^{l} $ coefficients with no coordination among nodes. Alternatively, we can employ a variable $ \bar{\bf A}^{l} $ across nodes $ l $ which is fixed per iteration modulo $ 2n $ at each node without coordination among nodes that can be either replicated two times or not per these iterations.


\subsubsection{ Matrices $ {\bf A} $ and $ {\bf B} $ Scaling}\label{Scaling_A_B }

Regarding the case of scaling of $ \bar{\bf A}^{l} $, we apply it to all nodes $ l $ per iteration.

This becomes particularly relevant when we opt not to employ additional privacy safeguards, such as the inclusion of surplus variables like $ y^{+}_{i} $ and $ y^{-}_{i} $. 

When scaling $ \bar{\bf A}^{l} $ within an iteration, it is essential to apply the scaling uniformly to all instances of $ A_{ii}^{l} $  for all nodes $ l $  participating in the iteration $ (2kn+i) $. 

The coordination for this scaling can be established either during the initial creation of both matrices or on a per-iteration basis, depending on the chosen implementation approach.

\subsubsection{ Matrices $ {\bf A} $ and $ {\bf B} $ Randomness}

In the FLUE implementations discussed in this paper, the matrix $ {\bf B} $ remains fixed for nodes within each iteration, operating in a modulo $ n $ fashion. Although we can introduce randomness to $ {\bf B} $ to enhance privacy, the associated proof becomes more complex. Nevertheless, we have provided simulations to explore this scenario. Regarding the $ \bar{\bf A}^{l} $ matrices, we can use a fixed row across all nodes per iteration modulo $ 2n $ , offering a degree of privacy preservation. However, this approach necessitates coordination, which can compromise privacy, and is best suited for use with encryption. In this setup, eavesdropping becomes a concern. A more effective strategy is to employ different rows of $ \bar{\bf A}^{l} $ for nodes per iteration modulo $ 2n $, eliminating the need for coordination and providing stronger privacy safeguards. Although the $ \bar{\bf A}^{l} $ row used in the learning update step is not fixed for different nodes per iteration, it repeats every $ 2n $ iterations due to the algorithm's updating equations cycle for the fixed $ \hat{\bf A} $ case. Following from this, it's worth noting that not only $ {\bf B} $ also remains constant for each node every $ 2n $ iterations, $ \bar{\bf A}^{l} $ is also fixed per node every $ 2n $ iterations, albeit with variations among nodes. To maximize the effectiveness of our FLUE implementation and enhance privacy protection, we rely on the core FLUE algorithm, which does not incorporate encryption. This enables us to use either on-the-fly random $ \bar{\bf A}^{l} $ rows for nodes per iteration, or different $ \bar{\bf A}^{l} $ rows for nodes per iteration fixed modulo $ 2n $, which vary across nodes. This approach eliminates the need for node coordination, a feature that encryption leverages effectively. Furthermore, the convergence analysis for this case can be derived from the special case of fixed $ \bar{\bf A}^{l} $ and $ {\bf B} $ matrices for nodes modulo $ 2n $, without requiring more complex analytical tools typically needed for completely random $ \bar{\bf A}^{l} $ matrices applied to nodes per iteration (i.e., we use Stochastic Indecomposable Aperiodic matrices $ {\bf A}^{l} $).
It is worth noting here, that is as we have mentioned before we require for both scenarios; of repeating $ {\bf A}^{l} $ every $ 2 n $ iterations per each node or random $ {\bf A}^{l} $ that $ {\bf A}^{l} $ replicates two times for each $ 2 n $ iterations or not. We provided forms of FLUE for both scenarios and showed a convergence proof that holds for both.

\subsubsection{ Surpluses $ {\bf y}^{+}_{i} $ and $ {\bf y}^{-}_{i} $ added Noise}

Privacy concerns persist even with the incorporation of additional privacy measures, such as scaling $ {\bf B} $. This concern becomes more pronounced as our learning algorithm converges on the model weight parameters. When we employ a fixed $ \epsilon $ at this stage, having $ y_{i} $ constant across node although $ y_{i} \neq x_{i} $ can jeopardize privacy. This is because it becomes easier for eavesdroppers to deduce $ x_{i} $ when it is subjected to a constant added coefficient applied consistently across nodes and iterations. Although scaling $ {\bf B} $ can help alleviate this issue, our analyzed implementation mandates a fixed $ {\bf B} $ per iteration, which makes it relatively straightforward for eavesdroppers to compromise privacy due to the fixed $ \epsilon $ and fixed $ {\bf B} $. To address this challenge, introducing a variable $ {\bf B} $ would add complexity to the adversary's learning strategy. However, it's essential to note that analyzing a variable $ {\bf B} $ would require a more intricate proof involving the convergence of specific stochastic matrices for both $ \bar{\bf A} $ and $ {\bf B} $. This is a complex undertaking that goes beyond the scope of this paper, and the current framework may not readily support such convergence. Nonetheless, in our specific context, we can employ a variable $ \epsilon $ to circumvent the potential privacy compromise mentioned earlier. And still the convergence proof for the learning algorithm is straightforward.

\section{ A-5: Simulation Results}\label{results}

We have studied the convergence rate analytically in the previous section from the perspective of federated optimization relative to the convexity of the local functions on each node and the coding matrices used. In this simulation section we shed a light on the convergence rate in optimization and federated learning applications.
We subsequently consider the following unconstrained convex optimization problem 
\begin{equation}\label{objective.simulation}
 \arg\min_{{\bf x}\in \mathbb{R}^{N}} \|  {\bf F}{\bf x}- {\bf y}  \|_{2}^{2},\end{equation}
on a decentralized network consisting of $N_c$ nodes, ${\bf F}$ is a random matrix of size $M\times N$  whose entries are independent and identically distributed and chosen from the standard normal distribution.
And
\begin{equation}
{\bf y}={\bf F} {\bf x}_o\in \mathbb{R}^M
\end{equation}
where the entries of ${\bf x}_o$ are identically independent random variables sampled from the uniform bounded random distribution between $ -1 $ and $ 1 $.
The solution $ {\bf x} ^{*}$ of the above optimization problem is the least squares solution of the overdetermined system  
${\bf y}  = {\bf F}{\bf x}_o, {\bf x}_o\in \mathbb{R}^N$. 
In this section,  we  demonstrate the  performance of the FLUE  algorithm  for both variants of updating equations \eqref{updating_eqn} and \eqref{updating_eqn_O*} for fixed as well as random coding matrices $ \bar{\bf A}^{l} $ implementations. And we employ to solve the convex optimization problem
\eqref{objective.simulation} and also compare it with the performance of the conventional distributed gradient descent algorithm (DGD).

Assume that the network has one cluster, that is, $ m=1 $ formed of $N_c$ nodes where each node $ l $ is partitioned to $ \bar{n}_{l} $ data points. Then we can partition  the data on those $N_c$ nodes, and accordingly, the random measurement matrix ${\bf F}$,  the measurement data ${\bf y}$,  and the objective function  $f({\bf x}):= \|  {\bf F}{\bf x}- {\bf y}  \|_{2}^{2}$ in \eqref{objective.simulation} as follows,
 $$f({\bf x})= \sum_{l=1}^{\bar{n}} f_l({\bf x}):= \sum_{l=1}^{N_c} \sum_{i=1}^{\bar{n}_{l}}\|{\bf F}_i {\bf x}-{\bf y}_i\|_2^2.  $$ where each row $ F_{i} $ can be thought as the set of features, each $ y_{i} $, a label of a data point in a linear regression problem. And $ {\bf x} $ as the model parameters to be learned.
In our simulations, we assume that the partitioned matrices have the same size $ \bar{n}_{l} =constant $ which is equal to the number of rows ${\bf F}_i$ or the number of labels ${\bf y}_i $ on each node $ i $ for $ 1 \leq i \leq N_c $ multiplied by the length of the model parameters vector $ N $.

In our simulations, we take $ (M, N)= (150, 100) $ for Figures 1, 4 and $ (M, N)= (70, 40) $ for  Figures 5 and 6.
We use absolute error
$ {\rm AE}:={\bf x}_{1\le i\le N_c} \frac{\|{\bf x}_i(k)-{\bf x}_o\|_2}{\|{\bf x}_0\|_2}  $
and  consensus error
$ {\rm CE}:={\bf x}_{1\le i\le N_c} \frac{\|{\bf x}_i(k)-\bar{\bf x}(k)\|_2}{\|{\bf x}_o\|_2} $
to measure the performance of the FLUE algorithm \eqref{updating_eqn_O*} and  $n_c$ is the number of nodes in the network.

It is worth mentioning that in the following simulations we employed a fixed coding matrix $ {\bf B} $ except the fourth simulation of Figure 6 where we used a random coding matrix $ {\bf B} $.
In Figure 1, we have simulated the convergence of FLUE version with $ {\bf O} = {\bf Q}_{\epsilon, {\bf T}} $ for the above defined overdetermined system linear regression problem with step size $ \alpha_{k} = \frac{1}{(k+100)^{0.75}} $. While in Figure 5, we show the convergence rate for FLUE version with $ {\bf O} = {\bf Q}_{\epsilon, {\bf T}}$. For both we use fixed coding matrices $ \bar{\bf A}^{l} $ (i.e., repeating on each node $ l $ every $ 2 n $ iterations) with the number of nodes $ N_c = 5 $.
$ \alpha_{k} = \frac{1}{(k+100)^{0.75}} $.

Meanwhile, in Figure 2 we show the behavior of FLUE with version $ {\bf O} = {\bf Q}_{\epsilon, {\bf T}} $ for the random coding matrices $ \bar{\bf A}^{l} $ with the number of nodes $ N_c = 5 $. While we address the behavior of FLUE with version $ {\bf O} = {\bf Q}_{\epsilon, {\bf T}} $ for $ N_c = 7 $-node network using fixed coding matrices $ \bar{\bf A}^{l} $ in Figure 5 and using random coding matrices $ \bar{\bf A}^{l} $ in Figure 6.  
Both of the above mentioned simulations are with step size $ \alpha_{k} = \frac{1}{(k+100)^{0.75}} $.


\begin{figure}[h]\label{fig_1}
\centering
\includegraphics[trim=0 0 0 0 cm , clip, width=9cm]{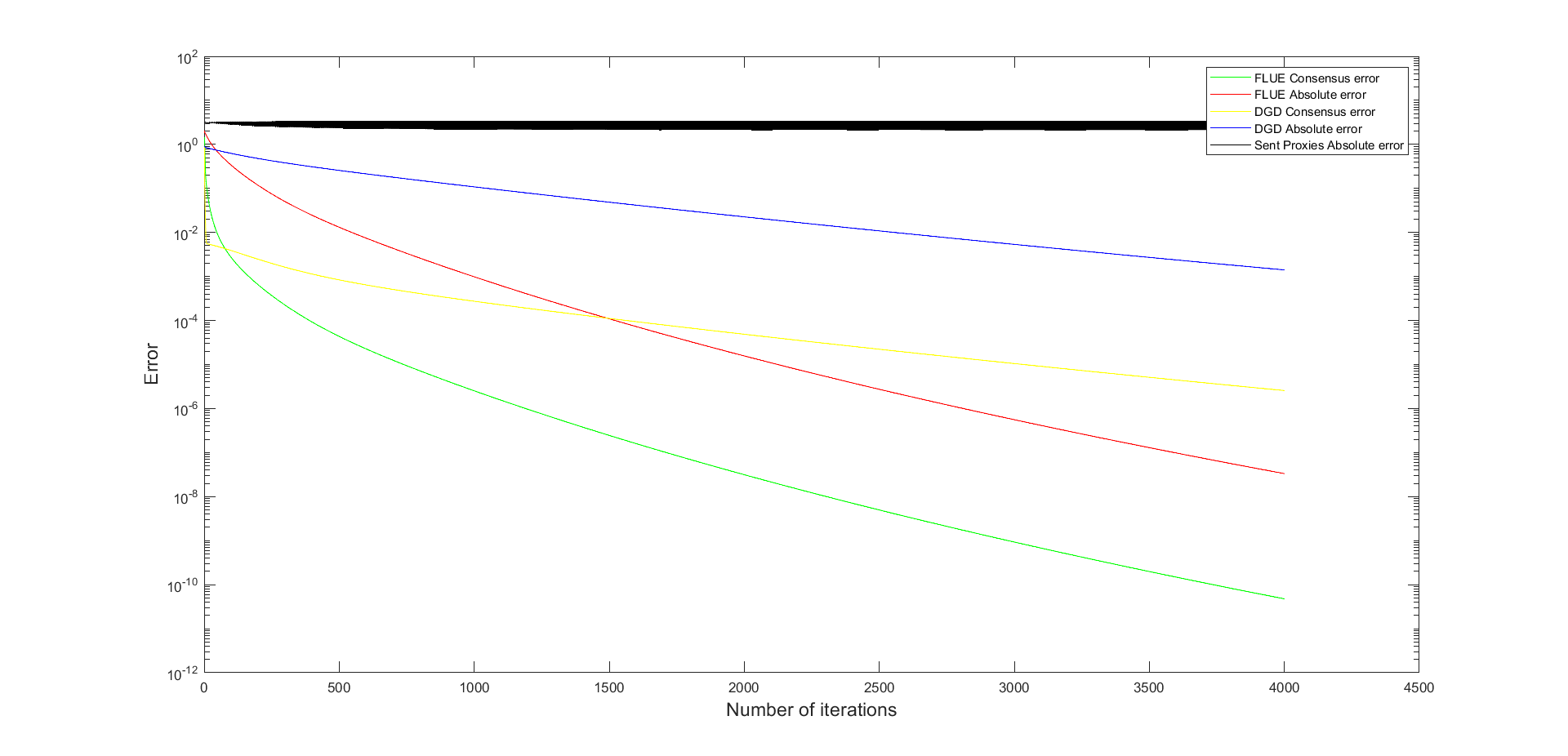}
\caption{Absolute and consensus errors vs iteration of FLUE and DGD for with random coding matrices $ \bar{\bf A}^{l} $ using FLUE \eqref{varyingQepsT} on a $ 5 $-node network}
\end{figure}

\begin{figure}[h]\label{fig_1}
\centering
\includegraphics[trim=0 0 0 0 cm , clip, width=9cm]{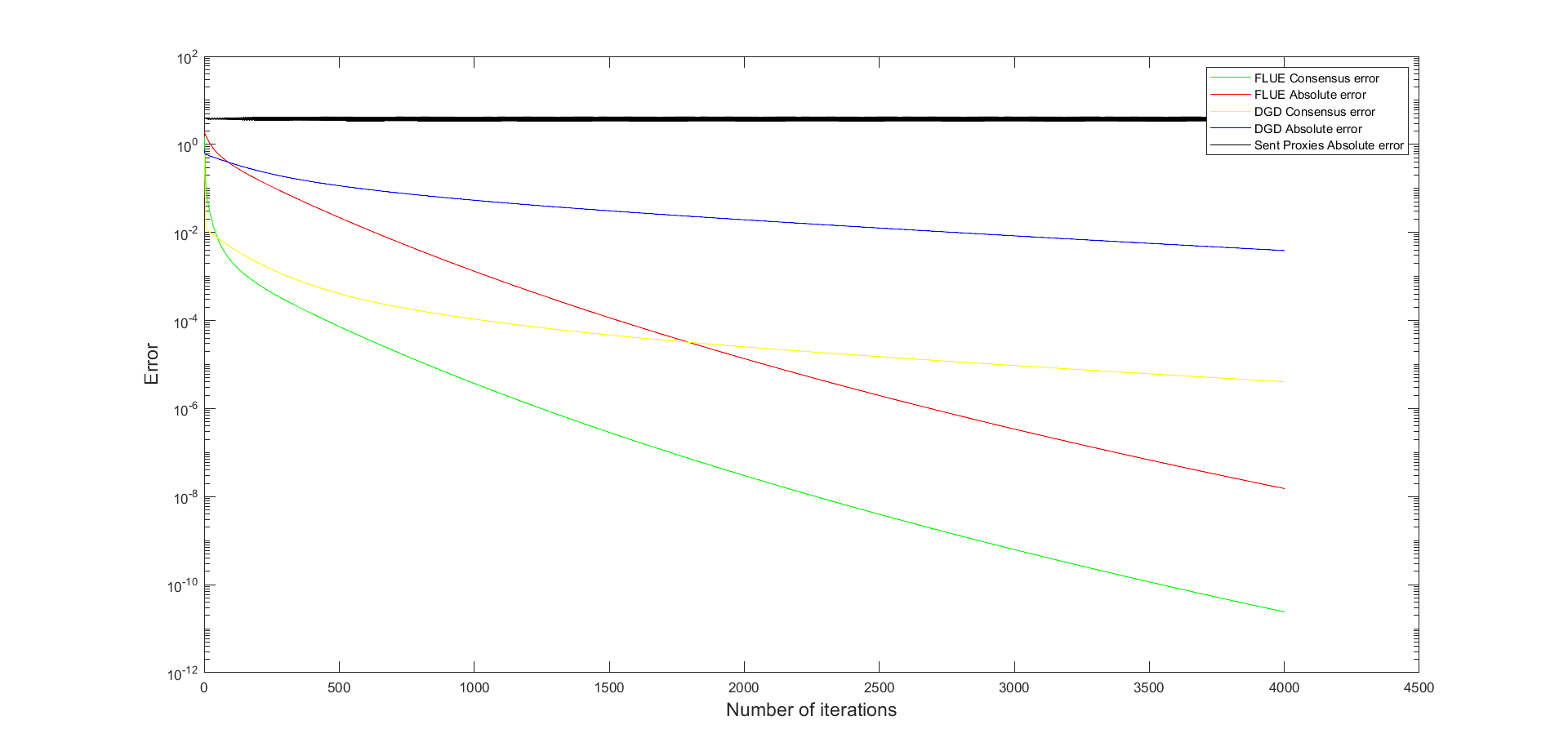}
\caption{Absolute and consensus errors vs iteration of FLUE and DGD with fixed coding matrices $ \bar{\bf A}^{l} $ using FLUE \eqref{fixedQepsT} on a $ 7 $-node network}
\end{figure}

\begin{figure}[h]\label{fig_1}
\centering
\includegraphics[trim=0 0 0 0 cm , clip, width=9cm]{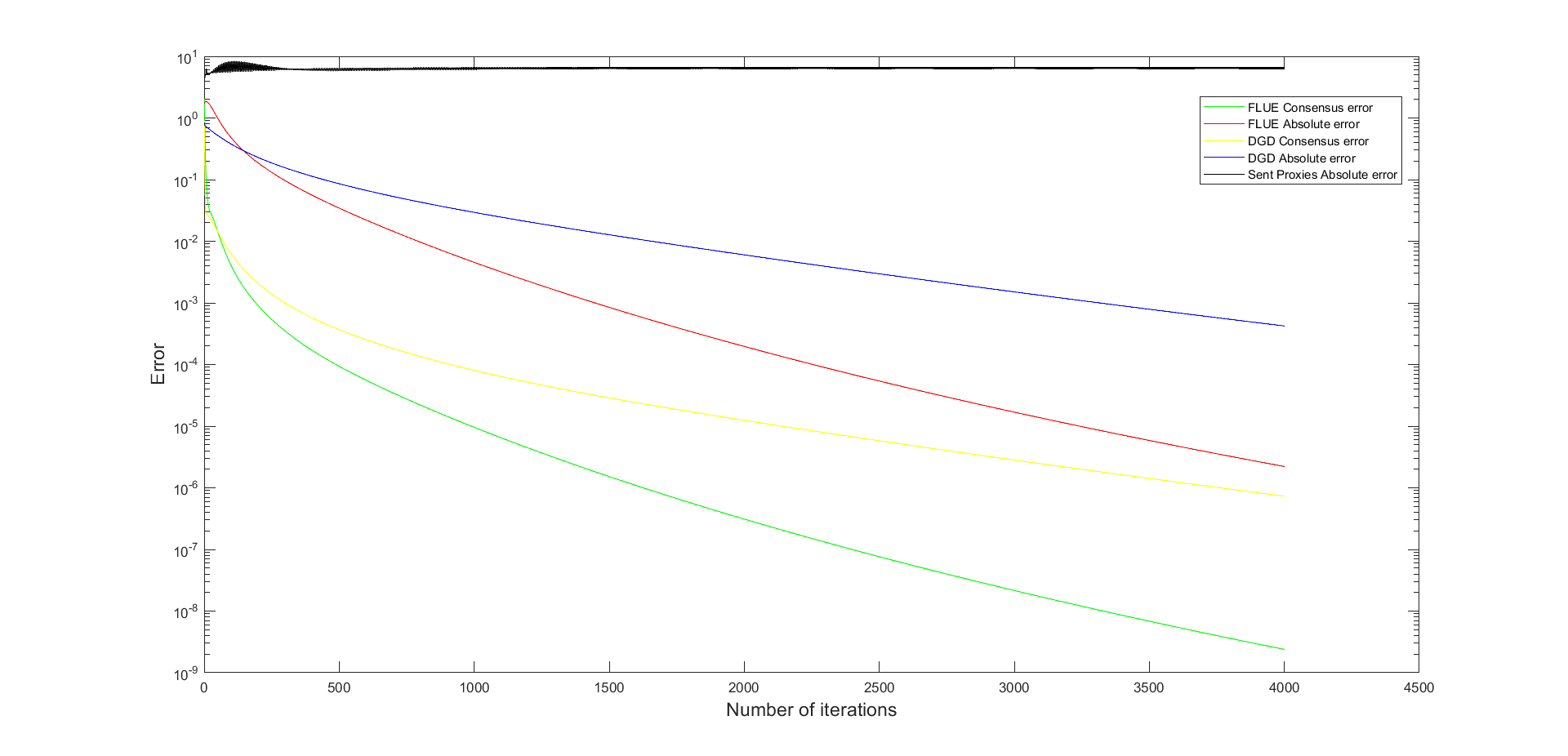}
\caption{Absolute and consensus errors vs iteration of FLUE and DGD with random coding matrices $ \bar{\bf A}^{l} $ using FLUE \eqref{varyingQepsT} on a $ 7 $-node network}
\end{figure}

For the fixed coding matrices $ \bar{\bf A}^{l} $ implementations of FLUE using \eqref{fixedQepsT}, we conceive that for a low value for the exponent in the denominator of the step size we have faster convergence than DGD. And as we increase the exponent the convergence gets degraded closer and closer to DGD. This justifies the decrease in performance of FLUE for the same network from Figure 5 to Figure 6. Also, the acceptable behavior in Figure 4 is evident although it is a time-varying network, since we have decreased the exponent of the denominator of the stepsizes (i.e., we anticipate more decreased performance if the exponent was still $ 0.75 $ as in Figure 1).

In the fixed coding matrices $ \bar{\bf A}^{l} $ implementation simulation, we noticed that FLUE version with $ {\bf O} = {\bf Q}_{\epsilon, {\bf T}} $ achieved better convergence rate than that with version $ {\bf O} = {\bf Q}_{\epsilon, {\bf I}} $ under matrices with close structure and same coding schemes and step sizes. This is because the convergence in the first case is guaranteed to behave better from the exact convergence proof and exact inequalities used while the latter is only proven to converge almost the same though verified empirically which can include a variance in the tight inequality bounds.

However, in the with random coding matrices $ \bar{\bf A}^{l} $ implementation simulation, we see that the more structure in these coding matrices the better is its convergence rate which is verified in having Figure 1 behavior better than that of Figure 4. That is, between fixed and random matrices $ \bar{\bf A}^{l} $ implementations, FLUE performs better in the first. Thus, the less structure in the random matrices $ \bar{\bf A}^{l} $ the smaller is the exponent of the denominator of the step size to allow better convergence. While we see that the consensus error behaves better than the absolute error in the fixed $ \bar{\bf A}^{l} $ implementations of FLUE and DGD. While in the random matrices $ \bar{\bf A}^{l} $ implementations, we can distinguish how the consensus error fluctuates frequently relative to the fluctuation in these matrices structure (i.e., more constrained structure to a more flexible structure). 
It is also worth mentioning that as we decrease the exponent in the denominator of the stepsize the convergence is degraded again to the divergence instability region. 
We can adjust our algorithm to behave with convergence by either increasing the exponent or calibrating the constant term added to $ k $ in the denominator of the step size.

It is also worth emphasizing that these algorithmic convergence behaviors are directly dependent on the condition number of each of these systems on the nodes. The above figures are those of matrices $ {\bf F} $ of a condition number almost equal to $ 1 $. Meanwhile, if we fix the stepsize utilized and compare the behavior of different $ {\bf F} $ matrices with different condition numbers we see that: an increased condition number will result in a slow convergence rate which might ultimately effect the behavior of FLUE with respect to DGD as the convergence of the first becomes more degraded towards that of the latter. Meanwhile, a smaller condition number will result in faster convergence until ultimately it may cause the algorithm to enter the instability region with an overshoot increase and initial divergence away from the desired solution.
Thus, the choice of an adequate stepsize that compromises the effect of the condition number and the stepsize magnitude will result in a better performance in the convergence rate of FLUE with respect to DGD.
Therefore, all these results are in correlation with the predicted analysis from the convergence rates \eqref{Convergence_Rate} affected by matrices $ {\bf A} $ and $ {\bf B} $, accordingly.

\nocite{*}

\section{ A-6: Conclusion}\label{conlusion}

Federated Learning enables various devices, like mobile phones and computers, to collaborate on training a shared model while keeping data on their own devices. This separates machine learning from cloud data storage. Clients collaborate to train a global model without revealing their raw data, sharing computed gradients or model parameters. To protect privacy, clients add noise or encryption.

However, using noisy gradients still has limitations. Recent research focuses on using model parameters without data exchange, relying on encryption, and adding noise for privacy.

In this paper, we presented a federated learning algorithm FLUE using coded local gradients during local learning and exchanging coded combinations as model parameter proxies. This ensures data privacy without encryption, and we add extra noise for stronger privacy. We develop two algorithm variants: a general form and a special form without surplus noise. We demonstrate algorithm convergence and improved learning rates based on coding schemes and data characteristics.

We presented two encryption-free implementations with fixed and random coding matrices, offering promising simulation results for federated optimization and machine learning. We listed two forms of the algorithm, one with replicated $ \bar{\bf A}^{l} $ twice every $ 2 n $ iterations listed in the main paper. And without replication shown in the Appendix. 
And a general form with surpluses noise listed in the main paper and another special form without surpluses shown in the Appendix.

We haven't necessitated coordination across all nodes for the complete decoding matrices; instead, we restricted coordination to specific entries to facilitate precise aggregation. While this doesn't pose a privacy compromise, in our ongoing efforts to enhance security, we aim to explore methods that ensure zero coordination among nodes in future research.

Additionally, the use of a shared encoding matrix will not jeopardize security. We can still explore this further by creating algorithms that incorporate random encoding matrices, employing methods derived from the convergence of infinite products of random matrices in our future research.


\section{Appendix B: Convergence Analysis}

The algorithm for the fixed $ {\bf B} $ case across nodes satisfying modulo $ 2n $ condition can be written in the following form the updating iterations at each node $l$ for $l\in\{1,2,\ldots,n\}$. Please note that $\Gamma_{i}= \Gamma_{i}^{l}(k) $ is the fixed support of the row of $ {\bf A} $ identified with node $ l $ and iteration $ 2n(k+1)+i $ :
{\scriptsize
\begin{align}\label{updating_eqn}
\begin{split}
\hat{\bf x}_{i}^{l +}(k+1)&=\sum_{j = 1}^{2n}\bar{\bf A}_{ij}^{l}\hat{\bf x}_{j}^{l}(k) + \epsilon {\bf y}_{i}^{l +}(k) - \epsilon \sum_{j = 1}^{2n}[{\bf D}^{l}_{+}]_{ij} \hat{\bf y}_{j}^{l +}(k) \\
& - \alpha_{k} \nabla{g}_{i}({\bf \hat{X}}_{i}^{+}(k))\\
\hat{\bf x}_{i}^{l -}(k+1)&=\sum_{j = 1}^{2n}\bar{\bf A}_{ij}^{l}\hat{\bf x}_{j}^{l}(k) + \epsilon {\bf y}_{i}^{l -}(k) - \epsilon \sum_{j = 1}^{2n}[{\bf D}^{l}_{-}]_{ij} \hat{\bf y}_{j}^{l +}(k) \\
& + \alpha_{k} \nabla{g}_{i}({\bf \hat{X}}_{i}^{-}(k))\\
\hat{\bf y}_{i}^{l +}(k+1)&=\hat{\bf x}_{i}^{l +}(k) - \sum_{j = 1}^{2n}\bar{\bf A}_{ij}^{l}\hat{\bf x}_{j}^{l}(k) + (1+\epsilon)\sum_{j = 1}^{2n}[{\bf D}^{l}_{+}]_{ij} \hat{\bf y}_{j}^{l +}(k) \\
& - \epsilon {\bf y}_{i}^{l +}(k) \alpha_{k} \nabla{g}_{i}({\bf \hat{X}}_{i}^{-}(k)) \\
\hat{\bf y}_{i}^{l -}(k+1)&=\hat{\bf x}_{i}^{l -}(k) - \sum_{j = 1}^{2n}\bar{\bf A}_{ij}^{l}\hat{\bf x}_{j}(k) + (1+\epsilon)\sum_{j = 1}^{2n}[{\bf D}^{l}_{-}]_{ij} \hat{\bf y}_{j}^{l -}(k)  \\
& - \epsilon {\bf y}_{i}^{l -}(k) - \alpha_{k} \nabla{g}_{i}({\bf \hat{X}}_{i}^{+}(k))\\.
\end{split}
\end{align}}

Each node $ l \in V $ maintains four vectors for each $2n$ iterations: two estimates $\hat{\bf x}_{i}^{l +}(k)={\bf x}_{l}^{+}(2nk+i)$, $ \hat{\bf x}_{i}^{l -}(k)={\bf x}_{l}^{-}(2nk+i+n)$, two surpluses $ \hat{\bf y}_{i}^{l +}(k)={\bf y}_{l}^{+}(2nk+i)$ and $\hat{\bf y}_{i}^{l -}(k)={\bf y}_{l}^{-}(2nk+i+n)$ all in $ \mathbb{R}^{N}$. We use $ \hat{\bf x}_{j}^{l}(k) $ to mean either $\hat{\bf x}_{j}^{l +}(k)={\bf x}_{l}^{+}(2nk+j)$ and $ \hat{\bf x}_{j-n}^{l -}(k)={\bf x}_{l}^{-}(2nk+j)$ for $ 1 \leq j \leq n $ and $ n+1 \leq j \leq 2n $ (if $ i \neq j $ for $ 1 \leq i \leq n $ and if $ i \neq j + n $ for $ n + 1 \leq i \leq 2n $), respectively. And for $ i = j $ for $ 1 \leq i \leq n $ and if $ i = j + n $ for $ n + 1 \leq i \leq 2n $ we have $\hat{\bf x}_{j}^{l +}(k)=\frac{1}{n_{R}}\sum_{l=1}^{n_{R}}{\bf x}_{l}^{+}(2nk+j)$ and $ \hat{\bf x}_{j-n}^{l -}(k)=\frac{1}{n_{R}}\sum_{l=1}^{n_{R}}{\bf x}_{l}^{-}(2nk+j)$, respectively.

And $ \nabla{g}_{i}({\bf \hat{X}}_{i}^{+}(k)) $ to mean the aggregation of the coded gradients at iteration $ 2n(k+1)+i $ of each node $ l $ model estimate $ {\bf x}_{l}^{+}(2nk+i) $. That is, $ \nabla{g}_{i}({\bf \hat{X}}_{i}^{+}(k)) = \sum_{l=1}^{n_{R,i,k}}{\bf B}_{il}\nabla{f}_{l}({\bf x}_{l}^{+}(2nk+i)) $.
And $ \nabla{g}_{i}({\bf \hat{X}}_{i}^{-}(k)) $ to mean the aggregation of the coded gradients at iteration $ 2n(k+1)+i $ of each node $ l $ model estimate $ {\bf x}_{l}^{-}(2nk+i) $. That is, $ \nabla{g}_{i}({\bf \hat{X}}_{i}^{-}(k)) = \sum_{l=1}^{n_{R,i,k}}{\bf B}_{(i-n)l}\nabla{f}_{l}({\bf x}_{l}^{+}(2nk+i)) $.

{\scriptsize
\begin{align}
\begin{split}
{\bf x}_{i}^{l +}(k+1)&=\sum_{j = 1}^{2n}\hat{\bf A}_{ij}\hat{\bf x}_{j}(k) + \epsilon {\bf y}_{i}^{l +}(k) -  \epsilon \sum_{j = 1}^{2n}[\hat{\bf D}_{+}]_{ij} \hat{\bf y}_{j}(k) \\
& - \alpha_{k} \nabla{g}_{i}({\bf \hat{X}}_{i}^{+}(k))\\
{\bf x}_{i}^{l -}(k+1)&=\sum_{j = 1}^{2n}\hat{\bf A}_{ij}\hat{\bf x}_{j}(k) + \epsilon {\bf y}_{i}^{l -}(k) - \epsilon  \sum_{j = 1}^{2n}[\hat{\bf D}_{-}]_{ij} \hat{\bf y}_{j}(k) \\
& + \alpha_{k} \nabla{g}_{i}({\bf \hat{X}}_{i}^{-}(k))\\
{\bf y}_{i}^{l +}(k+1)&=\hat{\bf x}_{i}^{l +}(k) - \sum_{j = 1}^{2n}\hat{\bf A}_{ij}\hat{\bf x}_{j}(k) + (1+\epsilon)\sum_{j = 1}^{2n}[\hat{\bf D}_{+}]_{ij} \hat{\bf y}_{j}(k) \\
& - \epsilon {\bf y}_{i}^{l +}(k) + \alpha_{k} \nabla{g}_{i}({\bf \hat{X}}_{i}^{-}(k)) \\
{\bf y}_{i}^{l -}(k+1)&=\hat{\bf x}_{i}^{l -}(k) - \sum_{j = 1}^{2n}\hat{\bf A}_{ij}\hat{\bf x}_{j}(k) + (1+\epsilon)\sum_{j = 1}^{2n}[\hat{\bf D}_{-}]_{ij} \hat{\bf y}_{j}(k) \\
& - \epsilon {\bf y}_{i}^{l -}(k) - \alpha_{k} \nabla{g}_{i}({\bf \hat{X}}_{i}^{+}(k))\\.
\end{split}
\end{align}}

This follows since each $  \bar{\bf A}^{l} $ is stochastic so $ \hat{\bf A} = \frac{1}{n} \sum_{i=1}^{n} \bar{\bf A}^{l} $ is also stochastic. And $ \hat{\bf D} $ is column stochastic 
since $ \hat{\bf D} = \frac{1}{n} \sum_{i=1}^{n} {\bf D}  $  where $ {\bf D} $ is column stochastic (and fixed as matrix $ {\bf B} $).

We restrict our analysis to the non-stragglers case for both fixed $ {\bf \bar{A}}^{l} $ across nodes or variable $ {\bf \bar{A}}^{l} $ and where $ {\bf \bar{A}}^{l} $ repeats itself every $ 2n $ iterations or not. Meanwhile, we include in the simulation an example including the stragglers case scenarios and defer the proof for this case for a later endeavor.



For the special variant algorithm with no surpluses noises added to further secure privacy, the updating equations reduce to:

{\footnotesize
\begin{align}\label{updating_eqn}
\begin{split}
\hat{\bf x}_{i}^{+}(k+1)&=\sum_{j = 1}^{2n}\bar{\bf A}_{ij}^{l}\hat{\bf x}_{j}(k)  -\alpha_{k} \nabla{g}_{i}({\bf \hat{X}}_{i}^{+}(k))\\
\hat{\bf x}_{i}^{-}(k+1)&=\sum_{j = 1}^{2n}\bar{\bf A}_{ij}^{l}\hat{\bf x}_{j}(k)  + \alpha_{k} \nabla{g}_{i}({\bf \hat{X}}_{i}^{-}(k))\\
\end{split}
\end{align}}

As it can be seen this becomes a simple stochastic weighting algorithm but with a gradient descent and gradient ascent steps. It can be easily proved. But to keep our analysis consistent the proof of this algorithm follows from the proof of the general FLUE algorithm with the surpluses by noticing that the matrix $ {\bf O} $ is now:

{\small
\begin{equation}\label{matrixO}
  {\bf O} = {\bf Q}_{\epsilon , {\bf T}}      =  \left(\begin{array}{cc}
                 \hat{\bf A} & \ \ \ \ \ \ \ \ \epsilon {\bf I}_{2n^{2} \times 2n^{2}}  \\
         {\bf I}_{2n^{2} \times 2n^{2}} -  \hat{\bf A}  &  
          \hat{\bf D}- \epsilon {\bf I}_{2n^{2} \times 2n^{2}}   \\ 
\end{array}\right) 
\end{equation}}

Thus, the proof of this special case utilizes the general case proof where the characteristics of the general $ {\bf O} $ matrix used in that proof are also evident in the above special $ {\bf O} $.
We see here that for the fixed $ {\bf O} $ case in this special variant we don't require coordination between nodes in order to utilize the static case convergence proof of the algorithm as long as each node repeats its $ \bar{\bf A}^{l} $ matrix every $ 2n $ iterations.

The FLUE algorithm \eqref{updating_eqn1} can be summarized by the following recursive equation:

\begin{equation}\label{updating_eqn}
\begin{split}
 {\bf z}_{i}(k+1)= \sum_{j=1}^{4n}[{\bf Q}_{\epsilon , {\bf T}}(k)]_{ij}{\bf z}_{j}(k)-\alpha_{k}{\nabla{\overline{g}}}_{i}({\bf z}_{i}(k)) \ for\ 1\leq i\leq4n,
\end{split}
\end{equation}
where

\begin{equation*}
\begin{split}
{\bf z}_{i}(k)=\left\{\begin{array}{cc}
                \hat{\bf x}_{{i \mod n}^{i \ div \ 2}}^{+}(k), &\ \ \ \ \ \ \ 1 \leq i \leq 2n^{2} \\
                &  and \ (i \ div \ n) \mod 2 = 1 \\
                \hat{\bf x}_{{i \mod n}^{i  \ div \ 2}}^{-}(k), &\ \ \ \ \ \ \ 1 \leq i \leq 2n^{2} \\
                & and \ (i \ div \ n) \mod 2 = 0 \\
                \hat{\bf y}_{{i \mod n}^{(i - 2n^{2}) \ div \ 2}}^{+}(k), &\ \ \ \ \ \ \ 2n^{2} + 1 \leq i \leq 4n^{2} \\
                & and \ (i \ div \ n) \mod 2 = 1 \\
                \hat{\bf y}_{{i \mod n}^{(i - 2n^{2}) \ div \ 2}}^{-}(k), &\ \ \ \ \ \ \ 2n^{2} + 1 \leq i \leq 4n^{2} \\
                & and \ (i \ div \ n) \mod 2 = 0 \\
        \end{array}\right\}   \ \ \
\end{split}
\end{equation*}

We take $ {\nabla{\overline{g}_{i}}}({\bf z}_{i}(k)) $ to correspond to the coded gradients relative to the first $ 2n $ variables identified with the estimates $ \bar{\bf X}_{i} $.
And $ \nabla{\overline{g}_{i}}(k) $ for $ 2n+1 \leq i \leq 4n $ corresponding to the surplus variables $ \bar{\bf Y}_{i} $. That is,

{\small \begin{equation*}
\begin{split}
{\nabla{\overline{g}_{i}}}({\bf z}_{i}(k))=\left\{\begin{array}{cc}
            \nabla{g_{i}}(\hat{\bf X}_{i}^{+}(k)),&\ 1 \leq i \leq n\\
        -\nabla{g_{i-n}}(\hat{\bf X}_{i}^{-}(k)),&\  n+1 \leq i \leq 2n\\ 
       -\nabla{g_{i}}(\hat{\bf X}_{i}^{+}(k)),&\ 2n+1 \leq i \leq 3n\\
          \nabla{g_{i-n}}(\hat{\bf X}_{i}^{-}(k)),&\  3n+1 \leq i \leq 4n
        \end{array}\right\}
\end{split}
\end{equation*}}

And $ g_{i} $ is the coded local objective function at iteration $ 2n(k+1)+i $ aggregated from all $ n_R $ received nodes at this iteration.

We have $ \| \nabla{\overline{g}}_{i} \|=\| \nabla{g}_{i} \| \leq  \sqrt{n} \| {\bf B} \| _{2, \infty} F = G $.

The step-size $ \alpha_{k} \geq 0 $ and satisfies $ \sum_{k=0}^{\infty}\alpha_{k} = \infty $,   $\sum_{k=0}^{\infty}\alpha_{k}^{2} < \infty $. 
The scalar $ \epsilon $ is a small positive number which is essential for the convergence of the algorithm and satisfying the conditions of Theorem 1. The steps of FLUE are summarized in Algorithm 2. We will prove in Section~\ref{convergence} that all nodes reach consensus to the optimal solution.

\subsection{Main Fundamental Theorem}\label{mainthm}

\subsection{For Fixed Matrix $ {\bf Q}_{\epsilon, {\bf T}} $}

For this purpose, we define a new matrix 

\begin{equation}\label{fixedQepsT}
{\bf Q}_{\epsilon , {\bf T}}= {\bf Q} + \epsilon {\bf G} 
\end{equation}

where $ {\bf Q} $ and $ {\bf G} $ and $ {\bf T} $  structures are shown in the final page.

\begin{prop}
Matrix $ {\bf Q} $ has spectral radius $ \rho({\bf Q}) = 1 $ with semi-simple eigenevalue $ 1 $ having algebraic multiplicity equal to its geometric multiplicity equal to $ 3 $ and all other eigenvalues having moduli less than $ 1 $.

\end{prop}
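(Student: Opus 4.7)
My plan is to exploit the block lower-triangular form of $\mathbf{Q}$ to reduce the spectral analysis to that of its two diagonal blocks. Because of the triangular structure,
\[
\sigma(\mathbf{Q}) = \sigma(\hat{\mathbf{A}}) \cup \sigma(\hat{\mathbf{D}}),
\]
counted with algebraic multiplicities, where $\hat{\mathbf{D}}$ denotes the bottom-right block. From Lemma~\ref{L1} (or directly from the construction), $\hat{\mathbf{A}}$ is row-stochastic, and by construction $\hat{\mathbf{D}}$ is column-stochastic, so both lie in the closed unit disk; hence $\rho(\mathbf{Q}) = 1$ immediately.

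Next, I would quantify the multiplicity of eigenvalue $1$ inside each diagonal block. For $\hat{\mathbf{A}}$, the SIA hypothesis imposed earlier on $\bar{\bar{\mathbf{A}}}^{l}$ (at least one strictly positive column in the relevant piece) yields, via Perron--Frobenius for SIA stochastic matrices, that $1$ is a simple eigenvalue and all other eigenvalues lie strictly inside the open unit disk. For $\hat{\mathbf{D}}$, the duplicated block pattern (the $+$ and $-$ surplus copies of the same column-stochastic template) decouples into two isomorphic pieces, each contributing one stationary direction; the column-stochasticity of each $\tfrac{1}{n}\mathbf{D}^{l}$ together with the aggregation pattern then places every remaining eigenvalue strictly inside the open unit disk. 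Summing, the algebraic multiplicity of $1$ for $\mathbf{Q}$ equals $1 + 2 = 3$, and every other eigenvalue has modulus strictly less than $1$.

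The last and most delicate step is to upgrade this to semi-simplicity of $1$, i.e.\ to show that $\dim\ker(\mathbf{Q} - \mathbf{I}) = 3$. Partition a null vector as $(\mathbf{u}^{\top},\mathbf{v}^{\top})^{\top}$ consistent with the block structure; the defining equation splits into
\[
(\hat{\mathbf{A}} - \mathbf{I})\,\mathbf{u} = \mathbf{0}, \qquad (\mathbf{I} - \hat{\mathbf{A}})\,\mathbf{u} + (\hat{\mathbf{D}} - \mathbf{I})\,\mathbf{v} = \mathbf{0}.
\]
The first equation forces $\mathbf{u} \in \ker(\hat{\mathbf{A}} - \mathbf{I})$, which is one-dimensional; substituting back, the second equation collapses to $(\hat{\mathbf{D}} - \mathbf{I})\,\mathbf{v} = \mathbf{0}$, whose solution space is two-dimensional. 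Thus $\ker(\mathbf{Q}-\mathbf{I})$ is exactly $3$-dimensional, matching the algebraic multiplicity and establishing semi-simplicity.

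The main obstacle will be the rigorous multiplicity count for $\hat{\mathbf{D}}$: the $+$/$-$ duplication must be handled carefully to confirm that the two stationary directions are genuinely independent and that no spurious unimodular eigenvalue enters through the off-diagonal couplings between the $\tfrac{1}{n}\mathbf{D}^{l}$ blocks. I expect this is cleanest via an explicit similarity/permutation that block-diagonalises $\hat{\mathbf{D}}$ into two copies of the aggregated column-stochastic template $\bar{\hat{\mathbf{D}}}$, at which point the SIA/primitivity-type property of $\bar{\hat{\mathbf{D}}}$ delivers both the simple eigenvalue $1$ in each copy and the strict sub-unit bound on all other eigenvalues.
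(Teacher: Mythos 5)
Your proposal is correct and follows essentially the same route as the paper: block lower-triangularity giving $\sigma({\bf Q})=\sigma(\hat{\bf A})\cup\sigma(\hat{\bf D})$, stochasticity giving $\rho({\bf Q})=1$, the multiplicity count $1+2=3$ from the two diagonal blocks, and the observation that the remaining eigenvalues lie strictly inside the unit disk. The one place you improve on the paper is the semi-simplicity step: where the paper simply asserts that $\mathrm{rank}({\bf Q}-{\bf I})$ drops by exactly $3$, your explicit kernel computation -- noting that the lower-left block ${\bf I}-\hat{\bf A}$ annihilates $\ker(\hat{\bf A}-{\bf I})$, so the block equations decouple and $\dim\ker({\bf Q}-{\bf I})=1+2$ -- is a cleaner justification of that same fact and correctly addresses the worry that geometric multiplicities need not add across a block-triangular coupling.
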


And we try to attain the same result through having the three independent eigenvectors corresponding to eigenvalue $ 1 $ to be exactly equal to that of $ {\bf Q} $ and all other eigenvalues have a magnitude less than $ 1 $.
We accomplish that through requiring some structure on matrix $ T $.

\begin{lemma}\label{L9n}
If we choose a suitable $ {\bf T} $ for $ {\bf O} = {\bf Q}_{\epsilon , {\bf T}} = {\bf Q} + \epsilon {\bf G} $, we guarantee that $ {\bf Q}_{\epsilon , {\bf T}} $ has only three right independent eigenvectors corresponding to eigenvalue $ 1 $, that are exactly equal to the corresponding eigenvectors of eigenvalue $ 1 $ for matrix $ {\bf Q} $. Such $ {\bf T} $ is $ {\bf T} = {\bf I}_{2n^{2} \times 2n^{2}} - \hat{\bf D} $ or $ {\bf T} = {\bf I}_{2n^{2} \times 2n^{2}}- \hat{\bf D}^{-1} $.
\end{lemma}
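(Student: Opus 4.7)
The plan is to exploit the block structure of $ {\bf Q}_{\epsilon, {\bf T}} = \begin{pmatrix} \hat{\bf A} & \epsilon {\bf T} \\ {\bf I} - \hat{\bf A} & \hat{\bf D} - \epsilon {\bf T} \end{pmatrix} $ directly, partitioning any candidate right eigenvector $ v = (v_1, v_2)^T $ conformably. The eigenvalue equation $ {\bf Q}_{\epsilon, {\bf T}} v = v $ then splits into two coupled block equations, which I rearrange algebraically to decouple $ v_1 $ from $ v_2 $. By adding the two block rows, the $ ({\bf I}-\hat{\bf A}) v_1 $ terms cancel against the $ (\hat{\bf A}-{\bf I}) v_1 $ terms, the $ \pm \epsilon {\bf T} v_2 $ terms cancel against each other, and I obtain the clean condition $ \hat{\bf D} v_2 = v_2 $, independent of $ {\bf T} $ and $ \epsilon $. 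Thus $ v_2 $ must lie in the nullspace of $ \hat{\bf D} - {\bf I} $, which by the preceding proposition has dimension exactly two, contributing two linearly independent choices $ v_2^{(1)} $ and $ v_2^{(2)} $ that coincide with those identified for $ {\bf Q} $.

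Substituting this back into the first block equation yields $ ({\bf I} - \hat{\bf A}) v_1 = \epsilon {\bf T} v_2 $. The critical observation is that both proposed choices of $ {\bf T} $ annihilate $ v_2 $. For $ {\bf T} = {\bf I} - \hat{\bf D} $, this is immediate since $ \hat{\bf D} v_2 = v_2 $ gives $ ({\bf I} - \hat{\bf D}) v_2 = 0 $. For $ {\bf T} = {\bf I} - \hat{\bf D}^{-1} $, the same eigenrelation combined with invertibility of $ \hat{\bf D} $ yields $ \hat{\bf D}^{-1} v_2 = v_2 $, so $ ({\bf I} - \hat{\bf D}^{-1}) v_2 = 0 $ as well. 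In either case the coupling term vanishes and we recover $ ({\bf I} - \hat{\bf A}) v_1 = 0 $, so $ v_1 \in \mathrm{null}({\bf I} - \hat{\bf A}) $, a one-dimensional space spanned by $ \mathbf{1} $ since $ \hat{\bf A} $ is row stochastic with simple unit eigenvalue by Lemma 1.

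Combining these pieces, the eigenspace of $ {\bf Q}_{\epsilon, {\bf T}} $ at eigenvalue one decomposes exactly as the span of $ (\mathbf{1}, 0)^T $, $ (0, v_2^{(1)})^T $, and $ (0, v_2^{(2)})^T $, which coincides with the corresponding eigenspace of $ {\bf Q} $. This yields precisely three linearly independent right eigenvectors, no more and no fewer, and they are identical to those of $ {\bf Q} $, as claimed. The main technical obstacle I anticipate is verifying invertibility of $ \hat{\bf D} $ for the second choice of $ {\bf T} $; this is separable from the main thread and can be dispatched by a short determinant check on the block structure of $ \hat{\bf D} $ given earlier, since the diagonal blocks $ \tfrac{1}{n}{\bf D}^{l} $ are constructed from column stochastic matrices whose invertibility can be guaranteed by the construction used in the encoding step. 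A secondary check worth recording is that no new eigenvector for eigenvalue one is created by the perturbation, which is already implicit in the above characterization: the decoupling argument is an equivalence, so the eigenspace counted does not grow.
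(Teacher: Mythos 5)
Your proof is correct for the statement as written, and it takes a genuinely different and more economical route than the paper. The paper verifies each of the three candidate eigenvectors of $ {\bf Q} $ one at a time against the perturbed matrix, then separately argues that no further eigenvectors exist; your observation that the block columns of $ {\bf Q}_{\epsilon,{\bf T}} $ sum to $ ({\bf I},\ \hat{\bf D}) $, so that adding the two block rows of the eigenvalue equation forces $ \hat{\bf D} v_2 = v_2 $ regardless of $ {\bf T} $ and $ \epsilon $, collapses both halves into a single equivalence: $ (v_1,v_2) $ is a unit eigenvector iff $ \hat{\bf A} v_1 = v_1 $ and $ \hat{\bf D} v_2 = v_2 $, once $ {\bf T} $ annihilates the fixed space of $ \hat{\bf D} $. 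That immediately pins the geometric multiplicity at $ 1+2=3 $ and identifies the eigenspace with that of $ {\bf Q} $, which is cleaner than the paper's case-by-case verification. The one caveat is that the paper's proof of this lemma carries extra freight that your argument does not: it also rules out generalized eigenvectors of order two (by pairing $ ({\bf Q}_{\epsilon,{\bf T}}-{\bf I}){\bf u} = c\,{\bf w} $ against the left eigenvectors $ {\bf v}_1^{T} $ of $ \hat{\bf A} $ and $ {\bf 1}^{T} $ of $ \hat{\bf D} $ to get $ 0 = \epsilon c $ or $ 0 = c $), thereby establishing that eigenvalue $ 1 $ is semi-simple with algebraic multiplicity $ 3 $. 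That semi-simplicity, not just the eigenvector count, is what the subsequent Jordan-form convergence lemma consumes. Your closing remark that "the eigenspace does not grow" addresses only the geometric multiplicity, so to substitute your proof for the paper's you would need to append the generalized-eigenvector exclusion (a two-line pairing argument in your notation: if $ ({\bf Q}_{\epsilon,{\bf T}}-{\bf I})(u_1,u_2)^{T} = (c{\bf 1},0)^{T} $, left-multiply the first block row by $ {\bf v}_1^{T} $ to get $ \epsilon c = 0 $; if the target is $ (0,c\bar{\bf v}^{\pm})^{T} $, add the rows and left-multiply by $ {\bf 1}^{T} $ to get $ c=0 $). Your flag on the invertibility of $ \hat{\bf D} $ for the second choice of $ {\bf T} $ is fair; the paper sidesteps it by restricting its proof to $ {\bf T} = {\bf I} - \hat{\bf D} $, which you may as well do too.
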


\textbf{Proof:}
We limit our proof to the case of $ {\bf T} $ where  $ {\bf T} = {\bf I}_{2n^{2} \times 2n^{2}} - \hat{\bf D} $.
Define
\begin{equation}
\hat{\bf A}=\left(\begin{array}{cc}
\hat{\bf A}_{+}^{1}  \\
\hat{\bf A}_{-}^{1} \\
\hat{\bf A}_{+}^{2}  \\
\hat{\bf A}_{-}^{2} \\
\xddots \\
\xddots \\
\hat{\bf A}_{+}^{n}  \\
\hat{\bf A}_{-}^{n}  \\
\end{array}\right)
\end{equation}
where $ \hat{\bf A}_{+}^{l} $ and  $ \hat{\bf A}_{-}^{l} $ are $ n \times 2n^{2} $ matrix, respectively.

We require for $ 1 \leq l \leq n $ that  $ \hat{\bf A}_{+}^{l}  = \hat{\bf A}_{-}^{l} $.

\begin{equation}
\hat{\bf A}_{+}(i,j)  = 
\hat{\bf A} (i - (i \ div \ 2n )n,j )  \   for \ 1 \leq i \mod 2n \leq n  \\
\end{equation}

\begin{equation}
\hat{\bf A}_{-}(i,j)  = 
\hat{\bf A} (i- (i \ div \ 2n + 1 )n, j )   \   for \ n+1 \leq i \mod 2n \leq 2n \\
\end{equation}

Then 
\begin{equation}
\hat{\bar{\bf A}} = \hat{\bf A}_{+} = \hat{\bf A}_{-}
\end{equation}
And for $ 1 \leq l \leq n $, we have
{\tiny
\begin{equation}
\hat{\bf A}_{1}^{l}(i,j)= \hat{\bf A}_{2}^{l}(i,j)=
\begin{cases}
\hat{\bar{\bf A}}_{1}^{l} & (i \mod  2 n, j \mod  2 n)  \\ &  for \ 2(l-1)n+1 \leq j \leq 2(l-1)n+n  \\
\hat{\bar{\bf A}}_{2}^{l} & (i \mod 2 n, (j \mod  2 n) - n) \\  &  for \ 2(l-1)n+n+1 \leq j \leq 2ln  \\
\hat{\bar{\bf A}}_{1}^{l} & (i \mod 2 n, i \mod  2 n) \\  &  for \ (i-j) \mod 2n = 0 \ and  \ 1 \leq i \mod 2 n \leq n \\
\hat{\bar{\bf A}}_{1}^{l} & (i \mod 2 n, i \mod  2 n) \\  &  for \ (i-j) \mod 2n = n \ and  \ n + 1 \leq i \mod 2 n \leq 2 n \\
0 & otherwise
\end{cases}
\end{equation}}
\footnote{We required here that $ \hat{\bf A}_{1}^{l} = \hat{\bf A}_{2}^{l} $. That is replication of $ \bar{\bf A}^{l} $ two times every $ 2 n $ iterations. But we could avoid such requirement while still maintaining the validity of this lemma and the whole proof.}
And the $ n \times n $ matrix
\begin{equation}
\hat{\bar{\bf A}}_{1}^{l}(i,j) =
\begin{cases}
\bar{\bf A}^{l}(i,j)   &  i \neq j  \\
\frac{\bar{\bf A}^{l^{'}}(i,j)}{n}   &  i=j \  where \ 1 \leq l^{'} \leq n
\end{cases}
\end{equation}

And the $ n \times n $ matrix
\begin{equation}
\hat{\bar{\bf A}}_{2}^{l}(i,j) =
\bar{\bf A}^{l}(i,j+n)  
\end{equation}

And denote the complement of $ \hat{\bar{\bf A}}_{1}^{l} \cup \hat{\bar{\bf A}}_{2}^{l} $ in 
$ \hat{\bf A}_{+}^{l} $ by the $ n \times (2n^{2} - n ) $ matrix $ \hat{\bar{\bf A}}_{*,+}^{l} $.
And the complement of $ \hat{\bar{\bf A}}_{1}^{l} \cup \hat{\bar{\bf A}}_{2}^{l} $ in 
$ \hat{\bf A}_{-}^{l} $ by the $ n \times (2n^{2} - n ) $ matrix $ \hat{\bar{\bf A}}_{*,-}^{l} $.

But 
\begin{equation}
 \hat{\bar{\bf A}}_{*}^{l}(i,j) = \hat{\bar{\bf A}}_{*,+}^{l}(i,j) = \hat{\bar{\bf A}}_{*,-}^{l}(i,j) \end{equation}
$ {\scriptsize 
 = \begin{cases}
\bar{\bf A}^{l}(i \ mod \ 2n , i \ mod \ 2n)   &  for \  (i-j) \ mod \ 2n = 0 \\
& and  \ j \leq 2(l-1)n \ or \ j \geq 2ln + 1  \\
0 & otherwise
\end{cases}} $

Matrix $ {\bf Q} $ has $ 3 $ independent eigenvectors for eigenvalue $ 1 $, (cf. Proposition 1). These eigenvectors are $ \left(\begin{array}{c}
{\bf 1}_{2n^{2} \times 1} \\ {\bf 0}_{2n^{2} \times 1} \end{array}\right) $,
and $ \left(\begin{array}{c}
{\bf 0}_{2n^{2} \times 1} \\ \bar{\bf v}^{+} 
\end{array}\right) $ and $ \left(\begin{array}{c}
{\bf 0}_{2n^{2} \times 1} \\ \bar{\bf v}^{-}\end{array}\right) $, where the $ 2n^{2} \times 1 $ vector
{\scriptsize
\begin{equation}
 \bar{\bf v}^{+} (j) = \begin{cases}  {\bf v}^{+} (j - (j \ div \ 2n)*n)   \ \ \  & if \ 1 \leq j \mod 2n \leq n \\
 0  \ \ \ & otherwise \end{cases}
\end{equation}}
and the $ 2n^{2} \times 1 $ vector
{\scriptsize
\begin{equation}
 \bar{\bf v}^{-} (j) = \begin{cases} {\bf v}^{-}(j - (j \ div \ 2n + 1 )*n)    \ \ \  & if \  n + 1 \leq j \mod 2n \leq 2 n \\
 0  \ \ \ & otherwise \end{cases}
\end{equation}}

And $ {\bf v}^{+} $ and $ {\bf v}^{-} $ are the right eigenvector of eigenvalue $ 1 $ for the column stochastic matrices $ \hat{\bf D}^{++} $ or $ \hat{\bf D}^{--} $, respectively (i.e., $ \hat{\bf D}^{++} {\bf v}^{+} =  {\bf v}^{+} $ and $ \hat{\bf D}^{--} {\bf v}^{-} = {\bf v}^{-} $), with all values positive and scaled such that $ {\bf 1}_{ n^{2} \times 1} ^{T} {\bf v}^{+} = {\bf 1}_{ n^{2} \times 1} ^{T} {\bf v}^{-} = 1 $. Where $ {\bf v}^{+}(i) = {\bf v}^{+}(j) $ for $ (i-j) \mod n = 0 $ and Where $ {\bf v}^{-}(i) = {\bf v}^{-}(j) $ for $ (i-j) \mod n = 0 $. 

We aim that $ {\bf Q}_{\epsilon , {\bf T}} $ has the same eigenvectors for eigenvalue $ 1 $.

$ \left(\begin{array}{c}
{\bf 1}_{2n^{2} \times 1} \\ {\bf 0}_{2n^{2} \times 1} \end{array}\right) $ is easily seen to be a right eigenvector of $ {\bf Q}_{\epsilon , {\bf T}} $ for the eigenvalue $ 1 $. However, for  
$ \left(\begin{array}{c}
{\bf 0}_{2n^{2} \times 1} \\ \bar{\bf v}^{+} 
\end{array}\right) $ and $ \left(\begin{array}{c}
{\bf 0}_{2n^{2} \times 1} \\ \bar{\bf v}^{-}\end{array}\right) $ to be right eigenvectors for $ {\bf Q}_{\epsilon , {\bf T}} $ of eigenvalue $ 1 $, we need

\begin{equation}\label{a1}
 \epsilon {\bf T} \bar{\bf v}^{+} = {\bf 0}_{2n^{2} \times 1},
\end{equation}
and 
\begin{equation}\label{a2}
 \epsilon {\bf T} \bar{\bf v}^{-} = \epsilon {\bf T} \bar{\bf v}^{-}  = {\bf 0}_{2n^{2} \times 1},
\end{equation}
 
that is

 And
 
\begin{equation}\label{b1}
(\hat{\bf D} - \epsilon {\bf T}) \bar{\bf v}^{+}  = \bar{\bf v}^{+},
 \end{equation}

 \begin{equation}\label{b2}
     (\hat{\bf D} -\epsilon {\bf T}) \bar{\bf v}^{-}  = \bar{\bf v}^{-},
 \end{equation}


 But
 \begin{equation}\label{c1}
    \hat{\bf D} \bar{\bf v}^{+}  = \bar{\bf v}^{+}.
 \end{equation}
 and
 \begin{equation}\label{c2}
    \hat{\bf D} \bar{\bf v}^{-}  = \bar{\bf v}^{-}.
 \end{equation}
Having (\ref{c1}) then (\ref{b1}) is equivalent to (\ref{a1}), so the requirement is reduced to (\ref{a1}). Similarly, having (\ref{c2}) then (\ref{b2}) is equivalent to (\ref{a2}), so the requirement is reduced to (\ref{a2}). That is
\begin{equation}
\epsilon {\bf T} {\bf v} = {\bf 0}_{2n^{2} \times 1}.
\end{equation}
where $ {\bf v} = \bar{\bf v}^{+} $ or $ {\bf v} = \bar{\bf v}^{-} $.  
To find a suitable $ {\bf T} $, we require $ {\bf T} {\bf v} = {\bf v} - {\bf v} $. One such $ {\bf T} $ is 
\begin{equation}
    {\bf T} {\bf v} = {\bf v} - \hat{\bf D} {\bf v},
\end{equation}
since $ {\bf v} = \hat{\bf D} {\bf v} $.
That is, $ {\bf T} = {\bf I}_{2n^{2} \times 2n^{2}} - \hat{\bf D} $.
 
Thus, by having $ {\bf T} = {\bf I}_{2n^{2} \times 2n^{2}} - \hat{\bf D} $ we have the $ 3 $ right independent eigenvectors of $ {\bf Q}_{\epsilon, T} $ corresponding to eigenvalue $ 1 $ to be the same as those corresponding to $ {\bf Q} $. 
We now prove that these are the only right eigenvectors corresponding to eigenvalue $ 1 $ for matrix $ {\bf Q}_{\epsilon , {\bf T}} $.


The geometric multiplicity of eigenvalue $ 1 $ is $ 3 $.


$ \left(\begin{array}{c}
{\bf 1}_{2n^{2} \times 1} \\ {\bf 0}_{2n^{2} \times 1} \end{array}\right) $, $ \left(\begin{array}{c}
{\bf 0}_{2n^{2} \times 1} \\ \bar{\bf v}^{+} 
\end{array}\right) $  and $ \left(\begin{array}{c}
{\bf 0}_{2n^{2} \times 1} \\ \bar{\bf v}^{-} 
\end{array}\right) $ and $ {\bf u} \neq {\bf w} $ 
i.e., $ {\bf u} \neq ({\bf Q}_{\epsilon , {\bf T}}- {\bf I}) {\bf u} $.

Then for the $ ({\bf Q}_{\epsilon , {\bf T}} - {\bf I}) {\bf u} = c \left(\begin{array}{c}
{\bf 1}_{2n^{2} \times 1} \\ {\bf 0}_{2n^{2} \times 1} \end{array}\right) $, we have 
{\small \begin{equation}
\begin{split}
     (\hat{\bf A} - {\bf I}_{2n^{2} \times 2n^{2}}) {\bf u}_{1}  +  & \epsilon ({\bf I}_{2n^{2} \times 2n^{2}} - \hat{\bf D}) {\bf u}_{2} = c {\bf 1}_{ 2n^{2} \times 1} \\
     ({\bf I}_{2n^{2} \times 2n^{2}} - \hat{\bf A})  {\bf u}_{1} + & (\hat{\bf D} -  \epsilon ({\bf I}_{2n^{2} \times 2n^{2}} - \hat{\bf D}) - {\bf I}_{2n^{2} \times 2n^{2}}){\bf u}_{2}  = {\bf 0}_{2n^{2} \times 1}.
\end{split}
\end{equation}}

Then adding the above subequations we get,
$ ( \hat{\bf D} - {\bf I}) {\bf u}_{2} = c {\bf 1}_{2n^{2} \times 1} $, where $ c \neq 0 $.
And subtracting the above subequations we get
\begin{equation}
\begin{split}
     2(\hat{\bf A} - {\bf I}_{2n^{2} \times 2n^{2}}) {\bf u}_{1}  +  & (2\epsilon - 1) ({\bf I}_{2n^{2} \times 2n^{2}} - \hat{\bf D}) {\bf u}_{2} = c {\bf 1}_{ 2n^{2} \times 1}
\end{split}
\end{equation}
Then
\begin{equation}\label{1}
\begin{split}
     2(\hat{\bf A} - {\bf I}) {\bf u}_{1} = 2 \epsilon  c {\bf 1}_{ 2n^{2} \times 1}
\end{split}
\end{equation}

Let $ {\bf v}_{1} $ be the right eigenvector of $ \hat{\bf A} $, that is, $ {\bf v}_{1}^{T} \hat{\bf A} = {\bf v}_{1} $.

Multiplying \eqref{1} by $ {\bf v}_{1}^{T} $ and having $ {\bf v}_{1}^{T}{\bf 1}_{ 2n^{2} \times 1} = 1 $, we get
\begin{equation}
\begin{split}
     {\bf v}_{1}^{T}(\hat{\bf A} - {\bf I}) {\bf u}_{1} & = \epsilon  c  {\bf v}_{1}^{T} {\bf 1}_{ 2n^{2} \times 1} \\
      {\bf v}_{1}^{T}\hat{\bf A} {\bf u}_{1} - {\bf v}_{1}^{T} {\bf u}_{1} & =  \epsilon  c \\
      {\bf v}_{1}^{T}{\bf u}_{1} - {\bf v}_{1}^{T} {\bf u}_{1} & =  \epsilon  c 
\end{split}
\end{equation}
But $ c \neq 0 $ therefore there exist no such $ {\bf u}_{1} $. Thus, there exist no generalized eigenvector of order $ 2 $ for eigenvalue $ 1 $ corresponding to the ordinary eigenvector $ \left(\begin{array}{c}
{\bf 1}_{2n^{2} \times 1} \\ {\bf 0}_{2n^{2} \times 1} \end{array}\right) $. 
Since this is true for $ m = 2 $. That is, $ ({\bf Q}_{\epsilon , {\bf T}} - {\bf I})^{m} {\bf x} = 0 $ and $ ({\bf Q}_{\epsilon , {\bf T}} - {\bf I})^{m-1} {\bf u} \neq 0 $. then it is true for every $ m > 2 $.
Therefore, there exist no generalized eigenvector for eigenvalue $ 1 $ corresponding to the ordinary eigenvector $ \left(\begin{array}{c}
{\bf 1}_{2n^{2} \times 1} \\ {\bf 0}_{2n^{2} \times 1} \end{array}\right) $. 

For $ ({\bf Q}_{\epsilon , {\bf T}} - {\bf I}) {\bf u} = \left(\begin{array}{c}
{\bf 0}_{2n^{2} \times 1} \\ \bar{\bf v}
\end{array}\right) $ where $ \bar{\bf v} = \bar{\bf v}^{+} $ or $ \bar{\bf v} = \bar{\bf v}^{-} $, we have 
{\small \begin{equation}
\begin{split}
     (\hat{\bf A} - {\bf I}_{2n^{2} \times 2n^{2}}) {\bf u}_{1} + &  \epsilon ({\bf I}_{2n^{2} \times 2n^{2}} - \hat{\bf D}) {\bf u}_{2}  = {\bf 0}_{2n^{2} \times 1} \\
     ({\bf I}_{2n^{2} \times 2n^{2}} - \hat{\bf A})  {\bf u}_{1} +  & (\hat{\bf D} -  \epsilon ({\bf I}_{2n^{2} \times 2n^{2}} - \hat{\bf D}) - {\bf I}_{2n^{2} \times 2n^{2}}){\bf u}_{2} = c {\bf v},
     \end{split}
\end{equation}}

Adding the above subequations wwe have that $ ( \hat{\bf D} - {\bf I}){\bf u}_{2} = c \bar{\bf v} $ where $ c \neq 0 $. Multiplying this result by $ {\bf 1}_{2n^{2} \times 1}^{T} $. And having  $ {\bf 1}_{2n^{2} \times 1} $ to be the right eigenvector of matrix $ \hat{\bf D} $ corresponding to eigenvalue $ 1 $. That is,  $ {\bf 1}_{2n^{2} \times 1}^{T} \hat{\bf D} =   {\bf 1}_{2n^{2} \times 1}^{T} $. And $ {\bf 1}_{2n^{2} \times 1}^{T} {\bf v} = 1 $ where $ \hat{\bf D}{\bf v} = {\bf v} $. Then we have
\begin{equation}
\begin{split}
     {\bf 1}_{2n^{2} \times 1}^{T} (\hat{\bf D}  - {\bf I}){\bf u}_{2} & = c {\bf 1}_{2n^{2} \times 1}^{T} {\bf v} \\
     {\bf 1}_{2n^{2} \times 1}^{T} \hat{\bf D}{\bf u}_{2}  -  {\bf 1}_{2n^{2} \times 1}^{T}{\bf u}_{2} & = c \\
    {\bf 1}_{2n^{2} \times 1}^{T}{\bf u}_{2}  -  {\bf 1}_{2n^{2} \times 1}^{T}{\bf u}_{2} & = c 
\end{split}
\end{equation}

But $ c \neq 0 $ therefore there exist no such $ {\bf u}_{2} $. Thus, there exist no generalized eigenvectors of order $ 2 $ for eigenvalue $ 1 $ corresponding to the ordinary eigenvectors $ \left(\begin{array}{c}
{\bf 0}_{2n^{2} \times 1} \\ \bar{\bf v}^{+} \end{array}\right) $ or $ \left(\begin{array}{c}
{\bf 0}_{2n^{2} \times 1} \\ \bar{\bf v}^{-} \end{array}\right) $, respectively. 
Since this is true for $ m = 2 $. That is, $ ({\bf Q}_{\epsilon , {\bf T}} - {\bf I})^{m} {\bf x} = 0 $ and $ ({\bf Q}_{\epsilon , {\bf T}} - {\bf I})^{m-1} {\bf u} \neq 0 $. then it is true for every $ m > 2 $.
Therefore,there exist no such $ {\bf u}_{2} $. Thus, there exist no generalized eigenvectors of order $ 2 $ for eigenvalue $ 1 $ corresponding to the ordinary eigenvectors $ \left(\begin{array}{c}
{\bf 0}_{2n^{2} \times 1} \\ \bar{\bf v}^{+} \end{array}\right) $ or $ \left(\begin{array}{c}
{\bf 0}_{2n^{2} \times 1} \\ \bar{\bf v}^{-} \end{array}\right) $, respectively.

Thus, we proved there exist no $ 4n^{2} \times 1 $ generalized eigenvectors. 
We exhausted all cases and therefore the algebraic multiplicity
of eigenvalue $ 1 $ for matrix $ {\bf Q}_{\epsilon , {\bf T}} $ is not equal to any $ t \geq 3 $, following the same analysis on generalized eigenvectors. Then the algebraic multiplicity of eigenvalue $ 1 $ is $ 3 $ and $ 1 $ is a semi-simple eigenvalue of $ {\bf Q}_{\epsilon , {\bf T}} $. 

Since what we have used is only $ ({\bf Q}_{\epsilon , {\bf T}} - {\bf I}) {\bf u} \neq {\bf u} $ which is common for all algebraic multiplicities $ t \geq 3 $ of eigenvalue $ 1 $ to disprove the existence of any such algebraic multiplicity of eigenvalue $ 1 $ to be greater than $ 3 $. That is, by disproving the existence of any generalized eigenvector for eigenvalue $ 1 $, beginning from the base case of order $ 2 $.

Therefore, the eigenvalue $ 1 $ of matrix $ {\bf Q}_{\epsilon , {\bf T}} $ has only $ 3 $ independent right eigenvectors that are the same as those corresponding to eigenvalue $ 1 $ in matrix $ {\bf Q} $. And the left eigenvectors corresponding to eigenvalue $ 1 $ can be proven to be the same for both $ {\bf Q}_{\epsilon , {\bf T}} $ and $ {\bf Q} $ similarly.

\begin{definition}
Define
\begin{equation}\label{eqn_e_double}
    \begin{split}
      {\epsilon_{0}({\bf Q},{\bf T})}  = \frac{1}{(20+12 n)^{n}(1+n)}(1-| \lambda_{ 2 n + 2 }({\bf Q}) |)^{n}.   
    \end{split}
\end{equation}
\end{definition}

\begin{lemma}\label{L9}
If the parameter $ \epsilon \in (0,{\epsilon_{0}({\bf Q},{\bf T})}) $ with $ {\epsilon_{0}({\bf Q},{\bf T})} $ defined in (\ref{eqn_e_double}) where $ \lambda_{2n+2}({\bf Q}) $ is the fourth largest eigenvalue of matrix $ {\bf Q} $, then $ 1 > | \lambda_{2n+2}({\bf Q}_{\epsilon, {\bf T}}) |,...,\\
| \lambda_{4n^{2}}({\bf Q}_{\epsilon, {\bf T}}) | > 0 $, the eigenvalues corresponding to matrix $ {\bf Q}_{\epsilon , {\bf T}} $.
\end{lemma}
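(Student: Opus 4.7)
The plan is to mirror the argument of Lemma~\ref{L2} exactly, but with the enlarged $4n^2 \times 4n^2$ pair $({\bf Q},{\bf G})$ in place of $({\bf Q},{\bf F})$. The only tool required is the Elsner--Bhatia spectral-perturbation inequality (\citep{bhatia2013matrix}, VIII.1.5) applied to ${\bf A}={\bf Q}$ and ${\bf B}={\bf Q}_{\epsilon,{\bf T}}={\bf Q}+\epsilon{\bf G}$:
\begin{equation*}
d(\sigma({\bf Q}),\sigma({\bf Q}_{\epsilon,{\bf T}})) \;\leq\; 4\bigl(\|{\bf Q}\|_\infty+\|{\bf Q}_{\epsilon,{\bf T}}\|_\infty\bigr)^{1-1/n}\,\|\epsilon{\bf G}\|_\infty^{1/n}.
\end{equation*}

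First I would bound the block $\ell_\infty$--norms. Since $\hat{\bf A}$ is row stochastic by Lemma~\ref{L1}, $\|\hat{\bf A}\|_\infty=1$ and $\|{\bf I}-\hat{\bf A}\|_\infty\leq 2$. From Definition~\ref{D_hat} each row of $\hat{\bf D}$ aggregates $n$ block-columns of the form $\tfrac{1}{n}{\bf D}^l$ with $\|{\bf D}^l\|_\infty\leq n$, giving $\|\hat{\bf D}\|_\infty\leq n$ and hence $\|{\bf T}\|_\infty=\|{\bf I}-\hat{\bf D}\|_\infty\leq 1+n$. Assembling the blocks yields $\|{\bf Q}\|_\infty\leq 2+n$ and $\|{\bf G}\|_\infty\leq 1+n$. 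Substituting these bounds and $\epsilon<\epsilon_0({\bf Q},{\bf T})$ into Elsner--Bhatia produces an estimate of the schematic form
\begin{equation*}
d(\sigma({\bf Q}),\sigma({\bf Q}_{\epsilon,{\bf T}})) \;\leq\; 4\bigl(4+3n+\epsilon(1+n)\bigr)^{1-1/n}\bigl(\epsilon(1+n)\bigr)^{1/n},
\end{equation*}
and plugging in the prescribed $\epsilon_0({\bf Q},{\bf T})=\tfrac{1}{(20+12n)^n(1+n)}\bigl(1-|\lambda_{2n+2}({\bf Q})|\bigr)^n$ will make the right-hand side strictly smaller than $1-|\lambda_{2n+2}({\bf Q})|$.

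I would then invoke Proposition~1 together with the block-lower-triangular decomposition $\sigma({\bf Q})=\sigma(\hat{\bf A})\cup\sigma(\hat{\bf D})\cup\sigma(\hat{\bf D})$ to identify $|\lambda_{2n+2}({\bf Q})|$ as the largest modulus among eigenvalues of ${\bf Q}$ strictly inside the open unit disk (the first $2n+1$ indices accounting for all unit eigenvalues contributed by the stochastic diagonal blocks). Since every eigenvalue $\lambda_j({\bf Q})$ for $j\geq 2n+2$ therefore sits at distance at least $1-|\lambda_{2n+2}({\bf Q})|$ from the unit circle, and the spectral Hausdorff distance just bounded is strictly smaller than this gap, every perturbed eigenvalue $\lambda_j({\bf Q}_{\epsilon,{\bf T}})$ with $j\geq 2n+2$ must also lie inside the open unit disk; positivity of $|\lambda_j({\bf Q}_{\epsilon,{\bf T}})|$ then follows because $\det({\bf Q}_{\epsilon,{\bf T}})$ depends analytically on $\epsilon$, and no eigenvalue can cross through the origin for $\epsilon$ in the stated small interval.

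The main obstacle I anticipate is not the strategy itself but the bookkeeping of constants: the specific numerical choices $(20+12n)^n$ and $(1+n)$ in the definition of $\epsilon_0({\bf Q},{\bf T})$ must precisely absorb the factors $4(4+3n+\epsilon(1+n))$ and $(1+n)^{1/n}$ emerging from Elsner--Bhatia, and producing a clean chain of inequalities with enough slack requires careful arithmetic. A secondary subtlety is justifying that the effective Bhatia exponent is $1/n$ rather than $1/(4n^2)$; this will follow by localizing Elsner's bound to the Schur complement corresponding only to the $4n^2-(2n+1)$ eigenvalues one needs to separate from the unit circle, in the same spirit as the Stewart perturbation argument (Theorem~\ref{Thm2}) the authors employ in the analysis of ${\bf Q}_{\epsilon,{\bf I}}$.
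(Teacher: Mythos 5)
Your proposal follows essentially the same route as the paper's proof: the same Elsner--Bhatia perturbation inequality from \citep{bhatia2013matrix} (VIII.1.5), the same block $\ell_\infty$-norm bounds $\|{\bf Q}\|_\infty \leq 2+n$ and $\|{\bf G}\|_\infty \leq 1+n$, and the same conclusion that the spectral Hausdorff distance stays below the gap $1-|\lambda_{2n+2}({\bf Q})|$ so the sub-unit eigenvalues cannot escape the unit disk. Your closing observation about the exponent ($1/n$ versus the dimension-determined $1/(4n^2)$) flags a genuine subtlety that the paper's own proof passes over silently, but since the paper uses the same $1/n$ exponent, your argument matches the published one in both strategy and detail.
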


\textbf{Proof:}  

Set $ {\bf H} = {\bf I} - \hat{\bf A} $ and $ \hat{\bf D} =  \begin{pmatrix}
    \diagentry{\frac{1}{n}{\bf D}^{1}} & & \frac{1}{n}{\bf D}^{2}  & & & \ldots &  \frac{1}{n}{\bf D}^{n}\\
    &\diagentry{\frac{1}{n}{\bf D}^{1}} & & \frac{1}{n}{\bf D}^{2} & & & \ldots &  \frac{1}{n}{\bf D}^{n}\\
    \frac{1}{n}{\bf D}^{1} & & \diagentry{\frac{1}{n}{\bf D}^{2}}\\
    & \frac{1}{n}{\bf D}^{1} & &\diagentry{\frac{1}{n}{\bf D}^{2}}\\
    &&&&\diagentry{\xddots}\\
    &&&&&\diagentry{\xddots}\\
    \frac{1}{n}{\bf D}^{1}&&&&&&\diagentry{\frac{1}{n}{\bf D}^{n}}\\
    &\frac{1}{n}{\bf D}^{1}&&&&&&\diagentry{\frac{1}{n}{\bf D}^{n}}
\end{pmatrix} $ \\
          then \\
           $ \| {\bf H} \|_{\infty} = 2 \max_{i \in \{1,..,2n^{2} \} } \sum_{j=1}^{2n^{2}} {\bf A}_{ij} < 2 $ and $ \| \hat{\bf D} \|_{2,\infty} < n $ since $ \| {\bf D}^{i} \| _{\infty} < n $. \\
          $ \| {\bf Q} \|_{\infty} \leq \| {\bf H} \|_{\infty} + \| \hat{\bf D} \| _{\infty} < 2+n $ and $ \| {\bf G} \| _{\infty} = \| {\bf I} - \hat{\bf D} \|_{\infty} \leq n + 1 $. \\
          By Lemma [\citep{bhatia2013matrix} , VIII.1.5]
          \begin{equation}
          \begin{split}
          d(\sigma({\bf Q}),\sigma({\bf Q}_{\epsilon, {\bf T}})) & \leq 4 ( \| {\bf Q} \| _{\infty} + \| {\bf Q}_{\epsilon , {\bf T}} \| _{\infty} )  ^{1-\frac{1}{n}} \| \epsilon {\bf G} \| _{\infty} ^ {\frac{1}{n}} \\
          & < 4(4+(2 + \epsilon) n+\epsilon)^{1-\frac{1}{n}} (1 + n) ^{\frac{1}{n}} \epsilon ^{\frac{1}{n}} \\
          & < 4(4+(2 + \epsilon) n+\epsilon) (1 + n) ^{\frac{1}{n}} \epsilon ^{\frac{1}{n}} \\
          & < 1 - | \lambda_{ 2 n + 2 }({\bf Q}) |,
          \end{split}
          \end{equation}
for $ \epsilon \in (0,{\epsilon_{0}({\bf Q},{\bf T})}) $ defined above.
But the unperturbed eigenvalues $ \lambda_{2n+2}({\bf Q}),..., \lambda_{4n^{2}}({\bf Q}) $ corresponding to matrix $ {\bf Q} $ lie inside the unit circle. Therefore, perturbing these eigenvalues by an amount less than $ {\epsilon_{0}({\bf Q},{\bf T})} $ will result in eigenvalues $ \lambda_{2n+2}({\bf Q}_{\epsilon, {\bf T}}),\ldots,\lambda_{4n^{2}}({\bf Q}_{\epsilon, {\bf T}}) $ of matrix $ {\bf Q}_{\epsilon , {\bf T}} $, which will remain in the unit circle.


\begin{lemma}\label{L10e}
Assume that $ {\bf Q}_{\epsilon , {\bf T}} $ is the updating matrix of the algorithm defined in \eqref{updating_eqn}. Then \\
(a) $  lim_{k\rightarrow \infty}{\bf Q}_{\epsilon , {\bf T}} ^{k} \rightarrow {\bf P}  $. \\
(b) For all $ i, j \in V, \ [ {\bf Q}_{\epsilon , {\bf T}} ^{k}]_{ij} $ converge to $ {\bf P} $ as $ k \rightarrow \infty $ at a geometric rate. That is, $ \| {\bf Q}_{\epsilon , {\bf T}} ^{k} - {\bf P} \| \leq  \Gamma \gamma^{k}  $, where $ 0 < \gamma < 1 $ and $ \Gamma > 0 $.
\end{lemma}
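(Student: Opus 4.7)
The plan is to apply the standard spectral decomposition argument to $Q_{\epsilon,{\bf T}}$, leveraging the structural results already established. By Lemma~\ref{L9n}, the eigenvalue $1$ of $Q_{\epsilon,{\bf T}}$ is semi-simple with geometric multiplicity $3$, and the three independent right eigenvectors coincide with those of ${\bf Q}$. By Lemma~\ref{L9}, for $\epsilon \in (0,\epsilon_0({\bf Q},{\bf T}))$ the remaining $4n^2 - 3$ eigenvalues all have moduli strictly less than $1$. Hence the Jordan form of $Q_{\epsilon,{\bf T}}$ consists of a $3 \times 3$ identity block for eigenvalue $1$ together with a strictly contractive part.

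First, I would fix a biorthogonal system: three right eigenvectors ${\bf r}_1,{\bf r}_2,{\bf r}_3$ for eigenvalue $1$ (explicitly identified in Lemma~\ref{L9n}) together with three left eigenvectors ${\boldsymbol \ell}_1,{\boldsymbol \ell}_2,{\boldsymbol \ell}_3$ normalized so that ${\boldsymbol \ell}_i^{T}{\bf r}_j = \delta_{ij}$. Setting ${\bf P} := \sum_{j=1}^{3}{\bf r}_j{\boldsymbol \ell}_j^{T}$ for the spectral projector onto the eigenspace of $1$, Jordan decomposition yields
\begin{equation*}
Q_{\epsilon,{\bf T}}^{k} \;=\; {\bf P} \;+\; \sum_{i \geq 2n+2} {\bf P}_i J_i^{k} {\bf O}_i,
\end{equation*}
where each $J_i$ is a Jordan block associated with an eigenvalue $\lambda_i(Q_{\epsilon,{\bf T}})$ of modulus strictly less than $1$ by Lemma~\ref{L9}.

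Second, for a Jordan block $J_i$ of size $m_i$ with diagonal entry $\lambda_i$, the standard estimate gives $\|J_i^{k}\| \leq C_i\, k^{m_i-1}\,|\lambda_i|^{k}$. Choosing any $\gamma$ with $\max_{i \geq 2n+2}|\lambda_i(Q_{\epsilon,{\bf T}})| < \gamma < 1$ absorbs the polynomial factor, so that
\begin{equation*}
\| Q_{\epsilon,{\bf T}}^{k} - {\bf P} \| \;\leq\; \Gamma\, \gamma^{k}
\end{equation*}
for a finite constant $\Gamma > 0$ depending on the spectral data of $Q_{\epsilon,{\bf T}}$. This establishes both (a) and (b) simultaneously, and the resulting ${\bf P}$ has the block structure whose top-left $2n^2 \times 2n^2$ corner equals $\lim_{k\to\infty}\hat{\bf A}^{k} = {\bf 1}_{2n^2}{\boldsymbol \pi}^{T}$, as required downstream in the consensus proof.

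The main obstacle I anticipate is verifying the biorthogonal left-eigenvector system $\{{\boldsymbol \ell}_j\}$. Lemma~\ref{L9n} works out the right eigenspace in careful detail but only remarks that the left eigenvectors are handled analogously. The specific choice ${\bf T} = {\bf I} - \hat{\bf D}$ was designed precisely so the left eigenspace for eigenvalue $1$ is preserved under the perturbation $\epsilon {\bf G}$, and this needs to be confirmed by the mirror argument to Lemma~\ref{L9n}: one shows that no generalized left eigenvectors of order $\geq 2$ exist by the same contradictive scheme using the right eigenvectors of $\hat{\bf A}$ and $\hat{\bf D}$. Once this left spectral basis is pinned down, the projector ${\bf P}$ is well-defined, the cross-terms in $Q_{\epsilon,{\bf T}}^{k}$ vanish cleanly, and the remainder reduces to routine matrix perturbation bookkeeping.
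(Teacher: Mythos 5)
Your proposal is correct and follows essentially the same route as the paper: a Jordan decomposition of $ {\bf Q}_{\epsilon , {\bf T}} $ into the rank-$3$ spectral projector $ {\bf P} $ for the semi-simple eigenvalue $1$ (via Lemmas~\ref{L9n} and \ref{L9}) plus strictly contractive Jordan blocks whose powers decay geometrically. Your explicit absorption of the polynomial factor $ k^{m_i-1} $ into a slightly enlarged $ \gamma $ is in fact a more careful version of the paper's final estimate, and the left-eigenvector verification you flag is likewise left as an analogous argument in the paper itself.
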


\textbf{Proof:}
The algebraic and geometric multiplicities of eigenvalue $ 1 $ of matrix $ {\bf Q}_{\epsilon , {\bf T}} $ are equal to $ 2 n + 1 $.
Since Lemma~\ref{L9} states that all eigenvalues have moduli less or equal to $ 1 $. Assume the left and right eigenvectors of eigenvalue $ 1 $ are $ {\bf v}_{1},{\bf v}_{2}, \ldots, {\bf v}_{2n+1} $ and $ {\bf u}_{1},{\bf u}_{2}, \ldots, {\bf u}_{2n+1} $ respectively. And $ {\bf v}_{2n+2},{\bf v}_{5},...,{\bf v}_{4n^{2}} $ and $ {\bf u}_{2n+2},{\bf u}_{5},\ldots,{\bf u}_{4n^{2}} $ are the left and right eigenvectors of eigenvalues $\lambda_{2n+2}({\bf Q}_{\epsilon, {\bf T}}),\lambda_{5}({\bf Q}_{\epsilon, {\bf T}}),\ldots,\lambda_{4n^{2}}({\bf Q}_{\epsilon, {\bf T}}) $ 
counted without multiplicity.
Then, $ {\bf Q}_{\epsilon , {\bf T}} $ represented in Jordan Block form decomposition is 
  
{\tiny
 \begin{equation}
\begin{split}
 {\bf Q}_{\epsilon , {\bf T}}       = \left(\begin{array}{c}
         {\bf u}_{1}(\epsilon) \\
         {\bf u}_{2}(\epsilon) \\
         {\bf u}_{3}(\epsilon)
         \xddots\\
         \xddots\\
         {\bf u}_{4n^{2}}(\epsilon)
     \end{array}\right) ^{T} \times
\end{split}
\end{equation}
\begin{equation}
\begin{split}    
\left(\begin{array}{cc} \begin{pmatrix}
    \diagentry{1}\\
    &\diagentry{1}\\
    &&\diagentry{1}\\
\end{pmatrix} \ & 0    \\
         0 \ &  \begin{pmatrix}
    \diagentry{\lambda_{4}({\bf Q}_{\epsilon, {\bf T}})}\\
    &\diagentry{\lambda_{5}({\bf Q}_{\epsilon , {\bf T}})}\\
    &&\diagentry{\xddots}\\
    &&&\diagentry{\xddots}\\
    &&&&\diagentry{\lambda_{4n^{2}}({\bf Q}_{\epsilon , {\bf T}})}\\
\end{pmatrix}
\end{array}\right)
\left(\begin{array}{c}
         {\bf v}_{1} \\
         {\bf v}_{2} \\
         {\bf v}_{3} \\
         \xddots\\
         \xddots\\
         {\bf v}_{4n^{2}}(\epsilon)
     \end{array}\right). 
\end{split}
\end{equation}}

Then
{\tiny \begin{equation}
\begin{split}
\begin{align*}
  {\bf Q}_{\epsilon , {\bf T}} ^{k}      = [{\bf u}_{1} \ {\bf u}_{2} \ {\bf u}_{3}]
   \begin{pmatrix}
    \diagentry{1}\\
    &\diagentry{1}\\
    &&\diagentry{1}\\
\end{pmatrix}^{k}
     \left(\begin{array}{c}
         {\bf v}_{1} \\
         {\bf v}_{2} \\
         {\bf v}_{3} \\
     \end{array}\right)   + \sum_{i=4}^{4n^{2}}({\bf P}^{*})_{i}({\bf J}^{*})^{k}_{i}({\bf O}^{*})_{i}.
    \end{align*} 
\end{split}
\end{equation}}

But $ \lim_{k \rightarrow \infty} {\bf J}^{k}_{i} = 0 $ since the diagonal entries of $ J_{i} $ are smaller than $ 1 $ in magnitude for all $ i $ (i.e the eigenvalues $ |\lambda_{i}({\bf Q}_{\epsilon , {\bf T}})| < 1 $)).

Thus,

$ \| {\bf Q}_{\epsilon , {\bf T}} ^{k} - {\bf P} \| = \|  \sum_{i=4}^{4n^{2}}({\bf P}^{*})_{i}({\bf J}^{*})^{k}_{i}({\bf O}^{*})_{i} \| \leq \sum_{i=4}^{4n} \| ({\bf P}^{*})_{i} \| \|({\bf J}^{*})^{k}_{i} \| \| ({\bf O}^{*})_{i} \| \leq \Gamma \gamma ^{k}$ where $ \Gamma < \infty $ and $ \gamma \in (0,1) $.





\begin{lemma}\label{L11e}
For $ 0 < \epsilon \leq \hat{\epsilon} \leq \epsilon_{0} ({\bf Q},{\bf T}) $ where $ {\epsilon_{0}({\bf Q},{\bf T})} $ is defined in (\ref{eqn_e_double}), then $ \hat{\epsilon} $ is a necessary bound for $ \lim_{k \rightarrow \infty} {\bf Q}_{\epsilon , {\bf T}} ^{k}=\lim_{k \rightarrow \infty}{\bf Q}^{k}= {\bf P} $.  \end{lemma}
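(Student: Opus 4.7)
The plan is to argue by contrapositive of Lemma~\ref{L9}, exactly mirroring the template used in Lemma~6 for the matrix ${\bf Q}_{\epsilon,{\bf I}}$. Lemma~\ref{L9} shows that if $\epsilon\in(0,\epsilon_0({\bf Q},{\bf T}))$, then every eigenvalue $\lambda_i({\bf Q}_{\epsilon,{\bf T}})$ for $i\geq 2n+2$ stays strictly inside the open unit disk, which in turn drives the Jordan-block residual in the decomposition of Lemma~\ref{L10e} to zero and yields $\lim_k {\bf Q}_{\epsilon,{\bf T}}^k={\bf P}$. The converse reading is precisely what the present lemma asserts: if $\hat\epsilon$ is to serve as a guaranteed bound for the limit identity, then $\hat\epsilon$ cannot exceed $\epsilon_0({\bf Q},{\bf T})$.

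The concrete steps I would carry out are as follows. First, I would reprint the Jordan expansion
\begin{equation*}
{\bf Q}_{\epsilon,{\bf T}}^k \;=\; {\bf P} \;+\; \sum_{i=2n+2}^{4n^2} ({\bf P}^{*})_i\,({\bf J}^{*})_i^{k}\,({\bf O}^{*})_i
\end{equation*}
from the proof of Lemma~\ref{L10e}, and note that the residual tends to ${\bf 0}$ as $k\to\infty$ if and only if $|\lambda_i({\bf Q}_{\epsilon,{\bf T}})|<1$ for every $i\geq 2n+2$. Second, I would recall the Bhatia perturbation estimate from Lemma~\ref{L9},
\begin{equation*}
d(\sigma({\bf Q}),\sigma({\bf Q}_{\epsilon,{\bf T}})) \;<\; 4\bigl(4+(2+\epsilon)n+\epsilon\bigr)(1+n)^{1/n}\epsilon^{1/n},
\end{equation*}
and observe that $\epsilon_0({\bf Q},{\bf T})$ is precisely the threshold at which this bound equals $1-|\lambda_{2n+2}({\bf Q})|$. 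For $\epsilon>\epsilon_0({\bf Q},{\bf T})$ the displayed bound can exceed $1-|\lambda_{2n+2}({\bf Q})|$, so the argument of Lemma~\ref{L9} no longer confines the non-dominant spectrum of ${\bf Q}_{\epsilon,{\bf T}}$ to the interior of the unit disk; an eigenvalue may migrate to or past the unit circle under the fixed perturbation direction ${\bf G}={\bf F}$ with $\bar{\bf T}={\bf I}-\hat{\bf D}$. Third, when such an eigenvalue reaches modulus at least one, the corresponding $({\bf J}^{*})_i^{k}$ fails to converge to zero, so the residual term above does not vanish, and consequently $\lim_k {\bf Q}_{\epsilon,{\bf T}}^k\neq{\bf P}$. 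Taking the contrapositive, any bound $\hat\epsilon$ that universally forces $\lim_k {\bf Q}_{\epsilon,{\bf T}}^k={\bf P}$ for all $\epsilon\in(0,\hat\epsilon)$ must satisfy $\hat\epsilon\leq\epsilon_0({\bf Q},{\bf T})$.

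The main obstacle is the subtle interpretation of \emph{necessary} here: the Bhatia bound is only a \emph{sufficient} eigenvalue-localization estimate, so strictly speaking one would have to exhibit a perturbation that actually pushes $\lambda_{2n+2}$ out of the open unit disk at the critical $\epsilon_0({\bf Q},{\bf T})$ to claim genuine necessity. I will follow the paper's convention (already used in Lemma~6) and read ``necessary'' as ``necessary for the line of argument of Lemma~\ref{L9}--Lemma~\ref{L10e} to apply'': beyond $\epsilon_0({\bf Q},{\bf T})$, the eigenvalue localization guaranteeing the vanishing of the Jordan residual can fail, hence the identity $\lim_k {\bf Q}_{\epsilon,{\bf T}}^k={\bf P}$ is no longer secured. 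A fully rigorous strict-necessity statement would require an additional perturbation-analysis computation in the direction of ${\bf G}$ using Lemma~\ref{L3}-style derivative bounds on $\lambda_{2n+2}(\cdot)$, which I would flag as a straightforward extension rather than reprove.
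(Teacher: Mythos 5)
Your proposal is correct and follows essentially the same route as the paper, whose entire proof of this lemma is the single line ``This follows from the contrapositive of Lemma~\ref{L9}''; you simply spell out the Jordan-residual mechanism and the role of the Bhatia bound that the paper leaves implicit. Your caveat that the perturbation estimate is only a sufficient localization, so that ``necessary'' must be read as ``necessary for the argument of Lemma~\ref{L9} to apply,'' is a fair and accurate reading of the same gap present in the paper's own one-line justification.
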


$ \mathbf{Proof:} $
This follows from the contrapositive of Lemma~\ref{L9}. 

\begin{lemma}\label{L12e}
For $ 0 < \epsilon < \hat{\epsilon} \leq \epsilon_{0}({\bf Q},{\bf T}) $ where $ {\epsilon_{0}({\bf Q},{\bf T})} $ is defined in (\ref{eqn_e_double}), $ \hat{\epsilon} $ is a sufficient bound for $ \lim_{k \rightarrow \infty} {\bf Q}_{\epsilon , {\bf T}} ^{k}=\lim_{k \rightarrow \infty}{\bf Q}^{k}= {\bf P} $. 
\end{lemma}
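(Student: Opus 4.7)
The plan is to combine the structural result of Lemma~\ref{L9n} (the eigenvectors of eigenvalue $1$ of $ {\bf Q}_{\epsilon,{\bf T}} $ coincide with those of $ {\bf Q} $) with the spectral-gap result of Lemma~\ref{L9} (all other eigenvalues of $ {\bf Q}_{\epsilon,{\bf T}} $ have modulus strictly less than $1$ whenever $ \epsilon \in (0,\epsilon_{0}({\bf Q},{\bf T})) $) so that the Jordan-form limit of $ {\bf Q}_{\epsilon,{\bf T}}^{k} $ collapses to the same spectral projector as the limit of $ {\bf Q}^{k} $. First, I would recall from Lemma~\ref{L9n} that eigenvalue $1$ of $ {\bf Q}_{\epsilon,{\bf T}} $ is semi-simple with geometric (hence algebraic) multiplicity equal to $3$ and with the same three right eigenvectors $ \bigl({\bf 1}_{2n^{2}}^{T},{\bf 0}^{T}\bigr)^{T} $, $ \bigl({\bf 0}^{T},\bar{\bf v}^{+T}\bigr)^{T} $, $ \bigl({\bf 0}^{T},\bar{\bf v}^{-T}\bigr)^{T} $ as $ {\bf Q} $, and the same three left eigenvectors (by the parallel argument mentioned at the end of the proof of Lemma~\ref{L9n}).

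Second, I would invoke Lemma~\ref{L9} which guarantees $ \bigl|\lambda_{i}({\bf Q}_{\epsilon,{\bf T}})\bigr|<1 $ for $ i\ge 2n+2 $ when $ \epsilon<\hat{\epsilon}\le \epsilon_{0}({\bf Q},{\bf T}) $. Writing the Jordan decomposition
\[
{\bf Q}_{\epsilon,{\bf T}}^{k} \;=\; \sum_{i=1}^{3} {\bf u}_{i}{\bf v}_{i}^{T} \;+\; \sum_{i=4}^{4n^{2}} ({\bf P}^{*})_{i}({\bf J}^{*})_{i}^{k}({\bf O}^{*})_{i},
\]
the second sum vanishes as $ k\to\infty $ because every block $ ({\bf J}^{*})_{i} $ has spectral radius strictly less than one (Lemma~\ref{L10e} already produced the geometric-rate bound $ \Gamma\gamma^{k} $ with $ 0<\gamma<1 $). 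The same Jordan-form argument applied to the unperturbed $ {\bf Q} $ (with its semi-simple eigenvalue $1$, cf.\ Proposition~1) gives
\[
\lim_{k\to\infty}{\bf Q}^{k} \;=\; \sum_{i=1}^{3}{\bf u}_{i}^{(0)}({\bf v}_{i}^{(0)})^{T}.
\]

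Third, since by Lemma~\ref{L9n} the two triples $\{{\bf u}_{i}\}_{i=1}^{3}$ and $\{{\bf u}_{i}^{(0)}\}_{i=1}^{3}$ (respectively $\{{\bf v}_{i}\}$ and $\{{\bf v}_{i}^{(0)}\}$) span the same eigenspaces and, after the biorthogonal normalization $ {\bf v}_{i}^{T}{\bf u}_{j}=\delta_{ij} $ used to define the spectral projector, produce the same rank-$3$ projector $ {\bf P} = \sum_{i=1}^{3}{\bf u}_{i}{\bf v}_{i}^{T} $. Therefore
\[
\lim_{k\to\infty}{\bf Q}_{\epsilon,{\bf T}}^{k} \;=\; {\bf P} \;=\; \lim_{k\to\infty}{\bf Q}^{k},
\]
which is exactly the claim. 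Sufficiency of $ \hat\epsilon $ then follows because every $ \epsilon\in(0,\hat\epsilon) $ automatically lies in $ (0,\epsilon_{0}({\bf Q},{\bf T})) $, so Lemmas~\ref{L9n}, \ref{L9} and \ref{L10e} all apply.

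The main obstacle I expect is the identification of the spectral projector across the two matrices: although Lemma~\ref{L9n} equates the eigenspaces, one must check that the biorthogonal pairing $ {\bf v}_{i}^{T}{\bf u}_{j}=\delta_{ij} $ can be chosen consistently for both $ {\bf Q} $ and $ {\bf Q}_{\epsilon,{\bf T}} $, so that the resulting projector $ \sum_{i=1}^{3}{\bf u}_{i}{\bf v}_{i}^{T} $ truly coincides. This reduces to verifying that the left-eigenvector argument sketched (but not written out) at the end of Lemma~\ref{L9n} produces the same three left eigenvectors, and that the semi-simplicity (equality of algebraic and geometric multiplicity of eigenvalue $1$) carries over; both are ensured by the generalized-eigenvector non-existence argument used in Lemma~\ref{L9n}, applied to $ {\bf Q}_{\epsilon,{\bf T}}^{T} $.
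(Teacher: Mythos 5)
Your proposal is correct and takes essentially the same route as the paper, whose proof of Lemma~\ref{L12e} is exactly the one-line appeal to Lemmas~1, \ref{L9n}, \ref{L9} and \ref{L10e}: the eigenvalue-$1$ eigenvectors coincide for $ {\bf Q} $ and $ {\bf Q}_{\epsilon,{\bf T}} $, and all other eigenvalues' contributions vanish in the limit. You merely write out the Jordan-form bookkeeping (including the biorthogonal normalization needed to identify the two rank-$3$ spectral projectors) that the paper leaves implicit.
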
  

$ \mathbf{Proof:} $ 
This follows from Lemmas 1, \ref{L9n}, \ref{L9} and \ref{L10e}. i.e., the eigenvectors corresponding to eigenvalue $ 1 $ are the only contributing to the limit, all other eigenvalues contribution tends to $ 0 $ and these eigenvalue $ 1 $ eigenvectors are equal for both $ {\bf Q} $ and $ {\bf Q}_{\epsilon , {\bf T}} $.   

\begin{lemma}\label{L13e}
For $ 0 < \epsilon \leq {\epsilon_{0}({\bf Q},{\bf T})} $ where $ {\epsilon_{0}({\bf Q},{\bf T})} $ is defined in (\ref{eqn_e_double}), then \\
$ \lim_{k \rightarrow \infty}{\bf Q}_{\epsilon , {\bf I}} ^{k}=\lim_{k \rightarrow \infty}{\bf Q}^{k} = {\bf P} $. And $ {\epsilon_{0}({\bf Q},{\bf T})} $ is a necessary and sufficient bound for $ \lim_{k \rightarrow \infty}({\bf Q}_{\epsilon , {\bf T}})^{k}=\lim_{k \rightarrow \infty}{\bf Q}^{k}= {\bf P} $.
\end{lemma}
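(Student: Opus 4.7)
The plan is to combine the two preceding lemmas, \ref{L11e} (necessity) and \ref{L12e} (sufficiency), into a single statement, and then identify the common limit as the matrix $\bf P$ whose block structure was computed in the static analysis. Since both inclusions $(\Rightarrow)$ and $(\Leftarrow)$ regarding the bound $\epsilon_{0}({\bf Q},{\bf T})$ have already been established separately, the task reduces to verifying that their union covers the full range $0 < \epsilon \leq \epsilon_{0}({\bf Q},{\bf T})$ and that the two limits indeed coincide.

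First I would invoke Lemma~\ref{L12e} directly on the open interval $(0,\epsilon_{0}({\bf Q},{\bf T}))$, which gives sufficiency: whenever $\epsilon$ lies strictly below $\epsilon_{0}({\bf Q},{\bf T})$, the chain of Lemmas~\ref{L9n}, \ref{L9}, and \ref{L10e} forces $\lim_{k\to\infty}{\bf Q}_{\epsilon,{\bf T}}^{k}=\lim_{k\to\infty}{\bf Q}^{k}={\bf P}$. The key ingredients here are that (i) Lemma~\ref{L9n} guarantees that the eigenspace of eigenvalue $1$ for ${\bf Q}_{\epsilon,{\bf T}}$ coincides exactly (not just approximately, as in the $\bf I$-perturbation case) with that of ${\bf Q}$, and (ii) Lemma~\ref{L9} confines every other eigenvalue inside the open unit disk, so that the only surviving contribution in the Jordan-form expansion of ${\bf Q}_{\epsilon,{\bf T}}^{k}$ as $k\to\infty$ is the spectral projector onto the $1$-eigenspace, which is identical for ${\bf Q}$ and ${\bf Q}_{\epsilon,{\bf T}}$.

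Next I would appeal to Lemma~\ref{L11e} for necessity: if $\epsilon$ exceeds $\epsilon_{0}({\bf Q},{\bf T})$, then the bound from \citep{bhatia2013matrix} underlying Lemma~\ref{L9} no longer guarantees containment of the subdominant eigenvalues in the open unit disk, and the contrapositive shows that the limiting equality can fail. Combining necessity and sufficiency establishes that $\epsilon_{0}({\bf Q},{\bf T})$ is precisely the tight threshold. Finally, the explicit identification of $\bf P$ follows by computing $\lim_{k\to\infty}{\bf Q}^{k}$ block by block exactly as in the static case: the top-left block converges to $\lim_{k\to\infty}\hat{\bf A}^{k}={\bf 1}_{2n^{2}}{\boldsymbol\pi}^{T}$ by stochasticity of $\hat{\bf A}$ (Lemma~1), the top-right block vanishes, and the bottom blocks are determined by the Cesàro limits associated with the column-stochastic $\hat{\bf D}$.

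The main obstacle, which has actually already been handled inside Lemma~\ref{L9n} rather than here, is the exactness of the eigenspace match between ${\bf Q}$ and ${\bf Q}_{\epsilon,{\bf T}}$. In the ${\bf Q}_{\epsilon,{\bf I}}$ setting the match was only approximate (with a numerical $0.001$-bound justified empirically), so one had to argue via matrix perturbation theory and an invariant-subspace estimate; here the choice ${\bf T}={\bf I}-\hat{\bf D}$ was engineered precisely so that $\epsilon{\bf T}\bar{\bf v}^{\pm}={\bf 0}$, making the three right eigenvectors $({\bf 1}^{T},{\bf 0}^{T})^{T}$, $({\bf 0}^{T},\bar{\bf v}^{+T})^{T}$, $({\bf 0}^{T},\bar{\bf v}^{-T})^{T}$ survive unchanged and ruling out generalized eigenvectors for eigenvalue~$1$. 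Once that structural fact is in hand, the present lemma is essentially a packaging statement: I would state it as the conjunction of \ref{L11e} and \ref{L12e}, note that the common limit is ${\bf P}$ as computed from ${\bf Q}$'s Jordan decomposition, and close.
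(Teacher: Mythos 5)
Your proposal is correct and follows essentially the same route as the paper: the paper's proof likewise packages Lemma~\ref{L11e} (necessity) and Lemma~\ref{L12e} (sufficiency) together, observes that only the (shared) eigenvalue-$1$ eigenvectors contribute to the limit, and identifies ${\bf P}$ block by block via $\lim_{k\to\infty}\hat{\bf A}^{k}={\bf 1}\,{\boldsymbol\pi}^{T}$ from row-stochasticity. Your added remarks on the Ces\`aro limits for the lower blocks and on where the eigenspace-exactness burden was discharged (Lemma~\ref{L9n}) are consistent elaborations rather than a different argument.
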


\textbf{Proof:}
This follows from Lemmas \ref{L11e} and \ref{L12e}. \\
That is, $ {\bf P} = \lim_{k \rightarrow \infty} {\bf Q}_{\epsilon , {\bf T}} ^{k} = \lim_{k \rightarrow \infty}{\bf Q}^{k} $. i.e., only the eigenvectors corresponding to eigenvalue $ 1 $ contribute to the limit and these eigenvectors are the same for both $ {\bf Q} $ and $ {\bf Q}_{\epsilon , {\bf T}} $.
But 
\begin{equation}
\begin{split}
{\bf P} =  \lim_{k \rightarrow \infty} {\bf Q}_{\epsilon , {\bf T}} ^{k} = \lim_{k \rightarrow \infty}{\bf Q} ^{k} = {\bf P} + \lim_{k \rightarrow \infty}\sum_{i=4}^{4n}{\bf P}_{i}{\bf J}_{i}{\bf Q}_{i} 
\end{split}
\end{equation}

See last page for the structure of $ {\bf P} $.

That is, 
\begin{equation}
\begin{split}
\begin{align}
{\bf P}=  \left(\begin{array}{cc}
              \lim_{k \rightarrow \infty}  \hat{\bf A}^{k} & \ \ \ \ \  {\bf 0}_{ 2n^{2} \times 2n^{2} }  \\
         {\bf P}_{3}  &  
        \left(\begin{array}{cc} {\bf P}_{4}   \end{array}\right)   \\
    \end{array}\right).  
    \end{align} \ \ \ 
\end{split}
\end{equation}

\vspace{1cm}

But $ \hat{\bf A} $ is a row stochastic matrix, then $ \lim_{k \rightarrow \infty} \hat{\bf A} ^{k} = {\bf 1}_{2n \times 1}{\boldsymbol \pi} ^{T} $ which is of dimension $ 2n \times 2n $ of rank $ 1 $ (repeated row $ {\boldsymbol \pi} $) where $ {\boldsymbol \pi} $ is the stationary state of matrix $ \hat{\bf A} $. This result is needed in Lemma~\ref{L11} and ~\ref{l12}, that is $ [{\bf P}]_{jl}=[{\bf P}]_{ql}= {\bf I}_{l} $ for $ 1 \leq j,q \leq 2n $.
Therefore, we have proved Theorem~\ref{Thm3e} which is the main edifice for the validity of our algorithm.



\begin{theorem}\label{Thm3e}
Suppose that $ {\bf Q}_{\epsilon , {\bf T}} $ is the matrix defined in \eqref{fixedQepsT} with the parameter $ \epsilon $ satisfying $ \epsilon \in (0, {\epsilon_{0}({\bf Q},{\bf T})}  ) $ where $ {\epsilon_{0}({\bf Q},{\bf T})} $ is defined in (\ref{eqn_e_double}), with $ \lambda_{4}({\bf Q}) $ the fourth largest eigenvalue of matrix $ {\bf Q}_{\epsilon , {\bf T}} $ by setting $ \epsilon = 0 $. Then \\
(a) $ \lim_{k\rightarrow \infty}{\bf Q}_{\epsilon , {\bf T}} ^{k} \rightarrow {\bf P}  $. Specifically, $ [\lim_{k\rightarrow \infty}{\bf Q}_{\epsilon , {\bf T}} ^{k}] _{[1:2n^{2}][1:2n^{2}]} = {\bf 1}_{2n^{2}} {\boldsymbol \pi}^{T} $ and \\
$ [\lim_{k\rightarrow \infty}{\bf Q}_{\epsilon , {\bf T}} ^{k}] _{[1:2n^{2}][2n^{2}+1:4n^{2}]}= {\bf 0}_{2n^{2} \times 2n^{2}} $, where $ [{\bf Q}_{\epsilon , {\bf T}}]_{[1:2n^{2}][1:2n^{2}]} = \hat{\bf A} $ and \\
$ [{\bf Q}_{\epsilon , {\bf T}}]_{[1:2n^{2}][2n^{2}+1:4n^{2}]} = \epsilon  {\bf T} $.\\
(b) For all $ i, j \in V, \ [({\bf Q}_{\epsilon , {\bf T}})^{k}]_{ij} $ converge to $ P $ as $ k \rightarrow \infty $ at a geometric rate. That is,
$ \| {\bf Q}_{\epsilon , {\bf T}} ^{k}- {\bf P} \| \leq  \Gamma \gamma^{k} $ where $ 0 < \gamma < 1 $ and $ \Gamma > 0 $. \\
(c) $ {\epsilon_{0}({\bf Q},{\bf T})} $ is a necessary and sufficient bound such that for every
$ \epsilon < {\epsilon_{0}({\bf Q},{\bf T})} $ we have (a) and (b) above. \end{theorem}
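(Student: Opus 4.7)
The plan is to deduce Theorem~\ref{Thm3e} by assembling the chain of results already built up in Lemmas \ref{L9n}--\ref{L13e}, with each lemma supplying one piece of the spectral picture of $\mathbf{Q}_{\epsilon,\mathbf{T}} = \mathbf{Q} + \epsilon \mathbf{G}$. The overarching strategy is matrix perturbation: show that the unit eigenspace of $\mathbf{Q}$ is transferred \emph{exactly} to $\mathbf{Q}_{\epsilon,\mathbf{T}}$ (so no spurious 1-eigenvectors or Jordan chains appear), that every other eigenvalue moves only slightly and stays strictly inside the open unit disk, and then read off the limit and its rate from the resulting Jordan decomposition.

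First I would record Proposition~1 to fix the unperturbed spectrum of $\mathbf{Q}$: spectral radius $1$, the eigenvalue $1$ semisimple of multiplicity $3$ (with the three explicit right eigenvectors constructed in the proof of Lemma~\ref{L9n}), and all other eigenvalues of modulus strictly less than $1$. Next I would invoke Lemma~\ref{L9n}, whose key identity is $\mathbf{T} \bar{\mathbf{v}}^{\pm} = \mathbf{0}$, valid because $\mathbf{T} = \mathbf{I}_{2n^2 \times 2n^2} - \hat{\mathbf{D}}$ and $\hat{\mathbf{D}} \bar{\mathbf{v}}^{\pm} = \bar{\mathbf{v}}^{\pm}$; this annihilation guarantees that the three right $1$-eigenvectors of $\mathbf{Q}$ survive as right $1$-eigenvectors of $\mathbf{Q}_{\epsilon,\mathbf{T}}$, and the contradiction argument on generalized eigenvectors in that lemma shows there are no others and no Jordan chains. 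Lemma~\ref{L9} then uses the Bhatia perturbation bound $d(\sigma(\mathbf{Q}),\sigma(\mathbf{Q}_{\epsilon,\mathbf{T}})) \le 4(\|\mathbf{Q}\|_\infty+\|\mathbf{Q}_{\epsilon,\mathbf{T}}\|_\infty)^{1-1/n}\|\epsilon \mathbf{G}\|_\infty^{1/n}$ together with the explicit threshold $\epsilon_0(\mathbf{Q},\mathbf{T})$ to force every $\lambda_i(\mathbf{Q}_{\epsilon,\mathbf{T}})$ for $i \ge 2n+2$ to stay within the open unit disk.

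With the spectrum pinned down, parts (a) and (b) fall out of Lemma~\ref{L10e}: writing $\mathbf{Q}_{\epsilon,\mathbf{T}}^k$ in Jordan form isolates the $1$-eigenspace projector $\mathbf{P}$ from a remainder $\sum_{i\ge 2n+2} \mathbf{P}^*_i (\mathbf{J}^*_i)^k \mathbf{O}^*_i$ whose norm is bounded by $\Gamma \gamma^k$ for $\gamma = \max_{i \ge 2n+2}|\lambda_i(\mathbf{Q}_{\epsilon,\mathbf{T}})| < 1$. The block-structure claims in (a) follow because (i) the $1$-eigenspace of $\mathbf{Q}_{\epsilon,\mathbf{T}}$ coincides with that of $\mathbf{Q}$ by Lemma~\ref{L9n}, so $\lim_{k\to\infty}\mathbf{Q}_{\epsilon,\mathbf{T}}^k = \lim_{k\to\infty}\mathbf{Q}^k$, (ii) $\hat{\mathbf{A}}$ is row-stochastic with stationary row $\boldsymbol{\pi}^T$, giving the top-left block $\mathbf{1}_{2n^2}\boldsymbol{\pi}^T$, and (iii) $\mathbf{Q}$ has a zero top-right block, forcing the limiting top-right block to be $\mathbf{0}_{2n^2 \times 2n^2}$. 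Part (c) is then the biconditional of Lemma~\ref{L13e}: sufficiency is the combination of Lemmas~\ref{L9n}, \ref{L9}, \ref{L10e} just assembled, and necessity is the contrapositive of Lemma~\ref{L9} packaged in Lemma~\ref{L11e} (outside the threshold, spectral leakage across the unit circle can occur and the projection onto a pure $1$-eigenspace limit fails).

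The hardest part, in my view, is exactly the step encapsulated in Lemma~\ref{L9n}: proving that $\mathbf{Q}_{\epsilon,\mathbf{T}}$ gains no new $1$-eigenvectors and \emph{no} generalized $1$-eigenvectors of order $\ge 2$ under the perturbation. This is what separates the present $\mathbf{T}$-based algorithm from the $\mathbf{I}$-based variant, where the corresponding statement could only be established approximately via Stewart's invariant-subspace bound. Here the argument is exact and rests on the single algebraic identity $\mathbf{T}\bar{\mathbf{v}}^{\pm}=\mathbf{0}$, but verifying semisimplicity requires ruling out generalized eigenvectors by contradiction in each of the possible ordinary-eigenvector cases, using the stochasticity of $\hat{\mathbf{A}}$ and column-stochasticity of $\hat{\mathbf{D}}$ to force $c=0$ in the resulting linear systems. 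Everything in (a), (b), (c) is essentially bookkeeping once that algebraic fact is in hand.
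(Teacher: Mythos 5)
Your proposal is correct and follows essentially the same route as the paper, which proves Theorem~\ref{Thm3e} by assembling Lemmas~\ref{L9n} through \ref{L13e} together with Proposition~1 and Lemma~\ref{L1}: the exact annihilation $\mathbf{T}\bar{\mathbf{v}}^{\pm}=\mathbf{0}$ preserving the three $1$-eigenvectors, the Bhatia bound confining the remaining spectrum to the open unit disk, the Jordan decomposition yielding $\mathbf{P}$ and the geometric rate, and the necessity/sufficiency split via Lemmas~\ref{L11e} and \ref{L12e}. You have also correctly identified Lemma~\ref{L9n} (in particular the exclusion of generalized $1$-eigenvectors) as the step that distinguishes this variant from the $\mathbf{Q}_{\epsilon,\mathbf{I}}$ construction.
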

\textbf{Proof:}

Subsequently, using Lemma~\ref{L9n} to \ref{L13e} and Lemma~\ref{L1} we reach the above result.

\section{Appendix C: Convergence Analysis}\label{convergence}

\subsection{Auxiliary Variables Definitions}

The steps of FLUE are summarized in \eqref{updating_eqn1} -\eqref{updating_eqn2b}. We will prove in Section~\ref{convergence} that all nodes reach consensus to the optimal solution.

For $ 1 \leq l \leq 4n^{2} $, let
$ \hat{{\bf z}}_{l}(k)=\sum_{i=1}^{4n^{2}}[{\bf Q}_{\epsilon , {\bf T }}]_{li}{\bf z}_{i}(k) $

Then

{\small \begin{equation}
\begin{split}
\hat{{\bf z}}_{l}(k+1) = \sum_{i=1}^{4n^{2}}[{\bf Q}_{\epsilon , {\bf T }}]_{li}\sum_{j=1}^{4n^{2}}[{\bf Q}_{\epsilon , {\bf T }}]_{ij}{\bf z}_{j}(k)-\alpha_{k}\sum_{i=1}^{4n^{2}}[{\bf Q}_{\epsilon , {\bf T }}]_{li}\nabla{\overline{g}}_{i}(k), 
\end{split}
\end{equation}}

\begin{equation}
\begin{split}
 \hat{{\bf z}}_{l}(k+1) = \sum_{i=1}^{4n^{2}}[{\bf Q}_{\epsilon , {\bf T }}]_{li}\hat{{\bf z}}_{i}(k)-\alpha_{k}\nabla{f}\jmath({\bf z}_{l}(k)) 
\end{split}
\end{equation}

where the inexact gradient $ \omega_{l}\nabla{f}\jmath({\bf z}_{l}(k)) = \sum_{j \in \Gamma_{l}(k)}d_{j}(k) $, that is $ \nabla{f}\jmath({\bf z}_{l}(k)) $ 
\\
$ = \sum_{j \in \Gamma_{l}(k)} {\bf A}_{fit(l)_n,j}\nabla{\overline{g}}_{j}({\bf z}_{j}(k)) $ for $ 1 \leq l \leq 2n^2 $ where $ w_l $ is defined in \eqref{weight.def}. Also for consistency of notation $ j $ corresponds to the node identified with the index of the nonzero coefficients in the support. See Remark 1.

Then for $ 1 \leq l \leq 2n^2 $
\begin{equation}
\begin{split}
\hat{{\bf z}}_{l}(k+1) = \sum_{i=1}^{4n^{2}}[{\bf Q}_{\epsilon , {\bf T }}]_{li}\hat{{\bf z}}_{i}(k)-\alpha_{k}\sum_{j \in \Gamma_{l}(k)} {\bf A}_{fit(l)_n,j}\nabla{\overline{g}}_{j}({\bf z}_{j}(k)).
\end{split}
\end{equation}

For $ 1 \leq l \leq 2n^{2} $, let
$ \hat{\bar{{\bf z}}}_{l}(k) =  \sum_{i=1}^{4n^{2}} [{\bf P}]_{li}\hat{{\bf z}}_{i}(k) $

Then 
{\small \begin{equation}\label{4n_2n}
\begin{split}
\hat{\bar{{\bf z}}}_{l} & (k+1)  = \sum_{i=1}^{4n^{2}} [{\bf P}]_{li} \hat{{\bf z}}_{i}(k+1) \\
& = \sum_{i=1}^{4n^{2}} [{\bf P}]_{li}(\sum_{j=1}^{4n^{2}}[{\bf Q}_{\epsilon , {\bf T }}]_{ij}\hat{{\bf z}}_{i}(k)-\alpha_{k}\sum_{j \in \Gamma_{i}(k)} {\bf A}_{fit(i)_n,j}\nabla{\overline{g}}_{j}({\bf z}_{j}(k))) \\
& = \sum_{i=1}^{2n^{2}} [{\bf P}]_{li}(\sum_{j=1}^{2n^{2}}[{\bf Q}_{\epsilon , {\bf T }}]_{ij}\hat{{\bf z}}_{i}(k)-\alpha_{k}\sum_{j \in \Gamma_{i}(k)} {\bf A}_{fit(i)_n,j}\nabla{\overline{g}}_{j}({\bf z}_{j}(k))). 
\end{split}
\end{equation}}

Since $ {\bf P}_{[1:2n^{2}][1:4n^{2}]} = [{\bf 1}_{2n^{2} \times 1}{\boldsymbol \pi}^{T} \ {\bf 0}_{2n^{2} \times 2n^{2}}] $ for the first $ 2n^{2} $ rows, which justifies the use of the third step of (\ref{4n_2n}) in the above equality for $ 2n^{2}+1 \leq i \leq 4n^{2} $.

Therefore,
\begin{equation}\label{z^_i}
\footnotesize
\begin{split}
\hat{\bar{{\bf z}}}_{l}(k+1) = 
\sum_{i=1}^{4n^{2}} [{\bf P}]_{li} \hat{{\bf z}}_{i}(k)-\alpha_{k}\sum_{i=1}^{4n^{2}} [{\bf P}]_{li}\sum_{q \in \Gamma_{i}(k)} {\bf A}_{fit(i)_n,q}\nabla{\overline{g}}_{q}({\bf z}_{q}(k)) 
\end{split}
\end{equation}
for $ 1 \leq l \leq 2n^{2} $, by having $ {\bf P} = \lim_{k \rightarrow \infty}{\bf Q}_{\epsilon , {\bf T }} ^{k} $ and $ [{\bf P}]_{i,j} = 0 $ for $ 1 \leq i \leq 2n^{2} $ and $ 2n^{2} +1 \leq j \leq 4n^{2} $. That is, $ {\bf P}_{[1:2n^{2}][1:4n^{2}]} = [{\bf 1}_{2n^{2} \times 1} {\boldsymbol \pi}^{T} \ {\bf 0}_{2n^{2} \times 2n^{2} }]$

\begin{remark}
We define ${\bf A}_{fit(i)_n,j}$ to mean the entry of $ \hat{\bf A}$ in the row corresponding to node $ i $ and column corresponding to node $ j $. Then we have $ \sum_{j \in \Gamma_{i}} {\bf A}_{fit(i)_n,j} $  which makes it an easier notation for the analysis of the proof. Although we could have used $ \sum_{j}[ \hat{\bf A}]_{ij^{'}} $ where $ j^{'}=j $ for $ 1 \leq j^{'} \leq n $ and $ j^{'} = j+n $ for $ n+1 \leq j^{'} \leq 2n $ to mean the same quantity. Note that in $ \hat{\bf A}$, a node $ i$ corresponds to two rows: row $ i $ and row $ i+n $.
\end{remark} 
 
\subsection{Convergence Theorems}

\begin{theorem}\label{Thm3}
Suppose that $ {\bf O} = {\bf Q}_{\epsilon , {\bf T }} $ is the matrix defined in \eqref{fixedQepsT} with the parameter $ \epsilon $ satisfying $ \epsilon \in (0, \epsilon_{0}({\bf Q},{\bf I})  ) $ where $ \epsilon_{0}({\bf Q},{\bf I}) $ is defined in \eqref{eqn_e_double}. Then the algorithm defined by $ {\bf z}_{i}(k+1)= \sum_{j=1}^{4n^{2}}[{\bf Q}_{\epsilon , {\bf T }}]_{ij}{\bf z}_{j}(k)-\alpha_{k}\nabla{g}_{i}({\bf z}_{i}(k)) $ converges to the optimal result and consensus over all nodes. That is, for $ 1 \leq i, j \leq 2n $, $  \lim_{k \rightarrow \infty}f({\bf x}_{l}^{+}(2nk+i)) = \lim_{k \rightarrow \infty}f({\bf x}_{l}^{-}(2nk+i)) = \lim_{k \rightarrow \infty}f({\bf x}_{s}^{+}(2nk+i)) = \lim_{k \rightarrow \infty}f({\bf x}_{s}^{-}(2nk+i)) =  f ^{*} $ for $ 1 \leq l, s \leq n $ and $ 1 \leq i \leq 2n $. 
\end{theorem}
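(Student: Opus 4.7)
The plan is to prove convergence in two stages: first establish consensus among the nodes (all the auxiliary iterates converge to a common trajectory), and then show that the common trajectory converges to the optimizer. The backbone will be Theorem~\ref{Thm3e}, which guarantees $\|{\bf Q}_{\epsilon,{\bf T}}^{k}-{\bf P}\|\leq \Gamma \gamma^{k}$ with $0<\gamma<1$, together with the step-size conditions $\sum_{k}\alpha_{k}=\infty$ and $\sum_{k}\alpha_{k}^{2}<\infty$.

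First I would unroll the recursion \eqref{zi}: iterating gives
\begin{equation}
{\bf z}_{i}(k+1)=\sum_{j=1}^{4n^{2}}[{\bf Q}_{\epsilon,{\bf T}}^{\,k+1}]_{ij}{\bf z}_{j}(0)-\sum_{s=0}^{k}\alpha_{s}\sum_{j=1}^{4n^{2}}[{\bf Q}_{\epsilon,{\bf T}}^{\,k-s}]_{ij}\nabla\overline{g}_{j}({\bf z}_{j}(s)).
\end{equation}
Since the structure of ${\bf P}$ forces $[{\bf P}]_{li}=[{\bf P}]_{qi}$ for all $l,q\in\{1,\dots,2n^{2}\}$ (this is exactly the row-consensus property $[{\bf P}]_{[1:2n^{2}][1:2n^{2}]}={\bf 1}_{2n^{2}}\boldsymbol\pi^{T}$ and $[{\bf P}]_{[1:2n^{2}][2n^{2}+1:4n^{2}]}={\bf 0}$), the consensus gap $\|{\bf z}_{i}(k)-{\bf z}_{q}(k)\|$ for $1\leq i,q\leq 2n^{2}$ can be bounded term-by-term by $\bigl|[{\bf Q}_{\epsilon,{\bf T}}^{\,k-s}]_{ij}-[{\bf P}]_{ij}\bigr|\leq \Gamma\gamma^{k-s}$ times the bounded gradient $G$. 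Combining geometric decay with Kronecker's lemma (using $\alpha_{k}\to 0$ and $\sum_{k}\alpha_{k}^{2}<\infty$) yields $\|{\bf z}_{i}(k)-\hat{\bar{{\bf z}}}_{l}(k)\|=o(1)$ and in fact a summable bound $\sum_{k}\alpha_{k}\|{\bf z}_{i}(k)-\hat{\bar{{\bf z}}}_{l}(k)\|<\infty$ after careful accounting. A parallel argument shows that the surplus components (rows $2n^{2}+1$ through $4n^{2}$) remain bounded, which is why they do not disturb the consensus of the first $2n^{2}$ rows.

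Next I would analyze the averaged trajectory $\hat{\bar{{\bf z}}}_{l}(k)$ defined in \eqref{z^_i}. The crucial decoding identity is that the weighted combination $\sum_{q\in\Gamma_{i}}{\bf A}_{\mathrm{fit}(i)_{n},q}\nabla\overline{g}_{q}$, summed against the row-stochastic weights coming from ${\bf P}$, collapses via the relation ${\bf A}{\bf B}=\mathbf{1}_{n\times p}$ to $\nabla f$ evaluated along the consensual iterate (up to the consensus error handled above). This is exactly the payoff of the encoding/decoding construction of Section~A-3. Writing the descent inequality in the standard convex-optimization form,
\begin{equation}
\|\hat{\bar{{\bf z}}}_{l}(k+1)-{\bf x}^{*}\|^{2}\leq \|\hat{\bar{{\bf z}}}_{l}(k)-{\bf x}^{*}\|^{2}-2\alpha_{k}\bigl(f(\hat{\bar{{\bf z}}}_{l}(k))-f^{*}\bigr)+\alpha_{k}^{2}G^{2}+E_{k},
\end{equation}
where the error term $E_{k}$ captures both the consensus discrepancy between ${\bf z}_{q}(k)$ and $\hat{\bar{{\bf z}}}_{l}(k)$ and the gradient-evaluation mismatch $\|\nabla f_{q}({\bf z}_{q}(k))-\nabla f_{q}(\hat{\bar{{\bf z}}}_{l}(k))\|$. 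Using the boundedness of gradients (Assumption~1(c)) and the summability proven in the first stage, one obtains $\sum_{k}E_{k}<\infty$. Summing the above inequality telescopically and invoking $\sum_{k}\alpha_{k}=\infty$ gives $\liminf_{k}f(\hat{\bar{{\bf z}}}_{l}(k))=f^{*}$, and a standard argument using convexity and boundedness of $\{\hat{\bar{{\bf z}}}_{l}(k)\}$ promotes this to $\lim_{k}f(\hat{\bar{{\bf z}}}_{l}(k))=f^{*}$. Consensus from stage one then transfers convergence to every individual ${\bf x}_{l}^{\pm}(2nk+i)$.

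The main obstacle I expect is the combinatorial bookkeeping inherited from FLUE's four-fold structure: the iterates ${\bf z}_{i}(k)$ interleave the descent copies ${\bf x}^{+}$, the ascent copies ${\bf x}^{-}$, and their surpluses ${\bf y}^{\pm}$, all cycled modulo $2n$, and the coded-gradient term $\nabla\overline{g}_{i}$ flips sign across these blocks. Ensuring that these sign flips cancel correctly when aggregated through ${\bf P}$ (so that the aggregated descent direction is genuinely $-\nabla f$ and not some spurious combination) is the delicate step; this is where the specific construction of $\bar{\bf A}^{l}$ via splitting positive/negative coefficients into rows $j$ and $j+n$ pays off, and it must be verified that the row-stochastic weights in $[{\bf P}]_{li}$ combine with the sign conventions in $\nabla\overline{g}_{i}$ to recover $+\nabla f$ with the correct sign. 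Once that algebraic identity is nailed down, the remaining analysis follows the well-established diminishing-step-size template for consensus-based distributed optimization.
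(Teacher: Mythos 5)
Your proposal follows essentially the same two-stage architecture as the paper's own proof (Appendix~C, Lemmas~6--11): first use the geometric convergence $\|{\bf Q}_{\epsilon,{\bf T}}^{k}-{\bf P}\|\leq\Gamma\gamma^{k}$ from Theorem~\ref{Thm3e} together with the identical first $2n^{2}$ rows of ${\bf P}$ to get summable consensus errors $\sum_{k}\alpha_{k}\|\hat{\bar{{\bf z}}}_{j}(k)-{\bf z}_{j}(k)\|<\infty$ and $\sum_{k}\alpha_{k}\|{\bf z}_{j}(k)-{\bf z}_{q}(k)\|<\infty$, then telescope the descent inequality for $\|\hat{\bar{{\bf z}}}_{j}(k)-{\bf x}^{*}\|^{2}$, invoke the decoding identity so the aggregated coded gradients recover $\omega_{i}\nabla f$, and conclude $\liminf_{k}(f({\bf z}_{j}(k))-f^{*})=0$ from $\sum_{k}\alpha_{k}=\infty$. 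The one substantive divergence is in how the decoding identity is activated: the paper's Lemma~10 posits a finite \emph{consensus point} $k'$ after which ${\bf z}_{j}(k)={\bf z}_{q}(k)$ holds exactly, and only sums the inequality for $k\geq k'$, whereas you keep the consensus discrepancy as a summable perturbation $E_{k}$ throughout and never assume exact agreement in finite time. Your version is the more standard treatment and avoids justifying why asymptotic consensus should yield exact equality at some finite iteration, at the cost of having to track the gradient-evaluation mismatch inside $E_{k}$; the paper's version makes the collapse to $\nabla f$ exact and algebraically cleaner but leans on a step that is only an idealization. Both routes otherwise use the same lemmas, the same bounds on $\|{\bf P}\|$, $\|\hat{\bf A}\|$, $\|{\bf B}\|$, and the same sign bookkeeping for the $\pm$ blocks that you correctly flag as the delicate point.
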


\begin{remark}
Note, in this analysis we used the general case of the dependency of $ \Gamma_{i}(k) $ on $ k $ since we are also going to use them to prove convergence in the time-varying cases with other small modifications, as we are going to show, i.e., the static case where all $ \Gamma_{i}(k) =  \Gamma_{i} $ is for surely valid.
\end{remark}

$ \mathbf{Proof:} $
We begin by subsequently proving the following lemmas to get the above result.

\begin{lemma}\label{L10}
Let \textbf{Assumption \ref{Assump1}} holds, then the sequence $ \hat{\bar{{\bf z}}}_{j}(k) $ for $ 1 \leq j \leq 2n^{2} $, defined earlier follows

\begin{equation}\label{eqn_L9}
\begin{split}
 \| & \hat{\bar{{\bf z}}}_{j}(k+1) - {\bf x} \| ^{2} =  \|  \hat{\bar{{\bf z}}}_{j}(k) - {\bf x} \| ^{2} \\
 & - 2 \alpha_{k} \sum_{i=1}^{4n^{2}} [{\bf P}]_{ji} \sum_{q \in \Gamma_{i}(k)} {\bf A}_{fit(i)_n,q}\nabla{\overline{g}}_{q}({\bf z}_{q}(k))(\hat{\bar{{\bf z}}}_{j}(k) - {\bf x})  \\
 + & \alpha_{k}^{2}\| \sum_{i=1}^{4n^{2}} [{\bf P}]_{ji} \sum_{q \in \Gamma_{i}(k)} {\bf A}_{fit(i)_n,q}\nabla{\overline{g}}_{q}({\bf z}_{q}(k)) \| ^{2}.
\end{split}
\end{equation}
\end{lemma}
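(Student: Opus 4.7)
The plan is to derive the identity by a direct expansion of the squared norm after substituting the one-step recursion for $\hat{\bar{{\bf z}}}_j(k+1)$. The key simplification is that $\hat{\bar{{\bf z}}}_j$ inherits a clean gradient-descent-like recursion from the identity ${\bf P}\,{\bf Q}_{\epsilon,{\bf T}}={\bf P}$, which follows from ${\bf P}=\lim_{k\to\infty}{\bf Q}_{\epsilon,{\bf T}}^{k}$ established in Theorem~\ref{Thm3e}.

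First I would apply the definition $\hat{\bar{{\bf z}}}_{j}(k+1)=\sum_{i=1}^{4n^{2}}[{\bf P}]_{ji}\hat{{\bf z}}_{i}(k+1)$ and substitute the recursion for $\hat{{\bf z}}_{i}(k+1)$ derived earlier, namely
\begin{equation*}
\hat{{\bf z}}_{i}(k+1)=\sum_{m=1}^{4n^{2}}[{\bf Q}_{\epsilon,{\bf T}}]_{im}\hat{{\bf z}}_{m}(k)-\alpha_{k}\sum_{q\in\Gamma_{i}(k)}{\bf A}_{fit(i)_{n},q}\nabla{\overline{g}}_{q}({\bf z}_{q}(k)).
\end{equation*}
Interchanging the order of summation and using $\sum_{i=1}^{4n^{2}}[{\bf P}]_{ji}[{\bf Q}_{\epsilon,{\bf T}}]_{im}=[{\bf P}]_{jm}$ (the invariance ${\bf P}{\bf Q}_{\epsilon,{\bf T}}={\bf P}$), the first term collapses to $\sum_{m=1}^{4n^{2}}[{\bf P}]_{jm}\hat{{\bf z}}_{m}(k)=\hat{\bar{{\bf z}}}_{j}(k)$. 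This yields the clean update
\begin{equation*}
\hat{\bar{{\bf z}}}_{j}(k+1)=\hat{\bar{{\bf z}}}_{j}(k)-\alpha_{k}\,{\bf G}_{j}(k),\qquad {\bf G}_{j}(k):=\sum_{i=1}^{4n^{2}}[{\bf P}]_{ji}\sum_{q\in\Gamma_{i}(k)}{\bf A}_{fit(i)_{n},q}\nabla{\overline{g}}_{q}({\bf z}_{q}(k)).
\end{equation*}

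Second, I would subtract ${\bf x}$ from both sides and expand the squared Euclidean norm using the standard identity $\|a-b\|^{2}=\|a\|^{2}-2\langle a,b\rangle+\|b\|^{2}$ with $a=\hat{\bar{{\bf z}}}_{j}(k)-{\bf x}$ and $b=\alpha_{k}{\bf G}_{j}(k)$. This gives exactly \eqref{eqn_L9}, with the inner-product term matching the gradient term in the statement and the quadratic term matching $\alpha_{k}^{2}\|{\bf G}_{j}(k)\|^{2}$.

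The only nontrivial ingredient is the telescoping identity ${\bf P}{\bf Q}_{\epsilon,{\bf T}}={\bf P}$, which I would verify by noting that for any $m$, ${\bf P}=\lim_{k}{\bf Q}_{\epsilon,{\bf T}}^{k+1}=({\lim_{k}{\bf Q}_{\epsilon,{\bf T}}^{k}}){\bf Q}_{\epsilon,{\bf T}}={\bf P}{\bf Q}_{\epsilon,{\bf T}}$, where the existence of the limit is guaranteed by Theorem~\ref{Thm3e}(a) under the standing condition $\epsilon\in(0,\epsilon_{0}({\bf Q},{\bf T}))$. No obstacle arises beyond bookkeeping of indices; the lemma is essentially algebraic, and nowhere does it require convexity or bounded-gradient assumptions — those will enter in the subsequent lemmas that estimate the inner product and the quadratic remainder term produced here.
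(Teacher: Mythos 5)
Your proposal is correct and follows essentially the same route as the paper: the paper also first reduces the update to $\hat{\bar{{\bf z}}}_{j}(k+1)=\hat{\bar{{\bf z}}}_{j}(k)-\alpha_{k}\sum_{i}[{\bf P}]_{ji}\sum_{q\in\Gamma_{i}(k)}{\bf A}_{fit(i)_n,q}\nabla{\overline{g}}_{q}({\bf z}_{q}(k))$ (using the invariance of ${\bf P}$ under ${\bf Q}_{\epsilon,{\bf T}}$, justified there via the block structure ${\bf P}_{[1:2n^{2}][1:4n^{2}]}=[{\bf 1}_{2n^{2}\times 1}{\boldsymbol \pi}^{T}\ {\bf 0}]$ rather than your equivalent ${\bf P}{\bf Q}_{\epsilon,{\bf T}}={\bf P}$ telescoping) and then obtains \eqref{eqn_L9} by expanding the squared norm. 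Your observation that the lemma is purely algebraic and needs neither convexity nor gradient bounds is also consistent with how the paper defers those assumptions to the later lemmas.
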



\textbf{Proof:}\\
We have for $ 1 \leq j \leq 2n^{2} \ (\nabla{g}_{j} \neq 0) $ \\
\begin{equation*}
\hat{\bar{{\bf z}}}_{j}(k+1) = \hat{\bar{{\bf z}}}_{j}(k)
-\alpha_{k}\sum_{i=1}^{4n^{2}} [{\bf P}]_{ji}\sum_{q \in \Gamma_{i}(k)} {\bf A}_{fit(i)_n,q}\nabla{\overline{g}}_{q}({\bf z}_{q}(k)),  
\end{equation*} 

then 
{\small \begin{equation}
\begin{split}
 \|  \hat{\bar{{\bf z}}}_{j}(k+1) & - {\bf x} \| ^{2}  =  \|  \hat{\bar{{\bf z}}}_{j}(k) - {\bf x} \| ^{2} \\
 & - 2 \alpha_{k} \sum_{i=1}^{4n^{2}} [{\bf P}]_{ji} \sum_{q \in \Gamma_{i}(k)} {\bf A}_{fit(i)_n,q}\nabla{\overline{g}}_{q}({\bf z}_{q}(k))(\hat{\bar{{\bf z}}}_{j}(k) - {\bf x} )  \\
 + & \alpha_{k}^{2}\| \sum_{i=1}^{4n^{2}} [{\bf P}]_{ji} \sum_{q \in \Gamma_{i}(k)} {\bf A}_{fit(i)_n,q}\nabla{\overline{g}}_{q}({\bf z}_{q}(k)) \| ^{2}.
\end{split}
\end{equation}}

\begin{lemma}\label{L11}
Let \textbf{Assumption \ref{Assump1}} holds. Then $ {\bf z}_{j}(k) $ and $ \hat{\bar{{\bf z}}}_{j}(k) $ satisfy the following bounds:
\\
For $ 1 \leq j \leq 2n^{2} $ and $ k \geq 1 $,
{\small \begin{equation}\label{eq_L91}
\begin{split}
\| & \hat{\bar{{\bf z}}}_{j}(k) - {\bf z}_{j}(k) \| \leq  
\sum_{l=1}^{4n} \| {\bf z}_{l}(0) \| \Gamma \gamma^{k} 
 + 4n\sum_{r=1}^{k-1} \Gamma \gamma^{k-r} \alpha_{r-1} \| \hat{\bf A} \|_{2,\infty}  G \\
 & + 2 \sum_{r=1}^{k-1} \alpha_{r-1}  \| {\bf P} \|_{2,\infty} \| \hat{\bf A} \|_{2,\infty}   G  
 + \alpha_{k-1}  \| {\bf P} \|_{2,\infty} \| \hat{\bf A} \|_{2,\infty}   G 
  + \alpha_{k-1}  G.
\end{split}
\end{equation}}
\end{lemma}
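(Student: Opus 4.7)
The plan is to unroll both the ${\bf z}_j$-iteration and the $\hat{\bar{\bf z}}_j$-iteration back to their initial conditions and then take a single difference. For ${\bf z}_j(k)$, iterating the update ${\bf z}(k+1)={\bf Q}_{\epsilon,{\bf T}}{\bf z}(k)-\alpha_k\nabla\bar g({\bf z}(k))$ gives
\begin{equation*}
{\bf z}_j(k)=\sum_{l}[{\bf Q}_{\epsilon,{\bf T}}^{\,k}]_{jl}{\bf z}_l(0)\;-\;\sum_{r=0}^{k-1}\alpha_r\sum_{l}[{\bf Q}_{\epsilon,{\bf T}}^{\,k-1-r}]_{jl}\nabla\bar g_l({\bf z}_l(r)).
\end{equation*}
For $\hat{\bar{\bf z}}_j(k)$, I will first simplify $\sum_l[{\bf P}]_{jl}\hat{\bf z}_l(k)=\sum_s[{\bf P}]_{js}{\bf z}_s(k)$ via the absorbing identity ${\bf P}\,{\bf Q}_{\epsilon,{\bf T}}={\bf P}$, which is immediate from Theorem~\ref{Thm3e} since ${\bf P}=\lim_k{\bf Q}_{\epsilon,{\bf T}}^{\,k}$; then unroll the scalar recursion $\hat{\bar{\bf z}}_j(k+1)=\hat{\bar{\bf z}}_j(k)-\alpha_k\sum_l[{\bf P}]_{jl}\sum_{q\in\Gamma_l(k)}{\bf A}_{fit(l)_n,q}\nabla\bar g_q({\bf z}_q(k))$, landing at the analogous expression with every occurrence of ${\bf Q}_{\epsilon,{\bf T}}^{\,k-1-r}$ replaced by ${\bf P}$ and with the raw gradients pre-weighted by $\hat{\bf A}$.

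Subtracting these two expressions produces a single initial-condition piece $\sum_l([{\bf P}]_{jl}-[{\bf Q}_{\epsilon,{\bf T}}^{\,k}]_{jl}){\bf z}_l(0)$ plus a per-step gradient-difference sum over $r=0,\ldots,k-1$. The initial piece is controlled directly by Theorem~\ref{Thm3e}(b), which yields the entry-wise decay $|[{\bf Q}_{\epsilon,{\bf T}}^{\,k}-{\bf P}]_{jl}|\le\Gamma\gamma^{k}$ and hence bounds it by $\Gamma\gamma^{k}\sum_{l=1}^{4n}\|{\bf z}_l(0)\|$, matching the first stated term. Inside the gradient-difference sum I split off the last index $r=k-1$ from the intermediate ones $r=0,\ldots,k-2$ (reindexed to $r=1,\ldots,k-1$ so that the step-size is $\alpha_{r-1}$). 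At $r=k-1$ we have $[{\bf Q}_{\epsilon,{\bf T}}^{\,0}]_{jl}=\delta_{jl}$, so the ${\bf z}_j$-side contribution collapses to $\alpha_{k-1}\|\nabla\bar g_j({\bf z}_j(k-1))\|\le\alpha_{k-1}G$ by Assumption~\ref{Assump1}(c), producing the last stated term, while the $\hat{\bar{\bf z}}_j$-side contribution is bounded by $\alpha_{k-1}\|{\bf P}\|_{2,\infty}\|\hat{\bf A}\|_{2,\infty}G$ via Assumption~\ref{Assump1}(c) and the row-sum control of $\hat{\bf A}$ (Lemma~\ref{L1}), giving the penultimate term.

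For the intermediate indices $r=1,\ldots,k-1$ I apply the triangle inequality to the two summands separately. The ${\bf P}$-weighted aggregate gradient contributes $\alpha_{r-1}\|{\bf P}\|_{2,\infty}\|\hat{\bf A}\|_{2,\infty}G$ by the same bound as above. The ${\bf Q}_{\epsilon,{\bf T}}^{\,k-r}$-weighted raw gradient is split as ${\bf Q}_{\epsilon,{\bf T}}^{\,k-r}={\bf P}+({\bf Q}_{\epsilon,{\bf T}}^{\,k-r}-{\bf P})$: its ${\bf P}$-part produces a second $\alpha_{r-1}\|{\bf P}\|_{2,\infty}\|\hat{\bf A}\|_{2,\infty}G$ contribution (trivially absorbing a factor $\|\hat{\bf A}\|_{2,\infty}\ge 1$ from row stochasticity), and its residual, satisfying $|[{\bf Q}_{\epsilon,{\bf T}}^{\,k-r}-{\bf P}]_{jl}|\le\Gamma\gamma^{k-r}$, yields $4n\,\Gamma\gamma^{k-r}\alpha_{r-1}\|\hat{\bf A}\|_{2,\infty}G$ after summing the entry-wise bound over at most $4n$ coordinates $l$ carrying a nonzero gradient. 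Summing over $r=1,\ldots,k-1$ gives the middle two terms of the inequality — the second one with the $4n\,\Gamma\gamma^{k-r}$ factor and the third one as the doubled $\|{\bf P}\|_{2,\infty}\|\hat{\bf A}\|_{2,\infty}G$ contribution.

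The main obstacle will be keeping the constants honest when converting the spectral bound $\|{\bf Q}_{\epsilon,{\bf T}}^{\,m}-{\bf P}\|\le\Gamma\gamma^{m}$ into an entry-wise bound, so that the sum over the at-most-$4n$ coordinates $l$ with $\nabla\bar g_l\neq 0$ produces exactly the factor $4n$ in the second term (rather than a larger dimension factor), and in verifying that the two different gradient-weightings — raw on the ${\bf z}_j$-side and $\hat{\bf A}$-weighted on the $\hat{\bar{\bf z}}_j$-side — can both be absorbed into the common factor $\|\hat{\bf A}\|_{2,\infty}G$ without incurring any extra multiplicative constant. Once this book-keeping is done carefully, the four pieces combine to the claimed inequality.
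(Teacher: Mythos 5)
Your overall route is the same as the paper's: unroll both recursions back to $k=0$ (using ${\bf P}{\bf Q}_{\epsilon,{\bf T}}={\bf P}$ on the $\hat{\bar{{\bf z}}}_j$ side), take the difference, control the initial-condition piece and the decay pieces with the geometric bound $|[{\bf P}]_{jl}-[{\bf Q}_{\epsilon,{\bf T}}^{\,m}]_{jl}|\le\Gamma\gamma^{m}$ from Theorem~\ref{Thm3e}(b), peel off the $r=k-1$ term to get $\alpha_{k-1}G$ and $\alpha_{k-1}\|{\bf P}\|_{2,\infty}\|\hat{\bf A}\|_{2,\infty}G$, and bound all gradients by $G$. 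The first, fourth and fifth terms of \eqref{eq_L91} come out exactly as in the paper.

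The one step that fails is your treatment of the intermediate indices. You split the raw-gradient side as ${\bf Q}_{\epsilon,{\bf T}}^{\,k-r}={\bf P}+({\bf Q}_{\epsilon,{\bf T}}^{\,k-r}-{\bf P})$ and then claim the resulting terms, which carry only $G$, can be ``trivially'' promoted to $\|\hat{\bf A}\|_{2,\infty}G$ because $\|\hat{\bf A}\|_{2,\infty}\ge 1$ by row stochasticity. That inequality is false: for a row-stochastic matrix the maximal row $\ell_2$-norm satisfies $\tfrac{1}{\sqrt{n}}\le\|\hat{\bf A}\|_{2,\infty}\le 1$ (the paper itself uses exactly this two-sided bound in the convergence-rate section), with equality to $1$ only when a row is a standard basis vector. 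So your second and third terms end up with $G$ where the lemma requires $\|\hat{\bf A}\|_{2,\infty}G$, i.e.\ you prove a strictly weaker inequality whenever $\|\hat{\bf A}\|_{2,\infty}<1$. The paper avoids this by a different grouping of the same difference: since the nonzero entries of row $l$ of the normalized decoding matrix sum to one, $\nabla\overline{g}_l({\bf z}_l(r-1))=\sum_{q\in\Gamma_l}{\bf A}_{fit(l)_n,q}\nabla\overline{g}_l({\bf z}_l(r-1))$, so the decay factor $([{\bf P}]_{jl}-[{\bf Q}_{\epsilon,{\bf T}}^{\,k-r}]_{jl})$ multiplies an $\hat{\bf A}$-weighted gradient (giving the $4n\Gamma\gamma^{k-r}\alpha_{r-1}\|\hat{\bf A}\|_{2,\infty}G$ term), and the leftover is $[{\bf P}]_{jl}\sum_{q}{\bf A}_{fit(l)_n,q}\bigl(\nabla\overline{g}_q({\bf z}_q(r-1))-\nabla\overline{g}_l({\bf z}_l(r-1))\bigr)$, whose $\hat{\bf A}$ weights are intrinsic and which yields the factor $2$ in the third term. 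If you adopt that regrouping, your argument closes; as written, the absorption step does not. (For the downstream use in Lemma~\ref{L12} only summability matters, so the damage is confined to the stated constants, but the lemma as stated is not established by your version.)
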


$ \mathbf{Proof:} $ 

We have
{\small \begin{equation}
\begin{split}
  {\bf z}_{i}(k) = &
   \sum_{j=1}^{4n^{2}}[{\bf Q}_{\epsilon , {\bf T }} ^{k}]_{ij}{\bf z}_{j}(0) - \sum_{r=1}^{k-1}\sum_{j=1}^{4n^{2}}[{\bf Q}_{\epsilon , {\bf T }}^{k-r}]_{ij}\alpha_{r-1}\nabla{\overline{g}}_{j}({\bf z}_{j}(r-1))\\ 
  & - \alpha_{k-1}\nabla{\overline{g}}_{i}({\bf z}_{i}(k-1)).
\end{split}
\end{equation}}
Then by using (\eqref{updating_eqn}) and (\ref{z^_i}) respectively we get 

{\scriptsize \begin{equation}
\begin{split}
\| & \hat{\bar{{\bf z}}}_{j}(k) - {\bf z}_{j}(k) \| \leq  
\|  \sum_{l=1}^{4n}{\bf z}_{l}(0)([{\bf P}]_{jl}-[{\bf Q}_{\epsilon , {\bf T }} ^{k}]_{jl})\| \\ 
 + &\sum_{r=1}^{k-1}\sum_{l=1}^{4n} \| [{\bf P}]_{jl} - [{\bf Q}_{\epsilon , {\bf T }}^{k-r}]_{jl}\|\| \alpha_{r-1}\sum_{q \in \Gamma_{l}(k)} {\bf A}_{fit(l)_n,q}\nabla{\overline{g}}_{l}({\bf z}_{l}(r-1))\| \\
  + &\sum_{r=1}^{k-1}\sum_{l=1}^{4n} \| [{\bf P}]_{jl}\|\| \alpha_{r-1}\sum_{q \in \Gamma_{l}(k)} {\bf A}_{fit(l)_n,q}(\nabla{\overline{g}}_{q}({\bf z}_{q}(r-1))-\nabla{\overline{g}}_{l}({\bf z}_{l}(r-1))) \| \\
  + &\alpha_{k-1}\| \sum_{l=1}^{4n} [{\bf P}]_{jl} \sum_{q \in \Gamma_{l}(k)} {\bf A}_{fit(l)_n,q}\nabla{\overline{g}}_{q}({\bf z}_{q}(k-1)) \| + \alpha_{k-1}\| \nabla{\overline{g}}_{j}({\bf z}_{j}(k-1))\|,
\end{split}
\end{equation}}

since $ | [{\bf P}]_{jl}-[{\bf Q}_{\epsilon , {\bf T }} ^{k}]_{jl} | \leq \Gamma \gamma ^{k} $ (see $ \mathbf{Theorem \ 1 \ (b)}) $
and using the bounds on $ \| {\bf A}_{fit(i)_n} \|_{ 2, \infty},\| {\bf B} \|_{ \infty}, \| \nabla{\overline{g}}_{j} \| $, $ \| {\bf P} \|_{ 2, \infty}  $ ( in fact $ P $ is doubly stochastic so  $ \| {\bf P} \|_{\infty}=1 $)  and $ {\bf z}_{l}(0) $ the result follows.

Let
{\small \begin{equation}
\begin{split}
D_{k} & = \sum_{l=1}^{4n} \| {\bf z}_{l}(0) \| \Gamma \gamma^{k} + 4 n   \| \hat{\bf A} \|_{2,\infty}  \| {\bf B} \|_{2,\infty} \sqrt{n} F \sum_{r=1}^{k-1} \Gamma \gamma^{k-r} \alpha_{r-1}\\
& + 2 \| {\bf P} \|_{2,\infty}  \| \hat{\bf A} \|_{2,\infty}  \| {\bf B} \|_{2,\infty} \sqrt{n} F \sum_{r=1}^{k-1} \alpha_{r-1}   \\ 
 & +  \alpha_{k-1} \| {\bf P} \|_{2,\infty}  \| \hat{\bf A} \|_{2,\infty}  \| {\bf B} \|_{2,\infty} \sqrt{n} F  + \alpha_{k-1}  \| {\bf B} \|_{2,\infty} \sqrt{n} F.
\end{split}
\end{equation}}

\begin{lemma}\label{L12}
Let \textbf{Assumption \ref{Assump1}} holds. Then for $ 1 \leq j \leq 2n^{2} $ we have \\
$ \sum_{k=0}^{\infty} \alpha_{k} \| \hat{\bar{{\bf z}}}_{j}(k) - {\bf z}_{j}(k) \| < \infty $.
\end{lemma}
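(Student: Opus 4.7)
The plan is to use the explicit bound $\|\hat{\bar{\mathbf{z}}}_j(k) - \mathbf{z}_j(k)\| \le D_k$ furnished by Lemma~\ref{L11}, multiply by $\alpha_k$, and verify summability term by term using the step-size assumptions $\sum_k \alpha_k^2 < \infty$ and $\alpha_k$ bounded. The quantity $D_k$ is a sum of five pieces: (i) a pure geometric term of order $\gamma^k$, (ii) a convolution $\sum_{r=1}^{k-1} \gamma^{k-r}\alpha_{r-1}$, (iii) a residual aggregate over past step sizes, and (iv)--(v) two single-step contributions of order $\alpha_{k-1}$. I would handle each separately.

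First I would dispatch the geometric piece: since $\alpha_k \le \alpha_0$ and $\sum_k \gamma^k = (1-\gamma)^{-1}$, one has $\sum_k \alpha_k (\Gamma\gamma^k) \le \Gamma\alpha_0/(1-\gamma) < \infty$. The single-step pieces $\sum_k \alpha_k \cdot \alpha_{k-1}$ are handled by the AM-GM inequality $\alpha_k\alpha_{k-1} \le \tfrac12(\alpha_k^2 + \alpha_{k-1}^2)$ and $\sum_k \alpha_k^2 < \infty$. The core step is the convolution term: for $\sum_k \alpha_k \sum_{r=1}^{k-1} \gamma^{k-r}\alpha_{r-1}$, apply $\alpha_k\alpha_{r-1} \le \tfrac12(\alpha_k^2 + \alpha_{r-1}^2)$ and exchange the order of summation, obtaining
\begin{equation*}
\sum_{k}\alpha_k\sum_{r=1}^{k-1}\gamma^{k-r}\alpha_{r-1} \le \tfrac12\!\sum_{k}\alpha_k^2\!\sum_{m\ge 1}\gamma^{m} + \tfrac12\!\sum_{r}\alpha_{r-1}^2\!\sum_{m\ge 1}\gamma^{m} \le \frac{\gamma}{1-\gamma}\sum_{k}\alpha_k^2 < \infty.
\end{equation*}

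The main obstacle — and the point requiring the most care — is the term $\sum_{r=1}^{k-1}\alpha_{r-1}$ appearing in $D_k$ without an explicit geometric factor; taken at face value, $\alpha_k\sum_{r<k}\alpha_{r-1}$ is \emph{not} summable for step sizes such as $\alpha_k = (k+100)^{-0.75}$. I would resolve this by revisiting the derivation of Lemma~\ref{L11}: the corresponding contribution actually arises from the factor $[\mathbf{P}]_{jl} - [\mathbf{Q}_{\epsilon,\mathbf{T}}^{k-r}]_{jl}$, which by Theorem~\ref{Thm3e}(b) decays as $\Gamma\gamma^{k-r}$. Propagating this geometric weight through the triangle-inequality estimates (rather than upper-bounding by $\|\mathbf{P}\|_{2,\infty}$ and losing the decay) recasts the term in the same convolution form as (ii), after which the AM-GM argument above applies verbatim.

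Putting the pieces together, each of the five contributions to $\sum_k \alpha_k D_k$ is finite, hence $\sum_k \alpha_k \|\hat{\bar{\mathbf{z}}}_j(k) - \mathbf{z}_j(k)\| < \infty$, uniformly in $j \in \{1,\ldots,2n^2\}$. The key tools are only Theorem~\ref{Thm3e}(b) (geometric convergence $\|\mathbf{Q}_{\epsilon,\mathbf{T}}^k - \mathbf{P}\| \le \Gamma\gamma^k$), the boundedness of the coded gradients $\|\nabla\bar{g}_i\| \le G$ under Assumption~\ref{Assump1}(c), and the standard diminishing step-size conditions; no new structural facts about $\hat{\mathbf{A}}$, $\hat{\mathbf{D}}$, or $\mathbf{B}$ are required beyond what has already been established.
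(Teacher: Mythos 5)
Your overall strategy is the same as the paper's: take the bound $\|\hat{\bar{\mathbf{z}}}_j(k)-\mathbf{z}_j(k)\|\le D_k$ from Lemma~\ref{L11}, multiply by $\alpha_k$, and check summability term by term using $\sum_k\alpha_k\gamma^k\le\tfrac12\sum_k(\alpha_k^2+\gamma^{2k})<\infty$, the exchange-of-summation estimate $\sum_k\sum_{r<k}\alpha_k\alpha_{r-1}\gamma^{k-r}\le\tfrac{1}{1-\gamma}\sum_k\alpha_k^2$, and AM--GM for the $\alpha_k\alpha_{k-1}$ pieces. These are precisely the three inequalities the paper invokes, so on the geometric, convolution, and single-step terms your argument and the paper's coincide.

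Where you diverge is on the term $2\sum_{r=1}^{k-1}\alpha_{r-1}\,\|\mathbf{P}\|_{2,\infty}\|\hat{\mathbf{A}}\|_{2,\infty}G$ carrying no geometric weight, and here your instinct is sound but your diagnosis is not. You are right that $\sum_k\alpha_k\sum_{r<k}\alpha_{r-1}$ diverges for admissible step sizes; note that the paper does not resolve this either --- it silently replaces that contribution by a multiple of $\sum_k\alpha_k^2$ in the definition of $\tilde D_K$ in \eqref{Dk} without justification, so you have located a gap in the paper's own proof rather than only in yours. However, your proposed repair, recovering a factor $\Gamma\gamma^{k-r}$ from $[\mathbf{P}]_{jl}-[\mathbf{Q}_{\epsilon,\mathbf{T}}^{k-r}]_{jl}$, does not apply to this term: in the derivation of Lemma~\ref{L11} the offending contribution is the third summand of the triangle-inequality split, $\sum_{r=1}^{k-1}\sum_l\|[\mathbf{P}]_{jl}\|\,\alpha_{r-1}\|\sum_q\mathbf{A}_{fit(l)_n,q}(\nabla\overline{g}_q(\mathbf{z}_q(r-1))-\nabla\overline{g}_l(\mathbf{z}_l(r-1)))\|$, whose matrix factor is $[\mathbf{P}]_{jl}$ itself, not the difference $[\mathbf{P}]_{jl}-[\mathbf{Q}_{\epsilon,\mathbf{T}}^{k-r}]_{jl}$, so there is no decaying weight to propagate. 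Closing the gap requires controlling the gradient difference by the disagreement $\|\mathbf{z}_q(r-1)-\mathbf{z}_l(r-1)\|$ (which needs a Lipschitz-gradient hypothesis and a quantitative consensus decay, neither of which is supplied by Assumption~\ref{Assump1}(c) alone), or restructuring the decomposition in Lemma~\ref{L11} so that this summand never appears with the crude bound $2G$. As written, neither your sketch nor the paper's proof completes this step.
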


$ \mathbf{Proof:} $ \\
We have (\ref{eq_L91}) from Lemma~\ref{L11} thus we get

\begin{equation}
\begin{split}
\sum_{k=0}^{\infty} \alpha_{k} \| & \hat{\bar{{\bf z}}}_{j}(k) - {\bf z}_{j}(k) \| \leq  
  \sum_{k=0}^{\infty} \Gamma \alpha_{k}\gamma^{k} \sum_{l=1}^{4n} \| {\bf z}_{l}(0) \|  \\
  &  + 4n \sum_{k=0}^{\infty}\alpha_{k}\sum_{r=1}^{k-1} \Gamma \gamma^{k-r} \alpha_{r-1} \| \hat{\bf A} \|_{2,\infty} G  \\ 
  & + 2 \sum_{k=0}^{\infty}\alpha_{k} \sum_{r=1}^{k-1} \alpha_{r-1} \| {\bf P} \|_{2,\infty} \| \hat{\bf A} \|_{2,\infty}     G  \\
  & +  \sum_{k=0}^{\infty} \alpha_{k}\alpha_{k-1} \| {\bf P} \|_{2,\infty} ^{2}   \| \hat{\bf A} \|_{2,\infty}  G 
  + \sum_{k=0}^{\infty} \alpha_{k} \alpha_{k-1}  G.
\end{split}
\end{equation}

But
\begin{equation*}
\sum_{k=0}^{K} \alpha_{k}\gamma ^{k} \leq \frac{1}{2} \sum_{k=0}^{K} (\alpha_{k}^{2}+ \gamma ^{2k}) \leq \sum_{k=0}^{K} \frac{1}{2} \alpha_{k}^{2}+ \frac{1}{2} \frac{1}{1-\gamma^{2}} < \infty ,
\end{equation*}
since $ \sum_{k=0}^{\infty}\alpha_{k} ^{2} < \infty $ and $ 0 < \gamma < 1 $. 

Similarly, 
\begin{equation*}
\begin{split}
\sum_{k=0}^{K}&\sum_{r=1}^{k-1}  \alpha_{k}\alpha_{r-1} \gamma^{(k-r)} < \frac{1}{2}\sum_{k=0}^{K}\sum_{r=1}^{k-1}\alpha_{k} ^{2}\sum_{r=1}^{k-1}\gamma^{(k-r)} \\
& + \frac{1}{2}\sum_{r=1}^{K-1}\alpha_{r-1}^{2}\sum_{k=r+1}^{K}\gamma^{k} \leq \frac{1}{1-\gamma}\sum_{k=0}^{K}\alpha_{k}^{2}. 
\end{split}
\end{equation*}
Thus,
\begin{equation*}
 \sum_{k=0}^{\infty}\sum_{r=1}^{k-1}\alpha_{k}\alpha_{r-1} \gamma^{(k-r)} \leq \frac{1}{1-\gamma}\sum_{k=0}^{K} \alpha_{k}^{2} < \infty. 
\end{equation*}

Same for  $ \sum_{k=0}^{K}\sum_{r=1}^{k-1}\alpha_{k}\gamma^{2(k-r)} < \infty $ and $ \sum_{k=0}^{K}\sum_{r=1}^{k-1}\alpha_{k}\alpha_{k-1} < \infty $
and $ \| {\bf z}_{l}(0) \| $ bounded (By initialization, hence also bounded space).

But P, A and B are fixed thus $\| {\bf P} \|_{2, \infty},\| {\bf B} \|_{\infty} and \| {\bf A} \|_{2, \infty}$ are bounded. More precisely $ \| {\bf P} \|_{\infty}=1 $ (doubly stochastic matrix) and $\|{\bf A}_{fit(i)n}\|_{\infty} =1$ (row normalized matrix).
 Thus the solution follows.
\\
Let
\begin{equation}\label{Dk}
\begin{split}
\sum_{k=0}^{K} \alpha_{k} \|  \hat{\bar{{\bf z}}}_{j}(k) - {\bf z}_{j}(k) \| \leq \tilde{D}_{K}, 
\end{split}
\end{equation}
where
\begin{equation}\label{Dk}
\begin{split}
 \tilde{D} _{K} =  \sum_{k=0}^{K}\alpha_{k} D_{k}  & = \sum_{l=1}^{4n} \| {\bf z}_{l}(0) \| \Gamma (\sum_{k=0}^{K} \frac{1}{2} \alpha_{k}^{2} + \frac{1}{2} \frac{1}{1-\gamma^{2}})\\
 & + 4 n  \| \hat{\bf A} \|_{2,\infty}  \| {\bf B} \|_{2,\infty} \sqrt{n} F \Gamma \frac{1}{1-\gamma} \sum_{k=0}^{K} \alpha_{k}^{2} \\
& + 2 \| {\bf P} \|_{2,\infty}  \| \hat{\bf A} \|_{2,\infty}  \| {\bf B} \|_{2,\infty} \sqrt{n} F \sum_{k=0}^{K} \alpha_{k}^{2}   \\ 
 & + \| {\bf P} \|_{2,\infty}  \| \hat{\bf A} \|_{2,\infty}  \| {\bf B} \|_{2,\infty} \sqrt{n} F \sum_{k=0}^{K} \alpha_{k}^{2} \\
 & + \| {\bf B} \|_{2,\infty} \sqrt{n} F \sum_{k=0}^{K} \alpha_{k}^{2}.
\end{split}
\end{equation}

By using the inequalities above.

\begin{lemma}\label{l12}
Let \textbf{Assumption \ref{Assump1}} holds. Then
\\
(a) For $ 1 \leq j, q \leq 2n^{2} $ we have
\begin{equation}
\begin{split}
 \sum_{k=0}^{\infty} \alpha_{k} \| {\bf z}_{j}(k) - {\bf z}_{q}(k) \| < \infty, \ and
  \end{split}
\end{equation}
 (b)   \begin{equation}
\lim_{k \rightarrow \infty}\| {\bf z}_{j}(k) - {\bf z}_{q}(k) \|=0. 
\end{equation}
That is $  \lim_{k \rightarrow \infty}{\bf z}_{j}(k)= \lim_{k \rightarrow \infty}{\bf z}_{q}(k) $ for all $ j $ and $ q $ (i.e., $ 1 \leq j,q \leq 2n^{2} $)
\end{lemma}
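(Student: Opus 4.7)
The statement is a consensus claim: all $2n^2$ node-variables ${\bf z}_j(k)$ track a common trajectory, both in a weighted $\ell^1$-sense (part (a)) and pointwise (part (b)). The proof rests on two structural facts already established: (i) by Theorem~\ref{Thm3e}(a) together with Lemma~\ref{L13e}, rows $1,\dots,2n^2$ of the limit matrix $\mathbf{P}=\lim_k \mathbf{Q}_{\epsilon,\mathbf{T}}^{k}$ are all identical (each equals $(\boldsymbol{\pi}^{T},\,\mathbf{0}_{1\times 2n^{2}})$), and (ii) by Theorem~\ref{Thm3e}(b), $\bigl|[\mathbf{Q}_{\epsilon,\mathbf{T}}^{m}]_{il}-[\mathbf{P}]_{il}\bigr|\le \Gamma\gamma^{m}$ for some $0<\gamma<1$.

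\textbf{Step 1: Unroll the recursion.} From \eqref{eqn_zj}, iterating gives
\begin{equation*}
{\bf z}_{j}(k)\;=\;\sum_{l=1}^{4n^{2}}[\mathbf{Q}_{\epsilon,\mathbf{T}}^{k}]_{jl}\,{\bf z}_{l}(0)\;-\;\sum_{r=1}^{k}\alpha_{r-1}\sum_{l=1}^{4n^{2}}[\mathbf{Q}_{\epsilon,\mathbf{T}}^{k-r}]_{jl}\,\nabla\overline{g}_{l}({\bf z}_{l}(r-1)).
\end{equation*}
Subtracting the analogous identity for ${\bf z}_{q}(k)$, and writing $[\mathbf{Q}_{\epsilon,\mathbf{T}}^{m}]_{jl}-[\mathbf{Q}_{\epsilon,\mathbf{T}}^{m}]_{ql}=\bigl([\mathbf{Q}_{\epsilon,\mathbf{T}}^{m}]_{jl}-[\mathbf{P}]_{jl}\bigr)-\bigl([\mathbf{Q}_{\epsilon,\mathbf{T}}^{m}]_{ql}-[\mathbf{P}]_{ql}\bigr)$ — which is legitimate because $[\mathbf{P}]_{jl}=[\mathbf{P}]_{ql}$ for $1\le j,q\le 2n^{2}$ — each coefficient difference is controlled by $2\Gamma\gamma^{m}$. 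Combined with $\|\nabla\overline{g}_{l}\|\le G$ and $\|{\bf z}_{l}(0)\|\le Z_{0}$, this produces the clean bound
\begin{equation*}
\|{\bf z}_{j}(k)-{\bf z}_{q}(k)\|\;\le\;8n^{2}\Gamma Z_{0}\,\gamma^{k}\;+\;8n^{2}\Gamma G\sum_{r=1}^{k}\alpha_{r-1}\gamma^{k-r}.
\end{equation*}

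\textbf{Step 2: Part (a).} Multiply through by $\alpha_{k}$ and sum. The geometric piece $\sum_{k}\alpha_{k}\gamma^{k}\le \tfrac{1}{2}\sum_{k}\alpha_{k}^{2}+\tfrac{1}{2(1-\gamma^{2})}<\infty$, exactly as in the proof of Lemma~\ref{L12}. The convolution piece is handled by swapping sums and applying Cauchy--Schwarz: $\sum_{k}\alpha_{k}\sum_{r=1}^{k}\alpha_{r-1}\gamma^{k-r}=\sum_{m\ge 0}\gamma^{m}\sum_{k>m}\alpha_{k}\alpha_{k-m-1}\le \tfrac{1}{1-\gamma}\sum_{k}\alpha_{k}^{2}<\infty$. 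Hence $\sum_{k}\alpha_{k}\|{\bf z}_{j}(k)-{\bf z}_{q}(k)\|<\infty$.

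\textbf{Step 3: Part (b).} The term $8n^{2}\Gamma Z_{0}\gamma^{k}\to 0$ geometrically. For the convolution $S_{k}:=\sum_{r=1}^{k}\alpha_{r-1}\gamma^{k-r}$, fix $\epsilon>0$, pick $R$ so that $\alpha_{r}<\epsilon$ for $r\ge R$, and split $S_{k}\le \alpha_{0}R\gamma^{k-R}+\epsilon/(1-\gamma)$ for $k>R$. Letting $k\to\infty$ the first summand vanishes, so $\limsup_{k}S_{k}\le \epsilon/(1-\gamma)$; sending $\epsilon\to 0$ gives $S_{k}\to 0$. Consequently $\|{\bf z}_{j}(k)-{\bf z}_{q}(k)\|\to 0$ as claimed.

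\textbf{Expected obstacle.} The argument is essentially computational once the row-equality $[\mathbf{P}]_{jl}=[\mathbf{P}]_{ql}$ is exploited \emph{inside} the coefficient difference $[\mathbf{Q}_{\epsilon,\mathbf{T}}^{m}]_{jl}-[\mathbf{Q}_{\epsilon,\mathbf{T}}^{m}]_{ql}$. The tempting shortcut of bounding $\|{\bf z}_{j}-{\bf z}_{q}\|$ via the triangle inequality $\le \|{\bf z}_{j}-\hat{\bar{{\bf z}}}_{j}\|+\|\hat{\bar{{\bf z}}}_{q}-{\bf z}_{q}\|$ (using $\hat{\bar{{\bf z}}}_{j}=\hat{\bar{{\bf z}}}_{q}$) and then invoking Lemma~\ref{L11} suffices for part (a) via Lemma~\ref{L12}, but it is too lossy for part (b) since Lemma~\ref{L11}'s bound contains a term $2\sum_{r=1}^{k-1}\alpha_{r-1}\|\mathbf{P}\|_{2,\infty}\|\hat{\mathbf{A}}\|_{2,\infty}G$ that does not vanish as $k\to\infty$. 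The direct unrolling above circumvents this by applying the geometric convergence rate of Theorem~\ref{Thm3e}(b) to the full difference, trading the problematic $\sum\alpha_{r-1}$ for the harmless convolution $\sum\alpha_{r-1}\gamma^{k-r}$.
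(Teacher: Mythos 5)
Your proof is correct and follows essentially the same route as the paper's: unroll the recursion, split the coefficient difference $[\mathbf{Q}_{\epsilon,\mathbf{T}}^{m}]_{jl}-[\mathbf{Q}_{\epsilon,\mathbf{T}}^{m}]_{ql}$ through $\mathbf{P}$ using the identical first $2n^{2}$ rows of $\mathbf{P}$, bound each piece by $\Gamma\gamma^{m}$ via Theorem~\ref{Thm3e}(b), and then apply the standard $\sum_k\alpha_k\gamma^k$ and convolution estimates. Your Step 3 is in fact slightly more careful than the paper, which merely asserts that $\sum_{r=1}^{k-1}\gamma^{k-r}\alpha_{r-1}\to 0$; your split-at-$R$ argument supplies the justification.
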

$ \mathbf{Proof:} $ 

For (a):
{\small \begin{equation}
\begin{split} 
\sum_{k=0}^{\infty}& \alpha_{k} \| {\bf z}_{j}(k) - {\bf z}_{q}(k) \| \leq  \\
  & \sum _{k=0}^{\infty} \alpha_{k}  \sum_{l=1}^{4n} \| [{\bf Q}_{\epsilon , {\bf T }} ^{k}]_{jl}-[{\bf Q}_{\epsilon , {\bf T }} ^{k}]_{ql} \| \| {\bf z}_{l}(0) \| \\
  & + \sum _{k=0}^{\infty} \alpha_{k}  \sum_{r=1}^{k-1}\sum_{l=1}^{4n}\| [{\bf Q}_{\epsilon , {\bf T }}^{k-r}]_{jl} - [{\bf Q}_{\epsilon , {\bf T }}^{k-r}]_{ql} \|\alpha_{r-1} \| \nabla{\overline{g}}_{l}({\bf z}_{l}(r-1)) \| \\
  & + \sum _{k=0}^{\infty} \alpha_{k}   \alpha_{k-1} \| \nabla{\overline{g}}_{j}({\bf z}_{j}(k-1))-\nabla{\overline{g}}_{q}({\bf z}_{q}(k-1)) \|.
\end{split}
\end{equation}}

Using $ {\bf P} $ where $ {\bf P} $ has identical rows for the first $ 2n^{2} $ rows. 

\begin{equation}
\begin{split} 
\sum_{k=0}^{\infty} \alpha_{k}& \| {\bf z}_{j}(k) - {\bf z}_{q}(k) \| \leq  \\
  & \sum _{k=0}^{\infty} \alpha_{k}  \sum_{l=1}^{4n} \| [{\bf Q}_{\epsilon , {\bf T }} ^{k}]_{jl}-[{\bf P}]_{jl} \| \| {\bf z}_{l}(0) \| \\
  &  + \sum _{k=0}^{\infty} \alpha_{k}  \sum_{l=1}^{4n} \| [{\bf P}]_{jl}-[{\bf P}]_{ql} \| \| {\bf z}_{l}(0) \| \\
  & + \sum _{k=0}^{\infty} \alpha_{k}  \sum_{l=1}^{4n} \| [{\bf Q}_{\epsilon , {\bf T }} ^{k}]_{ql}-[{\bf P}]_{ql} \| \| {\bf z}_{l}(0) \| \\ 
  & +   \sum _{k=0}^{\infty} \alpha_{k}  \sum_{r=1}^{k-1}\sum_{l=1}^{4n}\| [{\bf Q}_{\epsilon , {\bf T }}^{k-r}]_{jl} - [{\bf P}]_{jl} \|\alpha_{r-1} \| \nabla{\overline{g}}_{l}({\bf z}_{l}(r-1)) \| \\
  & + \sum _{k=0}^{\infty} \alpha_{k}  \sum_{r=1}^{k-1}\sum_{l=1}^{4n}\| [{\bf P}]_{jl} - [{\bf P}]_{ql} \|\alpha_{r-1} \| \nabla{\overline{g}}_{l}({\bf z}_{l}(r-1)) \| \\
  & + \sum _{k=0}^{\infty} \alpha_{k}  \sum_{r=1}^{k-1}\sum_{l=1}^{4n}\| [{\bf Q}_{\epsilon , {\bf T }}^{k-r}]_{ql} - [{\bf P}]_{ql} \|\alpha_{r-1} \| \nabla{\overline{g}}_{l}({\bf z}_{l}(r-1)) \| \\
  & + \sum _{k=0}^{\infty} \alpha_{k}   \alpha_{k-1} \| \nabla{\overline{g}}_{j}({\bf z}_{j}(k-1))-\nabla{\overline{g}}_{q}({\bf z}_{q}(k-1)) \|.
\end{split}
\end{equation}

But $ [{\bf P}]_{jl}= [{\bf P}]_{ql} $ for $ 1 \leq j,q \leq 2n^{2} $ and $ 1 \leq l \leq 2n^{2} $. (see $ \mathbf{ Theorem \ 1 \ (a)} $).
  
Then the above becomes
\begin{equation}
\begin{split} 
\sum_{k=0}^{\infty} \alpha_{k} \| {\bf z}_{j}(k) - & {\bf z}_{q}(k) \|   \leq  
  2 \sum _{k=0}^{\infty} \alpha_{k}  \sum_{l=1}^{4n} \Gamma \gamma ^{k} \| {\bf z}_{l}(0) \|  \\
&  + 2 \sum _{k=0}^{\infty} \alpha_{k}  \sum_{r=1}^{k-1}\sum_{l=1}^{4n} \Gamma \gamma ^{k-r} \alpha_{r-1} \| \nabla{\overline{g}}_{l}({\bf z}_{l}(r-1)) \| \\ 
  & + \sum _{k=0}^{\infty} \alpha_{k}   \alpha_{k-1} \| \nabla{\overline{g}}_{j}({\bf z}_{j}(k-1))-\nabla{\overline{g}}_{q}({\bf z}_{q}(k-1)) \|.
\end{split}
\end{equation}

Similarly, using bounds as in the proof of Lemma 11 we get 
\begin{equation}
\begin{split} 
\sum_{k=0}^{\infty} \alpha_{k} \| {\bf z}_{j}(k) - {\bf z}_{q}(k) \| < \infty.
  \end{split}
\end{equation} 

Thus (a) follows. \\
For (b):

And similarly as part (a), under the condition that the first $ 2n^{2} $ rows of $ P $ are identical, we get
\begin{equation}
\begin{split} 
\lim_{k \rightarrow \infty} & \| {\bf z}_{j}(k) - {\bf z}_{q}(k) \| \leq  
   2 \sum_{l=1}^{4n} \Gamma \gamma ^{k} \| {\bf z}_{l}(0) \| \\
  & + 2  \sum_{r=1}^{k-1}\sum_{l=1}^{4n} \Gamma \gamma ^{k-r} \alpha_{r-1} \| \nabla{\overline{g}}_{l}({\bf z}_{l}(r-1)) \| \\
  & +  \alpha_{k-1} \| \nabla{\overline{g}}_{j}({\bf z}_{j}(k-1))-\nabla{\overline{g}}_{q}({\bf z}_{q}(k-1)) \|.
\end{split}
\end{equation}

But $ \alpha_{k} \rightarrow 0 $ and $ \gamma_{k} \rightarrow 0 $ as $ k \rightarrow \infty $ and $ \| \nabla{g}_{j} \| \leq G $, then
\begin{equation}
\begin{split} 
\lim_{k \rightarrow \infty}  \| {\bf z}_{j}(k) - {\bf z}_{q}(k) \| \rightarrow 0.
\end{split}
\end{equation}   
Thus, (b) follows.

\begin{remark}
The use of $ \mathbf{ Theorem \ 1 \ (a)} $ in (a) and (b) restricts j, q to $ 1 \leq j, q \leq 2n^{2} $ in (a) and (b) for the result to follow.
\end{remark}

Let
\begin{equation}
\begin{split}
 E_{k} & = 2  \sum_{l=1}^{4n} \Gamma \gamma ^{k} \| {\bf z}_{l}(0) \| + 8 n \sum_{r=1}^{k-1} \Gamma \gamma ^{k-r} \alpha_{r-1}\| {\bf B} \|_{2,\infty} \sqrt{n} F \\
 & +  2 \alpha_{k-1} \| {\bf B} \|_{2,\infty} \sqrt{n} F.
 \end{split}
 \end{equation}
Then
\begin{equation}\label{Ek}
\begin{split}
\sum_{k=0}^{K} \alpha_{k} \| {\bf z}_{j}(k) - {\bf z}_{q}(k) \| \leq  \tilde{E}_{K}, 
 \end{split}
 \end{equation}
where
\begin{equation}\label{Ek}
\begin{split}
\tilde{E}_{K} & = \sum_{k=0}^{K}\alpha_{k}E_{k} \\ 
& = 2 \sum_{l=1}^{4n} \Gamma  \| {\bf z}_{l}(0) \|(\sum_{k=0}^{K} \frac{1}{2} \alpha_{k}^{2}+ \frac{1}{2} \frac{1}{1-\gamma^{2}}) \\
& + 8 n  \Gamma \| {\bf B} \|_{2,\infty} \sqrt{n} F \frac{1}{1-\gamma} \sum_{k=0}^{K} \alpha_{k}^{2} \\
 & +  2  \| {\bf B} \|_{2,\infty} \sqrt{n} F \sum_{k=0}^{K} \alpha_{k}^{2}.
\end{split}
\end{equation}

\begin{lemma}\label{L14}
Let \textbf{Assumption \ref{Assump1}} holds. Then
{\small \begin{equation}\label{eqn_L13}
\begin{split}
 2 \sum_{k=k^{'}}^{\infty}& \alpha_{k} \sum_{i=1}^{4n^{2}} \omega_{i} [{\bf P}]_{ji}    (f({\bf z}_{j}(k)) -  f({\bf x}))   \leq \\
  \| \hat{\bar{{\bf z}}}_{j}(k^{'}) & - {\bf x} \| ^{2} \\
 + \sum_{k=k^{'}}^{\infty}2 \alpha_{k} & \sum_{i=1}^{4n^{2}} [{\bf P}]_{ji}  \sum_{q \in \Gamma_{i}} \|  {\bf A}_{fit(i)_n,q}\nabla{g}_{q}({\bf z}_{q}(k)) \| \| \hat{\bar{{\bf z}}}_{j}(k) - {\bf z}_{j}(k) \| \\
& + 4 \sum_{k=k^{'}}^{\infty} \alpha_{k} \sum_{i=1}^{4n^{2}} [{\bf P}]_{ji} n F \| {\bf z}_{j}(k)- {\bf z}_{q}(k) \| \\
 & + \alpha_{k} ^{2} \| \sum_{i=1}^{4n^{2}} [{\bf P}]_{ji} \sum_{q \in \Gamma_{i}(k)} {\bf A}_{fit(i)_n,q}\nabla{\overline{g}}_{q}({\bf z}_{q}(k)) \| ^{2}.
\end{split}
\end{equation}}
\end{lemma}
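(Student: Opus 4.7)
The plan is to start from equation~(\ref{eqn_L9}) in Lemma~\ref{L10}, which provides the exact one-step identity
\[
\|\hat{\bar{\bf z}}_{j}(k+1) - {\bf x}\|^2 = \|\hat{\bar{\bf z}}_{j}(k) - {\bf x}\|^2 - 2\alpha_k \Phi_k + \alpha_k^2 \|\Psi_k\|^2,
\]
where $\Phi_k := \sum_i [{\bf P}]_{ji}\sum_{q\in\Gamma_i(k)} {\bf A}_{fit(i)_n,q} \nabla \bar g_q({\bf z}_q(k))^{T} (\hat{\bar{\bf z}}_j(k) - {\bf x})$ and $\Psi_k$ is the same coded gradient sum without the inner product. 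The goal is to lower-bound $\Phi_k$ by $\sum_i \omega_i[{\bf P}]_{ji}(f({\bf z}_j(k)) - f({\bf x}))$ minus a pair of tracking/consensus error terms that match the right-hand side of~(\ref{eqn_L13}); rearranging and summing from $k=k'$ to $\infty$ will then give the claim by telescoping.

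The main manipulation is a two-step decomposition of the inner product defining $\Phi_k$. First, I would write
\[
\hat{\bar{\bf z}}_j(k) - {\bf x} = ({\bf z}_q(k) - {\bf x}) + ({\bf z}_j(k) - {\bf z}_q(k)) + (\hat{\bar{\bf z}}_j(k) - {\bf z}_j(k)),
\]
and handle the latter two contributions by Cauchy--Schwarz together with the uniform gradient bounds $\|\nabla \bar g_q\| \leq G$ and $\|\nabla f_i\| \leq F$ from Assumption~\ref{Assump1}. This produces exactly the two error sums involving $\|\hat{\bar{\bf z}}_j(k) - {\bf z}_j(k)\|$ and $\|{\bf z}_j(k) - {\bf z}_q(k)\|$ on the RHS of~(\ref{eqn_L13}), with constants $\|{\bf A}_{fit(i)_n,q}\nabla g_q\|$ and $nF$, respectively. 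Second, I would apply convexity of each coded objective $g_q$ to the principal piece, yielding $\nabla g_q({\bf z}_q(k))^T({\bf z}_q(k) - {\bf x}) \geq g_q({\bf z}_q(k)) - g_q({\bf x})$, and then invoke the decoding identity $\sum_{q} {\bf A}_{fit(i)_n,q}\, g_q = \omega_i f$ (which follows from ${\bf A}{\bf B} = \mathbf{1}_{n_c\times p_c}$ and the normalization~(\ref{weight.def})) to collapse the per-coordinate convexity bounds into $\omega_i(f({\bf z}_j(k)) - f({\bf x}))$, again up to a $\|{\bf z}_j - {\bf z}_q\|$ error already accounted for. Telescoping the resulting inequality over $k \in [k',\infty)$ and using $\|\hat{\bar{\bf z}}_j(K{+}1) - {\bf x}\|^2 \ge 0$ yields the stated bound by $\|\hat{\bar{\bf z}}_j(k') - {\bf x}\|^2$ alone.

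The main obstacle is the sign bookkeeping associated with the $\pm$ structure of $\nabla \bar g_q$ (gradient descent for $1\le q\le n^2$, gradient ascent for $n^2+1\le q\le 2n^2$) and the corresponding split of the normalized decoding matrix into $\tilde{\bf A}^{(+)}$ and $\tilde{\bf A}^{(-)}$: convexity must be applied to the unsigned $g_q$, whereas $\Phi_k$ appears with the signed $\nabla \bar g_q$. One must verify that the positive coefficients of ${\bf A}_{fit(i)_n,\cdot}$ (paired with the ``$+$'' coordinates of ${\bf z}$) and the moved/negated coefficients (paired with the ``$-$'' coordinates) combine to reproduce exactly $\omega_i f$ rather than a mismatched linear combination, and that the contributions of indices $q \in \{2n^2+1,\dots,4n^2\}$ drop out because $\nabla\bar g_q \equiv 0$ there. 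Once this accounting is verified, the remaining steps are routine Cauchy--Schwarz, triangle inequality, and telescoping; summability of the error sums produced on the RHS is already guaranteed by Lemmas~\ref{L11} and~\ref{l12}, so the present lemma only needs to put them in the stated form.
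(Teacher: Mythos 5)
There is a genuine gap in the central step of your argument. You propose to apply convexity to each coded objective separately, i.e.\ $\nabla g_q({\bf z}_q(k))^{T}({\bf z}_q(k)-{\bf x}) \ge g_q({\bf z}_q(k)) - g_q({\bf x})$, and then sum these per-$q$ inequalities against the coefficients ${\bf A}_{fit(i)_n,q}$ to recover $\omega_i\,(f-f({\bf x}))$. But Assumption~\ref{Assump1} only makes the \emph{global} function $f$ convex; the coded local objectives $\bar g_q$ are not assumed convex, and indeed the gradient-ascent half of the construction (the entries of ${\bf A}$ that were negative before being moved to the $j+n$ positions) pairs nonnegative weights with $-\nabla g_{q-n}$, so the effective signed combination contains terms whose convexity inequality points the wrong way. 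You correctly flag this sign bookkeeping as the main obstacle, but the resolution you sketch (verify that the $\tilde{\bf A}^{(+)}/\tilde{\bf A}^{(-)}$ split "combines to reproduce $\omega_i f$") cannot be carried out at the level of per-term convexity bounds: a weighted sum of convexity inequalities with effectively mixed signs does not collapse to a convexity inequality for $f$.

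The paper's proof avoids this entirely, and the missing idea is the one you did not use: it restricts to $k \ge k'$, where Lemma~\ref{l12}(b) gives consensus ${\bf z}_j(k) = {\bf z}_q(k)$ for all $1 \le j,q \le 2n^2$. At a common evaluation point the signed coded-gradient sum collapses \emph{exactly}, as a linear identity on gradients via ${\bf A}{\bf B} = \mathbf{1}$ and the normalization \eqref{weight.def}, to $\sum_{q\in\Gamma_i(k)}{\bf A}_{fit(i)_n,q}\nabla\overline{g}_q({\bf z}_q(k)) = \omega_i\nabla f({\bf z}_q(k))$ — no convexity of any $g_q$ is invoked at this stage. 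Only after this collapse is convexity applied, and only to the global $f$, via $\langle\nabla f({\bf z}_q(k)),\,{\bf z}_j(k)-{\bf x}\rangle \ge f({\bf z}_j(k)) - f({\bf x}) - 2nF\|{\bf z}_j(k)-{\bf z}_q(k)\|$, which produces the $nF$ error term on the right-hand side of \eqref{eqn_L13}. Your telescoping from $k'$, the three-way decomposition of $\hat{\bar{{\bf z}}}_j(k)-{\bf x}$, and the Cauchy--Schwarz treatment of the tracking term are all consistent with the paper, but without the consensus-based collapse of the coded gradients your derivation does not close under the paper's assumptions. (The paper's Lemma~\ref{L16} in Appendix~E is precisely the variant that works before the consensus point, and it requires the additional hypothesis that each $\overline{g}_q$ is convex or concave, plus an extra accumulation term — confirming that the route you chose needs assumptions the present lemma does not have.)
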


$ \mathbf{Proof:} $
Since $ \lim_{k \rightarrow \infty}{\bf z}_{j}(k)=\lim_{k \rightarrow \infty}{\bf z}_{q}(k) $ for $ 1 \leq j,q \leq 2n^{2} $ we can let $ k^{'} $ be such that for all $ k \geq k^{'} $, $ {\bf z}_{j}(k)={\bf z}_{q}(k) $ where $ 1 \leq j,q \leq 2n ^{2} $. (The consensus point) \\
Using (\ref{eqn_L9}) in Lemma~\ref{L10} and summing from $ k=k^{'} $ to $ K $, 
we have
\begin{equation}
\begin{split}
 \| & \hat{\bar{{\bf z}}}_{j}(K) - {\bf x} \| ^{2} =  \|  \hat{\bar{{\bf z}}}_{j}(k^{'}) - {\bf x} \| ^{2} \\
 & - 2 \sum_{k=k^{'}}^{K} \alpha_{k} \sum_{i=1}^{4n^{2}} [{\bf P}]_{ji} \sum_{q \in \Gamma_{i}(k)} {\bf A}_{fit(i)_n,q}\nabla{\overline{g}}_{q}({\bf z}_{q}(k))(\hat{\bar{{\bf z}}}_{j}(k) - {\bf x} )  \\
 + & \sum_{k=k^{'}}^{K} \alpha_{k}^{2}\| \sum_{i=1}^{4n^{2}} [{\bf P}]_{ji} \sum_{q \in \Gamma_{i}(k)} {\bf A}_{fit(i)_n,q}\nabla{\overline{g}}_{q}({\bf z}_{q}(k)) \| ^{2}.
\end{split}
\end{equation}
Letting $ K \rightarrow \infty $ we get
\begin{equation}\label{e1}
\begin{split}
0 & =  \|  \hat{\bar{{\bf z}}}_{j}(k^{'}) - {\bf x} \| ^{2} \\
 & - 2 \sum_{k=k^{'}}^{\infty} \alpha_{k} \sum_{i=1}^{4n^{2}} [{\bf P}]_{ji} \sum_{q \in \Gamma_{i}(k)} {\bf A}_{fit(i)_n,q}\nabla{\overline{g}}_{q}({\bf z}_{q}(k))(\hat{\bar{{\bf z}}}_{j}(k) - {\bf x} )  \\
 + & \sum_{k=k^{'}}^{\infty} \alpha_{k}^{2}\| \sum_{i=1}^{4n^{2}} [{\bf P}]_{ji} \sum_{q \in \Gamma_{i}(k)} {\bf A}_{fit(i)_n,q}\nabla{\overline{g}}_{q}({\bf z}_{q}(k)) \| ^{2}.
\end{split}
\end{equation}

But for $ k \geq k^{'} $ we have ${\bf z}_{j}(k)={\bf z}_{q}(k)$ where $ 1 \leq j,q \leq 2n^{2} $, so we have $ \sum_{q \in \Gamma_{i}(k)} {\bf A}_{fit(i)_n,q}\nabla{\overline{g}}_{q}({\bf z}_{q}(k)) = \omega_{i} \nabla{f}({\bf z}_{q}(k)) $. Therefore, the above becomes
\begin{equation}
\begin{split}
 2 \sum_{k=k^{'}}^{\infty} \alpha_{k} \sum_{i=1}^{4n^{2}}& [{\bf P}]_{ji} \omega_{i} \langle \nabla{f}({\bf z}_{q}(k)),(\hat{\bar{{\bf z}}}_{j}(k) - {\bf x} ) \rangle  =  \|  \hat{\bar{{\bf z}}}_{j}(k^{'}) - {\bf x} \| ^{2} \\ 
 + & \sum_{k=k^{'}}^{\infty} \alpha_{k}^{2}\| \sum_{i=1}^{4n^{2}} [{\bf P}]_{ji} \nabla{f}({\bf z}_{q}(k)) \| ^{2}.
\end{split}
\end{equation}

And we have
\begin{equation}
\begin{split}
 \langle \nabla{f}({\bf z}_{q}(k)),(\hat{\bar{{\bf z}}}_{j}(k) - {\bf x} ) \rangle   
& = \langle \nabla{f}({\bf z}_{q}(k)),(\hat{\bar{{\bf z}}}_{j}(k) - {\bf z}_{j}(k) ) \rangle \\
& + \langle \nabla{f}({\bf z}_{q}(k))),({\bf z}_{j}(k) - {\bf x} ) \rangle \\
& \geq - \|\nabla{f}({\bf z}_{q}(k)) \| \| \hat{\bar{{\bf z}}}_{j}(k) - {\bf z}_{j}(k) \| \\
& + \nabla{f}({\bf z}_{q}(k)))^{'}( {\bf z}_{j}(k) - {\bf x} ).
\end{split}
\end{equation}

And since $ f $ is assumed to be convex,

\begin{equation}
\begin{split}
\langle (\nabla{f}({\bf z}_{q}(k)),({\bf z}_{j}(k) - {\bf x} ) \rangle & \geq  f({\bf z}_{j}(k)) - f({\bf x}) \\
& - 2 n F \| {\bf z}_{j}(k)- {\bf z}_{q}(k) \|.
\end{split}
\end{equation}


By combining the above, we have 
\begin{equation}
\begin{split}
\langle \nabla{f}({\bf z}_{q}(k)),(\hat{\bar{{\bf z}}}_{j}(k) - {\bf x} ) \rangle
 & \geq  - \| \nabla{f}({\bf z}_{q}(k)) \|  \| \hat{\bar{{\bf z}}}_{j}(k) - {\bf z}_{j}(k) \| \\
 & - 2 n F \| {\bf z}_{j}(k)- {\bf z}_{q}(k) \| \\
& + f({\bf z}_{j}(k)) - f({\bf x}), 
\end{split}
\end{equation}
which can also be written
\begin{equation}
\begin{split}
\langle \nabla{f}({\bf z}_{q}(k)), & (\hat{\bar{{\bf z}}}_{j}(k) - {\bf x} ) \rangle
  \geq \\
 & - \| \sum_{q \in \Gamma_{i}}   {\bf A}_{fit(i)_n,q}\nabla{g}_{q}({\bf z}_{q}(k)) \|  \| \hat{\bar{{\bf z}}}_{j}(k) - {\bf z}_{j}(k) \| \\
 & - 2 n F \| {\bf z}_{j}(k)- {\bf z}_{q}(k) \| \\
& + f({\bf z}_{j}(k)) - f({\bf x}). 
\end{split}
\end{equation}

That is,
\begin{equation}\label{f1}
\begin{split}
& \langle \nabla{f}({\bf z}_{q}(k)),(\hat{\bar{{\bf z}}}_{j}(k) - x ) \rangle \geq \\
&  -  \sum_{q \in \Gamma_{i}} \|  {\bf A}_{fit(i)_n,q}\nabla{g}_{q}({\bf z}_{q}(k)) \|  \| \hat{\bar{{\bf z}}}_{j}(k) - {\bf z}_{j}(k) \|\\
& - 2 n F \| {\bf z}_{j}(k)- {\bf z}_{q}(k) \|\\
& + f({\bf z}_{j}(k)) - f({\bf x}). 
\end{split}
\end{equation}

Then by substituting (\ref{f1}) in (\ref{e1}) the result will follow.

\begin{remark}
\textbf{Remark on Proving Convergence:} \\
If we take $ k $ before consensus point $ k^{'} $ then we need the convexity of each $ g_{q} $ for the analysis to follow, and having $ {\bf z}_{j}(k) $ in its argument will make the local gradients form an exact global gradient $ \nabla{f}({\bf z}_{j}(k)) $ which we can use to prove that $ \lim_{k \rightarrow \infty}\inf(f({\bf z}_{j}(k))-f({\bf x}^{*}))=0 $, thus proving convergence of the algorithm.
But if we want only to use the convexity of the global function $ f $ without any further restriction on the local coded functions you need to work after consensus point $ k^{'} $. That way by using the local functions $ g_{q} $ and since $ \sum_{q \in \Gamma_{i}}   {\bf A}_{fit(i)_n,q}\nabla{g}_{q}({\bf z}_{q}(k)) $ in the consensus region so they form an exact $ \nabla{f} $ at the point $ {\bf z}_{j}(k) $ since all $ {\bf z}_{q}(k) $ for $ 1 \leq q \leq 2n $ are the same (i.e., consenus). Then by using the convexity of $ f $ and the un-boundedness of the tail of the sum of $ \alpha_{k} $ (i.e., $ \sum_{k=k^{'}}^{\infty}\alpha_{k}=\infty $) we are able to prove convergence.
N.B. The updating equations are written explicitly in terms of the local functions $ g_{i} $, thus this would be the initial form of the equations before we are able to use the global function $ f $ in any route we follow in our analysis.
\end{remark}

\begin{lemma}\label{L15}
From Lemmas ~\ref{L12}, \ref{l12}(a), \ref{l12}(b) and \ref{L14} and Assumption 1(d), we have for $ 1 \leq i, j \leq 2n^{2} $, $  \lim_{k \rightarrow \infty}f({\bf z}_{i}(k)) = \lim_{k \rightarrow \infty}f({\bf z}_{j}(k)) = f ^{*} $. 
\end{lemma}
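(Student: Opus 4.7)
The plan is to first show $\liminf_{k \to \infty} f(\mathbf{z}_j(k)) = f^*$ by applying Lemma 14 at the optimizer $\mathbf{x} = \mathbf{x}^*$, and then upgrade this $\liminf$ to a full limit via a quasi-Féjer (Robbins--Siegmund) argument driven by the recursion of Lemma 10, with equality of the limits over all $j$ then following from the consensus statement of Lemma 13(b).

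For the $\liminf$ step, I would specialize Lemma 14 to $\mathbf{x} = \mathbf{x}^*$ and argue that every term on the right-hand side is finite. The initial term $\|\hat{\bar{\mathbf{z}}}_j(k') - \mathbf{x}^*\|^2$ is finite because the iterates remain in a bounded region under bounded initialization, the bound $\|\nabla \overline{g}_q\| \le G$ from Assumption 1(c), and $\sum_k \alpha_k^2 < \infty$. The series $\sum_k \alpha_k \|\hat{\bar{\mathbf{z}}}_j(k) - \mathbf{z}_j(k)\|$ is finite by Lemma 12; the consensus-error series is finite by Lemma 13(a); and the $\alpha_k^2$ tail is finite by the step-size assumption together with Assumption 1(c). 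The left-hand side equals $2 c_j \sum_{k \ge k'} \alpha_k \bigl(f(\mathbf{z}_j(k)) - f^*\bigr)$, where $c_j := \sum_{i=1}^{4n^2} \omega_i [\mathbf{P}]_{ji} > 0$ by the structure of $\mathbf{P}$ in Theorem 1(a) (the first $2n^2$ rows of $\mathbf{P}$ are copies of a common positive stationary vector, and $\omega_i > 0$). Combined with $\sum_k \alpha_k = \infty$ and $f(\mathbf{z}_j(k)) - f^* \ge 0$, this forces $\liminf_{k \to \infty} f(\mathbf{z}_j(k)) = f^*$.

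For the full limit, I would apply Lemma 10 at $\mathbf{x} = \mathbf{x}^*$ to derive (after invoking convexity to lower-bound the inner product, exactly as in the proof of Lemma 14) a recursion of the form
$$\|\hat{\bar{\mathbf{z}}}_j(k+1) - \mathbf{x}^*\|^2 \le \|\hat{\bar{\mathbf{z}}}_j(k) - \mathbf{x}^*\|^2 + \eta_k - 2 c_j \alpha_k \bigl(f(\mathbf{z}_j(k)) - f^*\bigr),$$
where $\sum_k \eta_k < \infty$ by Lemmas 12 and 13(a), $\sum_k \alpha_k^2 < \infty$, and bounded gradients. By the Robbins--Siegmund lemma (Appendix F), $\|\hat{\bar{\mathbf{z}}}_j(k) - \mathbf{x}^*\|^2$ converges to a finite limit, so $\{\hat{\bar{\mathbf{z}}}_j(k)\}$ is bounded; Lemma 11 with $\alpha_k \to 0$ then gives $\|\hat{\bar{\mathbf{z}}}_j(k) - \mathbf{z}_j(k)\| \to 0$, so $\{\mathbf{z}_j(k)\}$ is bounded as well. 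Extract a subsequence $\{k_m\}$ with $f(\mathbf{z}_j(k_m)) \to f^*$ and, passing to a further subsequence, $\mathbf{z}_j(k_m) \to \bar{\mathbf{x}}$ with $f(\bar{\mathbf{x}}) = f^*$ by continuity, so $\bar{\mathbf{x}}$ is an optimizer. Re-running the same Robbins--Siegmund recursion at this $\bar{\mathbf{x}}$ shows $\|\hat{\bar{\mathbf{z}}}_j(k) - \bar{\mathbf{x}}\|^2$ converges, and since it converges to $0$ along the subsequence it converges to $0$ in full; hence $\mathbf{z}_j(k) \to \bar{\mathbf{x}}$ and $f(\mathbf{z}_j(k)) \to f^*$.

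Finally, Lemma 13(b) gives $\lim_k \|\mathbf{z}_i(k) - \mathbf{z}_j(k)\| = 0$ for every $1 \le i, j \le 2n^2$, so all $\mathbf{z}_i(k)$ share the same limit $\bar{\mathbf{x}}$, and continuity of $f$ yields $\lim_k f(\mathbf{z}_i(k)) = f^*$ uniformly in $i$. The main obstacle I anticipate is precisely the $\liminf$-to-limit upgrade: without strong convexity, the bound from Lemma 14 alone only controls $\liminf$, so one must re-apply Lemma 10 at the subsequential limit $\bar{\mathbf{x}}$ (not at $\mathbf{x}^*$) to promote convergence to the full sequence. Everything else reduces to bookkeeping of the summable error terms supplied by Lemmas 12 and 13(a).
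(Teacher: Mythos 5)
Your proposal follows the paper's route for the core step: substitute $\mathbf{x}=\mathbf{x}^{*}$ into the inequality of Lemma~14, verify term by term that the right-hand side is finite (initial distance bounded, the two error series summable by Lemmas~12 and~13(a), the $\alpha_k^2$ tail summable by the step-size assumption and the gradient bound), note that the coefficient $\sum_i \omega_i[\mathbf{P}]_{ji}$ is positive, and conclude from $\sum_k\alpha_k=\infty$ and $f(\mathbf{z}_j(k))-f^{*}\ge 0$ that $\liminf_k f(\mathbf{z}_j(k))=f^{*}$; the transfer to all indices via Lemma~13(b) is also identical. Where you genuinely diverge is the final step: the paper passes directly from $\liminf_k\bigl(f(\mathbf{z}_j(k))-f^{*}\bigr)=0$ to $\lim_k f(\mathbf{z}_j(k))=f^{*}$ with no intervening argument, whereas you correctly flag this as the delicate point and supply the standard quasi-F\'ejer/Robbins--Siegmund machinery: show $\|\hat{\bar{\mathbf{z}}}_j(k)-\mathbf{x}^{*}\|^2$ converges, extract a subsequential optimizer $\bar{\mathbf{x}}$, and re-run the monotonicity recursion centered at $\bar{\mathbf{x}}$ to promote subsequential convergence to full convergence. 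This buys a complete proof of the stated limit rather than only the $\liminf$; the cost is that you invoke a supermartingale-type lemma that the paper's Appendix~F does not actually state (it is a standard deterministic result, so this is a citation gap rather than a mathematical one), and your recursion requires convexity of $f$ to lower-bound the inner product, which is available from Assumption~1(a) and is exactly what the proof of Lemma~14 already uses. In short, your argument is correct and strictly more rigorous than the paper's at the $\liminf$-to-$\lim$ transition.
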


$ \mathbf{Proof:} $
Take $ {\bf x} = {\bf x} ^{*} $ the optimal value, in Lemma 13.
Inspecting the RHS of the inequality of (\ref{eqn_L13}), we have: \\
$ \| \hat{\bar{{\bf z}}}_{j}(k^{'}) - {\bf x}^{*} \| ^{2} < \infty $ as any estimate and the solution are fixed (Bounded space). And since from Lemma~\ref{L12} we have $ \sum_{k=0}^{\infty}\alpha_{k} \| {\bf z}_{j}(k)-\hat{\bar{{\bf z}}}_{j}(k) \| < \infty $ for the involved estimates $ 1 \leq i \leq 2n^{2} $. And $ \sum_{k=0}^{\infty}\alpha_{k} \| {\bf z}_{j}(k)-{\bf z}_{q}(k) \| < \infty $ from Lemma~\ref{l12}(a). And the fourth term of (\ref{eqn_L13}) bounded by Assumption 1(d) and the fixed $ {\bf P} $ and $ \hat{\bf A} $. Then we get that the LHS is finite, that is \\
$  2 \sum_{k=0}^{\infty} \alpha_{k} \sum_{i=1}^{4n^{2}} \omega_{i} [{\bf P}]_{ji} \sum_{q \in \Gamma_{i}(k)}   ({\bf A}_{fit(i)_n,q}g_{q}({\bf z}_{j}(k)) - {\bf A}_{fit(i)_n,q}g_{q}({\bf x}^{*}) )  < \infty $. \\ 

But from Lemma~\ref{l12}(b) we have ${\bf z}_{j}(k)={\bf z}_{q}(k)$ for $ 1 \leq j,q \leq 2n^{2} $ and we have $f({\bf z}_{j}(k))= \sum_{q \in \Gamma_{i}(k)}   ({\bf A}_{fit(i)_n,q}g_{q}({\bf z}_{j}(k))$. Similarly, \\
$f({\bf x}^{*})= \sum_{q \in \Gamma_{i}(k)}  ({\bf A}_{fit(i)_n,q}g_{q}({\bf z}_{j}(k)) - {\bf A}_{fit(i)_n,q}g_{q}({\bf x}^{*}) )$. But for $ k \geq 0 $, we have $  \omega_{i} > 0 $, $ f({\bf z}_{j}(k)) - f({\bf x}^{*}) \geq 0 $ and $ \sum_{k=0}^{\infty}\alpha_{k}=\infty $, $ \alpha_{k} < \infty $ (i.e, this implies $ \sum_{k=k^{'}}^{\infty}\alpha_{k}=\infty $, then we get $ \lim_{k \rightarrow \infty}\inf(f({\bf z}_{j}(k))-f({\bf x}^{*}))=0$.
Thus, $ \lim_{k \rightarrow \infty}f({\bf z}_{j}(k)) = f ^{*} $.

\textbf{Therefore, by Theorem~\ref{Thm3e} and Lemma~\ref{L10} to Lemma~\ref{L15}, the algorithm converges to the optimal result and consensus over all nodes. That is, for $ 1 \leq i, j \leq 2n $, $  \lim_{k \rightarrow \infty}f({\bf x}_{l}^{+}(2nk+i)) = \lim_{k \rightarrow \infty}f({\bf x}_{l}^{-}(2nk+i)) = \lim_{k \rightarrow \infty}f({\bf x}_{s}^{+}(2nk+i)) = \lim_{k \rightarrow \infty}f({\bf x}_{s}^{-}(2nk+i)) =  f ^{*} $ for $ 1 \leq l, s \leq n $ and $ 1 \leq i \leq 2n $. . That is Theorem~\ref{Thm3} follows.}

\begin{theorem}\label{Thm5}
Suppose that $ {\bf O} = {\bf Q}_{\epsilon , {\bf T}} $ is the matrix defined in \eqref{fixedQepsT} with the parameter $ \epsilon $ satisfying $ \epsilon \in (0, {\epsilon_{0}({\bf Q},{\bf T})}  ) $ where $ {\epsilon_{0}({\bf Q},{\bf T})} $ is defined in (\ref{eqn_e_double}). Then the algorithm defined by $ {\bf z}_{i}(k+1)= \sum_{j=1}^{4n^{2}}[{\bf Q}_{\epsilon , {\bf T}}]_{ij}{\bf z}_{j}(k)-\alpha_{k}\nabla{g}_{i}({\bf z}_{i}(k)) $ converges to the optimal result and consensus over all nodes. That is, for $ 1 \leq i, j \leq 2n $, $  \lim_{k \rightarrow \infty}f({\bf x}_{l}^{+}(2nk+i)) = \lim_{k \rightarrow \infty}f({\bf x}_{l}^{-}(2nk+i)) = \lim_{k \rightarrow \infty}f({\bf x}_{s}^{+}(2nk+i)) = \lim_{k \rightarrow \infty}f({\bf x}_{s}^{-}(2nk+i)) =  f ^{*} $ for $ 1 \leq l, s \leq n $ and $ 1 \leq i \leq 2n $. 
\end{theorem}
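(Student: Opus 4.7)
The plan is to mirror the strategy used for Theorem~\ref{Thm3}, since Theorem~\ref{Thm5} is essentially its restatement under the $ {\bf Q}_{\epsilon,{\bf T}} $ updating matrix, and to lean heavily on the spectral structure established in Theorem~\ref{Thm3e}. First I would introduce the auxiliary sequences $ \hat{{\bf z}}_{l}(k) = \sum_{i=1}^{4n^{2}}[{\bf Q}_{\epsilon,{\bf T}}]_{li}{\bf z}_{i}(k) $ and $ \hat{\bar{{\bf z}}}_{l}(k) = \sum_{i=1}^{4n^{2}}[{\bf P}]_{li}\hat{{\bf z}}_{i}(k) $ as in Section~C, so that the algorithm's iterate $ {\bf z}_{j}(k) $ can be compared to the ``averaged'' trajectory $ \hat{\bar{{\bf z}}}_{j}(k) $. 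Theorem~\ref{Thm3e}(a)--(b) guarantees $ {\bf Q}_{\epsilon,{\bf T}}^{k} \to {\bf P} $ geometrically at rate $ \gamma < 1 $ with the first $ 2n^{2} $ rows of $ {\bf P} $ identical (i.e., $ [{\bf P}]_{jl}=[{\bf P}]_{ql} $ for $ 1 \le j,q \le 2n^{2} $), which is exactly the structural input needed to drive consensus.

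Next I would unroll the recursion for $ {\bf z}_{i}(k) $ and $ \hat{\bar{{\bf z}}}_{j}(k) $ into telescoped sums of $ [{\bf Q}_{\epsilon,{\bf T}}^{k-r}]_{ji} - [{\bf P}]_{ji} $ weighted by the stepsized gradient terms. Combining the geometric decay $ \|[{\bf Q}_{\epsilon,{\bf T}}^{k}]_{jl} - [{\bf P}]_{jl}\| \le \Gamma \gamma^{k} $ with the gradient bound $ \|\nabla\overline{g}_{q}\| \le G = \sqrt{n}\|{\bf B}\|_{2,\infty}F $ from Assumption~\ref{Assump1}(c), I would obtain the deviation bound $ \|\hat{\bar{{\bf z}}}_{j}(k) - {\bf z}_{j}(k)\| \le D_{k} $ in the form of (\ref{eq_L91}), and similarly a bound on $ \|{\bf z}_{j}(k) - {\bf z}_{q}(k)\| \le E_{k} $. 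The standard tricks $ \sum_{k}\alpha_{k}\gamma^{k} \le \tfrac{1}{2}\sum_{k}\alpha_{k}^{2} + \tfrac{1}{2(1-\gamma^{2})} $ and $ \sum_{k}\sum_{r<k}\alpha_{k}\alpha_{r-1}\gamma^{k-r} \le (1-\gamma)^{-1}\sum_{k}\alpha_{k}^{2} $ then give the summability $ \sum_{k}\alpha_{k}\|\hat{\bar{{\bf z}}}_{j}(k) - {\bf z}_{j}(k)\| < \infty $ and $ \sum_{k}\alpha_{k}\|{\bf z}_{j}(k) - {\bf z}_{q}(k)\| < \infty $, and also $ \lim_{k}\|{\bf z}_{j}(k) - {\bf z}_{q}(k)\| = 0 $, via the $ \sum\alpha_{k}^{2} < \infty $ hypothesis on the stepsizes.

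With asymptotic consensus in hand, I would fix a consensus threshold $ k' $ after which $ {\bf z}_{j}(k) \approx {\bf z}_{q}(k) $ for $ 1 \le j,q \le 2n^{2} $, so that the weighted coded gradient $ \sum_{q\in\Gamma_{i}(k)} {\bf A}_{fit(i)_{n},q}\nabla\overline{g}_{q}({\bf z}_{q}(k)) $ collapses to $ \omega_{i}\nabla f({\bf z}_{j}(k)) $ by construction of the decoding matrix $ {\bf A} $. Expanding $ \|\hat{\bar{{\bf z}}}_{j}(k+1) - {\bf x}^{*}\|^{2} $, telescoping from $ k' $ to $ \infty $, and applying convexity of $ f $ in the cross term
\begin{equation*}
\langle \nabla f({\bf z}_{q}(k)),\, {\bf z}_{j}(k) - {\bf x}^{*}\rangle \;\ge\; f({\bf z}_{j}(k)) - f({\bf x}^{*}) - 2nF\|{\bf z}_{j}(k) - {\bf z}_{q}(k)\|
\end{equation*}
yields a bound analogous to (\ref{eqn_L13}). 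All right-hand-side terms have already been shown finite, so $ \sum_{k\ge k'}\alpha_{k}\bigl(f({\bf z}_{j}(k)) - f({\bf x}^{*})\bigr) < \infty $; since $ \sum_{k}\alpha_{k} = \infty $ and each summand is nonnegative, $ \liminf_{k}\bigl(f({\bf z}_{j}(k)) - f({\bf x}^{*})\bigr) = 0 $, and a standard continuity/accumulation argument promotes $ \liminf $ to $ \lim $.

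The main obstacle in transplanting this argument from Theorem~\ref{Thm3} is ensuring the block structure of $ {\bf P} $ for $ {\bf Q}_{\epsilon,{\bf T}} $ really does have identical first $ 2n^{2} $ rows on the $ {\bf x} $-block and zero on the $ {\bf y} $-block (the analogue of the ``stochasticity of $ \hat{\bf A} $'' step used in Lemma~\ref{L13e}); this is already delivered by Lemma~\ref{L9n} together with Lemma~\ref{L13e}, which identifies $ [{\bf P}]_{[1:2n^{2}][1:2n^{2}]} = {\bf 1}_{2n^{2}}\boldsymbol{\pi}^{T} $ and $ [{\bf P}]_{[1:2n^{2}][2n^{2}+1:4n^{2}]} = {\bf 0} $. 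Once that structural input is invoked, the rest of the convergence argument is essentially the same machinery as Lemmas~\ref{L10}--\ref{L15}, so I would either cite Theorem~\ref{Thm3} directly or repeat its lemmas verbatim with $ {\bf Q}_{\epsilon,{\bf T}} $ in place of $ {\bf Q}_{\epsilon,{\bf I}} $.
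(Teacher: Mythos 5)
Your proposal follows essentially the same route as the paper: the paper's proof of this theorem is precisely to invoke Theorem~\ref{Thm3e} for the limit structure of $ {\bf Q}_{\epsilon,{\bf T}}^{k} $ and then rerun Lemmas~\ref{L10}--\ref{L15} (auxiliary sequences $ \hat{{\bf z}}_{l} $, $ \hat{\bar{{\bf z}}}_{l} $, geometric-decay deviation bounds, summability via $ \sum_{k}\alpha_{k}^{2}<\infty $, consensus point $ k' $, convexity of $ f $, and the $ \liminf $ argument from $ \sum_{k}\alpha_{k}=\infty $) with $ {\bf Q}_{\epsilon,{\bf T}} $ in place of the earlier updating matrix. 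Your identification of the identical first $ 2n^{2} $ rows of $ {\bf P} $ (from Lemmas~\ref{L9n} and~\ref{L13e}) as the key structural input is exactly the point the paper relies on, so the proposal is correct and matches the paper's argument.
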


\textbf{Proof:} \\
By using Theorem~\ref{Thm3e} and through the use of Lemma~\ref{L10} to Lemma~\ref{L15} with the replacement of $ {\bf Q}_{\epsilon , {\bf T }} $ by $ {\bf Q}_{\epsilon , {\bf T}} $, $ {\bf P} = \lim_{k \rightarrow \infty} {\bf Q}_{\epsilon , {\bf T }} $ by $ {\bf P} = \lim_{k \rightarrow \infty} {\bf Q}_{\epsilon , {\bf T}} $ , the proof follows with $ {\bf O} = {\bf Q}_{\epsilon , {\bf T}} = {\bf Q} + \epsilon {\bf G} $.

\section{Appendix D: Convergence Analysis for Time-varying (Random) $ \hat{\bf A} $ and $ \hat{\bf D} $ }\label{Sec-Time-Varying-Conv}

We analyze the convergence for the $ {\bf Q}_{\epsilon , {\bf T}} $ variant in \eqref{updating_eqn}.

That is, 
\begin{equation}\label{varyingQepsT}
{\bf Q}_{\epsilon , {\bf T}}(k)= {\bf Q}^{(k)} + \epsilon {\bf G}(k) 
\end{equation}

where $ {\bf Q}^{(k)} $, $ {\bf G}(k) $ and $ {\bf T}(k) $ structures are shown in the final page.

\begin{theorem}\label{Thm3etv}
Suppose $ {\bf Q}_{\epsilon , {\bf T}}(k) $ is the matrix defined in \eqref{varyingQepsT} with $ {\bf T}(k) $ instead of $ {\bf T} $ is $ {\bf T}(k) = {\bf I}_{n \times n} - \hat{\bf D} (k) $ or $ {\bf T}(k) = {\bf I}_{n \times n} - \hat{\bf D} (k) ^{-1} $ with the parameter $ \epsilon $ satisfying $ \epsilon \in (0, \min_{{\bf Q}^{(k)}}{\epsilon_{0}({\bf Q}^{(k)},{\bf T})}  ) $ where $ {\epsilon_{0}({\bf Q}^{(k)},{\bf T})} $ is defined in (\ref{eqn_e_double}) analogously for $ {\bf Q}^{(k)} $ instead of $ {\bf Q} $, with $ \lambda_{4}({\bf Q}^{(k)}) $ the fourth largest eigenvalue of matrix $ {\bf Q}_{\epsilon , {\bf T}}(k) $ by setting $ \epsilon = 0 $. Then \\
(a) $ \lim_{l\rightarrow \infty}\prod_{k=1}^{l}{\bf Q}_{\epsilon , {\bf T}}(k)  \rightarrow {\bf P}  $. Specifically, $ [\lim_{l\rightarrow \infty}\prod_{k=1}^{l}{\bf Q}_{\epsilon , {\bf T}}(k)] _{[1:2n^{2}][1:2n^{2}]} = {\bf 1}_{2n^{2}} {\boldsymbol \pi}^{T} $, 
and \\
$ [\lim_{l\rightarrow \infty}\prod_{k=1}^{l}{\bf Q}_{\epsilon , {\bf T}}(k)] _{[1:2n^{2}][2n^{2}+1:4n^{2}]}= {\bf 0}_{2n^{2} \times 2n^{2}} $, where $ [{\bf Q}_{\epsilon , {\bf T}}(k)]_{[1:2n^{2}][1:2n^{2}]} = \\
 \hat{\bf A}(k) $, and \\
$ [{\bf Q}_{\epsilon , {\bf T}}(k)]_{[1:2n^{2}][2n^{2}+1:4n^{2}]} = \epsilon  {\bf T(k)} $
(b) For all $ i, j \in V, \ [\lim_{l\rightarrow \infty}\prod_{k=1}^{l}{\bf Q}_{\epsilon , {\bf T}}(k)]_{ij} $ converge to $ {\bf P} $ as $ k \rightarrow \infty $ at a geometric rate. That is,
$ \| \prod_{k=1}^{n}{\bf Q}_{\epsilon , {\bf T}}(k) - {\bf P} \| \leq \tilde{\Gamma}  \tilde{\gamma^{n}} $ where $ S_{\hat{\bf A}} $ is the set of infinitely occurring matrices $ \hat{\bf A} $ is finite (i.e., $ \hat{\bf A} \in S_{\hat{\bf A}} $ where $ | S_{\hat{\bf A}} | < \infty $) and $ 0 < \tilde{\gamma} < 1 $ and $ \tilde{\Gamma} > 0 $. \\
(c) $ {\epsilon_{0}({\bf Q},{\bf T})} $ is a necessary and sufficient bound such that for every
$ \epsilon < \max_{k}{\epsilon_{0}({\bf Q},{\bf T})(k)} $ we have (a) and (b) above. \end{theorem}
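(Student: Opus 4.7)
\textbf{Proof Proposal for Theorem~\ref{Thm3etv}.}

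The plan is to mimic the structure of the proof of Theorem~\ref{Thm3e} (the fixed case), but replace the spectral/Jordan-form argument for a single matrix with an SIA-product argument for the time-varying family, using the finite-set hypothesis on $\hat{\bf A}(k)$ together with the structural choice $\mathbf{T}(k)=\mathbf{I}-\hat{\bf D}(k)$ (or $\mathbf{I}-\hat{\bf D}(k)^{-1}$). First, I would verify that for every $k$ the argument of Lemma~\ref{L9n} carries over unchanged: the three common right eigenvectors $(\mathbf{1}_{2n^2},\mathbf{0})^{\!\top}$, $(\mathbf{0},\bar{\bf v}^{+}(k))^{\!\top}$, $(\mathbf{0},\bar{\bf v}^{-}(k))^{\!\top}$ of $\mathbf{Q}^{(k)}$ are preserved as eigenvectors of $\mathbf{Q}_{\epsilon,{\bf T}}(k)$ for eigenvalue $1$, because the choice $\mathbf{T}(k)\bar{\bf v}^{\pm}(k)=\bar{\bf v}^{\pm}(k)-\hat{\bf D}(k)\bar{\bf v}^{\pm}(k)=\mathbf{0}$ annihilates the perturbation on these vectors. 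The algebraic multiplicity is again shown to be exactly $3$ by the same generalized-eigenvector contradiction, now performed pointwise in $k$.

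Second, I would handle the spectral bound uniformly. Lemma~\ref{L9}, applied to each $\mathbf{Q}^{(k)}$ separately with the same Bhatia perturbation inequality, guarantees that for $\epsilon<\min_{k}\epsilon_{0}(\mathbf{Q}^{(k)},{\bf T})$ all remaining eigenvalues of $\mathbf{Q}_{\epsilon,{\bf T}}(k)$ lie strictly inside the unit disk, and because $\hat{\bf A}(k)$ (hence $\mathbf{Q}^{(k)}$, hence $\mathbf{Q}_{\epsilon,{\bf T}}(k)$) is drawn from a finite set $\mathcal{S}$, the gap $1-\max_{k}|\lambda_{2n+2}(\mathbf{Q}_{\epsilon,{\bf T}}(k))|$ is bounded away from zero by a positive constant. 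Combined with the constant three-dimensional invariant subspace spanned by the common eigenvectors above (which is a joint invariant subspace of the entire family, as one checks from $\mathbf{Q}_{\epsilon,{\bf T}}(k)\mathbf{1}=\mathbf{1}$ and the annihilation identity), this yields a uniform contraction on the complementary subspace.

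Third, I would conclude convergence of the infinite product. Decompose every $\mathbf{Q}_{\epsilon,{\bf T}}(k)$ in the common block form whose upper-left block is $\hat{\bf A}(k)$. Invoking SIA Theorem~1(2) of~\citep{Xia2015} on the finite family $\{\hat{\bf A}(k)\}$, the left-upper product $\prod_{k=1}^{n}\hat{\bf A}(k)$ converges geometrically to $\mathbf{1}_{2n^2}{\boldsymbol\pi}^{\top}$ with a uniform rate $\tilde{\gamma}<1$ and prefactor $\tilde{\Gamma}$ depending only on $|\mathcal{S}|$. For the remaining blocks, I use the triangular block structure of each $\mathbf{Q}^{(k)}$ together with the uniform spectral gap in Step~2 and the annihilation identity $\epsilon\,\mathbf{T}(k)\bar{\bf v}^{\pm}(k)=0$ to show that the off-diagonal block of the product decays geometrically, yielding $[{\bf P}]_{[1:2n^2][2n^2+1:4n^2]}=\mathbf{0}$, and the full limit ${\bf P}$ of the claim. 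Parts (a) and (b) follow, and part~(c) follows by the same contrapositive as in Lemma~\ref{L11e}, since if $\epsilon$ exceeded some $\epsilon_{0}(\mathbf{Q}^{(k)},{\bf T})$ for infinitely occurring $k$, an eigenvalue of that $\mathbf{Q}_{\epsilon,{\bf T}}(k)$ could leave the unit disk and the product would fail to converge.

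The principal obstacle I anticipate is Step~3: the standard Wolfowitz/Xia results apply cleanly to stochastic SIA matrices, but here the blocks beneath $\hat{\bf A}(k)$ involve the time-varying $\hat{\bf D}(k)$ and the perturbation $\epsilon\,{\bf T}(k)$, so the full $4n^2\times4n^2$ matrix is not row-stochastic. I therefore expect the careful work to lie in showing that the contraction on the complement of the three-dimensional common invariant subspace is uniform in $k$ over the finite family, which is needed to push the geometric-rate estimate $\|\prod_{k}\mathbf{Q}_{\epsilon,{\bf T}}(k)-{\bf P}\|\le\tilde{\Gamma}\tilde{\gamma}^{n}$ through the non-stochastic lower blocks. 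Once this uniform gap is established, Lemmas~\ref{L9n}--\ref{L13e} can be adapted with $k$-indexed matrices to close the argument.
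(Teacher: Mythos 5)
Your overall architecture matches the paper's: both proofs rest on the finite\-/family SIA product theorem (Theorem~1(2) of \citep{Xia2015}) to get $\lim_{l}\prod_{k=1}^{l}\hat{\bf A}(k)={\bf 1}_{2n^{2}}{\boldsymbol\pi}^{T}$ for the upper-left block, and both reuse the fixed-case eigenstructure analysis (Lemma~\ref{L9n}, Lemma~\ref{L9}) pointwise in $k$ to control the remaining spectrum. The paper additionally disposes of the upper-right block by a direct block-multiplication computation showing it is $O(\epsilon^{l})$, rather than by your invariant-subspace argument.

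However, there is a genuine gap in your Steps~2--3. You assert that the span of $\bigl(\mathbf{1}_{2n^{2}},\mathbf{0}\bigr)^{\top}$, $\bigl(\mathbf{0},\bar{\bf v}^{+}(k)\bigr)^{\top}$, $\bigl(\mathbf{0},\bar{\bf v}^{-}(k)\bigr)^{\top}$ is a \emph{constant} three-dimensional joint invariant subspace of the entire family, and you build the uniform contraction on its complement. But $\bar{\bf v}^{\pm}(k)$ is the Perron right eigenvector of the column-stochastic block $\hat{\bf D}(k)$, which varies with $k$; for $k'\neq k$ one generally has $\hat{\bf D}(k')\bar{\bf v}^{+}(k)\neq\bar{\bf v}^{+}(k)$, so $\bigl(\mathbf{0},\bar{\bf v}^{+}(k)\bigr)^{\top}$ is not fixed by ${\bf Q}_{\epsilon,{\bf T}}(k')$ and there is no common spectral projection to contract against. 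The paper avoids exactly this trap by switching to the \emph{left} eigenvectors $\bigl(\mathbf{0},\bar{\bf 1}^{+}\bigr)$ and $\bigl(\mathbf{0},\bar{\bf 1}^{-}\bigr)$, which are $k$-independent because every $\hat{\bf D}(k)$ satisfies ${\bf 1}^{T}\hat{\bf D}(k)={\bf 1}^{T}$, and then uses the elementary product fact (shared eigenvector for eigenvalue $1$ is retained by the product) together with $\rho\bigl(\prod_{k}{\bf Q}_{\epsilon,{\bf T}}(k)\bigr)\le 1$ to conclude that the limit has semi-simple eigenvalue $1$ of multiplicity $3$. To repair your argument you would need to replace the moving right eigenvectors by these fixed left eigenvectors (or otherwise exhibit a genuinely $k$-independent complementary decomposition); as written, the uniform-gap step you flag as the ``principal obstacle'' cannot be completed because the decomposition it presupposes does not exist.
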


\begin{proof}
By ordinary matrix multiplication $ [\lim_{l\rightarrow \infty}\prod_{k=1}^{l}{\bf Q}_{\epsilon , {\bf T}}(k)] _{[1:2n^{2}][1:2n^{2}]} =  \lim_{l\rightarrow \infty}\prod_{k=1}^{l}\hat{\bf A}(k) + \lim_{l\rightarrow \infty} \epsilon^{l} h_{1}(\hat{\bf A}(k))= \lim_{l\rightarrow \infty}\prod_{k=1}^{l}\hat{\bf A}(k)= {\bf 1}_{2n} {\boldsymbol \pi}^{T} $ since each $ \hat{\bf A}(k) \in S_{\hat{\bf A}} $ is SIA and $ | S_{\hat{\bf A}} | < \infty $  (i.e., through utilizing SIA Theorem~1 (2) in \citep{Xia2015}, product of finite sets of SIA matrices is SIA). 
And $ [\lim_{l\rightarrow \infty}\prod_{k=1}^{l}{\bf Q}_{\epsilon , {\bf T}}(k)] _{[1:2n^{2}][2n^{2}+1:4n^{2}]} =  \lim_{l\rightarrow \infty} \epsilon^{l} h_{2}(\hat{\bf A}(k))= {\bf 0}_{2n^{2} \times 2n^{2}} $.

Since from the analysis of the fixed $ \hat{\bf A} $ part we showed that the eigenstructure of $ {\bf Q}_{\epsilon , {\bf T}} $ is the same as the eigenstructure of $ {\bf Q} $ then each matrix $  {\bf Q}_{\epsilon , {\bf T}}(k) $ in the time-varying case has the same eigenstructure of a corresponding matrix $  {\bf Q}^{(k)} $ where matrices of different $ k $ can have the same or different matrix $ {\bf Q}^{(k)} $ (i.e., valid for both cases). Then $ \rho({\bf Q}_{\epsilon , {\bf T}}(k))) = \rho({\bf Q}^{(k)}) = 1 $ from Proposition. And eigenvalue $ 1 $ of $ {\bf Q}_{\epsilon , {\bf T}}(k) $ has right eigenvector $ \left(\begin{array}{c}
{\bf 1}_{2n^{2} \times 1} \\ {\bf 0}_{2n^{2} \times 1} \end{array}\right) $ and left eigenvectors $ \left(\begin{array}{c}
{\bf 0}_{2n^{2} \times 1} \\ \bar{\bf 1}^{+} 
\end{array}\right) $ and $ \left(\begin{array}{c}
{\bf 0}_{2n^{2} \times 1} \\ \bar{\bf 1}^{-}\end{array}\right) $, where the $ 2n^{2} \times 1 $ vector
\begin{equation}
 \bar{\bf 1}^{+} (j) = \begin{cases}  1   \ \ \  & if \ 1 \leq j \mod 2n \leq n \\
 0  \ \ \ & otherwise \end{cases}
\end{equation}
and the $ 2n^{2} \times 1 $ vector
\begin{equation}
 \bar{\bf 1}^{-} (j) = \begin{cases}  1    \ \ \  & if \  n + 1 \leq j \mod 2n \leq 2 n \\
 0  \ \ \ & otherwise \end{cases}
\end{equation}
Then the spectral radius $ \rho( \lim_{n \rightarrow \infty} \prod_{k=1}^{n}{\bf Q}_{\epsilon , {\bf T}}(k)) \leq \lim_{n \rightarrow \infty} \prod_{k=1}^{n} \rho({\bf Q}_{\epsilon , {\bf T}}(k)) =  \lim_{n \rightarrow \infty} \prod_{k=1}^{n} \rho({\bf Q}^{(k)}) = 1 $. And $ \lim_{n \rightarrow \infty} \prod_{k=1}^{n}{\bf Q}_{\epsilon , {\bf T}}(k) $ has eigenvalue $ 1 $ with the same eigenvectors: i.e.,  $ \left(\begin{array}{c} {\bf 1}_{2n^{2} \times 1} \\ {\bf 0}_{2n^{2} \times 1} \end{array}\right) $ and left eigenvectors $ \left(\begin{array}{c}
{\bf 0}_{2n^{2} \times 1} \\ \bar{\bf 1}^{+} 
\end{array}\right) $ and $ \left(\begin{array}{c}
{\bf 0}_{2n^{2} \times 1} \\ \bar{\bf 1}^{-}\end{array}\right) $ where the algebraic multiplicity is equal to the geometric multiplicity (i.e., Induction on $ k $ where if two matrices $ {\bf A} $ and $ {\bf B} $ have the same eigenvector $ {\bf v} $ for eigenvalues $ \lambda $ and $ \mu $, respectively then $ {\bf A}{\bf B} {\bf v} = {\bf A} \mu {\bf v} = \lambda \mu {\bf v} $. That is the product has eigenvalue $ \lambda \mu $ with corresponding eigenvector $ {\bf v} $. In our case, all eigenvalues $ \lambda(k) $ in consideration are equal to $ 1 $, so their product is $ 1 $).
Thus, the infinite product $  \prod_{k=1}^{n}{\bf Q}_{\epsilon , {\bf T}}(k) $ has semi-simple eigenvalue $ 1 $ with its algebraic multiplicity equals to its geometric multiplicity equals to $ 3 $. Consequently,
$  \| \prod_{k=1}^{n}{\bf Q}_{\epsilon , {\bf T}}(k) - {\bf P} \| \leq  \tilde{\Gamma}  \tilde{\gamma^{n}} $ where $ S_{\hat{\bf A}} $ is the set of infinitely occurring matrices $ \hat{\bf A} $ and $ | S_{\hat{\bf A}} | < \infty $.
\end{proof}

\begin{theorem}\label{Thm5}
Suppose that $ {\bf O}(k) = {\bf Q}_{\epsilon , {\bf T}}(k) $ is the matrix defined in (\ref{matrixO}) with the parameter $ \epsilon $ satisfying $ \epsilon \in (0, {\epsilon_{0}({\bf Q},{\bf T})}  ) $ where $ {\epsilon_{0}({\bf Q},{\bf T})} $ is defined in (\ref{eqn_e_double}). Then the algorithm defined by $ {\bf z}_{i}(k+1)= \sum_{j=1}^{4n}[{\bf Q}_{\epsilon , {\bf T}}(k)]_{ij}{\bf z}_{j}(k)-\alpha_{k}\nabla{g}_{i}({\bf z}_{i}(k)) $ converges to the optimal result and consensus over all nodes. That is, for $ 1 \leq i, j \leq 2n $, $  \lim_{k \rightarrow \infty}f({\bf x}_{l}^{+}(2nk+i)) = \lim_{k \rightarrow \infty}f({\bf x}_{l}^{-}(2nk+i)) = \lim_{k \rightarrow \infty}f({\bf x}_{s}^{+}(2nk+i)) = \lim_{k \rightarrow \infty}f({\bf x}_{s}^{-}(2nk+i)) =  f ^{*} $ for $ 1 \leq l, s \leq n $ and $ 1 \leq i \leq 2n $. 
\end{theorem}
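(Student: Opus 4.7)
\textbf{Proof Proposal for Theorem~\ref{Thm5}.}
The plan is to mirror the chain of lemmas (Lemma~\ref{L10}--\ref{L15}) used to establish Theorem~\ref{Thm3} in the static case, replacing every use of $\lim_{k\to\infty}{\bf Q}_{\epsilon,{\bf T}}^k={\bf P}$ and the geometric bound $\|{\bf Q}_{\epsilon,{\bf T}}^k-{\bf P}\|\le\Gamma\gamma^k$ with the corresponding statements for products of time-varying matrices supplied by Theorem~\ref{Thm3etv}, namely $\lim_{l\to\infty}\prod_{k=1}^{l}{\bf Q}_{\epsilon,{\bf T}}(k)={\bf P}$ and $\|\prod_{k=r}^{k'}{\bf Q}_{\epsilon,{\bf T}}(k)-{\bf P}\|\le\tilde\Gamma\tilde\gamma^{k'-r}$. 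The key observation enabling this substitution is that Theorem~\ref{Thm3etv} guarantees the rows $[1\!:\!2n^2]$ of the limit ${\bf P}$ are identical (i.e.\ $[{\bf P}]_{jl}=[{\bf P}]_{ql}$ for $1\le j,q\le 2n^2$) and that the block $[1\!:\!2n^2][2n^2{+}1\!:\!4n^2]$ of ${\bf P}$ is zero, which were precisely the two structural facts about ${\bf P}$ used in the static proof.

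First, I would redefine the auxiliary variables as $\hat{\bf z}_l(k)=\sum_{i=1}^{4n^2}[{\bf Q}_{\epsilon,{\bf T}}(k)]_{li}{\bf z}_i(k)$ and $\hat{\bar{\bf z}}_l(k)=\sum_{i=1}^{4n^2}[{\bf P}]_{li}\hat{\bf z}_i(k)$. Because ${\bf P}$ annihilates the lower $2n^2$ block, the reduction from a $4n^2$-dimensional sum to a $2n^2$-dimensional sum in equation~\eqref{4n_2n} still applies. Unrolling the recursion ${\bf z}_i(k)=\sum_j[\prod_{s=1}^{k}{\bf Q}_{\epsilon,{\bf T}}(s)]_{ij}{\bf z}_j(0)-\sum_{r=1}^{k}\sum_j[\prod_{s=r+1}^{k}{\bf Q}_{\epsilon,{\bf T}}(s)]_{ij}\alpha_{r-1}\nabla\overline{g}_j({\bf z}_j(r-1))$ and invoking the geometric bound from Theorem~\ref{Thm3etv}(b), the analogues of Lemmas~\ref{L11} and \ref{L12} carry through verbatim with the constants $\Gamma,\gamma$ replaced by $\tilde\Gamma,\tilde\gamma$. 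The same substitution yields the analogue of Lemma~\ref{l12}, establishing that $\sum_k\alpha_k\|{\bf z}_j(k)-{\bf z}_q(k)\|<\infty$ and $\lim_k\|{\bf z}_j(k)-{\bf z}_q(k)\|=0$ for $1\le j,q\le 2n^2$.

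Next, I would reproduce Lemma~\ref{L14} by summing the bound of Lemma~\ref{L10} (which holds pointwise in $k$ since it uses only the one-step update) from a consensus-time index $k'$ to infinity, exploiting convexity of $f$ at the consensus point. Finally, Lemma~\ref{L15} is applied: the finiteness of the three error terms on the right-hand side (initial distance, gradient-consensus error, and gradient-average deviation error), combined with $\sum_k\alpha_k=\infty$ and $f({\bf z}_j(k))-f({\bf x}^*)\ge 0$, forces $\liminf_k f({\bf z}_j(k))=f^*$, which together with the consensus conclusion $\lim_k\|{\bf z}_j(k)-{\bf z}_q(k)\|=0$ gives the claim for every $1\le i,j\le 2n$.

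The main obstacle, and the place where the time-varying case genuinely differs from the static one, is verifying that the geometric decay constants $\tilde\Gamma,\tilde\gamma$ are \emph{uniform} in $k$ so that the series manipulations (splitting $\sum_k\alpha_k\gamma^{k-r}$ via AM-GM, exchanging summation order) remain valid. This is where the finiteness hypothesis $|S_{\hat{\bf A}}|<\infty$ from Theorem~\ref{Thm3etv} is essential: it ensures that the infinite product lives in a compact set of SIA matrices, so the spectral gap of the non-dominant block is bounded away from $1$ uniformly, making the single-rate bound $\tilde\Gamma\tilde\gamma^{k'-r}$ genuinely uniform. Once this uniformity is in hand, the rest of the argument is a routine rewriting of the static-case chain of lemmas with the time-varying matrix products in place of matrix powers.
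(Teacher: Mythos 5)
Your proposal matches the paper's own argument: the paper likewise proves this theorem by rerunning the static-case chain of lemmas with matrix powers ${\bf Q}_{\epsilon,{\bf T}}^{k}$ replaced by the products $\prod_{k}{\bf Q}_{\epsilon,{\bf T}}(k)$, relying on Theorem~\ref{Thm3etv} for the two structural facts about ${\bf P}$ (identical first $2n^{2}$ rows and the vanishing upper-right block) and for the geometric decay of the product toward ${\bf P}$. Your added remark on the uniformity of $\tilde\Gamma,\tilde\gamma$ coming from the finiteness of $S_{\hat{\bf A}}$ is a point the paper leaves implicit, but it does not change the route.
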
 
 
\begin{proof} 
To prove convergence we utilize again Theorem 1 and lemmas 15 through 20 but first we need to check their validity for the time-varying case.
The Lemmas are all valid with the replacement of the product $ {\bf Q}(k)^{n} $ by $ \prod_{k=0}^{n}{\bf Q}(k) $ since in all of these lemmas the product begins from $ k=0 $. The essential Lemma 18 becomes valid also in the time varying case since the two requirements of the matrix $ {\bf P} $ of being of identical first $ 2n $ rows and its relation with decay of $ \| {\bf P} - \prod_{k=0}^{n}{\bf Q}(k) \| \leq \Gamma \gamma^{n} $ is also true. All other lemmas are also true based on the previously discussed behavior of $ {\bf P} $ in the time-varying case which is validated in the above analysis of this section.
Therefore, we have the following theorem.
\end{proof}
\qed

\begin{remark}
We have analyzed the convergence proof here for the variant corresponding to the matrix $ {\bf Q}_{\epsilon , {\bf T}} $ because the structure of this matrix satisfies the described characteristics surely through theoretical analysis. We could as well have use the matrix $ {\bf Q}_{\epsilon , {\bf I}} $ but in that case the characteristics are based on untight bounds and are only verified empirically.
\end{remark}

\section{Appendix E: Rate of Convergence}\label{rateofconv} 

The convergence rate analysis is similar for both cases, the original $ {\bf O} = {\bf Q}_{\epsilon , {\bf I}} $ algorithm and $ {\bf O} = {\bf Q}_{\epsilon , {\bf T}} $ algorithm, as both the original $ {\bf Q}_{\epsilon , {\bf I}} ^ {k} $ and $ {\bf Q}_{\epsilon , {\bf T}} ^{k} $ have the same limit $ {\bf P} = \lim_{k \rightarrow \infty} {\bf Q} ^{k} $.

We divide finding the rate of convergence based on the behavior of the algorithm before the first consensus point $ k^{'} $ which is influenced by the local coded functions $ g_{i} $ and its behavior after that point which is affected by the global function $ f $ .


\subsection{Convergence Rate in the Consensus Region}

The convergence rate in the consensus region is governed by inequality (\ref{eqn_L13}) of Lemma 13. We have

{\small \begin{equation}\label{eqn_L13_1}
\begin{split}
 2 \sum_{k=k^{'}}^{K}& \alpha_{k} \sum_{i=1}^{4n^{2}} \omega_{i} [{\bf P}]_{ji}    (f({\bf z}_{j}(k)) -  f({\bf x}^{*}))   \leq \\
 &  \| \hat{\bar{{\bf z}}}_{j}(k^{'}) - {\bf x}^{*} \| ^{2} - \| \hat{\bar{{\bf z}}}_{j}(K) - {\bf x}^{*} \| ^{2} \\
 & + \sum_{k=k^{'}}^{K}2 \alpha_{k}  \| \sum_{i=1}^{4n^{2}} [{\bf P}]_{ji} \sum_{q \in \Gamma_{i}}{\bf A}_{fit(i)_n,q}\nabla{g}_{q}({\bf z}_{q}(k)) \| \| \hat{\bar{{\bf z}}}_{j}(k) - {\bf z}_{j}(k) \| \\
& + 4 \sum_{k=k^{'}}^{K} \alpha_{k} \| \sum_{i=1}^{4n^{2}} [{\bf P}]_{ji} n F \| \| {\bf z}_{j}(k)- {\bf z}_{q}(k) \| \\
 & + \alpha_{k} ^{2} \| \sum_{i=1}^{4n^{2}} [{\bf P}]_{ji} \sum_{q \in \Gamma_{i}(k)} {\bf A}_{fit(i)_n,q}\nabla{\overline{g}}_{q}({\bf z}_{q}(k)) \| ^{2}.
\end{split}
\end{equation}}

\subsection{ Convergence rate in the Non-consensus Region for Specific Type of Local Functions}

For specific type of local functions we are able to find an explicit behavior structure for the convergence rate.

\begin{lemma}\label{L16}
Let the assumptions hold and each $ \overline{g}_{i} $ is either convex or concave. Then the sequence $ \hat{\bar{{\bf z}}}_{j}(k) $ for $ 1 \leq j \leq 2n^{2} $, defined earlier follows
{\small \begin{equation}\label{eqn_L15}
\begin{split}
\| & \hat{  \bar{z}}_{j}  (k+1)- {\bf x} \| ^{2}  \leq  \| \hat{\bar{{\bf z}}}_{j}(k) - {\bf x} \| ^{2} \\
& + 2 \alpha_{k}  \sum_{i=1}^{4n^{2}} [{\bf P}]_{ji} \sum_{q \in \Gamma_{i}(k)}  \| {\bf A}_{fit(i)_n,q}\nabla{\overline{g}}_{q}({\bf z}_{q}(k)) \| \| \hat{\bar{{\bf z}}}_{j}(k) - {\bf z}_{j}(k) \| \\
& + 4 \alpha_{k}  \sum_{i=1}^{4n^{2}} [{\bf P}]_{ji}  \sum_{q \in \Gamma_{i}(k)} G {\bf A}_{fit(i)_n,q} \| {\bf z}_{j}(k) - {\bf z}_{q}(k) \| \\
& + 4 \alpha_{k}  \sum_{i=1}^{4n^{2}} [{\bf P}]_{ji}  \sum_{q \in \Gamma_{i}(k), \overline{g}_{q} concave} G {\bf A}_{fit(i)_n,q}  \| {\bf x} - {\bf z}_{j}(k) \| \\
& -  2 \alpha_{k} \sum_{i=1}^{4n^{2}} [{\bf P}]_{ji} \sum_{q \in \Gamma_{i}(k)} ({\bf A}_{fit(i)_n,q}\overline{g}_{q}({\bf z}_{j}(k)) - {\bf A}_{fit(i)_n,q}\overline{g}_{q}({\bf x}) )  \\
 & + \alpha_{k} ^{2} \| \sum_{i=1}^{4n^{2}} [{\bf P}]_{ji} \sum_{q \in \Gamma_{i}(k)} {\bf A}_{fit(i)_n,q}\nabla{\overline{g}}_{q}({\bf z}_{q}(k)) \| ^{2}.
\end{split}
\end{equation}}
\end{lemma}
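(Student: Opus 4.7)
\textbf{Proof Proposal for Lemma~\ref{L16}.}

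The plan is to start from the exact identity in Lemma~\ref{L10}, namely
\begin{equation*}
\|\hat{\bar{{\bf z}}}_{j}(k+1)-{\bf x}\|^{2}
=\|\hat{\bar{{\bf z}}}_{j}(k)-{\bf x}\|^{2}
-2\alpha_{k}\!\sum_{i}[{\bf P}]_{ji}\!\!\sum_{q\in\Gamma_{i}(k)}\!\!{\bf A}_{fit(i)_{n},q}\,\nabla\overline{g}_{q}({\bf z}_{q}(k))^{T}(\hat{\bar{{\bf z}}}_{j}(k)-{\bf x})
+\alpha_{k}^{2}\|\cdots\|^{2},
\end{equation*}
and to obtain the desired upper bound by producing a \emph{lower} bound, term by term, on the inner product $\nabla\overline{g}_{q}({\bf z}_{q}(k))^{T}(\hat{\bar{{\bf z}}}_{j}(k)-{\bf x})$. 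The natural decomposition to carry out first is
\begin{equation*}
\nabla\overline{g}_{q}({\bf z}_{q})^{T}(\hat{\bar{{\bf z}}}_{j}-{\bf x})
=\nabla\overline{g}_{q}({\bf z}_{q})^{T}(\hat{\bar{{\bf z}}}_{j}-{\bf z}_{j})
+\nabla\overline{g}_{q}({\bf z}_{q})^{T}({\bf z}_{j}-{\bf z}_{q})
+\nabla\overline{g}_{q}({\bf z}_{q})^{T}({\bf z}_{q}-{\bf x}).
\end{equation*}
The first two summands I would dispatch routinely by Cauchy--Schwarz together with the gradient bound $\|\nabla\overline{g}_{q}\|\le G$ from Assumption~\ref{Assump1}(c); these generate exactly the $\|\hat{\bar{{\bf z}}}_{j}-{\bf z}_{j}\|$ term and part of the $\|{\bf z}_{j}-{\bf z}_{q}\|$ term appearing in the statement (the factors of $2\alpha_{k}$ and $4\alpha_{k}$ then track back once we multiply by $-2\alpha_{k}[{\bf P}]_{ji}{\bf A}_{fit(i)_{n},q}$).

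The substantive step is the treatment of the third summand $\nabla\overline{g}_{q}({\bf z}_{q})^{T}({\bf z}_{q}-{\bf x})$, where I must split on whether $\overline{g}_{q}$ is convex or concave. In the convex case the first-order inequality gives $\nabla\overline{g}_{q}({\bf z}_{q})^{T}({\bf z}_{q}-{\bf x})\ge \overline{g}_{q}({\bf z}_{q})-\overline{g}_{q}({\bf x})$; then I shift the base point by writing $\overline{g}_{q}({\bf z}_{q})-\overline{g}_{q}({\bf z}_{j})$ and using the $G$-Lipschitz bound $|\overline{g}_{q}({\bf z}_{q})-\overline{g}_{q}({\bf z}_{j})|\le G\|{\bf z}_{q}-{\bf z}_{j}\|$ to obtain
$\nabla\overline{g}_{q}({\bf z}_{q})^{T}({\bf z}_{q}-{\bf x})\ge (\overline{g}_{q}({\bf z}_{j})-\overline{g}_{q}({\bf x}))-G\|{\bf z}_{q}-{\bf z}_{j}\|.$
This contributes the desired $-2\alpha_{k}(\overline{g}_{q}({\bf z}_{j})-\overline{g}_{q}({\bf x}))$ term and completes the $4\alpha_{k}G\|{\bf z}_{j}-{\bf z}_{q}\|$ contribution.

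The hard part will be the concave case, because first-order concavity gives the \emph{wrong} direction, $\nabla\overline{g}_{q}({\bf z}_{q})^{T}({\bf z}_{q}-{\bf x})\le \overline{g}_{q}({\bf z}_{q})-\overline{g}_{q}({\bf x})$, so I cannot lower bound the inner product in terms of the function gap directly. My plan is to add and subtract $\overline{g}_{q}({\bf z}_{j})-\overline{g}_{q}({\bf x})$ and estimate the residual by the triangle inequality applied to the two absolute bounds $|\nabla\overline{g}_{q}({\bf z}_{q})^{T}({\bf z}_{q}-{\bf x})|\le G\|{\bf z}_{q}-{\bf x}\|\le G(\|{\bf z}_{q}-{\bf z}_{j}\|+\|{\bf z}_{j}-{\bf x}\|)$ and $|\overline{g}_{q}({\bf z}_{j})-\overline{g}_{q}({\bf x})|\le G\|{\bf z}_{j}-{\bf x}\|$; combining these yields
$\nabla\overline{g}_{q}({\bf z}_{q})^{T}({\bf z}_{q}-{\bf x})\ge (\overline{g}_{q}({\bf z}_{j})-\overline{g}_{q}({\bf x}))-G\|{\bf z}_{q}-{\bf z}_{j}\|-2G\|{\bf z}_{j}-{\bf x}\|,$
which, after multiplying by $-2\alpha_{k}[{\bf P}]_{ji}{\bf A}_{fit(i)_{n},q}$, produces exactly the extra $+4\alpha_{k}[{\bf P}]_{ji}{\bf A}_{fit(i)_{n},q}G\|{\bf x}-{\bf z}_{j}\|$ contribution, summed over concave indices $q$, that appears in the statement, while its $-G\|{\bf z}_{q}-{\bf z}_{j}\|$ piece merges with the earlier contribution to give the uniform $4\alpha_{k}G{\bf A}_{fit(i)_{n},q}\|{\bf z}_{j}-{\bf z}_{q}\|$ coefficient on all of $\Gamma_{i}(k)$. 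Assembling all bounds and leaving the quadratic $\alpha_{k}^{2}$ term untouched yields \eqref{eqn_L15}.
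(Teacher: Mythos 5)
Your proposal is correct and reproduces the paper's overall skeleton — expand the square from the exact update identity of Lemma~\ref{L10}, lower-bound the cross term, treat the proxy gap $\|\hat{\bar{{\bf z}}}_{j}(k)-{\bf z}_{j}(k)\|$ by Cauchy--Schwarz, and split the remaining inner product according to whether $\overline{g}_{q}$ is convex or concave; your convex branch is essentially identical to the paper's. Where you genuinely diverge is the concave branch. The paper passes to $h_{q}=-\overline{g}_{q}$, invokes the first-order convexity inequality for $h_{q}$ at ${\bf z}_{q}(k)$ to recover the function gap $\overline{g}_{q}({\bf z}_{j}(k))-\overline{g}_{q}({\bf x})$ with an error term in $\|{\bf x}-{\bf z}_{q}(k)\|$, and only afterwards converts that error into $\|{\bf x}-{\bf z}_{j}(k)\|+\|{\bf z}_{j}(k)-{\bf z}_{q}(k)\|$ by an add-and-subtract with the triangle difference inequality. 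You instead never use concavity at all: you add and subtract the function gap and absorb the residual purely via Cauchy--Schwarz, the gradient bound $\|\nabla\overline{g}_{q}\|\le G$, and the resulting $G$-Lipschitz estimate $|\overline{g}_{q}({\bf z}_{j})-\overline{g}_{q}({\bf x})|\le G\|{\bf z}_{j}-{\bf x}\|$. The coefficients you obtain ($-G\|{\bf z}_{q}-{\bf z}_{j}\|-2G\|{\bf z}_{j}-{\bf x}\|$, giving the $4\alpha_{k}G$ factors after multiplication by $-2\alpha_{k}[{\bf P}]_{ji}{\bf A}_{fit(i)_{n},q}$) match the statement exactly. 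What your route buys is generality and economy: it shows the concavity hypothesis on the non-convex $\overline{g}_{q}$ is dispensable for this particular inequality — any coded local function with gradients bounded by $G$ yields the same penalty term $4\alpha_{k}G\,{\bf A}_{fit(i)_{n},q}\|{\bf x}-{\bf z}_{j}(k)\|$. What the paper's route buys is a uniform template (first-order inequality applied to $\pm\overline{g}_{q}$) that treats the two branches symmetrically, which is the structure Lemma~\ref{L17} and the convergence-rate discussion then lean on when attributing the supplementary term $H$ in \eqref{Bound} specifically to the concave/non-convex local functions.
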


$ \mathbf{Proof:} $ \\
We have for $ 1 \leq j \leq 2n^{2} \ (\nabla{g}_{j} \neq 0) $ \\
\begin{equation*}
\hat{\bar{{\bf z}}}_{j}(k+1) = \hat{\bar{{\bf z}}}_{j}(k)
-\alpha_{k}\sum_{i=1}^{4n^{2}} [{\bf P}]_{ji}\sum_{q \in \Gamma_{i}(k)} {\bf A}_{fit(i)_n,q}\nabla{\overline{g}}_{q}({\bf z}_{q}(k)),  
\end{equation*} 
then 
\begin{equation}\label{o1}
\begin{split}
 \|  \hat{\bar{{\bf z}}}_{j}&(k+1)  - {\bf x} \| ^{2}   =  \|  \hat{\bar{{\bf z}}}_{j}(k) - {\bf x} \| ^{2} \\
  & - 2 \alpha_{k} \sum_{i=1}^{4n^{2}} [{\bf P}]_{ji} \sum_{q \in \Gamma_{i}(k)} {\bf A}_{fit(i)_n,q}\nabla{\overline{g}}_{q}({\bf z}_{q}(k))(\hat{\bar{{\bf z}}}_{j}(k) - {\bf x} )  \\
 & + \alpha_{k}^{2}\| \sum_{i=1}^{4n^{2}} [{\bf P}]_{ji} \sum_{q \in \Gamma_{i}(k)} {\bf A}_{fit(i)_n,q}\nabla{\overline{g}}_{q}({\bf z}_{q}(k)) \| ^{2}.
\end{split}
\end{equation}

But 
\begin{equation}
\begin{split}
\langle ( & {\bf A}_{fit(i)_n,q}\nabla{\overline{g}}_{q}({\bf z}_{q}(k))),  (\hat{\bar{{\bf z}}}_{j}(k) - {\bf x} ) \rangle   \\
& = \langle ({\bf A}_{fit(i)_n,q}\nabla{\overline{g}}_{q}({\bf z}_{q}(k))),(\hat{\bar{{\bf z}}}_{j}(k) - {\bf z}_{j}(k) ) \rangle \\
& \hspace{3cm} + \langle ({\bf A}_{fit(i)_n,q}\nabla{\overline{g}}_{q}({\bf z}_{q}(k))),({\bf z}_{j}(k) - {\bf x} ) \rangle \\
& \geq - \| {\bf A}_{fit(i)_n,q}\nabla{\overline{g}}_{q}({\bf z}_{q}(k)) \| \| \hat{\bar{{\bf z}}}_{j}(k) - {\bf z}_{j}(k) \| \\
& \hspace{3cm} + \langle ({\bf A}_{fit(i)_n,q}\nabla{\overline{g}}_{q}({\bf z}_{q}(k))),( {\bf z}_{j}(k) - {\bf x} ) \rangle.
\end{split}
\end{equation}

For $ \overline{g}_{q} $ convex, we have
\begin{equation}
\begin{split}
\langle ( {\bf A}_{fit(i)_n,q}& \nabla{\overline{g}}_{q}({\bf z}_{q}(k))),({\bf z}_{j}(k) - {\bf x} ) \rangle \geq  {\bf A}_{fit(i)_n,q}\overline{g}_{q}({\bf z}_{j}(k)) \\
& - {\bf A}_{fit(i)_n,q}\overline{g}_{q}({\bf x}) - 2 G {\bf A}_{fit(i)n,q} \| {\bf z}_{j}(k)- {\bf z}_{q}(k) \|.
\end{split}
\end{equation}

 
By combining the above, we have 
\begin{equation}\label{o2}
\begin{split}
\langle ( {\bf A}_{fit(i)_n,q}&\nabla{\overline{g}}_{q}({\bf z}_{q}(k))),(\hat{\bar{{\bf z}}}_{j}(k) - {\bf x} ) \rangle \geq \\
&- \| {\bf A}_{fit(i)_n,q}\nabla{\overline{g}}_{q}({\bf z}_{q}(k)) \|  \| \hat{\bar{{\bf z}}}_{j}(k) - {\bf z}_{j}(k) \|\\
& - 2 G {\bf A}_{fit(i)n,q} \| {\bf z}_{j}(k)- {\bf z}_{q}(k) \|\\
& + {\bf A}_{fit(i)_n,q}\overline{g}_{q}({\bf z}_{j}(k)) - {\bf A}_{fit(i)_n,q}\overline{g}_{q}({\bf x}). 
\end{split}
\end{equation}

Therefore, by substituting (\ref{o2}) in (\ref{o1}) we can modify the part of the second term of (\ref{o1}) concerned with the convex functions.

Similarly, for the concave functions part, we have for $ 1 \leq q \leq 2n^{2} $, if $ g_{q} $ is concave then:

Let $ h_{i} = -\overline{g}_{i} $, then
\begin{equation}
\begin{split}
        \langle {\bf A}_{fit,i,q}& \nabla{\overline{g}}_{q}({\bf z}_{q}(k)),(\hat{\overline{z}}_{j}(k)-{\bf x} ) \rangle = \\
        & - \langle {\bf A}_{fit,i,q}\nabla{h}_{q}({\bf z}_{q}(k)),(\hat{\overline{z}}_{j}(k)-{\bf x} ) \rangle \\
        &= \langle {\bf A}_{fit,i,q}\nabla{h}_{q}({\bf z}_{q}(k)),({\bf x}-\hat{\overline{z}}_{j}(k)) \rangle \\
        &= \langle {\bf A}_{fit,i,q}\nabla{h}_{q}({\bf z}_{q}(k)), ({\bf z}_{j}(k)-\hat{\overline{z}}_{j}(k)) \rangle \\
        & + \langle {\bf A}_{fit,i,q}\nabla{h}_{q}({\bf z}_{q}(k)),({\bf x}-{\bf z}_{j}(k)) \rangle
\end{split}
\end{equation}
$ \implies $
\begin{equation}
\begin{split}
        \langle {\bf A}_{fit,i,q}&\nabla{\overline{g}}_{q}({\bf z}_{q}(k)),(\hat{\overline{z}}_{j}(k)-{\bf x} ) \rangle \geq \\
        & - \|{\bf A}_{fit,i,q}\nabla{h}_{q}({\bf z}_{q}(k)) \| \| {\bf z}_{j}(k)-\hat{\overline{z}}_{j}(k)\| \\
        & + \langle {\bf A}_{fit,i,q}\nabla{h}_{q}({\bf z}_{q}(k)),({\bf x}-{\bf z}_{j}(k)) \rangle.
\end{split}
\end{equation}

But $ h_{q} $ is convex, then
\begin{equation}
\begin{split}
        \langle {\bf A}_{fit,i,q}\nabla{h}_{q} & ({\bf z}_{q}(k)), ({\bf x}-{\bf z}_{j}(k)) \rangle  \geq \\ 
        & {\bf A}_{fit,i,q}h_{q}({\bf x})         - {\bf A}_{fit,i,q}h_{q}({\bf z}_{j}(k)) \\
        & - 2 G {\bf A}_{fit,i,q} \| {\bf x} -{\bf z}_{q}(k) \|.
\end{split}
\end{equation}

Then 
{ \begin{equation}
\begin{split}
        \langle {\bf A}_{fit,i,q}\nabla{\overline{g}}_{q} & ({\bf z}_{q}(k)),(\hat{\overline{z}}_{j}(k)- {\bf x} ) \rangle  \geq \\
         & - \|{\bf A}_{fit,i,q}\nabla{h}_{q}({\bf z}_{q}(k)) \| \| {\bf z}_{j}(k)-\hat{\overline{z}}_{j}(k)\| \\ 
        & + {\bf A}_{fit,i,q}h_{q}({\bf x}) - {\bf A}_{fit,i,q}h_{q}({\bf z}_{j}(k)) \\
        & - 2 G {\bf A}_{fit,i,q} \| {\bf x} -{\bf z}_{q}(k) \|.
\end{split}
\end{equation}}

Substituting back $ g_{i} $ we have 
{ \begin{equation}
\begin{split}
        \langle {\bf A}_{fit,i,q}\nabla{\overline{g}}_{q}({\bf z}_{q}(k)) & ,(\hat{\overline{z}}_{j}(k)-x) \rangle   \geq \\
       & - \|{\bf A}_{fit,i,q}\nabla{h}_{q}({\bf z}_{q}(k)) \| \| {\bf z}_{j}(k)-\hat{\overline{z}}_{j}(k)\| \\
        & + {\bf A}_{fit,i,q}\overline{g}_{q}({\bf z}_{j}(k)) - {\bf A}_{fit,i,q}\overline{g}_{q}({\bf x}) \\
        & - 2 G {\bf A}_{fit,i,q} \| {\bf x} -{\bf z}_{q}(k) \|.
\end{split}
\end{equation}}

Thus, also substituting the concave functions part in (\ref{o1}) we get
{ \begin{equation}
\begin{split}
\|& \hat{  \bar{z}}_{j}(k+1)- {\bf x} \| ^{2} \leq  \| \hat{\bar{{\bf z}}}_{j}(k) - {\bf x} \| ^{2} \\ & + 2 \alpha_{k} \sum_{i=1}^{4n^{2}} [{\bf P}]_{ji} \sum_{q \in \Gamma_{i}(k)}   \| {\bf A}_{fit(i)_n,q}\nabla{\overline{g}}_{q}({\bf z}_{q}(k)) \| \| \hat{\bar{{\bf z}}}_{j}(k) - {\bf z}_{j}(k) \| \\
& + 4 \alpha_{k} \sum_{i=1}^{4n^{2}} [{\bf P}]_{ji} \sum_{q \in \Gamma_{i}(k), \overline{g}_{q} convex} G {\bf A}_{fit(i)_n,q}\| {\bf z}_{j}(k) - {\bf z}_{q}(k) \| \\
& + 4 \alpha_{k} \sum_{i=1}^{4n^{2}} [{\bf P}]_{ji} \sum_{q \in \Gamma_{i}(k), \overline{g}_{q} concave} G {\bf A}_{fit(i)_n,q}\| {\bf x} - {\bf z}_{q}(k) \| \\
& -  2 \alpha_{k} \sum_{i=1}^{4n^{2}} [{\bf P}]_{ji} \sum_{q \in \Gamma_{i}(k)} ({\bf A}_{fit(i)_n,q}\overline{g}_{q}({\bf z}_{j}(k)) - {\bf A}_{fit(i)_n,q}\overline{g}_{q}({\bf x}) )  \\
 & + \alpha_{k} ^{2} \| \sum_{i=1}^{4n^{2}} [{\bf P}]_{ji} \sum_{q \in \Gamma_{i}(k)} {\bf A}_{fit(i)_n,q}\nabla{\overline{g}}_{q}({\bf z}_{q}(k)) \| ^{2}.
\end{split}
\end{equation}}

Then by adding and subtracting \\
$ 4 \alpha_{k} \sum_{i=1}^{4n^{2}} [{\bf P}]_{ji} \sum_{q \in \Gamma_{i}(k), \overline{g}_{q} concave}^{|n-s|} G {\bf A}_{fit(i)_n,q}\| {\bf z}_{j}(k) - {\bf z}_{q}(k) \| $, we have
\begin{equation}
\begin{split}
\|& \hat{  \bar{z}}_{j}(k+1)- {\bf x} \| ^{2} \leq  \| \hat{\bar{{\bf z}}}_{j}(k) - {\bf x} \| ^{2} \\ & + 2 \alpha_{k} \sum_{i=1}^{4n^{2}} [{\bf P}]_{ji} \sum_{q \in \Gamma_{i}(k)}   \| {\bf A}_{fit(i)_n,q}\nabla{\overline{g}}_{q}({\bf z}_{q}(k)) \| \| \hat{\bar{{\bf z}}}_{j}(k) - {\bf z}_{j}(k) \| \\
& + 4 \alpha_{k} \sum_{i=1}^{4n^{2}} [{\bf P}]_{ji} \sum_{q \in \Gamma_{i}(k)} G {\bf A}_{fit(i)_n,q}\| {\bf z}_{j}(k) - {\bf z}_{q}(k) \| \\
& + 4 \alpha_{k} \sum_{i=1}^{4n^{2}} [{\bf P}]_{ji} \sum_{q \in \Gamma_{i}(k), \overline{g}_{q} concave} G {\bf A}_{fit(i)_n,q}\| {\bf x} - {\bf z}_{q}(k) \| \\
& - 4 \alpha_{k} \sum_{i=1}^{4n^{2}} [{\bf P}]_{ji} \sum_{q \in \Gamma_{i}(k), \overline{g}_{q} concave} G {\bf A}_{fit(i)_n,q}\| {\bf z}_{j}(k) - {\bf z}_{q}(k) \| \\
& -  2 \alpha_{k} \sum_{i=1}^{4n^{2}} [{\bf P}]_{ji} \sum_{q \in \Gamma_{i}(k)} ({\bf A}_{fit(i)_n,q}\overline{g}_{q}({\bf z}_{j}(k)) - {\bf A}_{fit(i)_n,q}\overline{g}_{q}({\bf x}) )  \\
 & + \alpha_{k} ^{2} \| \sum_{i=1}^{4n^{2}} [{\bf P}]_{ji} \sum_{q \in \Gamma_{i}(k)} {\bf A}_{fit(i)_n,q}\nabla{\overline{g}}_{q}({\bf z}_{q}(k)) \| ^{2}.
\end{split}
\end{equation}

By the triangle difference inequality we get
\begin{equation}
\begin{split}
\| & \hat{  \bar{z}}_{j}(k+1)- {\bf x} \| ^{2}  \leq  \| \hat{\bar{{\bf z}}}_{j}(k) - {\bf x} \| ^{2} \\
& + 2 \alpha_{k} \sum_{i=1}^{4n^{2}} [{\bf P}]_{ji} \sum_{q \in \Gamma_{i}(k)}   \| {\bf A}_{fit(i)_n,q}\nabla{\overline{g}}_{q}({\bf z}_{q}(k)) \| \| \hat{\bar{{\bf z}}}_{j}(k) - {\bf z}_{j}(k) \| \\
& + 4 \alpha_{k} \sum_{i=1}^{4n^{2}} [{\bf P}]_{ji} \sum_{q \in \Gamma_{i}(k)} G {\bf A}_{fit(i)_n,q}\| {\bf z}_{j}(k) - {\bf z}_{q}(k) \| \\
& + 4 \alpha_{k} \sum_{i=1}^{4n^{2}} [{\bf P}]_{ji} \sum_{q \in \Gamma_{i}(k), \overline{g}_{q} concave} G {\bf A}_{fit(i)_n,q}\| {\bf x} - {\bf z}_{j}(k) \|  \\
& -  2 \alpha_{k} \sum_{i=1}^{4n^{2}} [{\bf P}]_{ji} \sum_{q \in \Gamma_{i}(k)} ({\bf A}_{fit(i)_n,q}\overline{g}_{q}({\bf z}_{j}(k)) - {\bf A}_{fit(i)_n,q}\overline{g}_{q}({\bf x}) ) \\ 
& + \alpha_{k} ^{2} \| \sum_{i=1}^{4n^{2}} [{\bf P}]_{ji} \sum_{q \in \Gamma_{i}(k)} {\bf A}_{fit(i)_n,q}\nabla{\overline{g}}_{q}({\bf z}_{q}(k)) \| ^{2}.
\end{split}
\end{equation}

\begin{remark}
But $ \| {\bf x} - {\bf z}_{j}(k) \| < D $ belongs to a bounded space.
\end{remark}

\begin{lemma}\label{L17}
Let assumptions hold. Then for $ 1 \leq j \leq 2n^{2} $ we have

{\scriptsize \begin{equation}
\begin{split}
2 \sum_{k=0}^{k^{'}} & \alpha_{k} \sum_{i=1}^{4n^{2}} [{\bf P}]_{ji} \sum_{q \in \Gamma_{i}(k)} ({\bf A}_{fit(i)_n,q}\overline{g}_{q}({\bf z}_{j}(k)) -  {\bf A}_{fit(i)_n,q}g_{q}({\bf x}^{*}) ) \leq \\ 
 & 4 \sum_{k=0}^{k^{'}}\alpha_{k} \sum_{i=1}^{4n^{2}} [{\bf P}]_{ji} \sum_{q \in \Gamma_{i}(k), \overline{g}_{q} concave} G {\bf A}_{fit(i)_n,q}\| {\bf x}^{*} - {\bf z}_{j}(k) \| 
  \| \hat{\bar{{\bf z}}}_{j}(0) - {\bf x}^{*} \| ^{2} \\
  & - \| \hat{\bar{{\bf z}}}_{j}(k^{'}) - {\bf x}^{*} \| ^{2} \\
 & + 4 \sum_{k=0}^{k^{'}}\alpha_{k} \sum_{i=1}^{4n^{2}} [{\bf P}]_{ji} \sum_{q \in \Gamma_{i}(k)} G {\bf A}_{fit(i)_n,q}\| {\bf z}_{j}(k) - {\bf z}_{q}(k) \| \\
 & + \sum_{k=0}^{k^{'}}2 \alpha_{k} \sum_{i=1}^{4n^{2}} [{\bf P}]_{ji} \sum_{q \in \Gamma_{i}(k)}\| {\bf A}_{fit(i)_n,q}\nabla{\overline{g}}_{q}({\bf z}_{q}(k)) \| \| \hat{\bar{{\bf z}}}_{j}(k) - {\bf z}_{j}(k) \| \\
& \hspace{-1cm} + 4 \sum_{k=0}^{k^{'}} \alpha_{k} \sum_{i=1}^{4n^{2}} [{\bf P}]_{ji} \sum_{q \in \Gamma_{i}(k)} G {\bf A}_{fit(i)_n,q}\| {\bf z}_{j}(k)- {\bf z}_{q}(k) \| \\
& + 4 \sum_{k=0}^{k^{'}}\alpha_{k} ^{2} \| \sum_{i=1}^{4n^{2}} [{\bf P}]_{ji} \sum_{q \in \Gamma_{i}(k)} {\bf A}_{fit(i)_n,q}\nabla{\overline{g}}_{q}({\bf z}_{q}(k)) \| ^{2}.
\end{split}
\end{equation}}
\end{lemma}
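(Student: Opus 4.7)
The plan is to set $\mathbf{x} = \mathbf{x}^*$ in the per-iteration bound \eqref{eqn_L15} of Lemma~\ref{L16} and then sum the resulting inequality from $k = 0$ to $k = k'$, letting the squared-distance terms telescope. Concretely, with $\mathbf{x} = \mathbf{x}^*$ in \eqref{eqn_L15}, I move the difference $\|\hat{\bar{\mathbf{z}}}_{j}(k+1) - \mathbf{x}^*\|^2 - \|\hat{\bar{\mathbf{z}}}_{j}(k) - \mathbf{x}^*\|^2$ to the left-hand side and the gradient/consensus/noise cross terms to the right-hand side, then divide by two the sign so that the convex-combination term
\begin{equation*}
2\alpha_k \sum_{i=1}^{4n^{2}} [{\bf P}]_{ji} \sum_{q \in \Gamma_{i}(k)} \bigl({\bf A}_{fit(i)_n,q}\overline{g}_{q}({\bf z}_{j}(k)) - {\bf A}_{fit(i)_n,q}\overline{g}_{q}({\bf x}^*)\bigr)
\end{equation*}
appears on the left-hand side of the isolated inequality, just as it does in Lemma~\ref{L14} but now over the pre-consensus range.

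Next I sum over $k = 0, 1, \ldots, k'$. The difference $\|\hat{\bar{\mathbf{z}}}_{j}(k+1)-\mathbf{x}^*\|^2-\|\hat{\bar{\mathbf{z}}}_{j}(k)-\mathbf{x}^*\|^2$ telescopes to $\|\hat{\bar{\mathbf{z}}}_{j}(k'+1)-\mathbf{x}^*\|^2-\|\hat{\bar{\mathbf{z}}}_{j}(0)-\mathbf{x}^*\|^2$; absorbing the shift into the index $k'$ (or upper-bounding $\|\hat{\bar{\mathbf{z}}}_{j}(k'+1)-\mathbf{x}^*\|^2$ by $\|\hat{\bar{\mathbf{z}}}_{j}(k')-\mathbf{x}^*\|^2$ plus a lower-order term) yields exactly the $\|\hat{\bar{\mathbf{z}}}_{j}(0) - \mathbf{x}^*\|^2 - \|\hat{\bar{\mathbf{z}}}_{j}(k') - \mathbf{x}^*\|^2$ contribution on the right. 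The remaining terms in \eqref{eqn_L15}, namely the inexact-gradient term weighted by $\|\hat{\bar{\mathbf{z}}}_{j}(k)-\mathbf{z}_{j}(k)\|$, the two consensus-gap terms weighted by $\|\mathbf{z}_{j}(k)-\mathbf{z}_{q}(k)\|$ (one arising directly and one arising from the triangle-inequality rearrangement introduced in Lemma~\ref{L16} to convert $\|\mathbf{x}^* - \mathbf{z}_{q}(k)\|$ into $\|\mathbf{x}^*-\mathbf{z}_{j}(k)\|+\|\mathbf{z}_{j}(k)-\mathbf{z}_{q}(k)\|$), and the $\alpha_k^2$ squared-gradient term, simply accumulate into the stated right-hand side once summed from $k = 0$ to $k'$.

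The concave-function correction term
$4\alpha_k \sum_i [{\bf P}]_{ji}\sum_{q\in\Gamma_i(k),\bar g_q \text{ concave}} G {\bf A}_{fit(i)_n,q}\|\mathbf{x}^* - \mathbf{z}_{j}(k)\|$
is kept on the right-hand side of the per-step inequality after substituting $\mathbf{x}=\mathbf{x}^*$; moving it to the left-hand side before summing produces the displayed lead term on the LHS of the lemma. So the two LHS contributions of the claimed inequality are simply (i) the telescoped-style convex/concave decomposition term and (ii) the concave correction moved across the inequality.

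The main obstacle, such as it is, is bookkeeping: (a) keeping track of the sign conventions introduced by the $\bar g_q = -g_q$ substitution in Lemma~\ref{L16} so that the concave and convex parts combine into a single $\bar g_q$-valued sum on the LHS, and (b) making sure that the index shift $k'+1 \mapsto k'$ on the telescoped distance is absorbed without introducing extra $\alpha_{k'}G$ terms that are not already present. Neither is deep: (a) is handled by the definition of $\bar g_i$ fixed in Section~B, and (b) is handled by noting that any residual boundary term is majorized by a single added copy of an existing summand. No new inequality beyond Lemma~\ref{L16} and the triangle inequality is needed.
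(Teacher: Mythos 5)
Your proposal is correct and follows essentially the same route as the paper: the paper's proof of this lemma is precisely to set $ {\bf x} = {\bf x}^{*} $ in inequality \eqref{eqn_L15} of Lemma~\ref{L16} and sum from $ k=0 $ to $ k^{'} $, letting the squared-distance terms telescope. Your additional bookkeeping about the concave correction term and the boundary index is consistent with (indeed more explicit than) what the paper records.
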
 
\vspace{-0.25cm}
\textbf{Proof:} 
\\
Then summing the inequality (\ref{eqn_L15}) in Lemma~\ref{L16} from $ k=0 $ to $ k^{'} $ and taking $ {\bf x} = {\bf x}^{*} $, the optimal value, we get the desired result.
\qed

\subsection{ Overall Convergence Rate }

Then to find the convergence rate to any iteration $ K $ we add the terms from $ k = k^{'} $ to $ K $ of (\ref{eqn_L13_1}) to get
{\footnotesize \begin{equation}\label{a11}
\begin{split}
 2 & \sum_{k=0}^{K}  \alpha_{k} \sum_{i=1}^{4n^{2}} \omega_{i} [{\bf P}]_{ji} 
 (f({\bf z}_{j}(k)) -  f({\bf x}^{*})) \\
 & = 2 \sum_{k=0}^{k^{'}} \alpha_{k} \sum_{i=1}^{4n^{2}} [{\bf P}]_{ji} \sum_{q \in \Gamma_{i}(k)}  ({\bf A}_{fit(i)_n,q}\overline{g}_{q}({\bf z}_{j}(k))  -  {\bf A}_{fit(i)_n,q}g_{q}({\bf x}^{*}) ) \\ & + 2 \sum_{k=k^{'}}^{K} \alpha_{k} \sum_{i=1}^{4n^{2}} [{\bf P}]_{ji}  (f({\bf z}_{j}(k)) -  f({\bf x}^{*})) \\ & \leq 4 \sum_{k=0}^{k^{'}}\alpha_{k} \sum_{i=1}^{4n^{2}} [{\bf P}]_{ji} \sum_{q \in \Gamma_{i}(k), \overline{g}_{q} concave} G {\bf A}_{fit(i)_n,q}\| {\bf x}^{*} - {\bf z}_{j}(k) \| \\ 
 & \hspace{1cm} +  \| \hat{\bar{{\bf z}}}_{j}(0) - {\bf x}^{*} \| ^{2} - \| \hat{\bar{{\bf z}}}_{j}(K) - {\bf x}^{*} \| ^{2} \\
 & + \sum_{k=0}^{K}2 \alpha_{k} \sum_{i=1}^{4n^{2}} [{\bf P}]_{ji} \sum_{q \in \Gamma_{i}(k)}\| {\bf A}_{fit(i)_n,q}\nabla{\overline{g}}_{q}({\bf z}_{q}(k)) \| \| \hat{\bar{{\bf z}}}_{j}(k) - {\bf z}_{j}(k) \| \\
& + 4 \sum_{k=0}^{K} \alpha_{k} \sum_{i=1}^{4n^{2}} [{\bf P}]_{ji} \sum_{q \in \Gamma_{i}(k)} G {\bf A}_{fit(i)_n,q}\| {\bf z}_{j}(k)- {\bf z}_{q}(k) \| \\
 & \sum_{k=0}^{K}\alpha_{k} ^{2} \| \sum_{i=1}^{4n^{2}} [{\bf P}]_{ji} \sum_{q \in \Gamma_{i}(k)} {\bf A}_{fit(i)_n,q}\nabla{\overline{g}}_{q}({\bf z}_{q}(k)) \| ^{2}.
\end{split}
\end{equation}}


After elaborating more on the change $ \| \hat{\bar{{\bf z}}}_{j}(K) - {\bf x} \| ^{2} =  \|  \hat{\bar{{\bf z}}}_{j}(k^{'}) - {\bf x} \| ^{2}  $ before and after consensus using Lemma~\ref{L14} and the corresponding result in Lemma~\ref{L10} that are based on the characteristics of the global function $ f $ and local functions $ g_{i} $, respectively. And summing those changes from $ k=0 $ to $ k = k^{'} $ and from $ k= k^{'} $ to $ \infty $ separately, and knowing that $ \sum_{k=0}^{K} \alpha_{k}(f_{min} - f({\bf x} ^{*})) \leq \sum_{k=0}^{K} \alpha_{k} (f({\bf z}_{i}(k)) -f({\bf x}^{*}))$, where $  f_{min} = min_{0 \leq k \leq K}f({\bf z}_{i}(k)) $, using the bounds on $ \| \nabla{g}_{i} \| $, $ E_{k} $ and $ D_{k} $ on Assumption 1 (d), (\ref{Ek}) and (\ref{Dk}) respectively, we get the following convergence rate for a global convex function $ f $ composed of only convex local functions $ \bar{g}_{i} $:

\vspace{0.5cm}

\begin{remark}

We will have a follow up paper for a detailed discussion of the analysis of this convergence rate, while we restrain our discussion here to the results obtained only.

\end{remark}

\vspace{-1cm}

\begin{equation}\label{Convergence_Rate}
\begin{split}
 f_{min} - f({\bf x} ^{*})  \leq \frac{{\bf A}_{*}}{ \omega_{i}\sum_{k=0}^{K}\alpha_{k}} + \frac{{\bf B}_{*} \sum_{k=0}^{K} \alpha_{k}^{2}}{ \omega_{i} \sum_{k=0}^{K}\alpha_{k}}
\end{split}
\end{equation}

where 
\begin{equation}\label{eqn_A*}
\begin{split}
 {\bf A}_{*}& =\frac{1}{2 \| {\bf P} \|_{2,\infty}} (dist^{2}( \hat{\bar{{\bf z}}}(0), \mathcal{X} ^{*} ) - \frac{1}{2 \| {\bf P} \|_{2,\infty}} dist^{2}( \hat{\bar{{\bf z}}}(k), \mathcal{X} ^{*} ) \\ 
 & + \frac{5}{2}\frac{ \| \hat{\bf A} \|_{2,\infty}   \Gamma \| {\bf B} \|_{2,\infty} \sqrt{n} F }{1- \gamma^{2}} \sum_{l=1}^{4n} \| {\bf z}_{l}(0) \| 
\end{split}
\end{equation} 
and 
\begin{equation}\label{eqn_B*}
\begin{split}
 {\bf B}_{*} & = \frac{7}{2}  \| \hat{\bf A} \|_{2,\infty} \| {\bf B} \|_{2,\infty} ^{2} n F^{2} (\| {\bf P} \|_{2,\infty} \|\| \hat{\bf A} \|_{2,\infty} + \frac{10}{7}) \\
 & 4 n \| \hat{\bf A} \|_{2, \infty} \| {\bf B} \|_{2,\infty} ^{2} n F^{2} \frac{\Gamma}{1-\gamma} ( \| \hat{\bf A} \|_{2,\infty} + 4) \\
 & +  3 \| \hat{\bf A} \|_{2, \infty}  \| {\bf B} \|_{2,\infty} \sqrt{n} F \Gamma \sum_{l=1}^{4n} \| {\bf z}_{l}(0) \|  
\end{split}
\end{equation}

where we used {\footnotesize $ G= \sqrt{n} \| {\bf B} \|_{2,\infty} F $}.
For $ \alpha_{k}=\frac{1}{\sqrt{k}} $ then the convergence rate is a scaled coefficient adequate to the coding scheme/network topology of rate $ \mathbf{O(\frac{\ln{k}}{\sqrt{k}})} $.

\begin{remark}
We took the bound for \\
$ \sum_{q \in \Gamma_{i}(k)}\| {\bf A}_{fit(i)_n,q}\nabla{\overline{g}}_{q}({\bf z}_{q}(k)) \| $ to be $ \| \hat{\bf A} \|_{2, \infty}  \| {\bf B} \|_{2,\infty} \sqrt{n} F $ for $ k < k^{'} $ (before consensus) due to the inexactness of the global function gradient defined by this term where we are unable to use the bound of $ \nabla{f} $. While for $ k \geq k^{'} $ (in the consensus region) we can have a stricter bound for $ \sum_{q \in \Gamma_{i}(k)}\| {\bf A}_{fit(i)_n,q}\nabla{\overline{g}}_{q}({\bf z}_{q}(k)) \| $ which is that of $ \nabla{f} $, that is $ n F $ (i.e, $ \nabla{f}=\sum_{i=1}^{n}\nabla{f}_{i} $ and $ \nabla{f}_{i} \leq F $).
While in our analysis of the overall convergence rate concerning both regions we used the more relaxed bound dependent on $ \hat{\bf A} $ and $ {\bf B} $.
\end{remark}

However, there is an accumulation term if the global function $ f $ is formed also of concave functions that add up from $ k = 0 $ until consensus point $ k = K^{'} $ which will increase the convergence rate by a supplementary value less than $ H $ where 
\begin{equation}\label{Bound}
    \begin{split}
        H =  4 \sum_{k=0}^{k^{'}}\alpha_{k} \| {\bf P} \|_{2,  \infty} \| \hat{\bf A} \|_{2, \infty}  \| {\bf B} \|_{2,\infty} \sqrt{n} F  \sqrt{2} R.
    \end{split}
\end{equation}

Here, we took the space $ X $ where the estimates are defined to be of radius $ \sqrt{2} R $ where $ R = \max_{{\bf z}_{1}, {\bf z}_{2}} \| {\bf z}_{1} - {\bf z}_{2} \| $, i.e., $ \| {\bf x}^{*} - {\bf z}_{j}(k) \| \leq \sqrt{2} R $.
\begin{remark}
We see this specific accumulation bound for a global convex function if it is composed of local coded functions $ \bar{g}_{i} $ that are either convex or concave for $ 1 \leq i \leq 2n $. And this bound is an effect of concavity of $ \bar{g}_{i} $ here. We can generalize this bound to any combinations of local coded functions where it is the result of the non-convexity of those local functions. Thus, the convergence rate is the least for global function $ f $ composed of only convex coded local functions $ \bar{g}_{i} $ for $ 1 \leq i \leq 2n $ as is (\ref{Convergence_Rate}), where it lacks the supplementary term (\ref{Bound}). N.B. $ \bar{g}_{i} $ convex for $ 1\leq i \leq 2n $ means that $ g_{i} $ convex if $ 1 \leq i \leq n $ and $ -g_{i-n} $ convex for $ n+1 \leq i \leq 2n $. That is, $ g_{i} $ convex if the corresponding coefficients of node $ i $ used in $ \hat{\bf A} $ are positive and $ g_{i} $ concave if the corresponding coefficients of node $ i $ used in $ \hat{\bf A} $ are negative and $ g_{i} $ linear if the corresponding coefficients of node $ i $ are positive and negative.
\end{remark}

Therefore, the convergence rate of this algorithm is a scaled version of the convergence rate of the distributed gradient descent algorithm.
Thus, we can perfectly adjust this scaling factor according to a desired coding scheme so that the algorithm is tuned to perform better than DGD by that factor yet still under $ \mathbf{O(\frac{\ln{k}}{\sqrt{k}})} $ for $ \alpha_{k}= \frac{1}{\sqrt{k}} $.

However, from the norm inequality
\begin{equation}
\begin{split}
 \| {\bf M} \|_{F} \leq \sqrt{n} \| {\bf M} \|_{2, \infty},  
\end{split}    
\end{equation}
we have the minimum of $ \| {\bf M} \|_{2, \infty} $ attained when $ {\bf M}= \frac{1}{n}\mathbf{1}^{T}\mathbf{1} $  where $ \| {\bf M} \|_{F}=1 $ and $ \| {\bf M} \|_{2 ,\infty} = \frac{\| {\bf M} \|_{F}}{\sqrt{n}}= \frac{1}{\sqrt{n}} $. 

But $ \hat{\bf A} $ and $ {\bf P} $ are stochastic matrices (particularly row normalized). The first relative to $ n $ and the latter relative to $ 2n $ although the sizes are $ 2n \times 2n $ and $ 4n \times 4n $ respectively. (i.e., $ {\bf P} $ involved is for  $ 1 \leq i \leq 2n $ rows and the corresponding nonzero columns are $ 1 \leq j \leq 2n $. Thus, when we use $ \| {\bf P} \|_{2,\infty} $ we mean the norm restricted to this part of $ {\bf P} $ which is row stochastic).
Therefore, \\
$ \frac{1}{\sqrt{n}} \leq \| \hat{\bf A} \|_{2 , \infty} \leq 1 $, and $ \frac{1}{\sqrt{2n}} \leq \| {\bf P} \|_{2, \infty} \leq 1 $.

Thus the scaling in (\ref{eqn_A*}) and (\ref{eqn_B*}) can be adjusted to be less than one. Hence, a better convergence rate than DGD. 

Moreover, if we implement our algorithm but with no coding (no redundancy that is, with local functions $ f_{i} $) where we have $ \| {\bf B} \|_{2,\infty} =1 $ and $ \| \hat{\bf A} \|_{2, \infty} = \frac{1}{\sqrt{n}} $, we are still able to achieve a better convergence rate than DGD although our updating matrix can be of a larger size (i.e., $ 4n $ rather than $ n $). We can reach that by suitably choosing the updating matrix $ {\bf Q}_{\epsilon , {\bf I}} $ so that its limit $ {\bf P} $ has a value of $ \| {\bf P} \|_{2, \infty} $ as close  as possible as $  \frac{1}{\sqrt{2n}} $.

\section{ Appendix F: Preliminary Lemmas and Theorems }

\subsection{Proof of Proposition~1}\label{ProofL1}

\begin{proof}

$ {\bf Q} $ is a block lower triangular matrix, its spectrum is $ \sigma({\bf Q}) = \sigma( \hat{\bf A}) \cup \sigma(\hat{\bf D}) $.

$ \left(\begin{array}{c}
{\bf 1}_{2n^{2} \times 1} \end{array}\right) $ is easily seen to be the only right eigenvector of $ \hat{\bf A} $ for the eigenvalue $ 1 $. Then geometric multiplicity of eigenvalue $ 1 $ is $ 1 $ for matrix $ \hat{\bf A} $.

$ \bar{\bf v}^{+} $ and $ \bar{\bf v}^{-} $ are $ 2n^{2} \times 1 $ vectors where
{\scriptsize\begin{equation}
 \bar{\bf v}^{+} (j) = \begin{cases}  {\bf v}^{+} (j - (j \ div \ 2n)*n)   \ \ \  & if \ 1 \leq j \mod 2n \leq n \\
 0  \ \ \ & otherwise \end{cases}
\end{equation}}
and the $ 2n^{2} \times 1 $ vector
{\scriptsize \begin{equation}
 \bar{\bf v}^{-} (j) = \begin{cases} {\bf v}^{-}(j - (j \ div \ 2n + 1 )*n)    \ \ \  & if \  n + 1 \leq j \mod 2n \leq 2 n \\
 0  \ \ \ & otherwise \end{cases}
\end{equation}}

And $ {\bf v}^{+} $ and $ {\bf v}^{-} $ are the right eigenvector of eigenvalue $ 1 $ for the column stochastic matrices $ \hat{\bf D}^{++} $ or $ \hat{\bf D}^{--} $, respectively (i.e., $ \hat{\bf D}^{++} {\bf v}^{+} =  {\bf v}^{+} $ and $ \hat{\bf D}^{--} {\bf v}^{-} = {\bf v}^{-} $), with all values positive and scaled such that $ {\bf 1}_{ n^{2} \times 1} ^{T} {\bf v}^{+} = {\bf 1}_{ n^{2} \times 1} ^{T} {\bf v}^{-} = 1 $. Where $ {\bf v}^{+}(i) = {\bf v}^{+}(j) $ for $ (i-j) \mod n = 0 $ and Where $ {\bf v}^{-}(i) = {\bf v}^{-}(j) $ for $ (i-j) \mod n = 0 $.

Then $ \bar{\bf v}^{+} $ and $ \bar{\bf v}^{-} $ are the only two independent right eigenvectors for $ \hat{\bf D} $ of eigenvalue $ 1 $. Thus, the geometric multiplicity of eigenvalue $ 1 $ is $ 2 $ for matrix $ \hat{\bf D} $. 

But matrices $ \hat{\bf A} $ and $ \hat{\bf D} $ are stochastic matrices, then the geometric multiplicities of their eigenvalues are equal to the algebraic multiplicities. So, Then algebraic multiplicity of eigenvalue $ 1 $ is $ 1 $ for matrix $ \hat{\bf A} $. And the algebraic multiplicity of eigenvalue $ 1 $ is $ 2 $ for matrix $ \hat{\bf D} $. And all of their other eigenvalues lie in the unit circle with their geometric multiplicities equal to their algebraic multiplicities. 

But as mentioned earlier, $ {\bf Q} $ is a block lower triangular matrix, its spectrum is $ \sigma({\bf Q}) = \sigma( \hat{\bf A}) \cup \sigma(\hat{\bf D}) $. 
And matrix $ \hat{\bf A} $ and the matrix $ \hat{\bf D} $ are row and column stochastic, respectively, so their spectral radii satisfy $ \rho( \hat{\bf A}) = \rho(\hat{\bf D}) = 1 $.

We have $ \rho( \hat{\bf A}) = 1 $ is a simple eigenvalue of $ \hat{\bf A} $ ( i.e., algebraic multiplicity is equal to geometric multiplicity is equal to $ 1 $). Thus $ \rho( \hat{\bf A})=\rho(\hat{\bf D})=1 $ is a semi-simple eigenvalue  of $ \hat{\bf D} $ (i.e., algebraic multiplicity is equal to geometric multiplicity is equal to $ 2 $). Then $ 1 $ is an eigenvalue of $ {\bf Q} $ of algebraic multiplicity equals to $ 3 $. Moreover, the rank of $ {\bf Q} - {\bf I} = 4n - 3 $, so the geometric multiplicity of eigenvalue $ 1 $ is equal to $ 3 $. Thus, eigenvalue $ 1 $ is a semi-simple eigenvalue of $ {\bf Q} $ of multiplicity $ 3 $.

Matrix $ {\bf Q} $ has $ 3 $ independent eigenvectors for eigenvalue $ 1 $.

$ \left(\begin{array}{c}
{\bf 1}_{2n^{2} \times 1} \\ {\bf 0}_{2n^{2} \times 1} \end{array}\right) $ and $ \left(\begin{array}{c}
{\bf 0}_{2n^{2} \times 1} \\ \bar{\bf v}^{+} 
\end{array}\right) $ and $ \left(\begin{array}{c}
{\bf 0}_{2n^{2} \times 1} \\ \bar{\bf v}^{-}\end{array}\right) $, where the $ 2n^{2} \times 1 $ vector
{\small \begin{equation}
 \bar{\bf v}^{+} (j) = \begin{cases}  {\bf v}^{+} (j - (j \ div \ 2n)*n)   \ \ \  & if \ 1 \leq j \mod 2n \leq n \\
 0  \ \ \ & otherwise \end{cases}
\end{equation}}
and the $ 2n^{2} \times 1 $ vector
{\small \begin{equation}
 \bar{\bf v}^{-} (j) = \begin{cases} {\bf v}^{-}(j - (j \ div \ 2n + 1 )*n)    \ \ \  & if \  n + 1 \leq j \mod 2n \leq 2 n \\
 0  \ \ \ & otherwise \end{cases}
\end{equation}}

are the $ 3 $ independent right eigenvector of $ {\bf Q} $ for the eigenvalue $ 1 $.

\end{proof}

\subsection{Proof of Lemma~1} \label{ProofLAfit}

\begin{proof}
Since $ \bar{\bf A}^{l} $ is chosen to be scrambling then $ \hat{\bf A}^{2} $ is scrambling. Thus, $ \hat{\bf A} $ is stochastic indecomposable and aperiodic. Then $ \lim_{n \rightarrow \infty} \hat{\bf A}^{n} = \mathbf{1}\pi^{T} $. And all rows of  $ \lim_{n \rightarrow \infty} \hat{\bf A}^{n} $ are the same.

$\hat{\bf A} =
\begin{pmatrix}
    \diagentry{\bar{\bf A}^{1}_{+}} {\bf E}_{+} {\bf E}_{+} {\bf E}_{+} \ldots\\
    \diagentry{\bar{\bf A}^{1}_{-}}  {\bf E}_{-} {\bf E}_{-} {\bf E}_{-} \ldots\\
    {\bf E}_{+}\diagentry{\bar{\bf A}^{2}_{+}} {\bf E}_{+} {\bf E}_{+} \ldots\\
    {\bf E}_{-}\diagentry{\bar{\bf A}^{2}_{-}} {\bf E}_{-} {\bf E}_{-} \ldots \\
    &&\diagentry{\xddots}\\
    &&\diagentry{\xddots}\\
    {\bf E}_{+} {\bf E}_{+} {\bf E}_{+} \ldots \diagentry{\bar{\bf A}^{n}_{+}}\\
    {\bf E}_{-} {\bf E}_{-} {\bf E}_{-} \ldots \diagentry{\bar{\bf A}^{n}_{-}}
\end{pmatrix}$  
and each subblock is an $ n \times 2 n $ matrix.

Where if we choose $ \bar{\bf A}^{l}=\bar{\bf A}^{l}_{+} = \bar{\bf A}^{l}_{-} $  for $ 1 \leq l \leq n $, then $ {\bf E} = {\bf E}_{+} =  {\bf E}_{-} $ and

$\hat{\bf A} =
\begin{pmatrix}
    \diagentry{\bar{\bf A}^{1}} {\bf E} {\bf E} {\bf E} \ldots\\
    \diagentry{\bar{\bf A}^{1}}  {\bf E} {\bf E} {\bf E} \ldots\\
    {\bf E}\diagentry{\bar{\bf A}^{2}} {\bf E} {\bf E} \ldots\\
    {\bf E}\diagentry{\bar{\bf A}^{2}} {\bf E} {\bf E} \ldots \\
    &&\diagentry{\xddots}\\
    &&\diagentry{\xddots}\\
    {\bf E} {\bf E} {\bf E} \ldots \diagentry{\bar{\bf A}^{n}}\\
    {\bf E} {\bf E} {\bf E} \ldots \diagentry{\bar{\bf A}^{n}}
\end{pmatrix}$  
Let us denote by $ \bar{\bar{\bf A}}^{l} = \begin{pmatrix}
    \diagentry{\bar{\bf A}^{l}} \\
    \diagentry{\bar{\bf A}^{l}} 
    \end{pmatrix} = \begin{pmatrix}
    \diagentry{\hat{\bar{\bf A}}^{l}_{1} \hat{\bar{\bf A}}^{l}_{2}} \\
    \diagentry{\hat{\bar{\bf A}}^{l}_{1} \hat{\bar{\bf A}}^{l}_{2}} 
    \end{pmatrix}$
    
and $ \bar{\bar{\bf E}} = \begin{pmatrix}
    \diagentry{{\bf E}} \\
    \diagentry{{\bf E}} 
    \end{pmatrix}= \begin{pmatrix}
    {\bar{\bf E}  \ \mathbf{0}_{n \times n}} \\
    {\bar{\bf E}  \ \mathbf{0}_{n \times n}} 
    \end{pmatrix}$

Then 
$\hat{\bf A} =
\begin{pmatrix}
    {\hat{\bar{\bf A}}^{1}_{1} \hat{\bar{\bf A}}^{1}_{2}} {\bar{\bf E}} \ \mathbf{0}_{n \times n} {\bar{\bf E}} \ \mathbf{0}_{n \times n} {\bar{\bf E}} \ \mathbf{0}_{n \times n} \ldots\\
    {\hat{\bar{\bf A}}^{1}_{1}  \hat{\bar{\bf A}}^{1}_{2}} {\bar{\bf E}} \ \mathbf{0}_{n \times n} {\bar{\bf E}} \ \mathbf{0}_{n \times n} {\bar{\bf E}} \ \mathbf{0}_{n \times n} \ldots\\
    {\bar{\bf E}} \ \mathbf{0}_{n \times n} {\hat{\bar{\bf A}}^{2}_{1} \hat{\bar{\bf A}}^{2}_{2}} {\bar{\bf E}} \ \mathbf{0}_{n \times n} {\bar{\bf E}} \ \mathbf{0}_{n \times n} \ldots \\
     {\bar{\bf E}} \ \mathbf{0}_{n \times n} {\hat{\bar{\bf A}}^{2}_{1} \hat{\bar{\bf A}}^{2}_{2}} {\bar{\bf E}} \ \mathbf{0}_{n \times n} {\bar{\bf E}} \ \mathbf{0}_{n \times n} \ldots\\
    &&\diagentry{\xddots}\\
    &&\diagentry{\xddots}\\
    {\bar{\bf E}} \ \mathbf{0}_{n \times n}  {\bar{\bf E}} \ \mathbf{0}_{n \times n} {\bar{\bf E}} \ \mathbf{0}_{n \times n} \ldots {\hat{\bar{\bf A}}^{n}_{1} \hat{\bar{\bf A}}^{n}_{2}}\\
    {\bar{\bf E}} \ \mathbf{0}_{n \times n}  {\bar{\bf E}} \ \mathbf{0}_{n \times n} {\bar{\bf E}} \ \mathbf{0}_{n \times n} \ldots {\hat{\bar{\bf A}}^{n}_{1} \hat{\bar{\bf A}}^{n}_{2}}
\end{pmatrix}$  

and

$\hat{\bf A}^{2} =
\begin{pmatrix}
    { \mathcal{\bf A^{(2)}}^{1}_{1} \mathcal{\bf A^{(2)}}^{1}_{2}} \mathbf{\bf E^{(2)}}_{12} \ \mathbf{0}_{n \times n} \mathbf{\bf E^{(2)}}_{13}  \ \mathbf{0}_{n \times n} \mathbf{\bf E^{(2)}}_{14}  \ \mathbf{0}_{n \times n} \ldots\\
    { \mathcal{\bf A^{(2)}}^{1}_{1}  \mathcal{\bf A^{(2)}}^{1}_{2}} \mathbf{\bf E^{(2)}}_{12}  \ \mathbf{0}_{n \times n} \mathbf{\bf E^{(2)}}_{13}   \ \mathbf{0}_{n \times n} \mathbf{\bf E^{(2)}}_{14}  \ \mathbf{0}_{n \times n} \ldots\\
    \mathbf{\bf E^{(2)}}_{21}  \ \mathbf{0}_{n \times n} { \mathcal{\bf A^{(2)}}^{2}_{1}  \mathcal{\bf A^{(2)}}^{2}_{2}} \mathbf{\bf E^{(2)}}_{23}  \ \mathbf{0}_{n \times n} \mathbf{\bf E^{(2)}}_{24}  \ \mathbf{0}_{n \times n} \ldots\\
    \mathbf{\bf E^{(2)}}_{21}  \ \mathbf{0}_{n \times n} { \mathcal{\bf A^{(2)}}^{2}_{1}  \mathcal{\bf A^{(2)}}^{2}_{2}} \mathbf{\bf E^{(2)}}_{23}  \ \mathbf{0}_{n \times n} \mathbf{\bf E^{(2)}}_{24}   \ \mathbf{0}_{n \times n}\ldots\\
    &&\diagentry{\xddots}\\
    &&\diagentry{\xddots}\\
    \mathbf{\bf E^{(2)}}_{n1}  \ \mathbf{0}_{n \times n} \mathbf{\bf E^{(2)}}_{n2}  \ \mathbf{0}_{n \times n} \mathbf{\bf E^{(2)}}_{n3}  \ \mathbf{0}_{n \times n} \ldots \diagentry{ \mathcal{\bf A^{(2)}}^{n}_{1}  \mathcal{\bf A^{(2)}}^{n}_{2}}\\
    \mathbf{\bf E^{(2)}}_{n1}  \ \mathbf{0}_{n \times n} \mathbf{\bf E^{(2)}}{n2}  \ \mathbf{0}_{n \times n} \mathbf{\bf E^{(2)}}{n3}  \ \mathbf{0}_{n \times n} \ldots \diagentry{ \mathcal{\bf A^{(2)}}^{n}_{1}  \mathcal{\bf A^{(2)}}^{n}_{2}}
\end{pmatrix}$  

where $  \mathcal{\bf A^{(2)}}^{l}_{1} = (\hat{\bar{\bf A}}^{l}_{1})^{2} + \hat{\bar{\bf A}}^{l}_{2}\hat{\bar{\bf A}}^{l}_{1} + (n-1) {\bar{\bf E}}^{2} $ and $  \mathcal{\bf A^{(2)}}^{l}_{2} = (\hat{\bar{\bf A}}^{l}_{2})^{2} + \hat{\bar{\bf A}}^{l}_{1}\hat{\bar{\bf A}}^{l}_{2} + (n-1) {\bar{\bf E}}^{2} $.
\\
$ \mathbf{\bf E^{(2)}}_{ij} = \hat{\bar{\bf A}}^{i}_{1} {\bar{\bf E}} + \hat{\bar{\bf A}}^{i}_{2}{\bar{\bf E}} + {\bar{\bf E}} \hat{\bar{\bf A}}^{j}_{1} + {\bar{\bf E}} \hat{\bar{\bf A}}^{j}_{1}  + (n-2) {\bar{\bf E}}^{2} $.
Since $ \bar{\bf E} $ is diagonal then  $ {\bar{\bf E}} \hat{\bar{\bf A}}^{j}_{1} \tilde{=} \hat{\bar{\bf A}}^{j}_{1} $. Since all these matrices are nonnegative then $  \mathcal{\bf A^{(2)}}^{l}_{1} = (\hat{\bar{\bf A}}^{l}_{1})^{2} + {\bf \Lambda}^{(2)}_{1} $, $  \mathcal{\bf A^{(2)}}^{l}_{2} =  (\hat{\bar{\bf A}}^{l}_{2})^{2} + {\bf \Lambda}^{(2)}_{2} $ and $ \mathbf{\bf E^{(2)}}_{ij} \tilde{=}  \hat{\bar{\bf A}}^{j}_{1} + {\bf \Lambda}^{(2)}_{3}  $ where $  {\bf \Lambda}^{(2)}_{1} $, $ {\bf \Lambda}^{(2)}_{2} $ and $ {\bf \Lambda}^{(2)}_{3} $ are nonnegative.

And $\hat{\bf A}^{n+1} = $
\\
$\begin{pmatrix}
 { \mathcal{\bf A^{(n+1)}}^{1}_{1} \mathcal{\bf A^{(n+1)}}^{1}_{2}} \mathbf{\bf E^{(n+1)}}_{12} \ \mathbf{0}_{n \times n} \mathbf{\bf E^{(n+1)}}_{13}  \ \mathbf{0}_{n \times n} \mathbf{\bf E^{(n+1)}}_{14}  \ \mathbf{0}_{n \times n} \ldots\\
    { \mathcal{\bf A^{(n+1)}}^{1}_{1}  \mathcal{\bf A^{(n+1)}}^{1}_{2}} \mathbf{\bf E^{(n+1)}}_{12}  \ \mathbf{0}_{n \times n} \mathbf{\bf E^{(n+1)}}_{13}   \ \mathbf{0}_{n \times n} \mathbf{\bf E^{(n+1)}}_{14}  \ \mathbf{0}_{n \times n} \ldots\\
    \mathbf{\bf E^{(n+1)}}_{21}  \ \mathbf{0}_{n \times n} { \mathcal{\bf A^{(n+1)}}^{2}_{1}  \mathcal{\bf A^{(n+1)}}^{2}_{2}} \mathbf{\bf E^{(n+1)}}_{23}  \ \mathbf{0}_{n \times n} \mathbf{\bf E^{(n)}}_{24}  \ \mathbf{0}_{n \times n} \ldots\\
{ \mathcal{\bf A^{(n+1)}}^{2}_{1}  \mathcal{\bf A^{(n+1)}}^{2}_{2}} \mathbf{\bf E^{(n+1)}}_{23}  \ \mathbf{0}_{n \times n} \mathbf{\bf E^{(n+1)}}_{24}   \ \mathbf{0}_{n \times n}\ldots\\
    &&\diagentry{\xddots}\\
    &&\diagentry{\xddots}\\
    \mathbf{\bf E^{(n+1)}}_{n1}  \ \mathbf{0}_{n \times n} \mathbf{\bf E^{(n+1)}}_{n2}  \ \mathbf{0}_{n \times n} \mathbf{\bf E^{(n+1)}}_{n3}  \ \mathbf{0}_{n \times n} \ldots \diagentry{ \mathcal{\bf A^{(n+1)}}^{n}_{1}  \mathcal{\bf A^{(n+1)}}^{n}_{2}}\\
    \mathbf{\bf E^{(n+1)}}_{n1}  \ \mathbf{0}_{n \times n} \mathbf{\bf E^{(n+1)}}{n2}  \ \mathbf{0}_{n \times n} \mathbf{\bf E^{(n+1)}}{n3}  \ \mathbf{0}_{n \times n} \ldots \diagentry{ \mathcal{\bf A^{(n+1)}}^{n}_{1}  \mathcal{\bf A^{(n+1)}}^{n}_{2}}
\end{pmatrix}$  

$  \mathcal{\bf A^{(n+1)}}^{l}_{1} = (\hat{\bar{\bf A}}^{l}_{1})^{n+1} + {\bf \Lambda}^{(n+1)}_{1} $, $  \mathcal{\bf A^{(n+1)}}^{l}_{2} =  (\hat{\bar{\bf A}}^{l}_{2})^{n+1} + {\bf \Lambda}^{(n+1)}_{2} $ and $ \mathbf{\bf E^{(n+1)}}_{ij} =  (\hat{\bar{\bf A}}^{j}_{1})^{n} + {\bf \Lambda}^{(n+1)}_{3}  $ where $  {\bf \Lambda}^{(n+1)}_{1} $, $ {\bf \Lambda}^{(n+1)}_{2} $ and $ {\bf \Lambda}^{(n+1)}_{3} $ are nonnegative.

Thus, since $ \bar{\bar{\bf A}}^{j} $ ( $\hat{\bar{\bf A}}^{j}_{1} $) is stochastic indecomposable and aperiodic (SIA) then there exist $ k > 0 $ such that $ (\hat{\bar{\bf A}}^{j}_{1})^{k} $ has a positive column. Then $ (\hat{\bar{\bf A}}^{j}_{1})^{k+1} $ has a positive column on the corresponding entries as $ (\hat{\bar{\bf A}}^{j}_{1})^{k} $.
Thus, picking the block column $ j $ in $ \hat{\bf A}^{k+1} $, we see that $  \mathcal{\bf A^{(k+1)}}^{j}_{1} = (\hat{\bar{\bf A}}^{j}_{1})^{k+1} + {\bf \Lambda}^{(k+1)}_{1} $ has the a positive column on the corresponding entries of  $ \mathbf{\bf E^{(k+1)}}_{ij} \tilde{=}  (\hat{\bar{\bf A}}^{j}_{1})^{k} + {\bf \Lambda}^{(k+1)}_{3}  $ since  $  {\bf \Lambda}^{(k+1)}_{1} $ and $ {\bf \Lambda}^{(k+1)}_{3} $ are nonnegative. Then there exist a positive column in block column $ j $ in $ \hat{\bf A}^{k+1} $. 
Thus, $ \hat{\bf A} $ is SIA.

Therefore, $ \lim_{n \rightarrow \infty} \hat{\bf A}^{n} = \mathbf{1}\pi^{T} $. And all rows of  $ \lim_{n \rightarrow \infty} \hat{\bf A}^{n} $ are the same.

Similarly, for 
$\hat{\bf A} =
\begin{pmatrix}
    \diagentry{\bar{\bf A}^{1}_{+}} {\bf E}_{+} {\bf E}_{+} {\bf E}_{+} \ldots\\
    \diagentry{\bar{\bf A}^{1}_{-}}  {\bf E}_{-} {\bf E}_{-} {\bf E}_{-} \ldots\\
    {\bf E}_{+}\diagentry{\bar{\bf A}^{2}_{+}} {\bf E}_{+} {\bf E}_{+} \ldots\\
    {\bf E}_{-}\diagentry{\bar{\bf A}^{2}_{-}} {\bf E}_{-} {\bf E}_{-} \ldots \\
    &&\diagentry{\xddots}\\
    &&\diagentry{\xddots}\\
    {\bf E}_{+} {\bf E}_{+} {\bf E}_{+} \ldots \diagentry{\bar{\bf A}^{n}_{+}}\\
    {\bf E}_{-} {\bf E}_{-} {\bf E}_{-} \ldots \diagentry{\bar{\bf A}^{n}_{-}}
\end{pmatrix}$  
and each subblock is an $ n \times 2 n $ matrix.

Where if we choose $ \bar{\bf A}^{l}_{+} = [\bar{\bf A}^{l}_{+,1} \ \bar{\bf A}^{l}_{+,2}] $ and $ \bar{\bf A}^{l}_{-} = [\bar{\bf A}^{l}_{-,1} \ \bar{\bf A}^{l}_{-,2}] $  for $ 1 \leq l \leq n $. $  {\bf E}_{+} = [\bar{\bf E}_{+} \ \mathbf{0}_{n \times n}] $ and $ {\bf E}_{-} = [\mathbf{0}_{n \times n} \ \bar{\bf E}_{-} ]  $.

Then 
$\hat{\bf A} =
\begin{pmatrix}
   {\hat{\bar{\bf A}}^{1}_{+,1} \hat{\bar{\bf A}}^{1}_{+,2}} {\bar{\bf E}_{+}} \ \mathbf{0}_{n \times n} {\bar{\bf E}_{+}} \ \mathbf{0}_{n \times n} {\bar{\bf E}_{+}} \ \mathbf{0}_{n \times n} \ldots\\
    {\hat{\bar{\bf A}}^{1}_{-,1} \hat{\bar{\bf A}}^{1}_{-,2}} \ \mathbf{0}_{n \times n} {\bar{\bf E}_{-}} \ \mathbf{0}_{n \times n} {\bar{\bf E}_{-}} \ \mathbf{0}_{n \times n} {\bar{\bf E}_{-}} \ldots\\
    {\bar{\bf E}_{+}} \ \mathbf{0}_{n \times n} {\hat{\bar{\bf A}}^{2}_{+,1} \hat{\bar{\bf A}}^{2}_{+,2}} {\bar{\bf E}_{+}} \ \mathbf{0}_{n \times n} {\bar{\bf E}_{+}} \ \mathbf{0}_{n \times n} \ldots \\
   \mathbf{0}_{n \times n} \ {\bar{\bf E}_{-}} {\hat{\bar{\bf A}}^{2}_{-,1} \hat{\bar{\bf A}}^{2}_{-,2}}  \ \mathbf{0}_{n \times n} {\bar{\bf E}_{-}} \ \mathbf{0}_{n \times n} {\bar{\bf E}_{-}} \ldots\\
    &&\diagentry{\xddots}\\
    &&\diagentry{\xddots}\\
    {\bar{\bf E}_{+}} \ \mathbf{0}_{n \times n}  {\bar{\bf E}_{+}} \ \mathbf{0}_{n \times n} {\bar{\bf E}_{+}} \ \mathbf{0}_{n \times n} \ldots \diagentry{\hat{\bar{\bf A}}^{n}_{+,1} \hat{\bar{\bf A}}^{n}_{+,2}}\\
     \mathbf{0}_{n \times n} \ {\bar{\bf E}_{-}} \ \mathbf{0}_{n \times n} {\bar{\bf E}_{-}} \ \mathbf{0}_{n \times n} {\bar{\bf E}_{-}} \ldots \diagentry{\hat{\bar{\bf A}}^{n}_{-,1} \hat{\bar{\bf A}}^{n}_{-,2}}
\end{pmatrix}$ 

and $\hat{\bf A}^{2} = $
\\
$\begin{pmatrix}
   { \mathcal{\bf A^{(2)}}^{1}_{+,1} \mathcal{\bf A^{(2)}}^{1}_{+,2}} \mathbf{\bf E^{(2)}_{+,1}}_{12} \ \mathbf{\bf E^{(2)}_{+,2}}_{12} \mathbf{\bf E^{(2)}_{+,1}}_{13}  \ \mathbf{\bf E^{(2)}_{+,2}}_{13} \mathbf{\bf E^{(2)}_{+,1}}_{14}  \ \mathbf{\bf E^{(2)}_{+,2}}_{14} \ldots\\
   { \mathcal{\bf A^{(2)}}^{1}_{-,1}  \mathcal{\bf A^{(2)}}^{1}_{-,2}}   \ \mathbf{\bf E^{(2)}_{-,1}}_{12} \mathbf{\bf E^{(2)}_{-,2}}_{12}   \ \mathbf{\bf E^{(2)}_{-,1}}_{13} \mathbf{\bf E^{(2)}_{-,2}}_{13}  \mathbf{\bf E^{(2)}_{-,1}}_{14} \mathbf{\bf E^{(2)}_{-,2}}_{14} \ldots\\
    \mathbf{\bf E^{(2)}_{+,1}}_{21}  \ \mathbf{\bf E^{(2)}_{+,2}}_{21}  { \mathcal{\bf A^{(2)}}^{2}_{+,1}  \mathcal{\bf A^{(2)}}^{2}_{+,2}} \mathbf{\bf E^{(2)}_{+,1}}_{23}  \ \mathbf{\bf E^{(2)}_{+,2}}_{23} \mathbf{\bf E^{(2)}_{+,1}}_{24}  \ \mathbf{\bf E^{(2)}_{+,2}}_{24}  \ldots\\
     \mathbf{\bf E^{(2)}_{-,1}}_{21} \ \mathbf{\bf E^{(2)}_{-,2}}_{21}  { \mathcal{\bf A^{(2)}}^{2}_{-,1}  \mathcal{\bf A^{(2)}}^{2}_{-,2}} \mathbf{\bf E^{(2)}_{-,1}}_{23} \mathbf{\bf E^{(2)}_{-,2}}_{23}   \ \mathbf{\bf E^{(2)}_{-,1}}_{24}\ \mathbf{\bf E^{(2)}_{-,2}}_{24} \ldots \\
    &&\diagentry{\xddots}\\
    &&\diagentry{\xddots}\\
    \mathbf{\bf E^{(2)}_{+,1}}_{n1}  \ \mathbf{\bf E^{(2)}_{+,2}}_{n1} \mathbf{\bf E^{(2)}_{+,1}}_{n2}  \  \mathbf{\bf E^{(2)}_{+,2}}_{n2} \mathbf{\bf E^{(2)}_{+,1}}_{n3}  \  \mathbf{\bf E^{(2)}_{+,2}}_{n3} \ldots \diagentry{ \mathcal{\bf A^{(2)}}^{n}_{+,1}  \mathcal{\bf A^{(2)}}^{n}_{+,2}}\\
     \mathbf{\bf E^{(2)}_{-,1}}_{n1} \mathbf{\bf E^{(2)}_{-,1}}_{n1}  \ \mathbf{\bf E^{(2)}_{-,1}}_{n2} \mathbf{\bf E^{(2)}_{-,2}}_{n2}  \ \mathbf{\bf E^{(2)}_{-,1}}_{n3} \ \mathbf{\bf E^{(2)}_{-,2}}_{n3} \ldots \diagentry{ \mathcal{\bf A^{(2)}}^{n}_{-,1}  \mathcal{\bf A^{(2)}}^{n}_{-,2}}
\end{pmatrix}$

where $  \mathcal{\bf A^{(2)}}^{l}_{+,1} = (\hat{\bar{\bf A}}^{l}_{+,1})^{2} + \hat{\bar{\bf A}}^{l}_{+,2}\hat{\bar{\bf A}}^{l}_{-,1} + (n-1) {\bar{\bf E}_{+}}^{2} $, $  \mathcal{\bf A^{(2)}}^{l}_{+,2} =  \hat{\bar{\bf A}}^{l}_{+,1}\hat{\bar{\bf A}}^{l}_{+,2} + \hat{\bar{\bf A}}^{l}_{+,2}\hat{\bar{\bf A}}^{l}_{- ,2} $, $  \mathcal{\bf A^{(2)}}^{l}_{-,1} =  \hat{\bar{\bf A}}^{l}_{-,1}\hat{\bar{\bf A}}^{l}_{+,1} + \hat{\bar{\bf A}}^{l}_{-,2}\hat{\bar{\bf A}}^{l}_{- ,1} $ and $  \mathcal{\bf A^{(2)}}^{l}_{-,2} = (\hat{\bar{\bf A}}^{l}_{-,2})^{2} + \hat{\bar{\bf A}}^{l}_{-,1}\hat{\bar{\bf A}}^{l}_{+,2} + (n-1) {\bar{\bf E}_{-}}^{2} $.
\\
$ \mathbf{\bf E^{(2)}_{+,1}}_{ij} = \hat{\bar{\bf A}}^{i}_{+,1} {\bf E}_{+} + {\bf E}_{+} \hat{\bar{\bf A}}^{j}_{+,1} + (n-2) {\bf E}_{+}^{2} $, $ \mathbf{\bf E^{(2)}_{+,2}}_{ij} = \hat{\bar{\bf A}}^{i}_{+,2} {\bf E}_{-} + {\bf E}_{+} \hat{\bar{\bf A}}^{j}_{+,2} $, $ \mathbf{\bf E^{(2)}_{-,1}}_{ij} = \hat{\bar{\bf A}}^{i}_{-,1} {\bf E}_{+} + {\bf E}_{-} \hat{\bar{\bf A}}^{j}_{-,1} $ and $ \mathbf{\bf E^{(2)}_{-,2}}_{ij} = \hat{\bar{\bf A}}^{i}_{-,2} {\bf E}_{-} + {\bf E}_{-} \hat{\bar{\bf A}}^{j}_{-,2} + (n-2) {\bf E}_{-}^{2} $. 
Since $ \bar{\bf E}_{+} $ and $ \bar{\bf E}_{+} $ are diagonal then  $ {\bar{\bf E}_{+}} \hat{\bar{\bf A}}^{j}_{+,1} \tilde{=} \hat{\bar{\bf A}}^{j}_{+,1} $,  $ {\bar{\bf E}_{+}} \hat{\bar{\bf A}}^{j}_{+,2} \tilde{=} \hat{\bar{\bf A}}^{j}_{+,2} $,  $ {\bar{\bf E}_{-}} \hat{\bar{\bf A}}^{j}_{-,1} \tilde{=} \hat{\bar{\bf A}}^{j}_{-,1} $ and  $ {\bar{\bf E}_{-}} \hat{\bar{\bf A}}^{j}_{-,2} \tilde{=} \hat{\bar{\bf A}}^{j}_{-,2} $.
Since $ \hat{\bar{\bf A}}^{i}_{+,1} $ and $ \hat{\bar{\bf A}}^{i}_{-,2} $ have diagonal entries then  $ \hat{\bar{\bf A}}^{j}_{+,1} \hat{\bar{\bf A}}^{j}_{+,2} \tilde{=} \hat{\bar{\bf A}}^{j}_{+,2} $ and $ \hat{\bar{\bf A}}^{j}_{-,1} \hat{\bar{\bf A}}^{j}_{-,2} \tilde{=} \hat{\bar{\bf A}}^{j}_{-,1} $.

Since all these matrices are nonnegative then $  \mathcal{\bf A^{(2)}}^{l}_{+,1} = (\hat{\bar{\bf A}}^{l}_{+,1})^{2} + {\bf \Lambda}^{(2)}_{+,1} $, $  \mathcal{\bf A^{(2)}}^{l}_{+,2} \tilde{=} \hat{\bar{\bf A}}^{l}_{+,2} + {\bf \Lambda}^{(2)}_{+,2} $, $  \mathcal{\bf A^{(2)}}^{l}_{-,1} \tilde{=} \hat{\bar{\bf A}}^{l}_{-,1} + {\bf \Lambda}^{(2)}_{-,1} $, $  \mathcal{\bf A^{(2)}}^{l}_{-,2} =  (\hat{\bar{\bf A}}^{l}_{-,2})^{2} + {\bf \Lambda}^{(2)}_{-,2} $, $ \mathbf{\bf E^{(2)}_{+,1}}_{ij} \tilde{=}  \hat{\bar{\bf A}}^{j}_{+,1} + {\bf \tilde \Lambda}^{(2)}_{+,1}  $, $ \mathbf{\bf E^{(2)}_{-,1}}_{ij} \tilde{=}  \hat{\bar{\bf A}}^{j}_{-,1} + {\bf \tilde \Lambda}^{(2)}_{-,1}  $, $ \mathbf{\bf E^{(2)}_{+,2}}_{ij} \tilde{=}  \hat{\bar{\bf A}}^{j}_{+,2} + {\bf \tilde \Lambda}^{(2)}_{+,2}  $, $ \mathbf{\bf E^{(2)}_{-,2}}_{ij} \tilde{=}  \hat{\bar{\bf A}}^{j}_{-,2} + {\bf \tilde \Lambda}^{(2)}_{-,2}  $ where $  {\bf \Lambda}^{(2)}_{+,1} $, $ {\bf \Lambda}^{(2)}_{+,2} $, $  {\bf \Lambda}^{(2)}_{-,1} $, $ {\bf \Lambda}^{(2)}_{-,2} $, $ {\bf \tilde \Lambda}^{(2)}_{+,1} $, $ {\bf \tilde \Lambda}^{(2)}_{+,2} $, $  {\bf \tilde \Lambda}^{(2)}_{-,1} $ and $ {\bf \tilde \Lambda}^{(2)}_{-,2} $ are nonnegative.

And $\hat{\bf A}^{n+1} = $ as shown in the next page.

$  \mathcal{\bf A^{(n+1)}}^{l}_{+,1} = (\hat{\bar{\bf A}}^{l}_{+,1})^{n+1} + {\bf \Lambda}^{(n+1)}_{+,1} $, $  \mathcal{\bf A^{(n+1)}}^{l}_{+,2} \tilde{=} (\hat{\bar{\bf A}}^{l}_{+,2})^{n} + {\bf \Lambda}^{(n+1)}_{+,2} $, $  \mathcal{\bf A^{(n+1)}}^{l}_{-,1} \tilde{=} (\hat{\bar{\bf A}}^{l}_{-,1})^{n} + {\bf \Lambda}^{(n+1)}_{-,1} $, $  \mathcal{\bf A^{(n+1)}}^{l}_{-,2} =  (\hat{\bar{\bf A}}^{l}_{-,2})^{n+1} + {\bf \Lambda}^{(n+1)}_{-,2} $, $ \mathbf{\bf E^{(n+1)}_{+,1}}_{ij} \tilde{=}  (\hat{\bar{\bf A}}^{j}_{+,1})^{n} + {\bf \tilde \Lambda}^{(n+1)}_{+,1}  $, $ \mathbf{\bf E^{(n+1)}_{-,1}}_{ij} \tilde{=}  (\hat{\bar{\bf A}}^{j}_{-,1})^{n} + {\bf \tilde \Lambda}^{(n+1)}_{-,1}  $, $ \mathbf{\bf E^{(n+1)}_{+,2}}_{ij} \tilde{=}  (\hat{\bar{\bf A}}^{j}_{+,2})^{n} + {\bf \tilde \Lambda}^{(n+1)}_{+,2}  $, $ \mathbf{\bf E^{(n+1)}_{-,2}}_{ij} \tilde{=}  (\hat{\bar{\bf A}}^{j}_{-,2})^{n} + {\bf \tilde \Lambda}^{(n+1)}_{-,2}  $ where $  {\bf \Lambda}^{(n+1)}_{+,1} $, $ {\bf \Lambda}^{(n+1)}_{+,2} $, $  {\bf \Lambda}^{(n+1)}_{-,1} $, $ {\bf \Lambda}^{(n+1)}_{-,2} $, $ {\bf \tilde \Lambda}^{(n+1)}_{+,1} $, $ {\bf \tilde \Lambda}^{(n+1)}_{+,2} $, $  {\bf \tilde \Lambda}^{(n+1)}_{-,1} $ and $ {\bf \tilde \Lambda}^{(n+1)}_{-,2} $ are nonnegative.

Thus, since $ \bar{\bar{\bf A}}^{j} $  is stochastic indecomposable and aperiodic (SIA) then there exist $ k > 0 $ such that $ (\bar{\bar{\bf A}}^{j})^{k} $ has a positive column $ \bar{j} $. If the positive column $ 1 \leq \bar{j} \leq n $ then $ (\hat{\bar{\bf A}}^{j}_{+,1})^{k} $ has a positive column $ \bar{j} $ and $ (\hat{\bar{\bf A}}^{j}_{-,1})^{k-1} $ has a positive column $ \bar{j} $. If the positive column $ n + 1 \leq \bar{j} \leq 2 n  $ then $ (\hat{\bar{\bf A}}^{j}_{+,2})^{k-1} $ has a positive column $ \bar{j} $ and $ (\hat{\bar{\bf A}}^{j}_{-,2})^{k} $ has a positive column $ \bar{j} $.
Thus, picking the block column $ j $ in $ \hat{\bf A}^{k+1} $, we see that either $  \mathcal{\bf A^{(k+1)}}^{j}_{+,1} = (\hat{\bar{\bf A}}^{j}_{+,1})^{k+1} + {\bf \Lambda}^{(k+1)}_{+,1} $ has the a positive column with the corresponding entries of $  \mathcal{\bf A^{(k+1)}}^{j}_{-,1} \tilde{=} (\hat{\bar{\bf A}}^{j}_{-,1})^{k} + {\bf \Lambda}^{(k+1)}_{-,1} $, $ \mathbf{\bf E^{(k+1)}_{+,1}}_{ij} \tilde{=}  (\hat{\bar{\bf A}}^{j}_{+,1})^{k} + {\bf \Lambda}^{(k+1)}_{+,3}  $ and  $ \mathbf{\bf E^{(k+1)}_{-,1}}_{ij} \tilde{=}  (\hat{\bar{\bf A}}^{j}_{-,1})^{k} + {\bf \Lambda}^{(k+1)}_{-,3}  $ (i.e., all having a positive column $ \bar{j} $ where $ 1 \leq \bar{j} \leq n $ since  $  {\bf \Lambda}^{(k+1)}_{+,1} $,  $  {\bf \Lambda}^{(k+1)}_{-,1} $, $ {\bf \Lambda}^{(k+1)}_{+,3} $ and  $ {\bf \Lambda}^{(k+1)}_{-,3} $  are nonnegative.
Or $  \mathcal{\bf A^{(k+1)}}^{j}_{+,2} \tilde{=} (\hat{\bar{\bf A}}^{j}_{+,1})^{k} + {\bf \Lambda}^{(k+1)}_{+,2} $ has the a positive column with the corresponding entries of $  \mathcal{\bf A^{(k+1)}}^{j}_{-,2} = (\hat{\bar{\bf A}}^{j}_{-,1})^{k+1} + {\bf \Lambda}^{(k+1)}_{-,2} $, $ \mathbf{\bf E^{(k+1)}_{+,2}}_{ij} \tilde{=}  (\hat{\bar{\bf A}}^{j}_{+,2})^{k} + {\bf \Lambda}^{(k+1)}_{+,4}  $ and  $ \mathbf{\bf E^{(k+1)}_{-,2}}_{ij} \tilde{=}  (\hat{\bar{\bf A}}^{j}_{-,2})^{k} + {\bf \Lambda}^{(k+1)}_{-,4}  $ (i.e., all having a positive column $ \bar{j} $ where $ n + 1 \leq \bar{j} \leq 2 n $ since  $  {\bf \Lambda}^{(k+1)}_{+,2} $,  $  {\bf \Lambda}^{(k+1)}_{-,2} $, $ {\bf \Lambda}^{(k+1)}_{+,4} $ and  $ {\bf \Lambda}^{(k+1)}_{-,4} $  are nonnegative.

Then there exist a positive column in block column $ \bar{j} $ in $ \hat{\bf A}^{k+1} $. 
Thus, $ \hat{\bf A} $ is SIA.

Therefore, $ \lim_{n \rightarrow \infty} \hat{\bf A}^{n} = \mathbf{1}\pi^{T} $. And all rows of  $ \lim_{n \rightarrow \infty} \hat{\bf A}^{n} $ are the same.

\clearpage

$\hat{\bf A}^{n+1} =  \begin{pmatrix}
    { \mathcal{\bf A^{(n+1)}}^{1}_{+,1} \mathcal{\bf A^{(n+1)}}^{1}_{+,2}} \mathbf{\bf E^{(n+1)}_{+,1}}_{12} \ \mathbf{\bf E^{(n+1)}_{+,2}}_{12} \mathbf{\bf E^{(n+1)}_{+,1}}_{13}  \ \mathbf{\bf E^{(n+1)}_{+,2}}_{13} \mathbf{\bf E^{(n+1)}_{+,1}}_{14}  \ \mathbf{\bf E^{(n+1)}_{+,2}}_{14} \ldots\\
    { \mathcal{\bf A^{(n+1)}}^{1}_{-,1}  \mathcal{\bf A^{(n+1)}}^{1}_{-,2}}   \ \mathbf{\bf E^{(n+1)}_{-,1}}_{12} \mathbf{\bf E^{(n+1)}_{-,2}}_{12}   \ \mathbf{\bf E^{(n+1)}_{-,1}}_{13} \mathbf{\bf E^{(n+1)}_{-,2}}_{13}  \mathbf{\bf E^{(n+1)}_{-,1}}_{14} \mathbf{\bf E^{(n+1)}_{-,2}}_{14} \ldots\\
    \mathbf{\bf E^{(n+1)}_{+,1}}_{21}  \ \mathbf{\bf E^{(n+1)}_{+,2}}_{21}  { \mathcal{\bf A^{(n+1)}}^{2}_{+,1}  \mathcal{\bf A^{(n+1)}}^{2}_{+,2}} \mathbf{\bf E^{(n+1)}_{+,1}}_{23}  \ \mathbf{\bf E^{(n+1)}_{+,2}}_{23} \mathbf{\bf E^{(n+1)}_{+,1}}_{24}  \ \mathbf{\bf E^{(n+1)}_{+,2}}_{24}  \ldots\\
     \mathbf{\bf E^{(n+1)}_{-,1}}_{21} \ \mathbf{\bf E^{(n+1)}_{-,2}}_{21}  { \mathcal{\bf A^{(n+1)}}^{2}_{-,1}  \mathcal{\bf A^{(n+1)}}^{2}_{-,2}} \mathbf{\bf E^{(n+1)}_{-,1}}_{23} \mathbf{\bf E^{(n+1)}_{-,2}}_{23}   \ \mathbf{\bf E^{(n+1)}_{-,1}}_{24}\ \mathbf{\bf E^{(n+1)}_{-,2}}_{24} \ldots \\
    &&\diagentry{\xddots}\\
    &&\diagentry{\xddots}\\
    \mathbf{\bf E^{(n+1)}_{+,1}}_{n1}  \ \mathbf{\bf E^{(n+1)}_{+,2}}_{n1} \mathbf{\bf E^{(n+1)}_{+,1}}_{n2}  \  \mathbf{\bf E^{(n+1)}_{+,2}}_{n2} \mathbf{\bf E^{(n+1)}_{+,1}}_{n3}  \  \mathbf{\bf E^{(n+1)}_{+,2}}_{n3} \ldots \diagentry{ \mathcal{\bf A^{(n+1)}}^{n}_{+,1}  \mathcal{\bf A^{(n+1)}}^{n}_{+,2}}\\
     \mathbf{\bf E^{(n+1)}_{-,1}}_{n1} \mathbf{\bf E^{(n+1)}_{-,1}}_{n1}  \ \mathbf{\bf E^{(n+1)}_{-,1}}_{n2} \mathbf{\bf E^{(n+1)}_{-,2}}_{n2}  \ \mathbf{\bf E^{(n+1)}_{-,1}}_{n3} \ \mathbf{\bf E^{(n+1)}_{-,2}}_{n3} \ldots \diagentry{ \mathcal{\bf A^{(n+1)}}^{n}_{-,1}  \mathcal{\bf A^{(n+1)}}^{n}_{-,2}}
\end{pmatrix}$

where
{\scriptsize
\begin{equation}
\begin{split}
\begin{align*}
  {\bf Q}       =  \left(\begin{array}{cc}
                \hat{\bf A} & \ \ \ \ \  {\bf 0}_{2n^{2} \times 2n^{2}}  \\
         {\bf I}_{2n^{2} \times 2n^{2}} - \hat{\bf A}  &  
         \begin{pmatrix}
    \diagentry{\frac{1}{n}{\bf D}^{1}} & & \frac{1}{n}{\bf D}^{2}  & & & \ldots &  \frac{1}{n}{\bf D}^{n}\\
    &\diagentry{\frac{1}{n}{\bf D}^{1}} & & \frac{1}{n}{\bf D}^{2} & & & \ldots &  \frac{1}{n}{\bf D}^{n}\\
    \frac{1}{n}{\bf D}^{1} & & \diagentry{\frac{1}{n}{\bf D}^{2}}\\
    & \frac{1}{n}{\bf D}^{1} & &\diagentry{\frac{1}{n}{\bf D}^{2}}\\
    &&&&\diagentry{\xddots}\\
    &&&&&\diagentry{\xddots}\\
    \frac{1}{n}{\bf D}^{1}&&&&&&\diagentry{\frac{1}{n}{\bf D}^{n}}\\
    &\frac{1}{n}{\bf D}^{1}&&&&&&\diagentry{\frac{1}{n}{\bf D}^{n}}
\end{pmatrix}
    \end{array}\right) 
    \end{align*} 
\end{split}
\end{equation}}
 and 
 
{\scriptsize \begin{equation}\label{matrixG}
 {\bf G}     =  \left(\begin{array}{cc}
        {\bf 0}_{2n^{2} \times 2n^{2}} &   
        \begin{pmatrix}
    \diagentry{{\bf T}^{1}} & & {\bf T}^{2}  & & & \ldots &  {\bf T}^{n}\\
    &\diagentry{{\bf T}^{1}} & & {\bf T}^{2} & & & \ldots &  {\bf T}^{n}\\
    {\bf T}^{1} & & \diagentry{{\bf T}^{2}}\\
    & {\bf T}^{1} & &\diagentry{{\bf T}^{2}}\\
    &&&&\diagentry{\xddots}\\
    &&&&&\diagentry{\xddots}\\
    {\bf T}^{1}&&&&&&\diagentry{{\bf T}^{n}}\\
    &{\bf T}^{1}&&&&&&\diagentry{{\bf T}^{n}}
\end{pmatrix} \\
        {\bf 0}_{2n^{2} \times 2n^{2}} & 
        - \begin{pmatrix}
    \diagentry{{\bf T}^{1}} & & {\bf T}^{2}  & & & \ldots &  {\bf T}^{n}\\
    &\diagentry{{\bf T}^{1}} & & {\bf T}^{2} & & & \ldots &  {\bf T}^{n}\\
    {\bf T}^{1} & & \diagentry{{\bf T}^{2}}\\
    & {\bf T}^{1} & &\diagentry{{\bf T}^{2}}\\
    &&&&\diagentry{\xddots}\\
    &&&&&\diagentry{\xddots}\\
    {\bf T}^{1}&&&&&&\diagentry{{\bf T}^{n}}\\
    &{\bf T}^{1}&&&&&&\diagentry{{\bf T}^{n}}
\end{pmatrix} \\
    \end{array}\right) 
\end{equation}}

where $ {\bf T} $ is $ 2n^{2} \times 2n^{2} $ such that $ {\bf T} = $
$ 
 \begin{pmatrix}
    \diagentry{{\bf T}^{1}} & & {\bf T}^{2}  & & & \ldots &  {\bf T}^{n}\\
    &\diagentry{{\bf T}^{1}} & & {\bf T}^{2} & & & \ldots &  {\bf T}^{n}\\
    {\bf T}^{1} & & \diagentry{{\bf T}^{2}}\\
    & {\bf T}^{1} & &\diagentry{{\bf T}^{2}}\\
    &&&&\diagentry{\xddots}\\
    &&&&&\diagentry{\xddots}\\
    {\bf T}^{1}&&&&&&\diagentry{{\bf T}^{n}}\\
    &{\bf T}^{1}&&&&&&\diagentry{{\bf T}^{n}}
\end{pmatrix}$   

\clearpage

\begin{equation}
\begin{split}
\begin{align*}
 {\bf Q}^{(k)}      =  \left(\begin{array}{cc}
                \hat{\bf A}(k) & \ \ \ \ \  {\bf 0}_{2n^{2} \times 2n^{2}}  \\
         {\bf I}_{2n^{2} \times 2n^{2}} - \hat{\bf A}(k)  &  
         \begin{pmatrix}
    \diagentry{\frac{1}{n}{\bf D}^{1}(k)} & & \frac{1}{n}{\bf D}^{2}(k)  & & & \ldots &  \frac{1}{n}{\bf D}^{n}(k)\\
    &\diagentry{\frac{1}{n}{\bf D}^{1}(k)} & & \frac{1}{n}{\bf D}^{2}(k) & & & \ldots &  \frac{1}{n}{\bf D}^{n}(k)\\
    \frac{1}{n}{\bf D}^{1}(k) & & \diagentry{\frac{1}{n}{\bf D}^{2}(k)}\\
    & \frac{1}{n}{\bf D}^{1}(k) & &\diagentry{\frac{1}{n}{\bf D}^{2}(k)}\\
    &&&&\diagentry{\xddots}\\
    &&&&&\diagentry{\xddots}\\
    \frac{1}{n}{\bf D}^{1}(k)&&&&&&\diagentry{\frac{1}{n}{\bf D}^{n}(k)}\\
    &\frac{1}{n}{\bf D}^{1}(k)&&&&&&\diagentry{\frac{1}{n}{\bf D}^{n}(k)}
\end{pmatrix}
    \end{array}\right) 
    \end{align*} 
\end{split}
\end{equation}
 and 
 
\begin{equation}\label{matrixG}
\begin{split}
{\bf G}(k)      =  \left(\begin{array}{cc}
        {\bf 0}_{2n^{2} \times 2n^{2}} &   
        \begin{pmatrix}
    \diagentry{{\bf T}^{1}(k)} & & {\bf T}^{2}(k)  & & & \ldots &  {\bf T}^{n}(k)\\
    &\diagentry{{\bf T}^{1}(k)} & & {\bf T}^{2}(k) & & & \ldots &  {\bf T}^{n}(k)\\
    {\bf T}^{1}(k) & & \diagentry{{\bf T}^{2}(k)}\\
    & {\bf T}^{1}(k) & &\diagentry{{\bf T}^{2}(k)}\\
    &&&&\diagentry{\xddots}\\
    &&&&&\diagentry{\xddots}\\
    {\bf T}^{1}(k) &&&&&&\diagentry{{\bf T}^{n}(k)}\\
    &{\bf T}^{1}(k) &&&&&&\diagentry{{\bf T}^{n}(k)}
\end{pmatrix} \\
        {\bf 0}_{2n^{2} \times 2n^{2}} & 
         -  \begin{pmatrix}
    \diagentry{{\bf T}^{1}(k)} & & {\bf T}^{2}(k)  & & & \ldots &  {\bf T}^{n}(k)\\
    &\diagentry{{\bf T}^{1}(k)} & & {\bf T}^{2}(k) & & & \ldots &  {\bf T}^{n}(k)\\
    {\bf T}^{1}(k) & & \diagentry{{\bf T}^{2}(k)}\\
    & {\bf T}^{1}(k) & &\diagentry{{\bf T}^{2}(k)}\\
    &&&&\diagentry{\xddots}\\
    &&&&&\diagentry{\xddots}\\
    {\bf T}^{1}(k) &&&&&&\diagentry{{\bf T}^{n}(k)}\\
    &{\bf T}^{1}(k) &&&&&&\diagentry{{\bf T}^{n}(k)}
\end{pmatrix} \\
    \end{array}\right) 
 \end{split}
\end{equation}

where $ {\bf T}(k) $ is $ 2n^{2} \times 2n^{2} $ such that 
$ {\bf T}(k) = \begin{pmatrix}
    \diagentry{{\bf T}^{1}(k)} & & {\bf T}^{2}(k)  & & & \ldots &  {\bf T}^{n}(k)\\
    &\diagentry{{\bf T}^{1}(k)} & & {\bf T}^{2}(k) & & & \ldots &  {\bf T}^{n}(k)\\
    {\bf T}^{1}(k) & & \diagentry{{\bf T}^{2}(k)}\\
    & {\bf T}^{1}(k) & &\diagentry{{\bf T}^{2}(k)}\\
    &&&&\diagentry{\xddots}\\
    &&&&&\diagentry{\xddots}\\
    {\bf T}^{1}(k) &&&&&&\diagentry{{\bf T}^{n}(k)}\\
    &{\bf T}^{1}(k) &&&&&&\diagentry{{\bf T}^{n}(k)}
\end{pmatrix} $

\clearpage

\begin{equation*}
\begin{split}
{\bf P} =  \lim_{k \rightarrow \infty} {\bf Q}_{\epsilon , {\bf T}} ^{k} = \lim_{k \rightarrow \infty}{\bf Q} ^{k} = {\bf P} + \lim_{k \rightarrow \infty}\sum_{i=4}^{4n}{\bf P}_{i}{\bf J}_{i}{\bf Q}_{i} 
\end{split}
\end{equation*}

\begin{equation}
\begin{split}
\begin{align*}
  = \lim_{k \rightarrow \infty}  \left(\begin{array}{cc}
                \hat{\bf A} & \ \ \ \ \  {\bf 0}_{2n^{2} \times 2n^{2}}  \\
         {\bf I}_{2n^{2} \times 2n^{2}} - \hat{\bf A}  &  
        \begin{pmatrix}
    \diagentry{\frac{1}{n}{\bf D}^{1}} & & \frac{1}{n}{\bf D}^{2}  & & & \ldots &  \frac{1}{n}{\bf D}^{n}\\
    &\diagentry{\frac{1}{n}{\bf D}^{1}} & & \frac{1}{n}{\bf D}^{2} & & & \ldots &  \frac{1}{n}{\bf D}^{n}\\
    \frac{1}{n}{\bf D}^{1} & & \diagentry{\frac{1}{n}{\bf D}^{2}}\\
    & \frac{1}{n}{\bf D}^{1} & &\diagentry{\frac{1}{n}{\bf D}^{2}}\\
    &&&&\diagentry{\xddots}\\
    &&&&&\diagentry{\xddots}\\
    \frac{1}{n}{\bf D}^{1}&&&&&&\diagentry{\frac{1}{n}{\bf D}^{n}}\\
    &\frac{1}{n}{\bf D}^{1}&&&&&&\diagentry{\frac{1}{n}{\bf D}^{n}}
\end{pmatrix}
    \end{array}\right) 
    \end{align*} 
\end{split}
\end{equation}

\end{proof}

\end{document}